\newcommand{\mcs}{\mathcal{S}}
\newcommand{\mca}{\mathcal{A}}
\newtheorem{assumption}{\textbf{Assumption}}
\newtheorem{lemma}{\textbf{Lemma}}
\newtheorem{theorem}{\textbf{Theorem}}
\newtheorem{remark}{\textbf{Remark}}
\newtheorem{proposition}{\textbf{Proposition}}
\newcommand{\nn}{\nonumber}
\newcommand{\D}{\text{KL}}
\newcommand{\sa}{s_{t+1},a_{t+1}}
\newcommand{\thebar}{\bar{\theta}^*_t}
\newcommand{\thebara}{\bar{\theta}^*_{t+1}}
\newcommand{\thestar}{ \theta^*_t}
\newcommand{\thestara}{\theta^*_{t+1}}
\newcommand{\E}{\mathbb{E}}
\newcommand{\gradj}{\nabla J(\omega_t)}
\newcommand{\myineq}[2]{\overset{(#2)}{#1}}
\newcommand{\twonorm}[1]{ \left\| #1 \right\|_2 }
\newcommand{\ftwonorm}[1]{ \left [\left\| #1 \right\|_2\right] }
\newcommand{\twonormsq}[1]{ \left\| #1 \right\|_2^2 }
\newcommand{\ftwonormsq}[1]{\left [ \left\| #1 \right\|_2^2\right ] }
\newcommand{\myceil}[1]{ \left\lceil #1 \right\rceil }
\newcommand{\tvnorm}[1]{ \left\| #1 \right\|_{\mathcal{TV}} }
\newcommand{\mynorm}[1]{ \left\| #1 \right\| }
\newcommand{\lbrac}[1]{ \left| #1 \right| }
\newcommand{\lnorm}[1]{ \left| #1 \right| }
\newcommand{\lnormsq}[1]{ \left( #1 \right)^2 }
\newcommand{\flnormsq}[1]{\left [ \left( #1 \right)^2\right ] }
\newcommand{\varbrac}[1]{ \left\{ #1 \right\} }
\newcommand{\vbrac}[1]{ \left\langle #1 \right\rangle }
\newcommand{\fvbrac}[1]{ \left[\left\langle #1 \right\rangle\right] }
\newcommand{\brac}[1]{ \left( #1 \right) }
\newcommand{\Fbrac}[1]{ \left [ #1 \right] }
\newcommand*{\eqenv}[1]{\begin{alignb} #1 \end{alignb} }
\newcommand*{\eqenvnonum}[1]{\begin{align*} #1 \end{align*} }
\newcommand{\Prob }{\mathbb{P} }
\newcommand{\myarrow}[1]{\xrightarrow{#1}}
\newcommand{\tit }{_{t+1} }
\def\eqref#1{equation~\ref{#1}}
\def\1{\bm{1}}
\DeclareMathAlphabet{\mathsfit}{\encodingdefault}{\sfdefault}{m}{sl}
\SetMathAlphabet{\mathsfit}{bold}{\encodingdefault}{\sfdefault}{bx}{n}
\NewDocumentEnvironment{alignb}{b}{
  \begin{align*}
  \refstepcounter{equation} #1 \tag{\theequation}
  \end{align*}
}
\icmltitlerunning{Non-Asymptotic Analysis for Single-Loop \\(Natural) Actor-Critic with Compatible Function Approximation}
\icmltitlerunning{Non-asymptotic analysis for single-loop AC/NAC with compatible function approximation  }
\begin{document}

\doparttoc 
\faketableofcontents
\twocolumn[
\icmltitle{Non-Asymptotic Analysis for Single-Loop \\(Natural) Actor-Critic with Compatible Function Approximation}

\icmlsetsymbol{equal}{*}

\begin{icmlauthorlist}
\icmlauthor{Yudan Wang}{ub}
\icmlauthor{Yue Wang}{ucf}
\icmlauthor{Yi Zhou}{utah}
\icmlauthor{Shaofeng Zou}{ub,xxx}
\end{icmlauthorlist}

\icmlaffiliation{ub}{Electrical Engineering, University of Buffalo}
\icmlaffiliation{ucf}{Electrical and Computer Engineering, University of Central Florida}
\icmlaffiliation{utah}{Electrical and Computer Engineering, University of Utah}
\icmlaffiliation{xxx}{Computer Science \& Engineering, University at Buffalo}

\icmlcorrespondingauthor{Shaofeng Zou}{szou3@buffalo.edu}

\icmlkeywords{Machine Learning, ICML}

\vskip 0.3in
]
\printAffiliationsAndNotice{}
\begin{abstract}
Actor-critic (AC) is a powerful method for learning an optimal policy in reinforcement learning, where the critic uses algorithms, e.g., temporal difference (TD) learning with function approximation, to evaluate the current policy and the actor updates the policy along an approximate gradient direction using information from the critic. This paper provides the \textit{tightest} non-asymptotic convergence bounds for both the AC and natural AC (NAC) algorithms. Specifically, existing studies show that AC converges to an $\epsilon+\varepsilon_{\text{critic}}$ neighborhood of stationary points with the best known sample complexity of $\mathcal{O}(\epsilon^{-2})$ (up to a log factor), and NAC converges to an $\epsilon+\varepsilon_{\text{critic}}+\sqrt{\varepsilon_{\text{actor}}}$ neighborhood of the global optimum with the best known sample complexity of $\mathcal{O}(\epsilon^{-3})$, where $\varepsilon_{\text{critic}}$ is the approximation error of the critic and $\varepsilon_{\text{actor}}$ is the approximation error induced by the insufficient expressive power of the parameterized policy class.  This paper analyzes the convergence of both AC and NAC algorithms with compatible function approximation. Our analysis eliminates the term $\varepsilon_{\text{critic}}$ from the error bounds while still achieving the best known sample complexities. Moreover, we focus on the challenging single-loop setting with a single Markovian sample trajectory. Our major technical novelty lies in analyzing the stochastic bias due to policy-dependent and time-varying compatible function approximation in the critic, and handling the non-ergodicity of the MDP due to the single Markovian sample trajectory. Numerical results are also provided in the appendix.
    
\end{abstract}

\section{Introduction}\label{sec:intro}

Actor-Critic (AC) \citep{Barto1983,konda2003onactor} is a reinforcement learning algorithm that combines the advantages of actor-only methods and critic-only methods by alternatively performing policy gradient update (actor) and action-value function estimation (critic) in an online fashion. Specifically, the critic uses a parameterized function to estimate the value function of the current policy, e.g., temporal difference (TD) \citep{sutton1988learning} and Q-learning \citep{watkins1992q}. Then the actor updates the policy along an approximate gradient direction based on the estimate from the critic using approaches such as policy gradient \citep{sutton1999policy} and natural policy gradient \citep{kakade2001natural}. In contrast to critic-only methods, AC methods, which are gradient based, usually have desirable convergence properties when combined with the approach of function approximation. However, critic-only methods may not converge or even diverge when applied together with function approximation \citep{baird1995residual,gordon1996chattering}. Moreover, AC methods also enjoy a reduced variance due to the  critic, and thus their convergence is typically more stable and faster than actor only methods.

While the asymptotic convergence for AC and NAC has been well understood in the literature, e.g., \citep{bhatnagar2009natural,kakade2001natural,konda2003onactor,suttle2023beyond}, its non-asymptotic convergence analysis has been largely open until very recently. The non-asymptotic analysis is of great practical importance as it answers the questions that how many samples are needed and how to appropriately choose the different learning rates for the actor and the critic. Existing studies show that AC converges to an $\epsilon+\varepsilon_{\text{critic}}$ neighborhood of stationary points with the best known sample complexity of $\mathcal{O}\brac{\epsilon^{-2}}$ \citep{chen2021closing,olshevsky2022small,xu2020improving}, and NAC converges to an $\epsilon+\varepsilon_{\text{critic}}+\sqrt{\varepsilon_{\text{actor}}}$ neighborhood of the global optimum with the best known sample complexity of $\mathcal{O}(\epsilon^{-3})$ \citep{chen2022finite,xu2020improving}, where $\varepsilon_{\text{critic}}$ is the approximation error of the critic and $\varepsilon_{\text{actor}}$ is the approximation error induced by the insufficient expressive power of the parameterized policy class. In this paper, when presenting sample complexity, we omit the log factors.
In these studies, the critic employs a fixed class of parameterized functions (typically linear function approximation with fixed feature), which may not satisfy the compatible condition \citep{sutton1999policy} (see \Cref{sec:objective} for details). This will result in a non-diminishing bias in the policy gradient estimate, and therefore, an additional error term $\varepsilon_{\text{critic}}$ is incurred in the overall error bound. Several works \citep{cayci2022finite,wang2019neural} propose to use overparameterized neural networks in the critic to mitigate this issue, where $\varepsilon_{\text{critic}}$ diminishes as the network size increases. However, the convergence of the critic requires stringent conditions that are hard to verify \citep{cayci2022finite,wang2019neural}, and large neural network introduces expensive computational and memory costs. Actually, if the critic employs the approach of compatible function approximation, which is \textit{linear}, then $\varepsilon_{\text{critic}}$ vanishes without introducing additional computational and memory costs \citep{sutton1999policy} (see details in \Cref{sec:objective}). 
Moreover, for NAC applied with fixed function approximation in the critic, one needs to explicitly estimate the Fisher information matrix and compute its inverse, which will be computationally and memory expensive. Another advantage of compatible function approximation when applied with NAC is that the inverse of the Fisher information in the natural gradient will cancel out with the policy gradient (see \Cref{prop:npg}), and thus there is no need to estimate the Fisher information matrix anymore. 
\subsection{Challenges and Contributions}
\begin{table*}
  \caption{{Comparison of sample complexity of AC}}
  \label{table1}
  \centering
  \begin{tabular}{c|c|c|c|l}
    \hline
   Reference     & Single-loop    & Sample size& Error& Comments \\
    \hline
    \citep{wang2019neural}&$\times$ & $\mathcal{O}\brac{\epsilon^{-6}}$& $\epsilon+ \varepsilon_{\text{critic}}$& Critic: neural\\
     \cline{1-5}  
    \citep{zhou2022single}&$\surd$ &  $\mathcal{O}\brac{\epsilon^{-1}}$& $\epsilon$ &LQR \\
\cline{1-3}  
    \citep{chen2023global}&$\surd$ &  $\mathcal{O}\brac{\epsilon^{-2.5}}$& & \\
    \cline{1-5} 
    \citep{zhang2020provably}&$\surd$ & Asymptotic& &\\
 \cline{1-3}

   \citep{qiu2021finite}& $\times$& $\mathcal{O}\brac{\epsilon^{-4}}$ 
 &  &Actor: \\
  \cline{1-3}

   \citep{kumar2023sample} & $\times$ &  $\mathcal{O}(\epsilon^{-3})$& & non-linear, smooth \\
  \cline{1-3}
\citep{kumar2023sample}
 \citep{xu2020non}&$\times$ & $\mathcal{O}\brac{\epsilon^{-2.5}}$& 
 $\epsilon+ \varepsilon_{\text{critic}} $
 & Critic: linear
 \\
  \cline{1-3}
     \citep{xu2020improving,suttle2023beyond}& $\times$ & $\mathcal{O}\brac{\epsilon^{-2}}$ 
    & &  function approx.\\
    \cline{1-3}
 \citep{barakat2022analysis}
 \citep{wu2020finite}&$\surd$ & $\mathcal{O}\brac{\epsilon^{-2.5}}$&  \\
  \cline{1-3}
\citep{olshevsky2022small}
 \citep{chen2021closing}&$\surd$ & $\mathcal{O}\brac{\epsilon^{-2}}$&&\\
 \cline{1-4}

Our Work & $\surd$ &  $\mathcal{O}(\epsilon^{-2})$&$\epsilon$\\
    \hline
  \end{tabular}
\end{table*}

Though AC and NAC with compatible function approximation enjoy no approximation error from the critic and no need of estimating the Fisher information matrix (for NAC), their non-asymptotic convergence analyses are much more challenging than the ones with fixed function approximation. To the best of the authors' knowledge, this paper develops the tightest non-asymptotic error bounds for AC and NAC algorithms, and our analyses are for the challenging case of single Markovian sample trajectory. We prove that AC with compatible function approximation converges to an $\epsilon $ stationary point with sample complexity $\mathcal O(\epsilon^{-2})$, and NAC with compatible function approximation converges to an $\epsilon+\sqrt{\varepsilon_{\text{actor}}}$ neighborhood of the globally optimal policy with sample complexity $\mathcal O(\epsilon^{-3})$. Our non-asymptotic error bounds outperform the best known AC and NAC bounds in the literature by a constant $\varepsilon_{\text{critic}}$ and achieve the same sample complexity: $\mathcal O(\epsilon^{-2})$ for AC and $\mathcal O(\epsilon^{-3})$ for NAC (see \Cref{table1,table2}). We note that this constant $\varepsilon_{\text{critic}}$ is due to the approximation error of the function class used by the critic, and does not diminish with time. 

One of the biggest challenges in the analysis is due to the time-varying critic feature function. Specifically, the critic with compatible function approximation employs an $\omega$-dependent linear function class, where $\omega$ is the policy parameter. As the actor updates the policy, the feature function of the critic also changes with $\omega$. Therefore, the critic is using a linear function with time-varying $\omega$-dependent feature to track the value function of the current policy $\pi_\omega$, which is also time varying. This makes the analysis of the tracking error, i.e., the error between the ideal limit of the critic given the current policy and its current estimate, challenging. In this paper, we design a novel approach to explicitly bound this error. The central idea is to construct an auxiliary eligibility trace with fixed feature to approximate the eligibility trace with time-varying feature (in the critic, we use $k$-step TD with compatible function approximation). 


In this paper, we focus on the challenging single-loop setting with a single Markovian sample trajectory. 
Some studies tried to decouple the updates of the actor and the critic using approaches, e.g., nested loop \citep{qiu2021finite,agarwal2021theory,chen2022finite,xu2020improving,suttle2023beyond}, and to further develop the non-asymptotic analysis.  Specifically, after the actor updates the policy, then the policy is fixed and the critic starts an inner loop to iterate sufficient number of steps until it gets a perfect evaluation of the current policy. This decoupling approach makes it easier to analyze as there is no need to analyze the interaction between the actor and the critic. However, this decoupling approach does not enjoy benefits from the two time-scale structure in the original AC and NAC algorithms \citep{konda2003onactor,bhatnagar2009natural}, e.g., algorithmic simplicity and statistical efficiency, and techniques therein cannot be generalized to analyze the single-loop single-trajectory two time-scale AC and NAC algorithms. Moreover, analyses therein require some kind of i.i.d.\ assumptions or require trajectories starting from any arbitrary state, which might be difficult to guarantee in practice. 
To develop the tightest bound, we develop a novel approach that bounds the tracking error as a function of the policy gradient norm (for AC) and the optimality gap (for NAC). We also note that our analysis for NAC does not need the smoothness assumption on the parameterized policy, which is typically required in existing NAC and AC analyses \citep{chen2021closing,olshevsky2022small}.

\subsection{Related Work}
In this section, we review recent relevant works on non-asymptotic analyses on reinforcement learning algorithms with function approximation.  We provide a detailed comparison between our results and existing studies on AC and NAC in \Cref{table1,table2}. The "Sample complexity" in the table is the one needed to guarantee the gradient norm/optimality gap less than or equal to the "Error".

\begin{table*}
  \caption{{Comparison of sample complexity of NAC}}
  \label{table2}
  \centering
  \begin{tabular}{c|c|c|c|l}
    \hline
   Reference     & Single-loop    & Sample size& Error& Comments \\
    \hline
    \citep{khodadadian2022finite}&$\surd$ & $\mathcal{O}\brac{\epsilon^{-6}}$& &\\
     \cline{1-3}
    \citep{khodadadian2021finite}&$\times$ & $\mathcal{O}\brac{\epsilon^{-3}}$& $\epsilon$ 
     & Tabular case  \\
     \cline{1-5}
    \citep{wang2019neural}&$\times$ & $\mathcal{O}\brac{\epsilon^{-6}}$&$\epsilon+ \varepsilon_{\text{critic}}$\\
 \cline{1-3}
    \citep{cayci2022finite}&$\times$ & $\mathcal{O}\brac{\epsilon^{-3}}$&$+\sqrt{\varepsilon_{\text{actor}}} $& Critic: neural\\
 \cline{1-5}
    \citep{agarwal2021theory}
& $\times$&$\mathcal{O}\brac{\epsilon^{-6}}$ 
    &  & Actor:  \\
    \cline{1-3}
    \citep{xu2020improving}& $\times$ & $\mathcal{O}\brac{\epsilon^{-3}}$ 
    &  $\epsilon+ \varepsilon_{\text{critic}} $
    &  non-linear, smooth \\
    \cline{1-3}
 \citep{xu2020non}&$\times$ & $\mathcal{O}\brac{\epsilon^{-4}}$& 
 $+\sqrt{\varepsilon_{\text{actor}}} $
 & Critic: linear
 \\
 \cline{1-3}
 \citep{chen2022finite}    & $\times$ & $\mathcal{O}\brac{\epsilon^{-3}}$
 &&    function approx.\\
  \cline{1-4}
Our Work & $\surd$ &  $\mathcal{O}(\epsilon^{-3})$&$\epsilon+\sqrt{\varepsilon_{\text{actor}}}$\\
    \hline
  \end{tabular}
\end{table*}

\textbf{Actor-critic analyses.}
We list recent works on non-asymptotic analyses for AC in \Cref{table1}. Based on whether the updates of actor and critic are decoupled, the results can be grouped into "single-loop" and "nested-loop/decoupling" approaches. For a general MDP, the best known sample complexity for both single-loop and nested-loop approaches is $\mathcal O(\epsilon^{-2})$ \citep{chen2021closing,olshevsky2022small,xu2020improving,suttle2023beyond}. The only exception is \citep{zhou2022single}, which is due to  the special structure of the LQR problem. These studies all use a fixed function class in the critic, and therefore,  the convergence error consists of a non-diminishing constant term of $\varepsilon_{\text{critic}}$. In this paper, we analyze the AC with compatible function approximation, and we obtain a strictly tighter error bound without $\varepsilon_{\text{critic}}$. Our analysis is also much more challenging than the ones in the literature, which is mainly due to that the function class in the critic varies with the policy in the actor.

\textbf{Natural actor-critic analyses.}
We list recent works on non-asymptotic analyses for NAC in \Cref{table2}. The best sample complexity for single-loop NAC is $\mathcal{O}\brac{\epsilon^{-6}}$ and it is for the tabular case \citep{khodadadian2022finite}, whereas the best sample complexity for nested-loop/decoupling NAC is $\mathcal{O}\brac{\epsilon^{-3}}$ with an error of $\epsilon+\varepsilon_{\text{critic}}+\sqrt{\varepsilon_{\text{actor}}}$ \citep{chen2022finite,xu2020improving}. There exists a gap of $\mathcal{O}\brac{\epsilon^{-3}}$ between these two approaches, which is mainly due to the challenge in bounding the tracking error for NAC in the single-loop setting.  In this paper, we close this gap and show that NAC in the single-loop setting can also achieve the sample complexity of $\mathcal{O}(\epsilon^{-3})$, and more importantly with a reduced error of $\epsilon+\sqrt{\varepsilon_{\text{actor}}}$. 

\textbf{Actor/critic only analyses.}
Non-asymptotic analyses for critic only methods have been extensively studied recently, e.g., TD \citep{srikant2019finite,lakshminarayanan2018linear,bhandari2018finite,cai2019neural,sun2019finite,xu2020finite}, SARSA \citep{zou2019finite}, gradient TD (GTD) method \citep{dalal2018finite,xu2019two,wang2021non,wang2017finite,liu2015finite,gupta2019finite,kaledin2020finite,ma2020variance,ma2021greedy,wang2020finite}. There are also non-asymptotic analyses for actor only method, e.g., \citep{bhandari2021linear,bhandari2019global,agarwal2021theory,mei2020global,li2021softmax,laroche2021dr,zhang2021global,cen2021fast,zhang2020variational,xiao2022On}. In this paper, we focus on AC and NAC algorithms, where  how the errors in the actor and the critic affects the other  needs to be analyzed.

\section{Preliminaries}\label{sec:objective}
\textbf{Markov Decision Processes}
Consider a general reinforcement learning setting, where an agent interacts with a stochastic environment modeled as a Markov decision process (MDP). An MDP can be represented by a tuple 
$\vbrac{\mathcal{S}, \mathcal{A}, P, R}$, 
where $\mathcal{S}$ denotes the state space, $\mathcal{A}$ denotes the discrete finite action space,
$R(\cdot,\cdot): \mathcal{S}\times \mathcal{A}\to [0, R_{\max}]$ is the reward function. 
The transition kernel $P\brac{\cdot|s, a}$ denotes the distribution of the next state if taking action $a$ at state $s$, $\forall s\in\mcs,a\in\mca$.

A stationary policy $\pi$ maps a state $s\in \mathcal{S}$ to a probability distribution $\pi\brac{\cdot|s}$ over the action space $\mathcal{A}$. 
Then 
the expected long term average reward for a policy $\pi$ is defined as follows:
\vspace{-0.2cm}
\begin{align}
     J\brac{\pi}&= \lim_{N\to \infty }\frac{1}{N}\E\bigg[\sum _{t=0}^{N-1}  R(s_t,a_t)\large|\pi\bigg ]\nn\\&=\E_{s\sim d_\pi, a\sim \pi(\cdot|s)}\Fbrac{ R(s,a)}
,\nn
\end{align}
where we denote by $d_{\pi}$ the stationary distribution 
$$d_{\pi}(s)=\lim_{N\to \infty}\frac{1}{N} \sum _{t=0}^{N-1} \Prob\brac{s_t=s|  \pi}.$$ Denote by $D_\pi= d_\pi \times \pi $  the state-action stationary distribution. 
We rewrite $R_t:= R(s_t,a_t)$.
For a given policy $\pi$ and an initial  state
 $s$, the relative value function is defined as
 \vspace{-0.1cm}
$$V^\pi(s)=\E\bigg[\sum_{t=0}^\infty  R_t -J(\pi)|s_0=s, \pi \bigg] ,\forall s\in \mathcal{S}.$$
Given initial state $s$ and action $a$, the  relative action value function ($Q$ function) for a given policy $\pi$ is defined as 
\begin{align*}
    Q^\pi(s,a)=\E\bigg[\sum_{t=0}^\infty  R_t-J(\pi)|&s_0=s,a_0=a, \pi  \bigg], \\ &\forall (s,a) \in \mathcal{S}\times \mathcal{A}.\nonumber
\end{align*}
The relative advantage function is defined as 
\eqenvnonum{
A^\pi(s,a)=Q^\pi(s,a)-V^\pi(s),\forall (s,a) \in \mathcal{S}\times \mathcal{A}.\nonumber
}
The goal is to find the optimal policy $\pi^*$ that maximizes the long term average reward: 
$
\max_\pi J(\pi).$

\textbf{(Natural) Actor-Critic with Compatible Function Approximation}
Consider a parameterized policy class $\Pi_\omega=\varbrac{\pi_\omega:\omega\in \mathcal{W}}$, where $\mathcal{W}\subseteq \mathbb{R}^d$. Then the problem in \Cref{sec:objective} can be solved by optimizing over the parameter space $\mathcal{W}$. Specifically, the actor updates the policy via the approach of (natural) policy gradient, where the policy gradient is given by \citep{sutton1999policy}
\begin{align}
    \nabla J(\pi)&= \E_{D_{\pi_\omega}}\Fbrac{Q^{\pi_\omega}\brac{s,a}\phi_\omega(s,a)}
,\label{eq:gradient}
\end{align}
where $\phi_\omega (s,a)=\nabla_\omega \log \pi_\omega(a|s)$. We further let $\Phi_\omega$ denote the feature matrix, which is the stack of all feature vectors. Specifically, $\Phi_\omega\in \mathbb{R}^{|\mathcal{S}||\mathcal{A}|\times d}$ and the $(s,a)$-row of $\Phi_\omega$ is $\phi_\omega^\top(s,a)$.
On the other hand, the critic estimates the $Q$ function in \Cref{eq:gradient} via the approach of TD learning, and the $Q$ function is usually parameterized using linear function approximation in the existing literature, i.e., $\mathcal Q=\{Q_\theta(s,a)=\phi(s,a)^\top \theta, \theta\in\Theta\}$ where $\phi$ denotes the feature vector and $\Theta\subseteq\mathbb{R}^d$. However, as summarized in \Cref{table1,table2}, using a fixed $\phi$ introduces an additional 
non-vanishing error term $\varepsilon_{\text{critic}}$ to the gradient estimate. 

To avoid the critic's function approximation error,  \citep{sutton1999policy,konda2003onactor} proposed a smart idea of compatible function approximation, which 
uses the compatible feature vector $\phi_\omega$ that depends on the policy parameter $\omega$. To explain, 
in order to approximate the value function $Q^{\pi_\omega}$ associated with policy $\pi_\omega$, we can set the feature vector as $\phi_\omega (s,a):=\nabla_\omega \log \pi_\omega(a|s)$ and solve for the best linear approximation parameter $\bar\theta^*_\omega$ via the following optimization problem.
\begin{align}\label{eq:thetabar}
    \bar\theta^*_\omega \in \arg \min _{\theta} \E_{D_{\pi_\omega} }\Fbrac{\brac{Q^{\pi_\omega}(s,a)-\phi_\omega^\top(s,a)\theta}^2}.
\end{align}

\begin{proposition}[\cite{sutton1999policy}]\label{prop:pg}
    With compatible function approximation, the policy gradient $\nabla J(\pi_\omega)$ can be rewritten as:
    \eqenv{\label{eq:eq5}
 \nabla J(\pi_\omega) &=\E_{D_{\pi_\omega}}\Fbrac{ \nabla \log \pi_\omega(a|s)Q^{\pi_\omega}(s,a)}\\&=
 \E_{D_{\pi_\omega}}\Fbrac{\phi_\omega(s,a)(\phi^\top_\omega\brac{s,a}\bar\theta_\omega^*)}.
} 
\end{proposition}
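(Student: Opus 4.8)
The plan is to show that replacing $Q^{\pi_\omega}$ by its best linear fit $\phi_\omega^\top(s,a)\bar\theta^*_\omega$ inside the policy gradient expectation changes nothing. The first equality in \Cref{eq:eq5} is just the policy gradient theorem of \citet{sutton1999policy}, i.e.\ \Cref{eq:gradient} rewritten with $\phi_\omega(s,a)=\nabla\log\pi_\omega(a|s)$, so the only content is the second equality
\[
\E_{D_{\pi_\omega}}\Fbrac{\phi_\omega(s,a)Q^{\pi_\omega}(s,a)} = \E_{D_{\pi_\omega}}\Fbrac{\phi_\omega(s,a)\brac{\phi_\omega^\top(s,a)\bar\theta^*_\omega}}.
\]
Equivalently, I want to show $\E_{D_{\pi_\omega}}\Fbrac{\phi_\omega(s,a)\brac{Q^{\pi_\omega}(s,a)-\phi_\omega^\top(s,a)\bar\theta^*_\omega}}=\vzero$.

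First I would write down the first-order optimality condition for the least-squares problem \Cref{eq:thetabar}. Since $\theta\mapsto \E_{D_{\pi_\omega}}\Fbrac{\brac{Q^{\pi_\omega}(s,a)-\phi_\omega^\top(s,a)\theta}^2}$ is a convex quadratic in $\theta$, any minimizer $\bar\theta^*_\omega$ satisfies the stationarity condition obtained by setting the gradient in $\theta$ to zero:
\[
\E_{D_{\pi_\omega}}\Fbrac{\phi_\omega(s,a)\brac{Q^{\pi_\omega}(s,a)-\phi_\omega^\top(s,a)\bar\theta^*_\omega}}=\vzero.
\]
(If $\E_{D_{\pi_\omega}}[\phi_\omega\phi_\omega^\top]$ is singular, $\bar\theta^*_\omega$ is a non-unique minimizer but the normal equations still hold for every minimizer, so the statement is unaffected.) This is exactly the quantity I claimed vanishes, so it directly gives the second equality above, and combining with the policy gradient theorem yields \Cref{eq:eq5}.

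The argument is essentially a one-line consequence of the normal equations, so there is no serious obstacle; the only thing to be careful about is the interchange of the $\nabla_\theta$ and $\E_{D_{\pi_\omega}}$ operators when deriving the optimality condition, which is justified because $\mathcal S$ is arbitrary but $\mathcal A$ is finite and $Q^{\pi_\omega}$, $\phi_\omega$ are assumed integrable against $D_{\pi_\omega}$ (the reward is bounded and the policy class is assumed regular enough for $\nabla\log\pi_\omega$ to be well-defined and integrable). I would also remark, following \citet{sutton1999policy}, that this is precisely the sense in which the compatible feature $\phi_\omega=\nabla\log\pi_\omega$ is ``compatible'': the projection error $Q^{\pi_\omega}-\phi_\omega^\top\bar\theta^*_\omega$ is orthogonal to the feature directions, which are exactly the directions along which the gradient is computed, so the projection is lossless for the purpose of estimating $\nabla J(\pi_\omega)$.
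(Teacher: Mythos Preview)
Your proposal is correct and matches the paper's own proof essentially line for line: both invoke the first-order optimality (normal equations) of the least-squares problem \Cref{eq:thetabar} to obtain $\E_{D_{\pi_\omega}}\Fbrac{\phi_\omega(s,a)\brac{Q^{\pi_\omega}(s,a)-\phi_\omega^\top(s,a)\bar\theta^*_\omega}}=\vzero$, and then substitute into the policy gradient expression.
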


This implies that as long as we can solve the finite dimensional problem \Cref{eq:thetabar}, linear function approximation with the compatible feature $\phi_\omega$ and parameter $\bar\theta^*_\omega$ does not induce any function approximation error. 
This approach is referred to as compatible function approximation \citep{sutton1999policy}, i.e., estimating $Q^{\pi_\omega}$ using an $\omega$-dependent linear function class: $\mathcal Q_\omega=\{\phi^\top_\omega(s,a) \theta, \theta\in\Theta\}.$ 
%
%
%
%
To solve \Cref{eq:thetabar} for the compatible function approximation parameter, we use the $k$-step  TD algorithm with compatible feature $\phi_\omega$
 \citep{sutton1999policy}.


The actor can also use the following natural policy gradient to update the policy \citep{kakade2001natural}:
$$
    \widetilde{\nabla} J(\pi_\omega)=F_\omega ^{-1}\nabla J(\pi_\omega), 
$$
where the matrix $F_\omega$ denotes the Fisher information matrix:
$$
    F_\omega=\E_{D_{\pi_\omega}}\Fbrac{ \nabla  \log \pi_\omega(a|s)\brac{\nabla \log \pi_\omega(a|s)}^\top }.
$$

 \begin{proposition}[\cite{peters2008natural}]\label{prop:npg}
 With compatible function approximation, natural policy gradient is reduced to: 
$$
    \widetilde{\nabla} J(\pi_\omega)=\bar \theta^*_\omega. $$
 \end{proposition}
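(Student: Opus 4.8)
The plan is to read off the identity directly from the first-order optimality condition of the least-squares problem \eqref{eq:thetabar} together with \Cref{prop:pg}. First I would note that the objective in \eqref{eq:thetabar} is a convex quadratic in $\theta$, so any minimizer $\bar\theta^*_\omega$ satisfies the normal equations
\[
\E_{D_{\pi_\omega}}\Fbrac{\phi_\omega(s,a)\brac{Q^{\pi_\omega}(s,a)-\phi_\omega^\top(s,a)\bar\theta^*_\omega}}=0,
\]
which rearranges to $\E_{D_{\pi_\omega}}\Fbrac{\phi_\omega(s,a)\phi_\omega^\top(s,a)}\bar\theta^*_\omega=\E_{D_{\pi_\omega}}\Fbrac{\phi_\omega(s,a)Q^{\pi_\omega}(s,a)}$. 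The matrix on the left is exactly the Fisher information $F_\omega$ by its definition, and the right-hand side is exactly $\nabla J(\pi_\omega)$ by \eqref{eq:gradient}. Hence $F_\omega\bar\theta^*_\omega=\nabla J(\pi_\omega)$. Equivalently, this step is already packaged in the second line of \eqref{eq:eq5}, where \Cref{prop:pg} gives $\nabla J(\pi_\omega)=\E_{D_{\pi_\omega}}\Fbrac{\phi_\omega(s,a)\phi_\omega^\top(s,a)\bar\theta^*_\omega}=F_\omega\bar\theta^*_\omega$.

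Next I would use the invertibility of $F_\omega$ — which is implicit in the very definition $\widetilde\nabla J(\pi_\omega)=F_\omega^{-1}\nabla J(\pi_\omega)$ — and left-multiply $F_\omega\bar\theta^*_\omega=\nabla J(\pi_\omega)$ by $F_\omega^{-1}$ to obtain $\bar\theta^*_\omega=F_\omega^{-1}\nabla J(\pi_\omega)=\widetilde\nabla J(\pi_\omega)$, which is the claim. This is the entire argument; there is no heavy computation involved.

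The only point I would treat carefully — and the one I expect to be the main (minor) obstacle — is the possible rank deficiency of $F_\omega$, which is also why \eqref{eq:thetabar} is stated with "$\in$" rather than "$=$": when $F_\omega$ is singular the minimizer is not unique. In that regime the relation $F_\omega\bar\theta^*_\omega=\nabla J(\pi_\omega)$ still holds for every minimizer, $\nabla J(\pi_\omega)$ lies in the range of $F_\omega$, and interpreting $F_\omega^{-1}$ as the Moore--Penrose pseudoinverse one checks that the minimum-norm minimizer equals $F_\omega^{+}\nabla J(\pi_\omega)$, with any other minimizer differing by an element of $\ker F_\omega$ that does not alter the induced policy. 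For the purposes of this paper it suffices to adopt the standard assumption $F_\omega\succ 0$, under which $\bar\theta^*_\omega$ is unique and the argument above is immediate. A secondary remark I would include is that $Q^{\pi_\omega}$ may be replaced by the advantage $A^{\pi_\omega}$ in \eqref{eq:thetabar} without changing $\bar\theta^*_\omega$, since $\E_{a\sim\pi_\omega(\cdot|s)}\Fbrac{\phi_\omega(s,a)}=\nabla_\omega\sum_a\pi_\omega(a|s)=0$ makes the baseline $V^{\pi_\omega}(s)$ orthogonal to every compatible feature.
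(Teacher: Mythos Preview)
Your argument is correct and is essentially identical to the paper's own proof: both derive the normal equations $F_\omega\bar\theta^*_\omega=\nabla J(\pi_\omega)$ from the first-order optimality condition of \eqref{eq:thetabar} (equivalently, from \Cref{prop:pg}), then left-multiply by $F_\omega^{-1}$. Your additional remarks on the rank-deficient case and on replacing $Q^{\pi_\omega}$ by $A^{\pi_\omega}$ are sound but go beyond what the paper proves.
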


That is, there is no need to estimate the Fisher information matrix and compute its inverse, which is typically computationally expensive.

\section{Main Results}
\begin{algorithm*}[tb]
   \caption{(Natural) Actor-Critic with Compatible Function Approximation}
   \label{alg:AC}
\begin{algorithmic}[1]
   \STATE {\bfseries Initialization:}  $k$, $\eta_0$, $ \omega_0, \pi_0=\pi_{\omega_0},\theta_0,$ $\phi_0=\nabla \log\pi_{0},$
   $s_0,a_0\sim \pi_0(\cdot|s_0),z_0=0$
   \FOR{$t=0,...,T-1$}
   \STATE Observe $R_t$; Sample $s_{t+1}\sim  P(\cdot|s_t,a_t)$;
    $a\tit\sim \pi_t(\cdot|s\tit) $
   \STATE $\phi_t(s,a)=\nabla_\omega \log\pi_{t}(a|s)$ \hfill{\verb+/*Compatible function approximation*/+}
   \STATE \textbf{Critic}: $\delta_t(\theta_t)=R_t-\eta_t+ \phi_t^\top(\sa)\theta_t-\phi_t^\top(s_t,a_t)\theta_t $\hfill{\verb+/*TD error*/+}
   \STATE \qquad\quad $z_t=\sum_{j=t-k}^t \phi_j(s_j,a_j)$\hfill{\verb+/*eligibility trace*/+}
   \STATE  \qquad\quad $\eta_{t+1}=\eta_t+ \gamma_t(R_t-\eta_t)$ \hfill{\verb+/*average reward update*/+}
   \STATE \qquad\quad $\theta_{t+1}=\Pi_{2,B}(\theta_t+\alpha_t \delta_t(\theta_t)z_t)$ \hfill{\verb+/*TD update*/+}
   \STATE \textbf{Option I}: $\omega_{t+1}=\omega_t+\beta_t \phi_t^\top(s_t,a_t)\theta_t \phi_t(s_t,a_t)$ \hfill{\verb+/*Actor update in AC*/+}
   \STATE \textbf{Option II}: $\omega_{t+1}=\omega_t+\beta_t \theta_t $ \hfill{\verb+/*Actor update in NAC*/+}
   \ENDFOR
\end{algorithmic}
\end{algorithm*}

\vspace{0.05cm}
The detailed AC and NAC algorithms with compatible function approximation is summarized in \Cref{alg:AC}.
In the critic update, $\alpha_t$ is the stepsize, and denote by 
$\Pi_{2,B}(v)=\arg\min_{\twonorm{\omega}\leq B}\twonorm{v-\omega} $ for any $v\in\mathbb R^d$ the project operator, and $B$ is the radius. 
Next, we present the non-asymptotic bounds for the AC and NAC with compatible function approximation in \Cref{alg:AC}. 

\begin{assumption}\label{asm:ergodic}
(Uniform Ergodicity) Consider the MDP with policy $\pi_\omega$ and transition kernel $P$, there exists constants $m>0$, and 
$\rho\in (0,1)$ such that 
$$
\sup_{s\in\mcs}\tvnorm{\mathbb{P}\brac{s_t\in \cdot|s_0=s}-D_{\pi_\omega}(\cdot)}\leq m \rho^{t}.
$$
\end{assumption}
\vspace{-0.1cm}
Here $\tvnorm{\cdot}$ denotes the total variation distance between two distributions. 
Assumption \ref{asm:ergodic} is widely used in the literature to handle the Markovian noise, e.g., \citep{srikant2019finite,zou2019finite,bhandari2018finite}. 
We further assume that the $d$ feature functions, $\phi_{\omega,i}, i=1,...,d$, are linearly independent, i.e., the feature matrix $\Phi_\omega$ is full rank when $|\mcs||\mca|\geq d$. This is also commonly used in the literature of analyzing RL algorithms with linear function approximation \citep{srikant2019finite,zou2019finite,bhandari2018finite}. 
\vspace{0.1cm}
    

\subsection{Critic:$k$-step TD}\label{sec:3.1}


Consider the critic update, where the TD method is used to learn the relative value function under the average-reward setting. It is known that the feature function needs to satisfy certain condition (Assumption 2 in \citep{tsitsiklis1999average}) so that the limit of the TD method is unique.   In the following proposition, we show that compatible function approximation automatically satisfy the assumption needed in \citep{tsitsiklis1999average}, and therefore guarantees the convergence of the critic without the need of any additional assumptions. 

\begin{proposition}\label{prop:e}
For any $\omega\in \mathcal{W}$ and $\theta \in \Theta$, 
$\Phi_\omega \theta \neq \mathbf{e},$
where  $\mathbf e\in \mathbb R^d$ is an all-one vector.
\end{proposition}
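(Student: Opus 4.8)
The plan is to exploit the single structural property that the compatible (score) feature enjoys: it has zero conditional mean under the policy that generates it. The first step is to record the identity
\[
\sum_{a\in\mca}\pi_\omega(a|s)\,\phi_\omega(s,a)=\mathbf 0 \qquad \text{for all } s\in\mcs,\ \omega\in\mathcal W.
\]
This is immediate from $\phi_\omega(s,a)=\nabla_\omega\log\pi_\omega(a|s)=\nabla_\omega\pi_\omega(a|s)/\pi_\omega(a|s)$, so the weighted sum collapses to $\sum_{a\in\mca}\nabla_\omega\pi_\omega(a|s)=\nabla_\omega\sum_{a\in\mca}\pi_\omega(a|s)=\nabla_\omega 1=\mathbf 0$; interchanging the finite sum with the gradient is trivially justified because $\mca$ is finite.

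The second step is a one-line contradiction argument. Suppose that $\Phi_\omega\theta=\mathbf e$ for some $\theta\in\Theta$, where $\mathbf e$ is the all-ones vector indexed by state-action pairs (the ambient dimension in the statement should be read as $|\mcs||\mca|$). Reading off the $(s,a)$-th coordinate, this is the same as $\phi_\omega^\top(s,a)\theta=1$ for every $(s,a)\in\mcs\times\mca$. Fix any state $s$ and combine these equalities over $a$ with weights $\pi_\omega(\cdot|s)$: the right-hand side becomes $\sum_{a\in\mca}\pi_\omega(a|s)=1$, while the left-hand side becomes $\big(\sum_{a\in\mca}\pi_\omega(a|s)\phi_\omega(s,a)\big)^{\!\top}\theta=\mathbf 0^\top\theta=0$ by the identity above. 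Hence $0=1$, a contradiction, so no such $\theta$ exists, which is exactly $\Phi_\omega\theta\neq\mathbf e$ for all $\theta\in\Theta$.

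There is essentially no obstacle here; the only point requiring care is the implicit standing assumption that $\pi_\omega(a|s)>0$ for all $(s,a)$, so that $\log\pi_\omega$ and hence $\phi_\omega$ is well defined — this is standard for the policy parameterizations in scope (e.g.\ softmax). I would close by noting the role of the result: together with the full-rank assumption on $\Phi_\omega$ stated earlier, Proposition~\ref{prop:e} shows that the columns of $\Phi_\omega$ and the vector $\mathbf e$ are jointly linearly independent, which is precisely the feature condition (Assumption~2 in \citep{tsitsiklis1999average}) guaranteeing that the average-reward TD recursion run with the compatible feature $\phi_\omega$ has a unique fixed point, so the critic's limit in \Cref{eq:thetabar} is well defined without extra assumptions.
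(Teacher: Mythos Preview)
Your proof is correct and follows essentially the same route as the paper's: both assume $\Phi_\omega\theta=\mathbf e$, then use the score-function identity $\sum_a\pi_\omega(a|s)\phi_\omega(s,a)=\mathbf 0$ to derive the contradiction $0=1$. Your version is marginally cleaner in that you fix a single state $s$ and average only over actions, whereas the paper additionally averages over $s\sim d_{\pi_\omega}$, but this extra averaging is unnecessary and the underlying argument is identical.
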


We note that the results in \citep{wu2020finite} use a different assumption from the one in \citep{tsitsiklis1999average} to guarantee the convergence of the critic in the average-reward setting (Assumption 4.1 in \citep{wu2020finite}): the matrix 
$\mathbb E[\phi(s)(\phi(s')-\phi(s))^{\top}]$ is negative
definite, where $\phi$ is the fixed feature function, $s$ is the current state and $s'$ is the subsequent state. 

As discussed in \Cref{sec:objective}, we would like the critic to find the solution of \Cref{eq:thetabar}. However, the objective in \Cref{eq:thetabar} requires the knowledge of $Q^{\pi_\omega}$, which is unavailable. Therefore, in the critic, we propose to use the method of $k$-step TD, so that as $k$ enlarges, the solution from the $k$-step TD converges to the solution of \Cref{eq:thetabar}. We present the $k$-step TD algorithm in \Cref{alg:ksetptd}. Here, the AC and NAC algorithms in \Cref{alg:AC} are single-loop, single sample trajectory and two time-scale. We introduce the $k$-step TD algorithm in \Cref{alg:ksetptd} only to illustrate the basic idea.

Based on \Cref{prop:e}, Assumption \ref{asm:ergodic}, and the assumption that $\Phi_\omega$ is full rank, from \citep[Theorem 1]{tsitsiklis1999average}, we can show that the $k$-step TD algorithm in \Cref{alg:ksetptd} has a unique solution, denoted by $\theta_{\omega}^*$:
\begin{flalign}\label{eq:kstep}
  {\E}_{D_{\pi_{\omega}}}
\Fbrac{\phi_{\omega}^\top(s,a)\brac{\mathcal{T}_{\pi_\omega}^{(k)} \brac{\phi^\top_{\omega}(s,a) \theta^*_{\omega}} -\phi^\top_{\omega}(s,a) \theta^*_{\omega}}}
=\mathbf{0},
\end{flalign}
where 
$ \mathcal{T}_{\pi_\omega}^{(k)}(Q (s,a))=\E\big[\sum_{j=0}^{k-1} (R_j- J(\pi_\omega)) + Q (s_k,a_k) |s_0=s,a_0=a,\pi_\omega \big].$

\vspace{0.08cm}
Assume that $\E_{D_w}\Fbrac{\phi_w(s,a)\phi^\top_w(s,a)}$ is positive definite with the minimum eigenvalue $\lambda_{\min}>0$. This is to guarantee that the solution to \Cref{eq:thetabar} is unique. We can remove this assumption by adding a regularizer $\lambda\|\theta\|_2^2$ to \Cref{eq:thetabar} to guarantee the solution to the regularized \Cref{eq:thetabar} is unique, and bounding the difference. 

Then we bound the difference between the solution to \Cref{eq:thetabar} and the solution to the $k$-step TD algorithm in the following proposition.

\begin{proposition}\label{prop:bound}
   For any $\omega\in \mathcal{W}$, the difference between $\theta_{\omega}^*$ and $\bar{\theta}_{\omega}^*$ can be bounded as follows:
$$
   \twonorm{\theta_{\omega}^*-\bar{\theta}_{\omega}^*}\leq 
   \frac{C_{\text{gap}}m\rho^k}{\lambda_{\min}},
   $$ where $C_{\text{gap}}$ is a constant defined in \Cref{sec:a2}. 
\end{proposition}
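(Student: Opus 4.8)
The plan is to characterize both $\theta^*_\omega$ and $\bar\theta^*_\omega$ as solutions of linear fixed-point equations with the same quadratic form, and then bound the gap by the discrepancy of the right-hand sides, using the positive-definiteness of $\E_{D_{\pi_\omega}}[\phi_\omega\phi_\omega^\top]$ to invert. First I would note that $\bar\theta^*_\omega$, being the minimizer of the quadratic in \Cref{eq:thetabar}, satisfies the normal equation
$$
\E_{D_{\pi_\omega}}\Fbrac{\phi_\omega(s,a)\brac{Q^{\pi_\omega}(s,a)-\phi_\omega^\top(s,a)\bar\theta^*_\omega}}=\mathbf 0,
$$
i.e. $\E_{D_{\pi_\omega}}[\phi_\omega\phi_\omega^\top]\bar\theta^*_\omega=\E_{D_{\pi_\omega}}[\phi_\omega Q^{\pi_\omega}]$. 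On the other hand, rewriting \Cref{eq:kstep}, $\theta^*_\omega$ satisfies
$$
\E_{D_{\pi_\omega}}\Fbrac{\phi_\omega(s,a)\brac{\Ptd[k]_{\pi_\omega}\brac{\phi_\omega^\top(s,a)\theta^*_\omega}-\phi_\omega^\top(s,a)\theta^*_\omega}}=\mathbf 0,
$$
where I write $\Ptd[k]$ for the $k$-step Bellman operator $\mathcal T^{(k)}_{\pi_\omega}$. The key observation is that $Q^{\pi_\omega}$ is the fixed point of $\Ptd[k]_{\pi_\omega}$ (the $k$-step Bellman equation for the relative value function), so subtracting and adding $Q^{\pi_\omega}$ inside the operator, and using that $\Ptd[k]_{\pi_\omega}$ is affine with linear part $\E[\,\cdot(s_k,a_k)\mid s_0,a_0]$, the equation for $\theta^*_\omega$ becomes
$$
\E_{D_{\pi_\omega}}\Fbrac{\phi_\omega\brac{\phi_\omega^\top(s_k,a_k)\theta^*_\omega - \phi_\omega^\top(s,a)\theta^*_\omega + \phi_\omega^\top(s,a)\bar\theta^*_\omega - \phi_\omega^\top(s_k,a_k)\bar\theta^*_\omega}} = \text{(error term)},
$$
where the error term measures how far $Q^{\pi_\omega}$ is from satisfying the "truncated at step $k$ with bootstrapped tail $\phi_\omega^\top\bar\theta^*_\omega$" equation — more precisely it is $\E_{D_{\pi_\omega}}[\phi_\omega\,\E[\phi_\omega^\top(s_k,a_k)\bar\theta^*_\omega - Q^{\pi_\omega}(s_k,a_k)\mid s_0,a_0]]$, which is controlled by mixing.

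Then I would set $u:=\theta^*_\omega-\bar\theta^*_\omega$ and massage the above into $\E_{D_{\pi_\omega}}[\phi_\omega(s,a)\phi_\omega^\top(s,a)]u = \E_{D_{\pi_\omega}}[\phi_\omega(s,a)\phi_\omega^\top(s_k,a_k)]u + (\text{error term})$, i.e. $(I - M_k)$-type structure where $M_k = (\E[\phi_\omega\phi_\omega^\top])^{-1}\E[\phi_\omega(s_0,a_0)\phi_\omega^\top(s_k,a_k)]$. Here Assumption \ref{asm:ergodic} is crucial: conditioned on $(s_0,a_0)\sim D_{\pi_\omega}$, the distribution of $(s_k,a_k)$ is within $m\rho^k$ in total variation of $D_{\pi_\omega}$ itself, so $\E[\phi_\omega(s_0,a_0)\phi_\omega^\top(s_k,a_k)]$ is within $O(m\rho^k)$ (in operator norm, using boundedness of $\phi_\omega$ — which follows from $\mathcal W$ being such that $\|\phi_\omega\|$ is uniformly bounded, a standard consequence of the setup) of $\E[\phi_\omega]\E[\phi_\omega^\top]$. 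But $\E_{D_{\pi_\omega}}[\phi_\omega] = \E_{D_{\pi_\omega}}[\nabla\log\pi_\omega] = \mathbf 0$, so that leading term vanishes and we get $\twonorm{M_k}\leq c\,m\rho^k/\lambda_{\min}$ for an explicit constant $c$. Similarly the error term is $O(m\rho^k)$ by the same mixing bound applied to $Q^{\pi_\omega}(s_k,a_k)$ versus its stationary average, noting $\bar\theta^*_\omega$ gives the best linear fit so $\phi_\omega^\top\bar\theta^*_\omega$ tracks $Q^{\pi_\omega}$ (and in any case both are bounded). Collecting constants into $C_{\text{gap}}$ and solving the resulting inequality $\twonorm u \le \frac{1}{\lambda_{\min}}(\|M_k\|_{\text{op}}\cdot\lambda_{\max}\twonorm u + \text{err})$ — or more directly bounding $\twonorm u$ from $\lambda_{\min}\twonorm u^2 \le \langle \E[\phi_\omega\phi_\omega^\top]u, u\rangle = \langle (\text{RHS terms}), u\rangle$ — gives the claimed bound $\twonorm{\theta^*_\omega-\bar\theta^*_\omega}\le C_{\text{gap}}m\rho^k/\lambda_{\min}$.

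The main obstacle I anticipate is handling the error term cleanly, i.e. establishing that $Q^{\pi_\omega}(s_k,a_k)$ averaged over the $k$-step evolution is close (within $O(m\rho^k)$) to something expressible via $\bar\theta^*_\omega$, without a spurious constant term that does not decay. The clean way is not to introduce $\bar\theta^*_\omega$ into the tail at all, but rather to use that $Q^{\pi_\omega}$ exactly solves the true (infinite-horizon) Bellman equation, write $\Ptd[k]_{\pi_\omega}(\phi_\omega^\top\theta^*_\omega) - \phi_\omega^\top\theta^*_\omega$ by adding and subtracting $\Ptd[k]_{\pi_\omega}(Q^{\pi_\omega}) = Q^{\pi_\omega}$ and $\phi_\omega^\top\bar\theta^*_\omega$, so that the residual splits into (i) a term linear in $u$ with the contracting-by-mixing factor $M_k$ as above, and (ii) the term $\E_{D_{\pi_\omega}}[\phi_\omega(Q^{\pi_\omega}-\phi_\omega^\top\bar\theta^*_\omega)]$, which is \emph{exactly zero} by the normal equation for $\bar\theta^*_\omega$. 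That cancellation is the heart of the argument: it removes the approximation-error constant entirely and leaves only the exponentially small mixing term. The remaining work — bounding $\|\phi_\omega\|$, $|Q^{\pi_\omega}|\le$ some $R_{\max}$-dependent constant via Assumption \ref{asm:ergodic}, and bookkeeping the constants into $C_{\text{gap}}$ — is routine.
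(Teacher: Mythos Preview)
Your approach is correct and rests on the same three ingredients as the paper's proof (the normal equation for $\bar\theta^*_\omega$, geometric mixing from Assumption~\ref{asm:ergodic}, and the compatible-feature identity $\E_{D_{\pi_\omega}}[\phi_\omega]=0$), but the organization differs in one real way. The paper does \emph{not} set up a fixed-point equation in $u=\theta^*_\omega-\bar\theta^*_\omega$: it simply writes both parameters as $(\E_{D_{\pi_\omega}}[\phi_\omega\phi_\omega^\top])^{-1}(\cdot)$, subtracts, and bounds
\[
\twonorm{u}\le\frac{1}{\lambda_{\min}}\twonorm{\E_{D_{\pi_\omega}}\Fbrac{\phi_\omega(s,a)\brac{\mathcal T^{(k)}_{\pi_\omega}(\phi_\omega^\top\theta^*_\omega)(s,a)-Q^{\pi_\omega}(s,a)}}}
\]
directly, keeping $\theta^*_\omega$ (not $u$ or $\bar\theta^*_\omega$) in the bootstrap tail. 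After expanding $Q^{\pi_\omega}$ as the full infinite reward sum and cancelling the first $k$ terms against $\mathcal T^{(k)}$, the residual is $\E\big[\phi_\omega(s_0,a_0)\big(\phi_\omega^\top(s_k,a_k)\theta^*_\omega-\sum_{j\ge k}(R_j-J)\big)\big]$; replacing the law of $(s_k,a_k)$ (and the subsequent chain) by the stationary one incurs a TV cost of order $m\rho^k$ against the a~priori bound $\|\theta^*_\omega\|\le B$ (established separately in Lemma~\ref{lm:lm6}) and $R_{\max}$, and the stationary leading terms vanish by $\E_{D_{\pi_\omega}}[\phi_\omega]=0$ and $J=\E_{D_{\pi_\omega}}[R]$. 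So the paper never encounters your self-referential $M_k u$ term, and obtains $C_{\text{gap}}=C_\phi^2 B+C_\phi R_{\max}/(1-\rho)$ with a clean $\lambda_{\min}$ in the denominator; your route instead needs a bound on $\|\bar\theta^*_\omega\|$ and the extra step of solving $(1-\|M_k\|/\lambda_{\min})\|u\|\le\cdots$, yielding a slightly different (though equivalent in order) constant. One minor expository slip: your final (i)--(ii) list drops the tail term $\E\big[\phi_\omega(s_0,a_0)\big(\phi_\omega^\top(s_k,a_k)\bar\theta^*_\omega-Q^{\pi_\omega}(s_k,a_k)\big)\big]$, which is the actual surviving $O(m\rho^k)$ error after your normal-equation cancellation --- you do acknowledge it earlier, but the summary obscures that there are three pieces, not two.
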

It can be seen that the bound diminishes exponentially with $k$. Therefore by picking a large $k$, the $k$-step TD is expected to solve \Cref{eq:thetabar} to a desired accuracy.

\begin{algorithm}[tb]
   \caption{Compatible $k$-step TD Algorithm}
   \label{alg:ksetptd}
\begin{algorithmic}
   \STATE {\bfseries Initialization:} $k$,  $\eta, \theta_0,$ $\phi=\nabla \log\pi_{\omega},$
   $s_0,a_0\sim \pi_\omega(\cdot|s_0),z_0=0$
   \FOR{$t=0,...,T-1$}
   \STATE Observe $R_t$ 
   \STATE $s_{t+1}\sim  P(\cdot|s_t,a_t)$;
    $a\tit\sim \pi_\omega(\cdot|s\tit) $
   \STATE  $\delta(\theta_t)=R_t-\eta+ \phi^\top(\sa)\theta_t-\phi^\top(s_t,a_t)\theta_t $
   \STATE\hfill{\verb+/*TD error*/+}
   \STATE $z_t=\sum_{j=t-k}^t \phi(s_j,a_j)$\STATE\hfill{\verb+/*eligibility trace*/+}
   \STATE   $\eta=\eta+ \gamma_t(R_t-\eta)$ \STATE\hfill{\verb+/*average reward update*/+}
   \STATE $\theta_{t+1}=\Pi_{2,B}(\theta_t+\alpha_t \delta(\theta_t)z_t)$ \hfill{\verb+/*TD update*/+}
   \ENDFOR
\end{algorithmic}
\end{algorithm}


\subsection{Non-asymptotic Bound for AC}\label{sec:3.2}
\vspace{0.1cm}

\begin{assumption}\label{asm:smooth}
(Smoothness and Boundedness) 
For any $\omega,\omega'\in \mathbb{R}^d$ and any state-action pair $(s,a)\in \mathcal{S}\times \mathcal{A}$, 
there exist positive constants 
$L_\phi,C_\phi,C_\pi$ and $L_\delta$ such that 
\begin{align}
 &   \text{1)} \twonorm{\phi_\omega(s,a)-\phi_{\omega'}(s,a)}\leq L_\phi \twonorm{\omega-\omega'};\nonumber
\\&\text{2)}\tvnorm{\pi_\omega(\cdot|s)-\pi_{\omega'}(\cdot|s)}\leq C_\pi \twonorm{\omega-\omega'};\nonumber
\\& \text{3)} \twonorm{\phi_\omega(s,a)}\leq C_\phi;\nonumber
\\&\text{4)}
\twonorm{\nabla^2 \pi_\omega(a|s) }\leq C_\delta;\nonumber
\\
&\text{5)}
\lbrac{\partial_{\omega_i}\partial_{\omega_j}\partial_{\omega_l}\pi_{\omega}(a|s) }\leq L_\delta \text{, for } 1\leq i,j,l\leq n.\qquad\qquad\nn
\end{align}
\end{assumption}

The first three assumptions in Assumption \ref{asm:smooth} assume the policy and feature function $\phi_\omega$ is smooth and bounded.
The fourth and fifth assumptions in Assumption \ref{asm:smooth} are only needed for the AC analysis. For the NAC analysis, it is not necessary.
We note that these assumptions can be easily satisfied by choosing a proper policy parameterization. 
For example,
if the policy is parameterized using neural network, then these assumptions can be satisfied \citep{du2019gradient,miyatospectral,neyshabur2017implicit} if the activation functions are analytic functions and have bounded each-order derivative, (e.g. logistic, hyperbolic tangent and softplus). With these proper policy parameterizations, the fifth one in Assumption \ref{asm:smooth} can be deduced by $\twonorm{\nabla^2 \pi_\omega(a|s)-\nabla^2 \pi_{\omega'}(a|s)} \leq L_{\delta} \twonorm{\omega-\omega'}$.

We first present the bound on the tracking error, which measures how the critic tracks its ideal limit: 
\vspace{-0.1cm}
$$\frac{1}{T}\sum _{t=0}^{T-1} \E\Fbrac{\twonormsq{\theta^*_t-\theta_t}}.$$  
Here, $\theta_t$ is the critic parameter at time $t$ of \Cref{alg:AC}, and we rewrite  $\theta^*_t=\theta^*_{\omega_t}$ {and $J(\omega_t)=J(\pi_{\omega_t})$}   for convenience. In the AC algorithm,
we set $\alpha_t=\alpha$,  
$\beta_t=\beta$, $\gamma_t=\gamma$, and $k = \mathcal{O}\brac{\log T}$ such that $\gamma\geq  \alpha\geq   \beta\geq m \rho^k.$
Note that we use a projection in Line 8 in \Cref{alg:AC}.  In order for convergence and optimality, we require that all $\|\theta_\omega^*\|\leq B$. A sufficient condition to guarantee this is to set $B=\frac{m R_{\max}C_\phi}{(1-\rho)\brac{\lambda_{\min}-C_\phi^2 d m \rho^k }}$ (see Appendix \ref{sec:basic} for the proof). 

\vspace{0.1cm}
\begin{proposition}\label{thm1}
 The tracking error of the AC algorithm in  \Cref{alg:AC} can be bounded as follows:
\begin{align}
&\frac{1}{T}\sum _{t=0}^{T-1} \E\Fbrac{\twonormsq{\theta^*_t-\theta_t}} \nn
\\ &\leq  \brac{\frac{c_\alpha\beta}{\alpha}+\frac{c_\eta \beta}{\gamma}}\frac{1}{T}\sum _{t=0}^{T-1} \E \ftwonormsq{\gradj}
+\mathcal{O}\brac{\frac{1}{T \alpha}}\nn
\\ & +\mathcal{O}\brac{\frac{\log^2 T}{T \gamma}}\nn
+\mathcal{O}\brac{\alpha\log^2 T }
+\mathcal{O}\brac{\beta\log^3 T }
\\& + \mathcal{O}\brac{\gamma\log^3 T }+\mathcal{O}\brac{\frac{\beta^2\log^2 T}{\alpha}}
+\mathcal{O}\brac{\frac{\beta^2\log T}{\gamma }}\nn
\\& +\mathcal{O}\brac{\frac{\beta}{\alpha}(m\rho^k)}
+\mathcal{O}\brac{(m\rho^k)\log^2 T}
+\mathcal{O}\brac{\frac{(m\rho^k)^2}{\alpha\beta}}\nn,
\end{align}
 where $c_\alpha$ and $ c_\eta$ is a positive constant defined in 
\Cref{sec:proofac}. 
Set $\gamma=\mathcal{O}(\frac{1}{\sqrt{T}})$,  $  \alpha=\mathcal{O}(\frac{1}{\sqrt{T}\log ^2 T})$, $ \beta =\mathcal{O}(\frac{1}{\sqrt{T}\log ^2 T})$, we have 
\begin{align}
    \frac{1}{T} &\sum_{t=0}^{T-1}\E[\twonormsq{\theta^*_t-\theta_t}]\nn\\&\leq 
\frac{1}{4C^4_\phi T}\sum_{t=0}^{T-1}\E [\twonormsq{\gradj}]+\mathcal{O}\brac{\frac{{\log^3 T}}{\sqrt{T}} }.
\end{align}
\end{proposition}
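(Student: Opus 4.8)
The plan is to establish a one-step drift inequality for $e_t:=\twonormsq{\theta^*_t-\theta_t}$ (abbreviating $\theta^*_t:=\theta^*_{\omega_t}$) and then telescope it over $t=0,\dots,T-1$. The stated choice of $B$ guarantees $\twonorm{\theta^*_\omega}\le B$ for every $\omega$ (using \Cref{prop:e} to control the TD limit), so the projection $\Pi_{2,B}$ in Line~8 of \Cref{alg:AC} is non-expansive toward $\theta^*_{t+1}$, giving
\[
e_{t+1}\le\twonormsq{(\theta^*_t-\theta_t)+(\theta^*_{t+1}-\theta^*_t)-\alpha_t\delta_t(\theta_t)z_t}.
\]
Expanding the square leaves four pieces: the base term $e_t$, the stochastic TD term $-2\alpha_t\langle\theta^*_t-\theta_t,\,\delta_t(\theta_t)z_t\rangle$, the target-drift cross term $2\langle\theta^*_t-\theta_t,\,\theta^*_{t+1}-\theta^*_t\rangle$, and the quadratic remainders $\alpha_t^2\twonormsq{\delta_t(\theta_t)z_t}$ and $2\twonormsq{\theta^*_{t+1}-\theta^*_t}$. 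Each is controlled in expectation conditional on the $\sigma$-field $\mathcal F_{t-\tau}$ lagging by $\tau=\Theta(\log T)$ steps.

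The stochastic TD term carries the main technical novelty. Since the eligibility trace $z_t=\sum_{j=t-k}^t\phi_{\omega_j}(s_j,a_j)$ mixes compatible features from $k$ distinct policies, I first replace it by the frozen-policy trace $\widetilde z_t=\sum_{j=t-k}^t\phi_{\omega_t}(s_j,a_j)$; by \Cref{asm:smooth}(1) and $\twonorm{\omega_t-\omega_j}=\mathcal O(k\beta)$ the substitution error is $\mathcal O(L_\phi k^2\beta)=\mathcal O(\beta\log^2 T)$, and the same replacement is made for the two $\phi_{\omega_t}$ factors inside $\delta_t$. With frozen features the increment is an honest $k$-step-TD update for the fixed policy $\pi_{\omega_t}$, so \Cref{asm:ergodic} lets me pass from the single Markovian sample to the stationary mean-path update, incurring an $\mathcal O(m\rho^\tau)$ mixing error plus the cost of the actor having moved over $\tau$ steps. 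Combining \Cref{eq:kstep} (which identifies $\theta^*_{\omega_t}$ as the zero of that mean-path map), \Cref{prop:e}, full-rankness of $\Phi_{\omega_t}$, and the positive-definiteness $\E_{D_{\pi_{\omega_t}}}[\phi_{\omega_t}\phi_{\omega_t}^\top]\succeq\lambda_{\min}I$ (note $\E_{D_{\pi_\omega}}[\phi_\omega]=0$, as $\phi_\omega$ is a score function) gives strong monotonicity of the mean-path operator, hence the negative drift $-2c\alpha e_t$ with $c=\Omega(\lambda_{\min})$ for $k$ large. The remainder satisfies $\alpha_t^2\twonormsq{\delta_t z_t}\le\alpha^2(k+1)^2(R_{\max}+2BC_\phi)^2=\mathcal O(\alpha^2\log^2 T)$.

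The target-drift cross term produces the policy-gradient dependence. Writing $\theta^*_{t+1}-\theta^*_t=\beta_t\,\nabla_\omega\theta^*_{\omega_t}\,\widehat g_t+\mathcal O(\beta^2)$, where $\widehat g_t:=\phi_{\omega_t}^\top(s_t,a_t)\theta_t\,\phi_{\omega_t}(s_t,a_t)$ and $\omega\mapsto\theta^*_\omega$ is $C^1$ with a bounded Lipschitz Jacobian $\twonorm{\nabla_\omega\theta^*_\omega}\le L_\theta$ — the only place \Cref{asm:smooth}(4)--(5) are used — and conditioning on $\mathcal F_{t-\tau}$, the mean of $\widehat g_t$ is, up to mixing error, $\E_{D_{\pi_{\omega_t}}}[\phi_{\omega_t}^\top\theta_t\,\phi_{\omega_t}]$, which by \Cref{prop:pg} equals $\gradj+\E_{D_{\pi_{\omega_t}}}[\phi_{\omega_t}^\top(\theta_t-\theta^*_t)\phi_{\omega_t}]$ and is thus within $C_\phi^2\twonorm{\theta_t-\theta^*_t}$ of $\gradj$. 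A Young's inequality with weight $\propto\alpha/\beta$ then converts this cross term into $\tfrac{c\alpha}{2}e_t+\mathcal O\!\big(\tfrac{\beta^2}{\alpha}\big)\E\twonormsq{\gradj}+(\text{lower order})$, the first part absorbed by the negative drift; the remainder $2\twonormsq{\theta^*_{t+1}-\theta^*_t}\le 2L_\theta^2\beta^2\twonormsq{\widehat g_t}$ is handled identically, using $\E\twonormsq{\widehat g_t}\lesssim\E\twonormsq{\gradj}+C_\phi^4\E e_t+(\text{noise})$. The average-reward error enters because $\delta_t$ carries $\eta_t$ rather than $J(\omega_t)$; I run a parallel sub-recursion for $(\eta_t-J(\omega_t))^2$ — contraction $\Omega(\gamma)$ from \Cref{asm:ergodic}, drift $|J(\omega_{t+1})-J(\omega_t)|\le L_J\beta\twonorm{\widehat g_t}$ bounded as above — obtaining $\tfrac1T\sum_t\E(\eta_t-J(\omega_t))^2\lesssim\tfrac{c'_\eta\beta}{\gamma}\tfrac1T\sum_t\E\twonormsq{\gradj}+\mathcal O\!\big(\tfrac{\log^2 T}{T\gamma}\big)+\mathcal O(\gamma\log^2 T)+\cdots$, which feeds into the tracking recursion multiplied by $\alpha$ and, after the telescoping division by $c\alpha$, contributes the $\tfrac{c_\eta\beta}{\gamma}$ coefficient together with the $\gamma$-dependent error terms. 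Assembling all pieces gives $\E e_{t+1}\le(1-c\alpha)\E e_t+c\alpha\big(\tfrac{c_\alpha\beta}{\alpha}+\tfrac{c_\eta\beta}{\gamma}\big)\E\twonormsq{\gradj}+(\text{per-step errors})$; summing over $t$, using $e_0\le(2B)^2$, and dividing by $c\alpha T$ yields the first displayed bound. Substituting $\gamma=\Theta(T^{-1/2})$, $\alpha=\beta=\Theta(T^{-1/2}\log^{-2}T)$ and $k=\Theta(\log T)$ — with leading constants chosen so that $c_\alpha\beta/\alpha+c_\eta\beta/\gamma\le\tfrac{1}{4C_\phi^4}$ and $m\rho^k$ small enough to kill $(m\rho^k)^2/(\alpha\beta)$ — collapses every error term to $\mathcal O(\log^3 T/\sqrt T)$.

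The main obstacle is precisely the coupling used above: the trace $z_t$ aggregates compatible features of $k$ different past policies while the critic's target $\theta^*_{\omega_t}$ is itself dragged by the actor, so the textbook fixed-policy $k$-step-TD analysis does not apply directly. One must simultaneously control the frozen-feature/auxiliary-trace mismatch uniformly along the trajectory \emph{and} show that $\omega\mapsto\theta^*_\omega$ is Lipschitz with a constant that does not blow up, so that $\theta^*_{t+1}-\theta^*_t=\mathcal O(\beta)$ rather than merely $o(1)$; everything else is bookkeeping of Markovian mixing over the $\Theta(\log T)$ lag together with Young's inequalities.
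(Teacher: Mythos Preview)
Your proposal is essentially correct and follows the same overall architecture as the paper's proof: a one-step contraction for $\E\twonormsq{\theta_t-\theta^*_t}$ (the paper's Lemma~9), an auxiliary eligibility trace with frozen feature $\phi_{\omega_t}$ together with an auxiliary Markov chain under fixed policy $\pi_{\omega_t}$ to control the TD inner product (the paper's Lemma~5), a parallel contraction for $(\eta_t-J(\omega_t))^2$ (the paper's Lemma~8), and then recursive unrolling, summing, and the ``double-sum trick'' to combine the two recursions.

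There is one technical difference worth flagging in the target-drift cross term. You Taylor-expand $\theta^*_{\omega}$ directly, invoking that $\omega\mapsto\theta^*_\omega$ is $C^1$ with Lipschitz Jacobian; this is plausible but would need its own proof (implicit-function-type argument on the $k$-step TD fixed-point equation). The paper avoids this: it inserts $\bar\theta^*_\omega=F_\omega^{-1}\nabla J(\omega)$ (known in closed form via \Cref{prop:npg}), bounds $\|\theta^*_\omega-\bar\theta^*_\omega\|$ by \Cref{prop:bound}, and then uses the mean-value theorem on $\nabla J$ together with the Lipschitzness of $\nabla^2 J$ (their Lemma~7, which is where \Cref{asm:smooth}(5) enters). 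This also explains a small inconsistency in your write-up: your Young inequality with weight $\Theta(\alpha)$ yields a per-step gradient coefficient $\mathcal O(\beta^2/\alpha)$, hence $\mathcal O(\beta^2/\alpha^2)$ after the final division by $c\alpha$, not literally the $c_\alpha\beta/\alpha$ in the statement. The paper instead uses an equal-weight Young, getting per-step coefficients $\mathcal O(\beta)\E e_t$ (absorbed by the drift via the step-size constraint $\beta\lesssim\alpha$) and $\mathcal O(\beta)\E\twonormsq{\gradj}$, which after division gives exactly $c_\alpha\beta/\alpha$. Since you ultimately take $\alpha\asymp\beta$, the two coefficients coincide in order and your second displayed inequality is unaffected, but the first one as stated matches the paper's weighting, not yours.
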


For simplicity, we only present the order of the bound here, and the detailed non-asymptotic bound can be found in the \Cref{sec:proofac}. The key novelty in the analysis is that we bound the tracking error as a function of the policy gradient, and we also bound the policy gradient as a function of the tracking error. By applying the bound recursively, we get a tight bound on the tracking error in \Cref{thm1}. Many existing studies in the two time-scale analysis upper bound the policy gradient in the tracking error using its maximum norm, which is constant-level. However, as we see in the following theorem, the policy gradient shall also decrease to zero. Therefore, the above approach does not obtain the tightest bound, and leads to a higher-order sample complexity.  



\begin{theorem}\label{thm:ac}
    Consider the AC algorithm in  \Cref{alg:AC}. It can be shown that
    \begin{align}
    \frac{1}{T}& \sum _{t=0}^{T-1} \E \Fbrac{\twonormsq{\gradj}} 
     \leq  \frac{2 C_\phi^4}{T} \sum _{t=0}^{T-1} \E \Fbrac{\twonormsq{\thestar-\theta_t}}\nn
    \\&+\mathcal{O}\brac{\frac{1}{T \beta}}
    +\mathcal{O}\brac{\beta \log^2 T }+\mathcal{O}(m\rho^k).\nonumber
    \end{align}
    Set $\gamma=\mathcal{O}(\frac{1}{\sqrt{T}}),  \alpha,  \beta =\mathcal{O}(\frac{1}{\sqrt{T}\log^2 T})$, then
$$
        \frac{1}{T} \sum _{t=0}^{T-1} \E \Fbrac{\twonormsq{\gradj}}
    \leq\mathcal{O}\brac{\frac{{\log^3 T}}{\sqrt{T}}}.
$$
\end{theorem}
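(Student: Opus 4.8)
The plan is to establish the single inequality relating the averaged policy-gradient norm to the averaged tracking error plus lower-order terms, and then close the loop by combining it with the tracking-error bound of \Cref{thm1}. First I would start from the smoothness of $J$ (which follows from Assumption \ref{asm:smooth}, parts 1--3, giving a Lipschitz gradient $\nabla J$ with some constant $L_J$). Writing the standard descent-lemma expansion along the actor update in Option I of \Cref{alg:AC}, $\omega_{t+1}=\omega_t+\beta \phi_t^\top(s_t,a_t)\theta_t\,\phi_t(s_t,a_t)$, we get
\begin{align}
J(\omega_{t+1}) \geq J(\omega_t) + \beta\,\langle \nabla J(\omega_t),\, \phi_t^\top(s_t,a_t)\theta_t\,\phi_t(s_t,a_t)\rangle - \frac{L_J\beta^2}{2}\twonormsq{\phi_t^\top(s_t,a_t)\theta_t\,\phi_t(s_t,a_t)}.\nonumber
\end{align}
The last term is $\mathcal{O}(\beta^2)$ up to $\log$ factors since $\twonorm{\phi_t}\le C_\phi$ and $\twonorm{\theta_t}\le B$ by the projection. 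The cross term is the crux: I would add and subtract the ``ideal'' update direction $\E_{D_{\pi_{\omega_t}}}[\phi_{\omega_t}(\phi_{\omega_t}^\top\theta^*_t)]=\nabla J(\omega_t)$ (by \Cref{prop:pg}, since $\phi^\top_\omega\theta^*_\omega$ is the $k$-step TD fixed point which equals $\phi^\top_\omega\bar\theta^*_\omega$ up to the $m\rho^k$ error of \Cref{prop:bound}), so that the conditional expectation of the cross term produces $\beta\twonormsq{\nabla J(\omega_t)}$ plus error terms.

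Those error terms are of three kinds, and handling them cleanly is where the work concentrates. (i) The \emph{critic error}: replacing $\theta_t$ by $\theta^*_t$ costs $\beta\langle\nabla J(\omega_t), \phi_t(\phi_t^\top(\theta_t-\theta^*_t))\rangle$, which by Cauchy--Schwarz and Young's inequality is bounded by $\frac{\beta}{4C_\phi^4}\cdot$(something)$\cdot\twonormsq{\nabla J(\omega_t)}$... — more carefully, I would keep it as $\le \tfrac{\beta}{2}\twonormsq{\nabla J(\omega_t)} + 2C_\phi^4\beta\twonormsq{\theta_t-\theta^*_t}$, which after rearranging and dividing by $\beta$ is what produces the $2C_\phi^4$ coefficient in the statement. (ii) The \emph{Markovian-bias} error: the single-trajectory samples $(s_t,a_t)$ are not drawn from $D_{\pi_{\omega_t}}$, so the conditional expectation is not exactly $\nabla J(\omega_t)$; this is controlled by the uniform ergodicity (Assumption \ref{asm:ergodic}) together with the slow drift of $\omega_t$ (steps of size $\beta$) via a standard information-theoretic / coupling argument over a window of length $\mathcal{O}(\log T)$, contributing $\mathcal{O}(\beta\log^2 T)$ after telescoping. (iii) The $k$-step TD truncation bias from \Cref{prop:bound}, contributing $\mathcal{O}(m\rho^k)$.

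Summing the descent inequality over $t=0,\dots,T-1$, telescoping $J(\omega_T)-J(\omega_0)$, using $|J|\le R_{\max}/(1-\rho)$ or similar boundedness, and dividing by $\beta T$ yields
\begin{align}
\frac{1}{T}\sum_{t=0}^{T-1}\E\twonormsq{\nabla J(\omega_t)} \le \frac{2C_\phi^4}{T}\sum_{t=0}^{T-1}\E\twonormsq{\theta^*_t-\theta_t} + \mathcal{O}\!\brac{\frac{1}{T\beta}} + \mathcal{O}(\beta\log^2 T) + \mathcal{O}(m\rho^k),\nonumber
\end{align}
which is the displayed recursion. For the final rate, I would substitute the second conclusion of \Cref{thm1}, namely $\frac1T\sum\E\twonormsq{\theta^*_t-\theta_t}\le \frac{1}{4C_\phi^4 T}\sum\E\twonormsq{\nabla J(\omega_t)} + \mathcal{O}(\log^3 T/\sqrt T)$, into the right-hand side: the tracking-error term becomes $\frac12\cdot\frac1T\sum\E\twonormsq{\nabla J(\omega_t)}$, which absorbs into the left side, leaving $\frac1T\sum\E\twonormsq{\nabla J(\omega_t)} \le \mathcal{O}(\log^3 T/\sqrt T) + \mathcal{O}(1/(T\beta)) + \mathcal{O}(\beta\log^2 T) + \mathcal{O}(m\rho^k)$. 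With $\beta=\mathcal{O}(1/(\sqrt T\log^2 T))$ every term is $\mathcal{O}(\log^3 T/\sqrt T)$ (recall $k=\mathcal{O}(\log T)$ makes $m\rho^k = \mathcal{O}(1/\sqrt T)$), giving the claim. The main obstacle is item (ii): making the Markovian-bias bound genuinely $\mathcal{O}(\beta\log^2 T)$ rather than $\mathcal{O}(\beta)$-times-a-constant requires carefully tracking how much $\omega$, and hence the policy $\pi_\omega$ and the compatible feature $\phi_\omega$, can move over an $\mathcal{O}(\log T)$-length mixing window — this is exactly the ``non-ergodicity + time-varying feature'' difficulty the introduction flags, and it is handled by the auxiliary fixed-feature eligibility-trace construction together with Assumption \ref{asm:smooth}.
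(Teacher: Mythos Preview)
Your proposal is essentially correct and follows the paper's own argument: the paper's Lemma~\ref{lm:acgrad} is exactly the per-step descent inequality you outline (smoothness expansion, split the cross term into Markovian bias, critic error $\theta_t-\thestar$, and $k$-step truncation $\thestar-\thebar$ via \Cref{prop:bound}), and summing it gives \Cref{eq:ac:gradav}, which combined with the second conclusion of \Cref{thm1} closes the loop just as you describe.

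Two small corrections. First, the actor's Markovian-bias term (your item (ii)) is \emph{not} handled via the auxiliary fixed-feature eligibility-trace construction; that machinery is reserved for the critic's tracking-error analysis in \Cref{lm:consterm}. For the actor, the paper does something simpler: write $\nabla J(\omega_t)=\nabla J(\omega_{t-k})+(\nabla J(\omega_t)-\nabla J(\omega_{t-k}))$, bound the drift by $L_J\|\omega_t-\omega_{t-k}\|=\mathcal{O}(k\beta)$, and for the $\omega_{t-k}$-piece condition on $\mathcal{F}_{t-k}$ and invoke \Cref{lm:tv}. This yields the $\mathcal{O}(k^2\beta^2)$ contribution to $G^\omega_t$, hence $\mathcal{O}(\beta\log^2 T)$ after dividing by $T\beta$. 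Second, in the average-reward setting $|J|\le R_{\max}$ directly (no $1/(1-\rho)$ factor is needed for the telescoped term).
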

\vspace{-0.1cm}
\Cref{thm:ac} implies that the AC algorithm with compatible function approximation converges to an $\epsilon$-stationary point with sample complexity $\epsilon^{-2}$. This improves the best known error bound by a constant $\varepsilon_{\text{critic}}$ 
\citep{wang2019neural,zhang2020provably,qiu2021finite,kumar2023sample,xu2020non,barakat2022analysis,wu2020finite,chen2021closing,olshevsky2022small,xu2020improving}, and matches the best known sample complexity \citep{chen2021closing,olshevsky2022small,xu2020improving,suttle2023beyond}. 




\subsection{Non-asymptotic Bound for NAC}
In this section, we present the non-asymptotic bound for the NAC algorithm in \Cref{alg:AC}. 
It was shown in \citep{agarwal2021theory} that due to the parameter invariant property of the natural  policy gradient update, natural  policy gradient is able to converge to the globally optimal policy with a gap that depends on the capacity of the policy class. Define the compatible linear function approximation error
$$
    \varepsilon_{\text{actor}}=\max_{\omega\in \mathcal{W}}\varbrac{\min_{\theta}\E_{D_{\pi_\omega}}\Fbrac{\twonormsq{A^{\pi_\omega}(s,a)-\phi_\omega ^\top(s,a) \theta}}}.
$$

This  error represents the approximation error due to the insufficient expressive power of the policy parameterization, and shall decrease if a large neural network is used.

Using the same idea as the one in AC, we can also develop a tight bound on the tracking error: $\mathcal{O}\brac{{T^{-\frac{1}{3}}}}$, where now we bound the tracking error as a function of the optimality gap instead of the gradient norm. We then also develop bound of the optimality gap as a function of the tracking error. Applying them recursively, we obtain the tightest bound on the tracking error and the tightest bound on the optimality gap in the following theorem.
We set $\alpha_t=\alpha$,  
$\beta_t=\beta$, $\gamma_t=\gamma$, and $k=\mathcal{O}\brac{\log T}$ such that $\gamma\geq \alpha\geq  \beta\geq m \rho^k.$

\begin{assumption}
    \label{asm:explore}
    There exist a constant $C_\infty< \infty$ such that 
    $
    \sup_{\omega\in\mathcal{W}}  \mynorm{\frac{D_{\pi^*}(s,a)}{D_{\pi_\omega}(s,a)}}_\infty\leq C_\infty.
    $
\end{assumption}

Assumption \ref{asm:explore} guarantees that the policy is sufficiently exploratory, and is commonly used in NAC analyses, e.g.,  \citep{cayci2022finite, xu2020improving,agarwal2021theory}. Approaches to guarantee this assumption were also studied in \citep{khodadadian2021finite,khodadadian2022finite}. 

\begin{theorem}\label{thm:nac}
     Consider the NAC algorithm in  \Cref{alg:AC}. Then,
    we have that 
   \eqenv{
   &\min_{t\leq T } \E\Fbrac{J(\pi^*)-J(\omega_t) }
     \leq \mathcal{O}\brac{\frac{\log ^2T}{T\alpha}}
     +\mathcal{O}\brac{\frac{\log T}{T\beta}}
     \\&+ \mathcal{O}\brac{\frac{\gamma\sqrt{\log T} }{\sqrt{\alpha}}}
     +\mathcal{O}\brac{\frac{\beta }{\sqrt{\alpha}}}
    + \mathcal{O}\brac{\sqrt{\alpha\log ^{3} T }}
     \\&+ \mathcal{O}\brac{\frac{\sqrt{\log ^3T}}{T\sqrt{\alpha\beta}}}
     +\mathcal{O}\brac{\sqrt{\frac{\beta \log T}{T\alpha}}}
    +\mathcal{O}\brac{\sqrt{\frac{\log T}{T}} }
     \\&+ \mathcal{O}\brac{\frac{\sqrt{\gamma\beta}\log T }{\sqrt{\alpha}}} 
     + \mathcal{O}\brac{\sqrt{\beta\log ^{3} T} }
    \\& + \mathcal{O}\brac{\sqrt{\frac{(m\rho^k)\beta}{\alpha}} }+ \mathcal{O}\brac{\frac{m\rho^k}{\sqrt{\alpha\beta}} }
     + \mathcal{O}\brac{\sqrt{\frac{(m\rho^k)\gamma}{\alpha}} }
     \\&+\mathcal{O}\brac{\sqrt{(m\rho^k) \log T} }
     +
     \mathcal{O}\brac{ 
    \sqrt{\varepsilon_{\text{actor}}}}.
   }
 If we set $\gamma=\mathcal{O}(T^{-\frac{2}{3}}\log T),  \alpha=\mathcal{O}(T^{-\frac{2}{3}}\log^{-1}T),  \beta =\mathcal{O}(T^{-\frac{2}{3}}\log^{-1}T)$, we have 
    \begin{align}
    \min_{t\leq T } \E[J(\pi^*)-J(\omega_t)]\leq \mathcal{O}\brac{{T^{-\frac{1}{3}}}\log ^3T}
    + \mathcal{O}\brac{\sqrt{\varepsilon_{\text{actor}}}}.\nonumber
    \end{align}
\end{theorem}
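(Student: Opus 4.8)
The plan is to mirror the recursive structure used for AC in \Cref{thm1} and \Cref{thm:ac}, but now pairing the tracking error $\frac{1}{T}\sum_t \E[\twonormsq{\thestar-\theta_t}]$ against the optimality gap $\min_{t\le T}\E[J(\pi^*)-J(\omega_t)]$ instead of the gradient norm. First I would establish a one-step descent-type inequality for the optimality gap coming from the NAC actor update $\omega_{t+1}=\omega_t+\beta\theta_t$. Using \Cref{prop:npg} the ideal update direction is $\bar\theta^*_t$; one writes the standard NPG performance-difference bound (as in \citep{agarwal2021theory}) for the KL drift $\E_{D_{\pi^*}}[\KL(\pi^*\|\pi_{\omega_t})-\KL(\pi^*\|\pi_{\omega_{t+1}})]$, which produces three contributions: (i) a term $\beta(J(\pi^*)-J(\omega_t))$ with the right sign, (ii) the compatible-approximation error, bounded via $\varepsilon_{\text{actor}}$ and Assumption \ref{asm:explore} (the $C_\infty$ importance ratio converts $D_{\pi_\omega}$-errors to $D_{\pi^*}$-errors, giving the $\sqrt{\varepsilon_{\text{actor}}}$ in the final bound), and (iii) error terms controlled by $\twonorm{\bar\theta^*_t-\theta_t}$, which splits into the tracking error $\twonorm{\thestar-\theta_t}$ plus the bias $\twonorm{\thestar-\bar\theta^*_t}=\mathcal{O}(m\rho^k)$ from \Cref{prop:bound}. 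Telescoping over $t$ and dividing by $T\beta$ yields $\min_t\E[J(\pi^*)-J(\omega_t)]$ bounded by $\mathcal{O}(1/(T\beta))$, the $\sqrt{\varepsilon_{\text{actor}}}$ term, an $m\rho^k$ term, plus $\frac{1}{T}\sum_t\E[\twonorm{\thestar-\theta_t}]$ (and, after Cauchy--Schwarz, $\sqrt{\frac{1}{T}\sum_t\E[\twonormsq{\thestar-\theta_t}]}$) weighted appropriately.

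Second I would invoke the NAC analogue of \Cref{thm1}: a tracking-error bound of the form $\frac{1}{T}\sum_t\E[\twonormsq{\thestar-\theta_t}]\le (\text{small coefficient})\cdot\min_t\E[J(\pi^*)-J(\omega_t)] + \mathcal{O}(T^{-2/3}\,\mathrm{polylog})$ under the stated stepsize regime $\gamma\ge\alpha\ge\beta\ge m\rho^k$. This is the step where the paper's central technical device — the auxiliary fixed-feature eligibility trace approximating the time-varying trace $z_t=\sum_{j=t-k}^t\phi_j(s_j,a_j)$ — is used to control the drift of the critic's $\omega$-dependent feature, and where the Markovian-noise / non-ergodicity handling via Assumption \ref{asm:ergodic} enters. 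The key point, exactly as in the AC case, is that the "forcing" term driving the tracking error is not constant but is itself controlled by the optimality gap (because $\|\omega_{t+1}-\omega_t\|=\beta\|\theta_t\|$ and $\|\theta_t\|$ relates to the advantage, hence to $J(\pi^*)-J(\omega_t)$ up to the tracking error and bias), so one gets a genuine coupling rather than a one-directional bound.

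Third, I would combine the two inequalities recursively: substitute the tracking-error bound into the optimality-gap bound; because the cross-coefficients are made small by the stepsize choice (the factors $\beta/\alpha$, $\gamma/\sqrt{\alpha}$, etc., are all $o(1)$), the $\min_t\E[J(\pi^*)-J(\omega_t)]$ term on the right can be absorbed into the left-hand side, leaving a closed-form bound. Plugging in $\gamma=\mathcal{O}(T^{-2/3}\log T)$, $\alpha=\beta=\mathcal{O}(T^{-2/3}\log^{-1}T)$ and $k=\mathcal{O}(\log T)$ so that $m\rho^k=\mathcal{O}(T^{-1})$ (or smaller), every displayed term collapses to at most $\mathcal{O}(T^{-1/3}\log^3 T)$ except the irreducible $\mathcal{O}(\sqrt{\varepsilon_{\text{actor}}})$, giving the claimed rate. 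I expect the main obstacle to be Step two — proving the NAC tracking-error bound with the optimality gap as the forcing term — since this requires simultaneously (a) the auxiliary-trace construction to linearize the time-varying compatible feature, (b) controlling the bias $\thestar$ vs.\ $\bar\theta^*_t$ and the two-time-scale drift between $\thestar$ and $\theta^*_{t+1}$, and (c) expressing $\|\theta_t\|$ and the per-step critic error in terms of the optimality gap rather than a crude constant bound; getting the bookkeeping of all the $\mathcal{O}(\cdot)$ terms to line up so the recursion closes (rather than blowing up) is the delicate part, and is presumably why the NAC stepsizes are $T^{-2/3}$ rather than $T^{-1/2}$ as in AC.
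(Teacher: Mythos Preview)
Your Step 1 (the KL-drift / performance-difference bound giving the optimality gap in terms of $\sqrt{\text{avg tracking error}}+\sqrt{\varepsilon_{\text{actor}}}+m\rho^k+\tfrac{1}{T\beta}$) matches the paper. Steps 2--3, however, contain a real gap in the coupling mechanism.

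The claim that the tracking error is forced by the optimality gap ``because $\|\theta_t\|$ relates to the advantage, hence to $J(\pi^*)-J(\omega_t)$'' is not correct: the norm of the natural gradient $\|\bar\theta^*_t\|$ has no direct relation to the global gap $J(\pi^*)-J(\omega_t)$. The paper's coupling goes through a different intermediate, the \emph{gradient norm}. Both the tracking-error recursion (\Cref{lm:nactheta}) and the average-reward-error recursion (\Cref{lm:naceta}) have $\beta\E\ftwonormsq{\nabla J(\omega_t)}$ as forcing; the gradient norm is then bounded via smoothness of $J$ (\Cref{eq:94}), which gives $\E\ftwonormsq{\nabla J(\omega_t)}\lesssim \tfrac{1}{\beta}\bigl(\E[J(\omega_{t+1})]-\E[J(\omega_t)]\bigr)+\E\ftwonormsq{\theta_t-\thestar}+m\rho^k+\beta$. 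Substituting into the combined Lyapunov $M_t=\E\ftwonormsq{\theta_t-\thestar}+\E\flnormsq{\eta_t-J(\omega_t)}$ yields $M_{t+1}\le(1-q)M_t+C\bigl(\E[J(\omega_{t+1})]-\E[J(\omega_t)]\bigr)+\text{(small)}$: the forcing is the \emph{$J$-increment}, not the optimality gap.

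This distinction breaks your Step 3. After unrolling over $\hat t=\Theta(\log T/\alpha)$ steps and averaging over a longer window $\widetilde T$, the averaged $M_t$ is bounded by $\tfrac{\hat t}{\widetilde T}\cdot X_{t'}$, where $X_{t'}$ is the average optimality gap over a \emph{past} window of length $\hat t$ ending at $t'$, not the current-window average $Y_{t'}$ and certainly not $\min_t$. Since $X_{t'}$ can be much larger than $Y_{t'}$, a direct ``absorb into the left-hand side'' does not close. The paper instead uses a two-case pigeonhole argument (\Cref{lm:lm14}): either some $t'$ satisfies $X_{t'}\le eY_{t'}$, in which case one substitutes and solves the resulting scalar inequality $Y_{t'}\lesssim\sqrt{\tfrac{\hat t}{\widetilde T}Y_{t'}}+\cdots$ for $Y_{t'}$; or $eY_{t'}<X_{t'}$ for all $t'$, which forces geometric decay of $Y$ along a subsequence and gives $\min_t\E[J(\pi^*)-J(\omega_t)]\le R_{\max}/\sqrt T$ directly. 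This windowing/case analysis is the crux of the NAC proof and is absent from your plan.
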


\begin{remark}
    Unlike the results for AC in \Cref{thm:ac}, \Cref{thm:nac} for NAC only needs the first three assumptions in Assumption \ref{asm:smooth}. This is one advantage of using compatible function approximation in NAC. As we can see from Line 11 in \Cref{alg:AC} and \Cref{prop:npg}, the inverse of the Fisher information matrix is cancelled out. Therefore, there is no stochastic noise from using $\phi_{\omega}(s_t,a_t)\phi^\top_{\omega}(s_t,a_t)$ in the analysis of NAC. However, in AC, we need to handle this noise, and therefore, the fourth assumption in Assumption \ref{asm:smooth} is needed for the AC algorithm.
\end{remark}
\Cref{thm:nac} implies that NAC with compatible function approximation  converges to an $\epsilon+\sqrt{\varepsilon_{\text{actor}}}$-neighborhood of the globally optimal policy $\pi^*$ with sample complexity $\mathcal O(\epsilon^{-3})$. 
Compared to existing studies, our work eliminate the approximation error of the critic, $\varepsilon_{\text{critic}}$, from the overall error bound \citep{wang2019neural,cayci2022finite,agarwal2021theory,xu2020improving,xu2020non,chen2022finite}. Moreover, as summarized in \Cref{table2},  the best known sample complexity of NAC is $\epsilon^{-3}$, which however is for the nested-loop NAC variant \citep{xu2020improving,chen2022finite}. Our results achieves this sample complexity, and is for the challenging single-loop NAC algorithm with a single Markovian sample trajectory.

Here we provide a proof sketch for the NAC algorithm to highlight major challenges and our technical novelties. The analysis of NAC contains of most major technical novelty in the AC analysis.
\begin{proof}[Proof sketch]
For simplicity of presentation, we set $\hat t=\mathcal{O}\brac{\frac{\log T}{\alpha}}$ and 
$\widetilde T=\hat t\lceil \frac{T}{\hat t \log T} \rceil$.
 We denote by $$M_t=\mathbb E[||\theta_{t}-\theta^*_{t}||^2_2]+ \mathbb E[(\eta_t-J(\omega_t))^2]$$ the sum of the tracking error and the estimation error of the average reward. Denote by $\D(\omega_t)=KL(\pi^*|\pi_t)$ the KL divergence between policy $\pi^*$ and $\pi_t$.

\textbf{Step 1 (Error decomposition):}
According to the smoothness property of $\D(\omega)$ with respect to $\omega$, we bound the performance gap between the current policy and the optimal policy (optimality gap) as follows:
\begin{align}
    &\frac{1}{\widetilde T}\sum_{j=t}^{t+\widetilde T-1}\mathbb E[J(\pi^*)-J(\omega_j)]\leq \frac{\D(\omega_{t+\widetilde T})-\D(\omega_t)}{\widetilde T \beta }\nn\\
     &+ \mathcal{O}\left(\sqrt{\frac{1}{\widetilde T}\textstyle\sum_{j=t}^{t+\widetilde T-1} M_j}\right)+\mathcal{O}(C_\infty \sqrt{\varepsilon_{\text{actor}}}+ \beta+m\rho^k).\nn
\end{align}

\textbf{Step 2 (Estimation error in the average reward):} In this step, we analyze estimation error in the average reward:
 $\eta_t-J(\omega_t)$. We provide a tight characterization of this error:
\begin{align}
   \mathbb E& [(\eta_{t+1}-J(\omega_{t+1}))^2] \nn\\&\leq (1-\gamma)\mathbb E[(\eta_{t}-J(\omega_{t}))^2] 
     +\mathcal{O}(\beta \mathbb E[||\nabla J(\omega_t)||_2^2])
    \nn\\&\quad+ \mathcal{O}( m\rho^k\gamma +k^2 \gamma^2 +k^2 \gamma\beta +\beta^2).\nn
\end{align}
One of our key novelties lies in that we bound this estimation error using the gradient norm $\mathbb E[||\nabla J(\omega_t)||_2^2]$. The above bound itself is tighter than the existing one in (Wu et al., 2020).

\textbf{Step 3 (Tracking error):}
In this step, we bound the tracking error in the critic: $||\theta_t-\theta_t^*||_2^2$. 
By the TD error step in Algorithm 1, we decompose the term $||\theta_{t+1}-\theta_{t+1}^*||^2_2$ as follows:
\begin{align}
    ||\theta&_{t+1}-\theta_{t+1}^*||^2_2 \nn\\&\leq || \theta_t-\theta_{t}^* ||^2_2+|| \theta_{t}^*-\theta_{t+1}^*||_2^2 + \alpha^2 ||\delta_t z_t||^2_2 \nn\\&\quad+ 2 \alpha\langle\theta_t-\theta_{t}^*, \delta_t z_t \rangle+ 2\alpha \langle \theta_{t}^* - \theta_{t+1}^* , \delta_t z_t\rangle
 \nn\\&\quad+2 \langle\theta_t-\theta_{t}^*, \theta_{t}^*-\theta_{t+1}^* \rangle.\nn
\end{align}
Another key challenge lies in  how to bound the term 
$\mathbb E[\langle \theta_t-\theta_t^*, \delta_t z_t\rangle ] $.  We develop a novel technique of auxiliary Markov chain to decompose this error into two parts: 1) error due to time-varying feature function and 2) error due to time-varying policy. Specifically, consider the first Markov chain generated from the algorithm:
$$s_0,a_0 \overset{\pi_0\times P}{\to} s_1,a_1\to...\to s_t,a_t \overset{\pi_t\times P}{\to}s_{t+1},a_{t+1},$$ where at each time $j$, the action is chosen according to $\pi_j$ and the transition kernel is $P$. Here $z_t=\sum_{j=t-k}^t \phi_j(s_j,a_j)$ is the eligibility trace used in the algorithm. It can be seen that in $z_t$, the feature $\phi_j$ changes with $j$, and the distribution of $s_j,a_j$ depends on the time-varying policy $\pi_j$. We then design an auxiliary eligibility trace $\hat z_t=\sum_{j=t-k}^t \phi_t(s_j,a_j)$, where the feature is fixed to be $\phi_t$, and only the the distribution of $s_j,a_j$ depends on the time-varying policy $\pi_j$. To further handle the time-varying distribution of  $s_j,a_j$, we design an auxiliary Markov chain (denoted by $A1$) as follows:
$$A1: ( s_0,\widetilde  a_0 )\sim \pi_t\overset{\pi_t\times P}{\to} \widetilde s_1,\widetilde a_1{\to}...\widetilde  s_t,\widetilde a_t \overset{\pi_t\times P}{\to}\widetilde s_{t+1},\widetilde a_{t+1},$$ where the action at each time $j$ is always chosen according to a fixed policy $\pi_t$. Based on this auxiliary Markov chain, we introduce another auxiliary eligibility trace $\widetilde  z_t=\sum_{j=t-k}^t \phi_t(\widetilde s_j,\widetilde a_j)$, where it uses a fixed feature $\phi_t$, and samples from this auxiliary Markov chain. Lastly, we design a second auxiliary Markov chain (denoted by $A2$):
$$A2: (\bar s_0,\bar  a_0)\sim D_t \overset{\pi_t\times P}{\to} \bar s_1,\bar a_1{\to}...\bar  s_t,\bar a_t \overset{\pi_t\times P}{\to}\bar s_{t+1},\bar a_{t+1},$$
where the only difference between A2 and A1 lies in the initial state distribution. Then we define the last auxiliary eligibility trace as $\bar  z_t=\sum_{j=t-k}^t \phi_t(\bar s_j,\bar a_j)$.

The difference between $z_t $ and $\hat z_t$ measures the error due to the time-varying compatible feature function. We bound this error using the Lipschitz continuity of the feature function.  The difference between $\hat z_t $ and $\widetilde z_t$ measures the error due to the  time-varying sampling policy.   The difference between $\widetilde z_t $ and $\bar z_t$ measures the error due to the difference between the stationary distribution and the actual distribution of the samples, which can be bounded based on Assumption 1. By such a error decomposition, we can show that
\begin{align}
    \mathbb E&[||\theta_{t+1}-\theta^*_{t+1}||^2_2] \leq (1-\bar \lambda_{\min}\alpha /2)\mathbb E[||\theta_{t}-\theta^*_{t}||^2_2]
   \nn\\
   &+\mathcal{O}(k^2 \alpha \mathbb E[(\eta_t-J(\omega_t))^2] )+\mathcal{O}(\beta\mathbb E [||\nabla J(\omega_t)||_2^2])\nn\\
   &+ \mathcal{O}\brac{k^3 \alpha^2+k^3\alpha\beta+\beta^2+m\rho^k\alpha+\frac{(m\rho^k)^2}{\beta} } .\nn
\end{align}
\textbf{Step 4 (Bound on gradient):} As we can see from Steps 2 and 3, we bound the estimation error of the average reward and the tracking error using the gradient norm $\|\nabla J(\omega_t)\|_2^2$. Therefore,  in order to derive the tightest bound, we further develop a novel bound on the gradient norm $\|\nabla J(\omega_t)\|_2^2$. Note that the idea is novel as it serves as a pivotal link connecting the analysis of the tracking error/estimation error in the average reward and the optimality gap. Specifically, we bound the gradient norm using 
the estimation error in the average reward and tracking error. By the smoothness of $J(\omega)$, we have that
\vspace{-0.1cm}
\begin{align}
\sum_{j=t}^{t+\widetilde T-1}&\frac{\mathbb E [ ||\nabla J(\omega_j) ||_2^2]}{\widetilde T}\leq 2C^2_\phi \frac{J(\omega^*)- E [J(\omega_{t})]}{\beta \widetilde T}\nn
\\& + \mathcal O\bigg(\frac{1}{\widetilde T}\sum_{j=t}^{t+\widetilde T-1}\mathbb E[||\theta_j-\theta_j^*||_2^2]\bigg) +\mathcal{O}(m\rho^k+\beta).\nn
\end{align}
We also note that we bound the gradient norm using the optimality gap, and this is of great importance to establish the tight bound in this paper. In previous works, this  term $\mathbb E [J(\omega_{t+\widetilde T})]- E [J(\omega_{t})] $ is bounded by a constant, and thus the overall complexity is not as tight. 

\textbf{Step 5:} Combining steps 1-4, we conclude the proof.
\end{proof}
\vspace{-0.2cm}
\section{Conclusion}
In this paper, we develop the tightest non-asymptotic convergence bounds for 
both the AC and NAC algorithms with compatible function approximation. For the AC algorithm, our results achieve the best sample complexity of $\epsilon^{-2}$ with a reduced error from $\epsilon+\varepsilon_{\text{critic}}$ to $\epsilon$, where $\varepsilon_{\text{critic}}$ is a non-diminishing constant. For the NAC algorithm, our results is the first one in the literature that analyze the single-loop NAC with a single Markovian trajectory, and we achieve the best known sample complexity of $\epsilon^{-3}$ also with a reduced error of $\epsilon+\sqrt{\varepsilon_{\text{actor}}}$. Our results demonstrate the advantage of compatible function approximation when applied in AC and NAC algorithms, including relaxed technical condition to guarantee convergence, no need of estimating Fisher information matrix, and no approximation error from the critic. Our technical novelty lies in analyzing the error due to use of a  time-varying and policy dependent feature in the critic.




\section*{Impact Statement }This paper presents work whose goal is to advance the field of Reinforcement Learning. There are many potential societal consequences of our work, none which we feel must be specifically highlighted here.

\section*{Acknowledgments}
The work of Yudan Wang and Shaofeng Zou is supported by the National Science Foundation under Grants CCF-2007783, CCF-2106560 and ECCS-2337375 (CAREER). Yi Zhou’s work is supported by the National Science Foundation under grants CCF-2106216, DMS-2134223, ECCS-2237830 (CAREER).

This material is based upon work supported under the AI Research Institutes program by National Science Foundation and the Institute of Education Sciences, U.S. Department of Education through Award \# 2229873 - National AI Institute for Exceptional Education. Any opinions, findings and conclusions or recommendations expressed in this material are those of the author(s) and do not necessarily reflect the views of the National Science Foundation, the Institute of Education Sciences, or the U.S. Department of Education.

\bibliography{icml2024}
\bibliographystyle{icml2024}



\newpage

\newpage
\onecolumn
\appendix\label{appendix}

\addcontentsline{toc}{section}{Appendix} 
\part{Appendix} 
\parttoc 


\section{Supporting Lemmas and Proofs for  \Cref{prop:e,prop:bound}}\label{sec:basic}
In this section, we provide a number of supporting lemmas, and proofs for \Cref{prop:pg,prop:npg,prop:e,prop:bound}. In the following proofs, $\|x\|_2$ denotes the $\ell_2$ norm if $x$ is a vector; and $\|X\|_2$ denotes the operator norm if $X$ is a matrix.

\subsection{Supporting Lemmas}
For convenience, we denote $J(\omega)=J(\pi_\omega)$. We first prove a lemma showing that both $J(\omega)$ and $\nabla J(\omega)$ are Lipschitz in $\omega$.
\begin{lemma}
\label{Prop:lip}
Under Assumptions  \ref{asm:ergodic} and \ref{asm:smooth}, for any $\omega, \omega'\in\mathcal W$, we have that
\eqenv{\label{cons:cj}
\twonorm{\nabla J(\omega)}\leq C_J,
}
             where $C_J=   C^2_\phi \brac{B+ \frac{C_{\text{gap}} m \rho^k}{\lambda_{\min}}}$, and 
\eqenv{\label{cons:lj}
\twonorm{\nabla J(\omega)-\nabla J(\omega')}\leq L_J \twonorm{\omega-\omega'}
,} where $L_J= \frac{m R_{\max}}{1-\rho}\brac{4L_\pi C_\phi+L_\phi}$ and 
$L_\pi=
\frac{1}{2}C_\pi 
\brac{1+\myceil{\log m^{-1}}+\frac{1}{1-\rho}}$.
\end{lemma}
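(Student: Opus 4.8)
\textbf{Proof proposal for Lemma~\ref{Prop:lip}.}

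The plan is to prove the two bounds separately, using the compatible-function-approximation formula for the policy gradient from \Cref{prop:pg} together with the boundedness/smoothness hypotheses in Assumptions~\ref{asm:ergodic} and~\ref{asm:smooth}. For the first bound \eqref{cons:cj}, I would start from the identity $\nabla J(\omega)=\E_{D_{\pi_\omega}}[\phi_\omega(s,a)\,\phi_\omega^\top(s,a)\,\bar\theta^*_\omega]$ given in \Cref{prop:pg}. Applying Jensen and the operator-norm bound $\|\phi_\omega(s,a)\phi_\omega^\top(s,a)\|_2\le\|\phi_\omega(s,a)\|_2^2\le C_\phi^2$ from Assumption~\ref{asm:smooth}(3), we get $\|\nabla J(\omega)\|_2\le C_\phi^2\|\bar\theta^*_\omega\|_2$. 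It then remains to bound $\|\bar\theta^*_\omega\|_2$. By the triangle inequality $\|\bar\theta^*_\omega\|_2\le\|\bar\theta^*_\omega-\theta^*_\omega\|_2+\|\theta^*_\omega\|_2$; the first term is at most $C_{\text{gap}}m\rho^k/\lambda_{\min}$ by \Cref{prop:bound}, and the second is at most $B$ since (as stated in the text preceding \Cref{thm1}) the choice of projection radius $B=\frac{mR_{\max}C_\phi}{(1-\rho)(\lambda_{\min}-C_\phi^2 d m\rho^k)}$ guarantees $\|\theta^*_\omega\|\le B$ for all $\omega$. Combining these gives exactly $C_J=C_\phi^2\big(B+\frac{C_{\text{gap}}m\rho^k}{\lambda_{\min}}\big)$.

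For the Lipschitz bound \eqref{cons:lj}, I would again use the first expression in \eqref{eq:eq5}, namely $\nabla J(\omega)=\E_{D_{\pi_\omega}}[\nabla\log\pi_\omega(a|s)\,Q^{\pi_\omega}(s,a)]$, or equivalently write it as a sum over $(s,a)$ weighted by $D_{\pi_\omega}(s,a)$. The difference $\nabla J(\omega)-\nabla J(\omega')$ then splits into three groups of terms by the standard telescoping argument: (i) the change in the stationary distribution $D_{\pi_\omega}$, (ii) the change in the score function $\phi_\omega=\nabla\log\pi_\omega$, and (iii) the change in $Q^{\pi_\omega}$ (equivalently in $J(\pi_\omega)$, which enters $Q$). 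Term (ii) is controlled directly by Assumption~\ref{asm:smooth}(1) with constant $L_\phi$; terms (i) and (iii) require a bound on how fast the stationary distribution and the value function move as $\omega$ changes. For the stationary distribution, I would invoke a perturbation bound for uniformly ergodic Markov chains: under Assumption~\ref{asm:ergodic} and the TV-Lipschitzness of the policy from Assumption~\ref{asm:smooth}(2) (constant $C_\pi$), one obtains $\tvnorm{D_{\pi_\omega}-D_{\pi_{\omega'}}}\le L_\pi\twonorm{\omega-\omega'}$ with $L_\pi=\tfrac12 C_\pi\big(1+\myceil{\log m^{-1}}+\tfrac{1}{1-\rho}\big)$; this is the standard mixing-time sensitivity estimate and should be stated as a sub-lemma. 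The value/average-reward differences are then bounded in terms of $\tvnorm{D_{\pi_\omega}-D_{\pi_{\omega'}}}$ and $R_{\max}$, picking up the factor $\frac{mR_{\max}}{1-\rho}$. Assembling the three contributions and using $\|\phi_\omega\|_2\le C_\phi$ to handle cross terms yields $L_J=\frac{mR_{\max}}{1-\rho}(4L_\pi C_\phi+L_\phi)$, where the coefficient $4$ absorbs the several cross terms.

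The routine parts are the triangle-inequality bookkeeping and the operator-norm estimates. The main obstacle I anticipate is establishing the stationary-distribution perturbation bound $\tvnorm{D_{\pi_\omega}-D_{\pi_{\omega'}}}\le L_\pi\twonorm{\omega-\omega'}$ cleanly: one must convert the per-step policy perturbation (bounded in TV by $C_\pi\twonorm{\omega-\omega'}$ via Assumption~\ref{asm:smooth}(2)) into a bound on the difference of the two invariant distributions, and the geometric ergodicity constant $(m,\rho)$ enters through a coupling/telescoping argument over a mixing-time window of length roughly $\myceil{\log m^{-1}}+\frac{1}{1-\rho}$. The precise form of $L_\pi$ quoted in the statement already tells us which estimate the authors have in mind, so the work is to reproduce that argument carefully and then feed the resulting bound into the decomposition above; after that, \eqref{cons:lj} follows by collecting constants.
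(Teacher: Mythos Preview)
Your proposal is correct and essentially matches the paper. For \eqref{cons:cj} the paper does exactly what you describe: it invokes \Cref{prop:pg} to write $\nabla J(\omega)=\E_{D_{\pi_\omega}}[\phi_\omega\phi_\omega^\top\bar\theta^*_\omega]$, bounds by $C_\phi^2\|\bar\theta^*_\omega\|_2$, and then uses $\|\bar\theta^*_\omega\|_2\le\|\theta^*_\omega\|_2+\|\bar\theta^*_\omega-\theta^*_\omega\|_2\le B+C_{\text{gap}}m\rho^k/\lambda_{\min}$ via \Cref{prop:bound}. For \eqref{cons:lj} the paper gives no self-contained argument at all and simply cites Lemma~A.1 of \citep{wu2020finite}; your three-term telescoping sketch (change in $D_{\pi_\omega}$, change in $\phi_\omega$, change in $Q^{\pi_\omega}$) together with the stationary-distribution perturbation bound $\tvnorm{D_{\pi_\omega}-D_{\pi_{\omega'}}}\le L_\pi\twonorm{\omega-\omega'}$ is precisely the content of that cited lemma, and the paper itself relies on this same $L_\pi$ bound elsewhere (attributing it to \citep{zou2019finite,li2021faster}), so your plan is fully aligned with the intended proof.
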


Recall \Cref{eq:thetabar}. The solution $\bar \theta_\omega^*$ given the feature function satisfies that
     \eqenv{
     \bar \theta^*_\omega=\arg\min_\theta \E_{D_\omega}\ftwonormsq{Q^{\pi_\omega}(s,a)-\phi^\top _\omega(s,a)\theta}.
     }
We show that the solution $\bar \theta^*_\omega$ is   Lipschitz in $\omega$ in the following lemma.
\begin{lemma}\label{lm:acnacinnersmooth}
For any  $\omega, \omega' \in\mathcal{W}$, it holds that
\eqenv{\label{cons:ctheta}
\twonorm{\bar \theta^*_\omega-\bar \theta^*_{\omega'}}\leq C_\Theta\twonorm{\omega-\omega'},
} 
    where $C_\Theta=\frac{C_J}{\lambda^2_{\min}}
         \brac{2 C_\phi L_\phi+C^2_\phi L_\pi }+ \frac{L_J}{\lambda_{\min}}$. 
\end{lemma}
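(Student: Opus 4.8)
The plan is to exploit that the objective in \Cref{eq:thetabar} is $\lambda_{\min}$-strongly convex, so $\bar\theta^*_\omega$ is the unique solution of the normal equations $M_\omega\bar\theta^*_\omega = b_\omega$, where $M_\omega := \E_{D_{\pi_\omega}}[\phi_\omega(s,a)\phi_\omega^\top(s,a)]$ and $b_\omega := \E_{D_{\pi_\omega}}[Q^{\pi_\omega}(s,a)\phi_\omega(s,a)]$. Two observations streamline everything. First, $M_\omega \succeq \lambda_{\min}I$, so $\twonorm{M_\omega^{-1}}\leq 1/\lambda_{\min}$. Second, by \Cref{eq:gradient} we have $b_\omega = \nabla J(\omega)$, hence $\bar\theta^*_\omega = M_\omega^{-1}\nabla J(\omega)$; combined with $\twonorm{\nabla J(\omega)}\leq C_J$ from \Cref{Prop:lip}, this gives $\twonorm{\bar\theta^*_\omega}\leq C_J/\lambda_{\min}$ for every $\omega\in\mathcal{W}$.

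I would then write, for $\omega,\omega'\in\mathcal W$,
\begin{align*}
\bar\theta^*_\omega - \bar\theta^*_{\omega'} &= M_\omega^{-1}b_\omega - M_{\omega'}^{-1}b_{\omega'} \\
&= M_\omega^{-1}(b_\omega - b_{\omega'}) + \brac{M_\omega^{-1} - M_{\omega'}^{-1}}b_{\omega'} \\
&= M_\omega^{-1}(b_\omega - b_{\omega'}) + M_\omega^{-1}\brac{M_{\omega'} - M_\omega}\bar\theta^*_{\omega'},
\end{align*}
using the identity $M_\omega^{-1} - M_{\omega'}^{-1} = M_\omega^{-1}(M_{\omega'}-M_\omega)M_{\omega'}^{-1}$ in the last line. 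Taking norms and applying $\twonorm{M_\omega^{-1}}\leq 1/\lambda_{\min}$ and $\twonorm{\bar\theta^*_{\omega'}}\leq C_J/\lambda_{\min}$ reduces the claim to two Lipschitz estimates: (i) $\twonorm{b_\omega - b_{\omega'}} = \twonorm{\nabla J(\omega)-\nabla J(\omega')}\leq L_J\twonorm{\omega-\omega'}$, which is exactly \Cref{Prop:lip}; and (ii) $\twonorm{M_\omega - M_{\omega'}}\leq \brac{2C_\phi L_\phi + C_\phi^2 L_\pi}\twonorm{\omega-\omega'}$. Combining (i), (ii) with the two norm bounds yields precisely $\twonorm{\bar\theta^*_\omega-\bar\theta^*_{\omega'}}\leq \brac{\frac{L_J}{\lambda_{\min}} + \frac{C_J}{\lambda^2_{\min}}(2C_\phi L_\phi + C_\phi^2 L_\pi)}\twonorm{\omega-\omega'} = C_\Theta\twonorm{\omega-\omega'}$.

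It remains to establish (ii). I would split $M_\omega - M_{\omega'} = \E_{D_{\pi_\omega}}\big[\phi_\omega\phi_\omega^\top - \phi_{\omega'}\phi_{\omega'}^\top\big] + \big(\E_{D_{\pi_\omega}} - \E_{D_{\pi_{\omega'}}}\big)\big[\phi_{\omega'}\phi_{\omega'}^\top\big]$. The first term is handled pointwise by $\phi_\omega\phi_\omega^\top - \phi_{\omega'}\phi_{\omega'}^\top = \phi_\omega(\phi_\omega-\phi_{\omega'})^\top + (\phi_\omega-\phi_{\omega'})\phi_{\omega'}^\top$ together with the boundedness and Lipschitz parts of \Cref{asm:smooth}, giving $\leq 2C_\phi L_\phi\twonorm{\omega-\omega'}$. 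The second term is a change-of-measure term: it is bounded by $C_\phi^2$ times the $\ell_1$-distance $\sum_{s,a}\lbrac{D_{\pi_\omega}(s,a)-D_{\pi_{\omega'}}(s,a)}$, which, via the Lipschitz dependence of the stationary state-action distribution on $\omega$ (the perturbation estimate that already underlies the proof of \Cref{Prop:lip}, following from \Cref{asm:ergodic} and the policy Lipschitz bound in \Cref{asm:smooth}), is at most $L_\pi\twonorm{\omega-\omega'}$.

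The only genuinely delicate step is that stationary-distribution perturbation bound used for the second term of (ii): converting a Lipschitz bound on $\pi_\omega$ into one on $D_{\pi_\omega}$ requires telescoping the difference of the two chains' $t$-step state distributions, bounding the one-step drift by $O(C_\pi\twonorm{\omega-\omega'})$, and summing the resulting geometric series of mixing errors controlled by $m\rho^t$ — this is exactly where the $\lceil\log m^{-1}\rceil$ and $\tfrac{1}{1-\rho}$ factors in $L_\pi$ originate. Once that estimate and the identity $b_\omega=\nabla J(\omega)$ are available, the remainder is elementary linear algebra.
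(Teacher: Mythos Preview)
Your proposal is correct and follows essentially the same approach as the paper: both identify $\bar\theta^*_\omega = F_\omega^{-1}\nabla J(\omega)$ (the paper writes $F_\omega$ for your $M_\omega$), decompose the difference into a $\nabla J$-variation part and an $F_\omega$-variation part, and bound the latter via the same split into a feature-Lipschitz piece ($2C_\phi L_\phi$) and a stationary-distribution perturbation piece ($C_\phi^2 L_\pi$), the latter cited from \citep{zou2019finite,li2021faster}. The only cosmetic difference is that the paper uses $\twonorm{F_\omega^{-1}}\twonorm{F_{\omega'}^{-1}}\twonorm{\nabla J(\omega)}$ directly instead of your intermediate bound $\twonorm{\bar\theta^*_{\omega'}}\leq C_J/\lambda_{\min}$, but these are algebraically equivalent and yield the identical constant $C_\Theta$.
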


For any $\omega\in \mathcal{W}$, let 
\begin{align}\label{eq:aw}
    H_{\omega}(s,a)&=\E\Fbrac{\phi_{\omega}(s_0,a_0)\brac{\phi_{\omega}(s_k,a_k)-\phi_{\omega}(s_0,a_0) }^\top |s_0=s,a_0=a,\pi_\omega},\nn\\
    H_\omega&=\E_{D_{\pi_\omega}}\Fbrac{H_\omega(s,a)}.
\end{align}
\begin{lemma}\label{lm:eig}
 For $ k>\myceil{\frac{\log( md C^2_\phi)-\log \lambda_{\min} }{1-\rho}}$, it holds that
    $$\lambda_{\max}\brac{\frac{H_\omega+H_\omega^\top}{2}}\leq C_\phi^2 d m\rho^k-\lambda_{\min}=-\bar \lambda_{\min}<0, $$ 
    where $\lambda_{\max}\brac{X}$ is the largest eigenvalue of symmetric matrix $X$.
\end{lemma}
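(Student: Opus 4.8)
The plan is to reduce everything to a single structural fact: the compatible feature $\phi_\omega(s,a)=\nabla_\omega\log\pi_\omega(a|s)$ has zero mean under $\pi_\omega(\cdot|s)$ for every $s$, namely
\[
\E_{a\sim\pi_\omega(\cdot|s)}[\phi_\omega(s,a)]=\sum_{a\in\mca}\pi_\omega(a|s)\,\nabla_\omega\log\pi_\omega(a|s)=\sum_{a\in\mca}\nabla_\omega\pi_\omega(a|s)=\nabla_\omega 1=\mathbf 0,
\]
where the interchange of the finite sum over $\mca$ and $\nabla_\omega$ is legitimate. Writing $M_\omega:=\E_{D_{\pi_\omega}}[\phi_\omega(s,a)\phi_\omega^\top(s,a)]$, which is symmetric with $M_\omega\succeq\lambda_{\min}I$ by assumption, the definition \eqref{eq:aw} gives $H_\omega=C_k-M_\omega$, where $C_k:=\E[\phi_\omega(s_0,a_0)\phi_\omega^\top(s_k,a_k)]$ and the expectation is over a trajectory with $(s_0,a_0)\sim D_{\pi_\omega}$ and transitions $\pi_\omega\times P$. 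So it suffices to show $\twonorm{C_k}\le C_\phi^2 d\, m\rho^{k}$ and then invoke Weyl's inequality.

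For the bound on $C_k$ I would compare the true joint law of $\big((s_0,a_0),(s_k,a_k)\big)$ with the product law $D_{\pi_\omega}\otimes D_{\pi_\omega}$. Because $s_0$ (hence $s_k$) is stationary, both one-coordinate marginals equal $D_{\pi_\omega}$, and the zero-mean identity forces $\E_{D_{\pi_\omega}}[\phi_\omega]=\mathbf 0$; thus $C_k$ equals the difference between the expectation of $\phi_\omega(s_0,a_0)\phi_\omega^\top(s_k,a_k)$ under the true joint law and under the product law (the latter being $\mathbf 0$). By Assumption \ref{asm:ergodic}, conditional on $(s_0,a_0)$ the law of $(s_k,a_k)$ is within total variation $\mathcal{O}(m\rho^{k})$ of $D_{\pi_\omega}$ — apply uniform ergodicity from $s_1$, use that $a_k$ is drawn from the same kernel $\pi_\omega(\cdot|s_k)$ in both cases, and absorb the harmless $\rho^{-1}$ from the single extra step into $m$ — and averaging over $(s_0,a_0)\sim D_{\pi_\omega}$, using $\tvnorm{\mu(dx)K(x,dy)-\mu(dx)\nu(dy)}\le\E_{x\sim\mu}\tvnorm{K(x,\cdot)-\nu}$, the two joint laws are within $\mathcal{O}(m\rho^{k})$ in total variation. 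Each coordinate of $\phi_\omega$ is bounded by $C_\phi$ (Assumption \ref{asm:smooth}), so every entry of $\phi_\omega(s_0,a_0)\phi_\omega^\top(s_k,a_k)$ is at most $C_\phi^2$ in magnitude; hence every entry of $C_k$ is at most $C_\phi^2 m\rho^{k}$ in magnitude, and bounding the operator norm by the Frobenius norm over the $d^2$ entries gives $\twonorm{C_k}\le C_\phi^2 d\, m\rho^{k}$.

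To conclude, since $M_\omega$ is symmetric, $\tfrac12(H_\omega+H_\omega^\top)=\tfrac12(C_k+C_k^\top)-M_\omega$, so by Weyl's inequality together with $\twonorm{\tfrac12(C_k+C_k^\top)}\le\twonorm{C_k}$,
\[
\lambda_{\max}\!\left(\frac{H_\omega+H_\omega^\top}{2}\right)\le\twonorm{C_k}-\lambda_{\min}\le C_\phi^2 d\, m\rho^{k}-\lambda_{\min}=-\bar\lambda_{\min}.
\]
Finally $-\bar\lambda_{\min}<0$: using the elementary inequality $\log(1/\rho)\ge 1-\rho$ for $\rho\in(0,1)$, the hypothesis $k>\lceil(\log(mdC_\phi^2)-\log\lambda_{\min})/(1-\rho)\rceil$ yields $k\log(1/\rho)>\log(mdC_\phi^2/\lambda_{\min})$, i.e.\ $C_\phi^2 d\, m\rho^{k}<\lambda_{\min}$, which is exactly $\bar\lambda_{\min}>0$.

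I expect the \textbf{main obstacle} to be the middle paragraph: Assumption \ref{asm:ergodic} only controls the marginal law of a single state $s_t$ given $s_0$, so one must lift this to a total-variation bound on the \emph{joint} law of the two endpoints of the $k$-step chain (handling the conditioning on $a_0$, which affects the chain only through $s_1$, and using the averaging inequality for TV distance), and then translate that TV bound into the operator-norm estimate $\twonorm{C_k}\le C_\phi^2 d\, m\rho^{k}$ with the correct $d$-dependence. The zero-conditional-mean identity and the Weyl/contraction bookkeeping at the end are routine once this estimate is available.
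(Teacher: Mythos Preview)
Your proposal is correct and follows essentially the same route as the paper: decompose $H_\omega=C_k-M_\omega$ via the zero-conditional-mean identity $\E_{a\sim\pi_\omega(\cdot|s)}[\phi_\omega(s,a)]=0$, bound the cross-term $C_k$ (the paper's $\Delta H_\omega$) by $C_\phi^2 d\, m\rho^k$ using uniform ergodicity and an entrywise/Frobenius estimate, and conclude with Weyl's inequality together with $M_\omega\succeq\lambda_{\min}I$. Your handling of the TV lift to the joint law and the explicit Frobenius step for the factor $d$ are in fact more carefully spelled out than the paper's own proof, which jumps directly to the bound.
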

When $ k>\myceil{\frac{\log( md C^2_\phi)-\log \lambda_{\min} }{1-\rho}}$, we have that
\eqenv{ 
    \bar \lambda_{\min}&=\lambda_{\min}-C_\phi^2 d m\rho^k 
    \\& > \lambda_{\min}-C_\phi^2 d m e^{-k(1-\rho)}
    \\&\geq \lambda_{\min}-C_\phi^2 d m e^{-\log\brac{\frac{mdC^2_\phi}{\lambda_{\min}}}}=0,
}
and therefore $\bar \lambda_{\min}$ is positive.

The following lemma bounds the distance between the stationary distribution induced by $\pi_t$ and the distribution of $s_t,a_t$ in \Cref{alg:AC}. Define $\mathcal{F}_j$ to be $\sigma$-field generated by all the randomness until the $j$-th time-step. For simplicity, we write $D_{\pi_t}$ as $D_t$.
\begin{lemma}
    \label{lm:tv}
   For any $0\leq k\leq t$, it can be shown that
    \eqenv{
    {\tvnorm{\Prob\brac{s_t,a_t|\mathcal{F}_{t-k}}-D_t}}\leq
    C_\pi \sum_{j=t-k}^{t-1} \twonorm{\omega_t-\omega_j}+ m\rho^k.
    }
\end{lemma}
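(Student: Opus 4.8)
The plan is to bound the total variation distance by inserting an intermediate \emph{frozen-policy} chain between the true conditional law of $\brac{s_t,a_t}$ and the stationary distribution $D_t$, and then invoking the triangle inequality for $\tvnorm{\cdot}$. Conditioning on $\mathcal{F}_{t-k}$ (so that $s_{t-k}$ is fixed), I would introduce the auxiliary chain that starts at $s_{t-k}$ and evolves for $k$ steps under the \emph{fixed} current policy $\pi_{\omega_t}$, i.e.\ $a_j\sim\pi_{\omega_t}(\cdot|s_j)$ and $s_{j+1}\sim P(\cdot|s_j,a_j)$ for $j=t-k,\dots,t$; let $\widehat\mu$ denote the resulting law of $\brac{s_t,a_t}$. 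Then
\[
\tvnorm{\Prob\brac{s_t,a_t|\mathcal{F}_{t-k}}-D_t}\le \tvnorm{\Prob\brac{s_t,a_t|\mathcal{F}_{t-k}}-\widehat\mu}+\tvnorm{\widehat\mu-D_t},
\]
and I would bound the two terms on the right separately.

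For the second term, $\widehat\mu$ arises from running the time-homogeneous Markov chain induced by $\pi_{\omega_t}$ for $k$ steps from $s_{t-k}$ and then drawing $a_t\sim\pi_{\omega_t}(\cdot|s_t)$. Assumption~\ref{asm:ergodic} bounds the distance of the $k$-step state marginal from stationarity by $m\rho^k$, and since appending the common conditional distribution $\pi_{\omega_t}(\cdot|\cdot)$ to both marginals leaves the TV distance unchanged, $\tvnorm{\widehat\mu-D_t}\le m\rho^k$.

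For the first term I would run a hybrid (telescoping) argument over the $k$ decision points $j=t-k,\dots,t-1$, at which the true trajectory draws its action from $\pi_{\omega_j}$ while the auxiliary chain draws from $\pi_{\omega_t}$. Define chains $\mathcal C_0,\dots,\mathcal C_k$, where $\mathcal C_i$ uses the true policies $\pi_{\omega_{t-k}},\dots,\pi_{\omega_{t-k+i-1}}$ at the first $i$ decision points and $\pi_{\omega_t}$ at the remaining ones, so that $\mathcal C_0$ has law $\widehat\mu$ and $\mathcal C_k$ has law $\Prob\brac{s_t,a_t|\mathcal{F}_{t-k}}$. Because $\mathcal C_i$ and $\mathcal C_{i+1}$ agree up to decision point $t-k+i$, differ there only in whether the action is drawn from $\pi_{\omega_t}$ or $\pi_{\omega_{t-k+i}}$, and are then propagated through the same sequence of kernels (a transition under $P$ followed by further draws and transitions), the facts that a Markov kernel does not increase TV distance and that $\tvnorm{\cdot}$ is convex give
\[
\tvnorm{\mathcal C_{i+1}-\mathcal C_i}\le \sup_{s\in\mcs}\tvnorm{\pi_{\omega_{t-k+i}}(\cdot|s)-\pi_{\omega_t}(\cdot|s)}\le C_\pi\twonorm{\omega_t-\omega_{t-k+i}},
\]
where the last inequality is Assumption~\ref{asm:smooth}(2). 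Summing over $i=0,\dots,k-1$ yields $\tvnorm{\Prob\brac{s_t,a_t|\mathcal{F}_{t-k}}-\widehat\mu}\le C_\pi\sum_{j=t-k}^{t-1}\twonorm{\omega_t-\omega_j}$, and combining with the bound on the second term finishes the proof.

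The step I expect to require the most care is making the hybrid construction rigorous given that the parameters $\omega_{t-k+1},\dots,\omega_t$ are themselves random and history-dependent, since the policy updates are interleaved with the action sampling: the auxiliary chain's ``fixed policy $\pi_{\omega_t}$'' and the intermediate chains $\mathcal C_i$ must be defined conditionally on the realized parameter trajectory, and the per-step estimates then hold realization by realization. Once the chains are correctly set up, the remaining ingredients — that a Markov kernel is a TV-contraction, that appending a common conditional distribution preserves TV distance, and the Lipschitz bound in Assumption~\ref{asm:smooth}(2) — are standard.
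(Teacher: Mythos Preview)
Your proposal is correct and follows essentially the same approach as the paper: introduce the auxiliary chain that, from time $t-k$ onward, uses the frozen policy $\pi_{\omega_t}$, and split the TV distance via the triangle inequality into a mixing term (bounded by $m\rho^k$ via Assumption~\ref{asm:ergodic}) and a policy-drift term (bounded via the Lipschitz condition in Assumption~\ref{asm:smooth}(2)). The paper's own proof is terser---it defines the same auxiliary chain and then defers the remaining steps to \citep[Appendix~B]{zou2019finite} and \citep[Lemma~6]{li2021faster}---whereas you spell out the hybrid/telescoping argument explicitly and correctly flag the subtlety that $\omega_{t-k+1},\dots,\omega_t$ are history-dependent, which is handled by working conditionally on the realized parameter trajectory.
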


We rewrite $\thestar=\theta^*_{\omega_t} $, where $\theta^*_{\omega}$ is the solution to \Cref{eq:kstep}. 

\begin{lemma}\label{lm:consterm}
    Consider the term $\E \fvbrac{\theta_t-\thestar, \delta_t z_t} $. It can be shown that
    \eqenv{
\E\fvbrac{\theta_t-\thestar, \delta_t z_t}
\leq& - \frac{\bar \lambda_{\min}}{2} \E \ftwonormsq{\theta_t-\thestar}
\\& +\frac{(k+1)^2 C^2_\phi}{2\bar\lambda_{\min}}\E\flnormsq{J(\omega_t)-\eta_t}+G^\delta_t
, \label{eq:lm:ac:teI}
}
 where $U_\delta =R_{\max}+2 C_\phi B$. For AC, 
 \eqenv{\label{vari:gdeltat1}
 G^\delta_t &=2B^2 C^2_\phi U_\delta C_\pi \sum_{j=t-k}^{t} \sum_{i=j}^{t-1} \beta_i
+4  B C_\phi U_\delta \sum_{j=t-k}^t\brac{ 
 B C^2_\phi  C_\pi \sum_{i=j-k}^{j-1} \sum_{\iota=i}^{j-1} \beta_\iota  + m\rho^k+ BC^2_\phi L_\pi\sum_{i=j}^{t-1} \beta_i  }
 \\&+  2(k+1) C_\phi U_\delta\brac{(k+1) C_\phi U_\delta \sum_{j=t-2k}^{t-1} \alpha_j+ C_\Theta C^2_\phi B \sum_{j=t-2k}^{t-1} \beta_j+\frac{2C_{\text{gap}}m \rho^k}{\lambda_{\min}}},
 }
and for NAC, 
 \eqenv{\label{vari:gdeltat2}
 G^\delta_t &=2B^2 U_\delta C_\pi \sum_{j=t-k}^t \sum_{i=j}^{t-1} \beta_i+4  B C_\phi U_\delta \sum_{j=t-k}^t\brac{ 
 B C_\pi \sum_{i=j-k}^{j-1} \sum_{\iota=i}^{j-1} \beta_\iota   + m\rho^k+B L_\pi\sum_{i=j}^{t-1} \beta_i  }
\\& +2(k+1) C_\phi U_\delta\brac{(k+1) C_\phi U_\delta \sum_{j=t-2k}^{t-1} \alpha_j+ C_\Theta  B \sum_{j=t-2k}^{t-1} \beta_j+\frac{2C_{\text{gap}}m \rho^k}{\lambda_{\min}}}
.
 }

\end{lemma}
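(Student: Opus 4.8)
The plan is to rewrite $\delta_t z_t$ as a negative‑definite drift in $\theta_t-\thestar$ plus controllable bias, along the lines of the NAC proof sketch: first strip off the randomness coming from the average‑reward estimate $\eta_t$, and then progressively replace the eligibility trace $z_t$ — which uses the time‑varying features $\{\phi_j\}_{j=t-k}^{t}$ and samples drawn from the algorithm's non‑stationary, time‑varying‑policy trajectory — by a clean trace built from the frozen feature $\phi_t$ and stationary samples of the fixed policy $\pi_{\omega_t}$, via the chain $z_t\to\hat z_t\to\widetilde z_t\to\bar z_t$ and the auxiliary Markov chains $A1,A2$. \textbf{Step 1 (remove $\eta_t$).} Write $\delta_t=\widetilde\delta_t+(J(\omega_t)-\eta_t)$ with $\widetilde\delta_t=R_t-J(\omega_t)+(\phi_t(\sa)-\phi_t(s_t,a_t))^\top\theta_t$, so that $\langle\theta_t-\thestar,\delta_t z_t\rangle=\langle\theta_t-\thestar,\widetilde\delta_t z_t\rangle+(J(\omega_t)-\eta_t)\langle\theta_t-\thestar,z_t\rangle$. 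Since $z_t$ is a sum of $k+1$ feature vectors, $\twonorm{z_t}\le(k+1)C_\phi$, so Cauchy--Schwarz and Young's inequality give $(J(\omega_t)-\eta_t)\langle\theta_t-\thestar,z_t\rangle\le\frac{\bar \lambda_{\min}}{2}\twonormsq{\theta_t-\thestar}+\frac{(k+1)^2C_\phi^2}{2\bar \lambda_{\min}}(J(\omega_t)-\eta_t)^2$, which already produces the $(J(\omega_t)-\eta_t)^2$ term of the statement and half of the quadratic term; it thus remains to show $\E\langle\theta_t-\thestar,\widetilde\delta_t z_t\rangle\le-\bar \lambda_{\min}\,\E\twonormsq{\theta_t-\thestar}+G^\delta_t$.

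\textbf{Step 2 (clean up the trace).} I would condition far enough back in the filtration that $\theta_t-\thestar$ is, up to a drift of order $k\alpha$ and $k\beta$, frozen, and then move from $z_t$ to $\bar z_t$ in three substitutions, at each stage using the uniform bounds $|\widetilde\delta_t|\le U_\delta:=R_{\max}+2C_\phi B$, $\twonorm{\text{any trace}}\le(k+1)C_\phi$ and $\twonorm{\theta_t-\thestar}\le 2B$ to turn a pathwise discrepancy into a scalar estimate: (i) $\twonorm{z_t-\hat z_t}\le L_\phi\sum_{j=t-k}^{t}\twonorm{\omega_t-\omega_j}$, where $\twonorm{\omega_t-\omega_j}$ is bounded by the actor‑step magnitude times $\sum_{i=j}^{t-1}\beta_i$ — that magnitude being $C_\phi^2 B$ for AC (Option I) and $B$ for NAC (Option II), which is exactly why $G^\delta_t$ carries the extra $C_\phi^2$ factors in the AC case; (ii) replacing the algorithm's trajectory by $A1$, whose path law differs from the algorithm's over the relevant window by at most $C_\pi\sum_j\twonorm{\omega_t-\omega_j}$ via the policy‑Lipschitz bound in \Cref{asm:smooth}, unrolled with a further $k$‑step lag, which is what generates the double sum $\sum_{j=t-k}^{t}\sum_{i=j}^{t-1}\beta_i$ and the triple sum $\sum_{j=t-k}^{t}\sum_{i=j-k}^{j-1}\sum_{\iota=i}^{j-1}\beta_\iota$; (iii) replacing the initial law of $A1$ by $D_{\pi_{\omega_t}}$ to obtain $A2$, controlled by \Cref{asm:ergodic} and \Cref{lm:tv}, which is the source of the $m\rho^k$ contributions. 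Carrying out the same substitutions on the samples $s_t,a_t,\sa$ appearing inside $\widetilde\delta_t$ (whose horizon reaches one step past the trace, so the windows are of width $2k$) yields the $\sum_{j=t-2k}^{t-1}\alpha_j$ and $\sum_{j=t-2k}^{t-1}\beta_j$ terms, while the Lipschitz continuity of $\theta^*_\omega$ from \Cref{lm:acnacinnersmooth} (constant $C_\Theta$) handles the drift of $\thestar$ across the window, and \Cref{prop:bound} contributes the $C_{\text{gap}}m\rho^k/\lambda_{\min}$ pieces whenever one passes between the $k$‑step TD fixed point $\theta^*_\omega$ and the exact projection $\bar\theta^*_\omega$.

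\textbf{Step 3 (drift term and conclusion).} Once the trace and the TD error are both expressed through stationary samples of $\pi_{\omega_t}$ with the fixed feature $\phi_t$, the bootstrap increments telescope and, using the fixed‑point relation \Cref{eq:kstep} defining $\thestar$, the expectation reduces to $\langle\theta_t-\thestar,H_{\omega_t}(\theta_t-\thestar)\rangle$ with $H_{\omega_t}$ as in \Cref{eq:aw}; \Cref{lm:eig} then bounds this by $-\bar \lambda_{\min}\twonormsq{\theta_t-\thestar}$. Collecting the substitution errors of Step 2 into $G^\delta_t$ in its two forms \Cref{vari:gdeltat1} and \Cref{vari:gdeltat2} (AC versus NAC) and adding back the Step 1 contribution finishes the proof. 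I expect Step 2 to be the main obstacle: the couplings must be arranged so that every pathwise difference between the algorithm's trajectory and the auxiliary chains is quantified purely by a telescoping sum of step sizes or by $m\rho^k$, \emph{while the leading term is kept exactly equal to the negative‑definite drift} so that \Cref{lm:eig} applies verbatim; the nested‑window bookkeeping — back $k$ steps for the trace, a further $k$ steps for mixing, and one step forward for the bootstrap — is precisely what produces the somewhat intricate form of $G^\delta_t$.
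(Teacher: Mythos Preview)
Your proposal is correct and follows the same approach as the paper's proof: strip off the $\eta_t$-dependence via Young's inequality, freeze $\theta_t-\thestar$ at time $t-2k$, then pass from $z_t\hat\delta_t$ to the stationary $\bar z_t\bar\delta_t$ and invoke \Cref{lm:eig} on the resulting quadratic form $\langle\theta_t-\thestar,H_{\omega_t}(\theta_t-\thestar)\rangle$. One small bookkeeping correction: the $\sum_{j=t-2k}^{t-1}\alpha_j$ and $\sum_{j=t-2k}^{t-1}\beta_j$ terms in $G^\delta_t$ do not come from substituting the samples inside $\widetilde\delta_t$ as you suggest, but from the freezing step itself --- the paper replaces $\theta_t-\thestar$ by $\theta_{t-2k}-\theta^*_{t-2k}$ (so the prefactor is $\mathcal F_{t-2k}$-measurable and the trace, which reaches back to $t-k$, still has $k$ steps to mix), and it is $\|\theta_t-\theta_{t-2k}\|$ and $\|\thestar-\theta^*_{t-2k}\|$ over this $2k$-window that produce those sums.
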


In the following, we prove that $\theta^*_\omega$ defined in 
\Cref{eq:kstep}  is bounded.
\begin{lemma}\label{lm:lm6}
    The solution $\theta^*_\omega$ to \Cref{eq:kstep}  is bounded:
    \begin{align}
        \|\theta^*_\omega\|_2\leq \frac{1}{\lambda_{\min}-d C_\phi^2 m \rho^k }\frac{m C_\phi R_{\max}}{1-\rho}=\frac{m C_\phi R_{\max}}{\bar \lambda_{\min}\brac{1-\rho}}.
    \end{align}
\end{lemma}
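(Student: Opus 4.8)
The plan is to reduce the $k$-step TD fixed-point equation \eqref{eq:kstep} to a linear system, invert the coefficient matrix using \Cref{lm:eig}, and control the constant term using the geometric mixing of \Cref{asm:ergodic}.

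First I would substitute the linear parameterization $Q_\theta(s,a)=\phi_\omega^\top(s,a)\theta$ into the operator $\mathcal{T}_{\pi_\omega}^{(k)}$. By linearity of conditional expectation, $\mathcal{T}_{\pi_\omega}^{(k)}(\phi_\omega^\top(s,a)\theta)(s,a)=g_\omega(s,a)+\E\Fbrac{\phi_\omega^\top(s_k,a_k)\mid s_0=s,a_0=a,\pi_\omega}\theta$, where $g_\omega(s,a):=\E\Fbrac{\sum_{j=0}^{k-1}(R_j-J(\pi_\omega))\mid s_0=s,a_0=a,\pi_\omega}$. Plugging this into \eqref{eq:kstep} and recalling the definitions of $H_\omega(s,a)$ and $H_\omega$ in \eqref{eq:aw}, the fixed-point equation collapses to $H_\omega\theta^*_\omega+b_\omega=\mathbf 0$, where $b_\omega:=\E_{D_{\pi_\omega}}\Fbrac{\phi_\omega(s,a)g_\omega(s,a)}$. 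Thus $\twonorm{\theta^*_\omega}\leq \twonorm{H_\omega^{-1}}\,\twonorm{b_\omega}$, provided $H_\omega$ is invertible.

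Next I would establish $\twonorm{H_\omega^{-1}}\leq 1/\bar\lambda_{\min}$. \Cref{lm:eig} gives $\lambda_{\max}\brac{(H_\omega+H_\omega^\top)/2}\leq -\bar\lambda_{\min}<0$, hence $x^\top H_\omega x\leq -\bar\lambda_{\min}\twonormsq{x}$ for every $x$. By Cauchy--Schwarz, $\bar\lambda_{\min}\twonormsq{x}\leq -x^\top H_\omega x\leq \twonorm{x}\,\twonorm{H_\omega x}$, so $\twonorm{H_\omega x}\geq \bar\lambda_{\min}\twonorm{x}$; this shows $H_\omega$ has smallest singular value at least $\bar\lambda_{\min}$, is therefore nonsingular, and satisfies $\theta^*_\omega=-H_\omega^{-1}b_\omega$ with $\twonorm{\theta^*_\omega}\leq \bar\lambda_{\min}^{-1}\twonorm{b_\omega}$.

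The remaining (and main) work is to bound $\twonorm{b_\omega}\leq \frac{mC_\phi R_{\max}}{1-\rho}$. Since $\twonorm{\phi_\omega(s,a)}\leq C_\phi$ by \Cref{asm:smooth}, it suffices to bound $\lnorm{g_\omega(s,a)}$ uniformly. I would write $g_\omega(s,a)=\sum_{j=0}^{k-1}\brac{\E\Fbrac{R_j\mid s_0=s,a_0=a}-J(\pi_\omega)}$ and, using $R\in[0,R_{\max}]$ together with the fact that $J(\pi_\omega)=\E_{D_{\pi_\omega}}\Fbrac{R(s,a)}$, bound each summand by $R_{\max}\,\tvnorm{\Prob\brac{s_j,a_j\in\cdot\mid s_0=s,a_0=a}-D_{\pi_\omega}}$. \Cref{asm:ergodic} bounds this total-variation term by $m\rho^{j}$ (after noting that since $a_j\sim\pi_\omega(\cdot\mid s_j)$ and $D_{\pi_\omega}=d_{\pi_\omega}\times\pi_\omega$ share the same conditional, the distance reduces to that of the state marginals, up to an index shift from the one extra transition out of $(s,a)$). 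Summing the geometric series gives $\lnorm{g_\omega(s,a)}\leq R_{\max}\sum_{j=0}^{k-1}m\rho^{j}\leq \frac{mR_{\max}}{1-\rho}$, hence $\twonorm{b_\omega}\leq \frac{mC_\phi R_{\max}}{1-\rho}$. Combining the two estimates yields $\twonorm{\theta^*_\omega}\leq \frac{mC_\phi R_{\max}}{\bar\lambda_{\min}(1-\rho)}$, and substituting $\bar\lambda_{\min}=\lambda_{\min}-dC_\phi^2 m\rho^k$ from \Cref{lm:eig} gives the first displayed form.

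The main obstacle is the bound on $\twonorm{b_\omega}$: the decisive point is that $g_\omega$ is a difference between transient expected rewards and the \emph{stationary} average reward, so the mixing bound of \Cref{asm:ergodic} makes the sum of $k$ terms converge geometrically; a naive per-term bound of $R_{\max}$ would leak an unwanted factor of $k$. A minor care point is matching the state-marginal statement of \Cref{asm:ergodic} to the state-action pair reached from $(s,a)$, which works because the action is drawn from the same conditional $\pi_\omega$ in both distributions.
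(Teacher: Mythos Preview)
Your proposal is correct and follows essentially the same route as the paper: reduce \eqref{eq:kstep} to $H_\omega\theta^*_\omega+b_\omega=\mathbf{0}$, bound $\twonorm{b_\omega}\le mC_\phi R_{\max}/(1-\rho)$ via the geometric mixing of \Cref{asm:ergodic}, and combine with $\lambda_{\max}\bigl((H_\omega+H_\omega^\top)/2\bigr)\le -\bar\lambda_{\min}$ from \Cref{lm:eig}. The only cosmetic differences are that you bound $g_\omega$ pointwise (the paper averages over $D_{\pi_\omega}$ first) and you pass through the singular-value inequality $\twonorm{H_\omega x}\ge\bar\lambda_{\min}\twonorm{x}$, whereas the paper applies the quadratic-form bound directly to $\theta^{*\top}_\omega H_\omega\theta^*_\omega=-\theta^{*\top}_\omega b_\omega$ and then Cauchy--Schwarz; both arrive at $\twonorm{\theta^*_\omega}\le\twonorm{b_\omega}/\bar\lambda_{\min}$.
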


\begin{lemma}\label{lm:acinnersmooth}
%
Under Assumption \ref{asm:smooth} and \ref{asm:ergodic}, for any  $\omega, \omega'\in\mathcal{W}$,
\eqenv{
    \twonorm{\nabla^2 J(\omega)-\nabla^2J(\omega')}\leq L_\Theta\twonorm{\omega-\omega'},
}
    where $L_\Theta= d^2 \brac{ \frac{6 C^3_\phi m^3 e^{4R_{\max}}}{(1-\rho)^3}+\frac{6 m^2 C_\phi C_\delta e^{3R_{\max}}}{(1-\rho)^2}+\frac{m L_\delta e^{2R_{\max}}}{1-\rho}}$. 
\end{lemma}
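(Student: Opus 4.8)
The plan is to prove the third-order Lipschitz continuity of $\nabla^2 J(\omega)$ by first obtaining an explicit formula for $\nabla^2 J(\omega)$ in terms of the policy, the transition kernel, and their derivatives, and then differentiating once more (formally) to bound the resulting expression. Recall the well-known expansion $\nabla J(\omega) = \sum_{s,a} d_{\pi_\omega}(s)\pi_\omega(a|s)\nabla\log\pi_\omega(a|s)\,Q^{\pi_\omega}(s,a)$; taking one further derivative produces a sum of terms, each of which is a product of (i) a derivative of the occupancy measure $d_{\pi_\omega}(s)\pi_\omega(a|s)$ of total order at most two, (ii) a derivative of $\log\pi_\omega$ of order at most two, and (iii) a derivative of $Q^{\pi_\omega}$ (equivalently, via the Bellman/perturbation identities, of the resolvent of the transition operator) of order at most one. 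Under Assumption~\ref{asm:smooth} the bounded-reward assumption gives $|Q^{\pi_\omega}|\le \frac{R_{\max}}{1-\rho}$ up to the mixing constant $m$, the ergodicity Assumption~\ref{asm:ergodic} controls the occupancy measure and its first two derivatives by geometric-series bounds of the form $\frac{m}{1-\rho}$, $\frac{m^2}{(1-\rho)^2}$, and the quantities $C_\phi, C_\delta, L_\delta$ bound the first, second, and third derivatives of $\pi_\omega$ (hence of $\log\pi_\omega$).

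First I would set up notation for the derivatives of $d_{\pi_\omega}$, $\pi_\omega$, and $Q^{\pi_\omega}$ and record a priori bounds on each: $\|\phi_\omega\|\le C_\phi$, $\|\nabla^2\pi_\omega\|\le C_\delta$, the third-derivative bound $L_\delta$ from Assumption~\ref{asm:smooth}(5), and geometric-sum bounds $\|d_{\pi_\omega}\|, \|\nabla d_{\pi_\omega}\|, \|\nabla^2 d_{\pi_\omega}\|$ of orders $\frac{m}{1-\rho}, \frac{m^2}{(1-\rho)^2}, \frac{m^3}{(1-\rho)^3}$ respectively (these follow from the sensitivity analysis of stationary distributions under Assumption~\ref{asm:ergodic}, as already used implicitly in Lemma~\ref{Prop:lip} and Lemma~\ref{lm:acinnersmooth} to get $L_J$). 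Similarly $\|Q^{\pi_\omega}\|_\infty \le \frac{m R_{\max}}{1-\rho}$ and $\|\nabla Q^{\pi_\omega}\|$ is bounded by something of order $\frac{m^2 R_{\max} C_\phi}{(1-\rho)^2}$ via the policy-gradient-type identity for $\nabla Q$. Second, I would write $\nabla^2 J(\omega) - \nabla^2 J(\omega')$ as a telescoping sum over the finitely many product-terms in the formula for $\nabla^2 J$: for each term of the form $A(\omega)B(\omega)C(\omega)$ I use $ABC - A'B'C' = (A-A')BC + A'(B-B')C + A'B'(C-C')$, so that I only ever need Lipschitz bounds on each factor individually together with uniform bounds on the other two. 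Each factor $A,B,C$ is either a bounded-derivative object already handled above or is itself Lipschitz with a constant of the appropriate order — this is where Assumption~\ref{asm:smooth}(5) (the third-order derivative bound $L_\delta$, or equivalently the Lipschitzness of $\nabla^2\pi_\omega$) and the one-higher-order sensitivity of $d_{\pi_\omega}$ enter, yielding the three terms $\frac{C_\phi^3 m^3 e^{4R_{\max}}}{(1-\rho)^3}$, $\frac{m^2 C_\phi C_\delta e^{3R_{\max}}}{(1-\rho)^2}$, and $\frac{m L_\delta e^{2R_{\max}}}{1-\rho}$, with the factor $d^2$ accounting for the sum over coordinate indices in the third-order tensor.

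The main obstacle, and the step deserving the most care, is the Lipschitz control of $\nabla^2 d_{\pi_\omega}$ (equivalently, the second derivative of the stationary distribution / discounted occupancy): bounding it and its modulus of continuity requires differentiating the fixed-point equation $d_{\pi_\omega}^\top P_{\pi_\omega} = d_{\pi_\omega}^\top$ twice and controlling the inverse of $(I - P_{\pi_\omega} + \mathbf{1}d_{\pi_\omega}^\top)$ (or the Drazin/group inverse of $I-P_{\pi_\omega}$) uniformly over $\omega$, using that its norm is bounded via the mixing rate $\rho$ and constant $m$ from Assumption~\ref{asm:ergodic}. This is the source of the highest powers of $\frac{m}{1-\rho}$ and of the exponential-in-$R_{\max}$ factors (the latter coming from bounding $\pi_\omega(a|s)$ and $Q^{\pi_\omega}$ pointwise). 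Once that sensitivity lemma is in hand, the remainder is bookkeeping: collect all finitely many terms, bound each by the product of a uniform bound and a Lipschitz constant, and sum to obtain $L_\Theta$ of the stated form. I would relegate the detailed constant-chasing to an appendix computation and, in the main write-up, emphasize the structure: formula for $\nabla^2 J$, a priori bounds, telescoping of products, and the stationary-distribution sensitivity lemma.
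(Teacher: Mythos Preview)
Your proposal is a plausible route to \emph{some} Lipschitz constant for $\nabla^2 J$, but it is genuinely different from the paper's argument and would not produce the stated $L_\Theta$; your explanation of where the $e^{nR_{\max}}$ factors come from is also incorrect.

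The paper does not start from the policy-gradient expansion $\nabla J=\sum d_{\pi_\omega}\pi_\omega\phi_\omega Q^{\pi_\omega}$ and then differentiate the many factors. Instead it uses the much simpler identity $J(\omega)=\sum_{s,a}D_{\pi_\omega}(s,a)R(s,a)$, so that $\nabla^2 J(\omega)=\sum_{s,a}\nabla^2 D_{\pi_\omega}(s,a)R(s,a)$ and the whole problem reduces to bounding the \emph{third} derivative of the state-action stationary distribution $D_{\pi_\omega}$. For that, the paper invokes the perturbation/Taylor machinery of Heidergott (2003): it introduces a weighted $v$-norm with $v(s,a)=e^{R(s,a)}$, defines kernel operators $\mathcal{K}^{(1)}_{\omega_i},\mathcal{K}^{(2)}_{\omega_{ij}},\mathcal{K}^{(3)}_{\omega_{ijl}}$ built from $\partial\pi_\omega,\partial^2\pi_\omega,\partial^3\pi_\omega$ and the deviation kernel $\sum_{\iota\ge 0}(P^{\iota}_{\pi_\omega}-D_{\pi_\omega})$, and derives the closed-form expansion
\[
\partial_{\omega_l}\partial_{\omega_j}\partial_{\omega_i}\Gamma_\omega=6\,\Gamma_\omega\mathcal{K}^{(1)}\mathcal{K}^{(1)}\mathcal{K}^{(1)}+2\,\Gamma_\omega\mathcal{K}^{(1)}\mathcal{K}^{(2)}+\cdots+\Gamma_\omega\mathcal{K}^{(3)}.
\]
Each $\|\mathcal{K}^{(n)}\|_v$ is bounded via Assumption~\ref{asm:ergodic} by $\tfrac{m}{1-\rho}$ times $C_\phi,C_\delta,L_\delta$ respectively, and the $e^{R_{\max}}$ powers arise purely as artifacts of the $v$-norm (since $1\le v\le e^{R_{\max}}$), not from pointwise bounds on $Q^{\pi_\omega}$ or $\pi_\omega$ as you suggest.

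Your telescoping-of-products approach on the policy-gradient formula would force you to control derivatives of $Q^{\pi_\omega}$ and $d_{\pi_\omega}$ separately and produce a larger zoo of cross-terms; it could be pushed through, but the exact constant you would obtain (with $R_{\max}$ appearing polynomially through $\|Q\|_\infty$, not exponentially) would not match the stated $L_\Theta$. If you want the constant in the lemma, you should switch to the paper's starting point $\nabla^2 J=\sum\nabla^2 D_{\pi_\omega}R$ and either reproduce the Heidergott kernel calculus or at least recognize that the $e^{nR_{\max}}$ factors are a signature of a weighted-norm argument rather than of $Q$-function bounds.
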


The proof of above Lemmas could be found in \Cref{sec:secd}.

\subsection{Proofs for \Cref{prop:pg,prop:npg,prop:e,prop:bound}}\label{sec:a2}
We include the proof of \Cref{prop:pg} and \Cref{prop:npg} for completeness. 
\begin{proof}
By the \Cref{eq:thetabar}, $\bar \theta^*_\omega$ satisfies that 
\eqenv{
\E_{D_{\pi_\omega}}\Fbrac{(Q^{\pi_\omega}(s,a)-\phi^\top_\omega(s,a)\bar \theta_\omega^*)\phi_\omega(s,a) }=0.
}
Since $ \phi_\omega^\top(s,a) \bar\theta^*_\omega$ is a scalar, we can get that
\eqenv{
\E_{D_{\pi_\omega}}\Fbrac{Q^{\pi_\omega}(s,a)\phi_\omega(s,a) }=\E_{D_{\pi_\omega}}\Fbrac{\phi_\omega(s,a)\phi^\top_\omega(s,a)\bar \theta_\omega^* } 
}

For the policy gradient $\nabla J(\pi_\omega)$, we get that
\eqenv{
 \nabla J(\pi_\omega) =\E_{D_{\pi_\omega}}\Fbrac{ \nabla \log \pi_\omega(a|s)Q^{\pi_\omega}(s,a)}=
 \E_{D_{\pi_\omega}}\Fbrac{\phi_\omega(s,a)(\phi^\top_\omega\brac{s,a}\bar\theta_\omega^*)}.
}
Furthermore, we have that
\begin{align}
    \widetilde{\nabla} J(\pi_\omega)&= F^{-1}_\omega
\E_{D_{\pi_\omega}}\Fbrac{\phi_\omega(s,a)\phi_\omega(s,a)^\top \bar \theta^*_\omega  }
     \nn\\
     &=F^{-1}_\omega
     \E_{D_{\pi_\omega}}\Fbrac{\phi_\omega(s,a)\phi_\omega(s,a)^\top   }\bar \theta^*_\omega=\bar \theta^*_\omega. \end{align}
     This conclude the proof.
     \end{proof}

We present the proof of \Cref{prop:e}. 
\begin{proposition}(Restatement of \Cref{prop:e})
For any $\omega\in \mathcal{W}$ and $\theta \in \Theta$, 
$\Phi_\omega \theta \neq \mathbf{e},$
where  $\mathbf e\in \mathbb R^{|\mathcal{S}||\mathcal{A}|}$ is an all-one vector.
\end{proposition}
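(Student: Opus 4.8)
The plan is to show that the all-one vector $\mathbf{e}$ cannot lie in the column space of any compatible feature matrix $\Phi_\omega$, i.e., that there is no $\theta$ with $\Phi_\omega \theta = \mathbf{e}$. Recall that the $(s,a)$-row of $\Phi_\omega$ is $\phi_\omega^\top(s,a) = \nabla_\omega \log \pi_\omega(a|s)^\top$. So $\Phi_\omega\theta = \mathbf{e}$ would mean $\langle \nabla_\omega \log\pi_\omega(a|s), \theta\rangle = 1$ for every state-action pair $(s,a)$.

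The key observation I would use is that for each fixed state $s$, the score vectors $\{\phi_\omega(s,a)\}_{a\in\mathcal{A}}$ satisfy a linear constraint coming from the fact that $\pi_\omega(\cdot|s)$ is a probability distribution. Concretely, $\sum_{a} \pi_\omega(a|s) = 1$ for all $\omega$, so differentiating in $\omega$ gives $\sum_a \nabla_\omega \pi_\omega(a|s) = 0$, i.e., $\sum_a \pi_\omega(a|s)\,\nabla_\omega \log\pi_\omega(a|s) = \mathbf{0}$, that is $\sum_a \pi_\omega(a|s)\,\phi_\omega(s,a) = \mathbf{0}$. Therefore, for any candidate $\theta$,
\[
\sum_{a\in\mathcal{A}} \pi_\omega(a|s)\,\langle \phi_\omega(s,a), \theta\rangle = \langle \mathbf{0}, \theta\rangle = 0 .
\]
On the other hand, if $\Phi_\omega \theta = \mathbf{e}$ held, then $\langle \phi_\omega(s,a),\theta\rangle = 1$ for every $a$, so the left side would equal $\sum_a \pi_\omega(a|s)\cdot 1 = 1 \neq 0$, a contradiction. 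Hence no such $\theta$ exists, proving $\Phi_\omega\theta \neq \mathbf{e}$ for all $\omega\in\mathcal{W}$ and $\theta\in\Theta$.

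I would write this up in essentially three short steps: (1) state and verify the score-function identity $\sum_a \pi_\omega(a|s)\phi_\omega(s,a) = 0$ for each $s$, using only that $\pi_\omega(\cdot|s)$ sums to one and is differentiable in $\omega$; (2) assume for contradiction $\Phi_\omega\theta = \mathbf{e}$, read off the row-wise equations $\phi_\omega^\top(s,a)\theta = 1$; (3) take the $\pi_\omega(\cdot|s)$-weighted combination of these equations over $a$ for a fixed $s$ and derive $0 = 1$. There is no real obstacle here — the only thing to be a little careful about is matching the ambient dimension of $\mathbf{e}$ (it is in $\mathbb{R}^{|\mathcal{S}||\mathcal{A}|}$, the number of rows of $\Phi_\omega$, as corrected in the restatement) and noting that the argument works for \emph{any} single state $s$, so full rank of $\Phi_\omega$ is not even needed for this particular claim.
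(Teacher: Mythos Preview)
Your proposal is correct and takes essentially the same approach as the paper: both argue by contradiction using the score-function identity $\sum_a \pi_\omega(a|s)\,\phi_\omega(s,a)=0$ (from differentiating $\sum_a \pi_\omega(a|s)=1$) to show that the $\pi_\omega$-weighted average of the row equations $\phi_\omega^\top(s,a)\theta=1$ yields $0=1$. The only cosmetic difference is that the paper averages over the full state-action stationary distribution $D_{\pi_\omega}$, whereas you work at a single fixed state $s$; your version is slightly more direct since it avoids invoking $d_{\pi_\omega}$, but the core idea is identical.
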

\begin{proof}
Assume that there exists $\theta_c\in\Theta$ such that $\Phi_\omega \theta_c=\mathbf{e}$, then $\E_{D_{\pi_\omega}}\Fbrac{\phi_\omega^\top(s,a)\theta_c}=1$. 

However, note that
\eqenv{
    \E_{D_{\pi_\omega}}\Fbrac{\phi_\omega^\top(s,a)\theta_c}&=
    \sum_{s}d_{\pi_\omega}(s)\sum_{a}\pi_\omega(a|s) \phi_\omega^\top(s,a)\theta_c
   \\ &  =
    \sum_{s}d_{\pi_\omega}(s)\sum_{a}\pi_\omega(a|s) \nabla \log\pi_{\omega}(a|s)^\top\theta_c
    \\ &  =
    \sum_{s}d_{\pi_\omega}(s)\sum_{a}\pi_\omega(a|s) \frac{\nabla_{\omega} \pi_\omega(a|s)^\top}{\pi_\omega(a|s)}\theta_c
    \\&  =
    \sum_{s}d_{\pi_\omega}(s)\sum_{a}  \nabla_{\omega}\pi_\omega (a|s)^\top \theta_c\\
    &=\sum_{s}d_{\pi_\omega}(s)  \nabla_{\omega}\left(\sum_{a}  \pi_\omega (a|s) \right)^\top\theta_c\\
    &=0,
    }
    where the last equation is from the fact that $\sum_a \pi_\omega(a|s)=1$, and hence the gradient of it is $0$. 
    This hence results in a contradiction, which completes the proof.
\end{proof}

We then present the proof of \Cref{prop:bound}. 
\begin{proposition}(Restatement of \Cref{prop:bound})\label{lm:cgap}
   For any $\omega\in \mathcal{W}$, denote the fixed point of $k$-step TD operator by $\theta_{\omega}^*$, and the solution to \Cref{eq:thetabar} by $\bar{\theta}_{\omega}^*$, then
   \eqenv{
   \twonorm{\theta_{\omega}^*-\bar{\theta}_{\omega}^*}\leq 
   \frac{C_{\text{gap}}m\rho^k}{\lambda_{\min}},
   } where $C_{\text{gap}}=C^2_\phi B  + \frac{C_\phi R_{\max}}{1-\rho}$.
\end{proposition}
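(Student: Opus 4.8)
The plan is to compare the two linear fixed–point characterizations directly. The vector $\bar\theta^*_\omega$ is the solution of the projected fixed–point equation for $Q^{\pi_\omega}$ itself, i.e.\ $\E_{D_{\pi_\omega}}[\phi_\omega(s,a)(Q^{\pi_\omega}(s,a)-\phi_\omega^\top(s,a)\bar\theta^*_\omega)]=0$, whereas $\theta^*_\omega$ solves the analogous equation with $Q^{\pi_\omega}$ replaced by the $k$-step bootstrapped target $\mathcal{T}^{(k)}_{\pi_\omega}(\phi_\omega^\top\theta^*_\omega)$, as in \Cref{eq:kstep}. The key observation is that $\mathcal{T}^{(k)}_{\pi_\omega}$ applied to the \emph{true} value function reproduces the true value function: $\mathcal{T}^{(k)}_{\pi_\omega}(Q^{\pi_\omega})(s,a) = \E[\sum_{j=0}^{k-1}(R_j-J(\pi_\omega)) + Q^{\pi_\omega}(s_k,a_k)\mid s_0=s,a_0=a,\pi_\omega] = Q^{\pi_\omega}(s,a)$, since $Q^{\pi_\omega}$ is itself the infinite-horizon relative value function and Bellman-type telescoping holds in the average-reward setting.

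First I would subtract the two fixed-point equations. Writing $\mcs_\omega := \E_{D_{\pi_\omega}}[\phi_\omega(s,a)\phi_\omega^\top(s,a)]$, the equation for $\bar\theta^*_\omega$ gives $\mcs_\omega\bar\theta^*_\omega = \E_{D_{\pi_\omega}}[\phi_\omega Q^{\pi_\omega}]$, and the equation for $\theta^*_\omega$ gives $\mcs_\omega\theta^*_\omega = \E_{D_{\pi_\omega}}[\phi_\omega \,\mathcal{T}^{(k)}_{\pi_\omega}(\phi_\omega^\top\theta^*_\omega)]$. Adding and subtracting $\E_{D_{\pi_\omega}}[\phi_\omega\,\mathcal{T}^{(k)}_{\pi_\omega}(Q^{\pi_\omega})] = \E_{D_{\pi_\omega}}[\phi_\omega Q^{\pi_\omega}]$, and using linearity of $\mathcal{T}^{(k)}_{\pi_\omega}$ in its argument (the $k$-step operator differs from identity only through the bootstrap term $Q(s_k,a_k)$, which is linear in $Q$), I get
$$\mcs_\omega(\theta^*_\omega-\bar\theta^*_\omega) = \E_{D_{\pi_\omega}}\big[\phi_\omega(s,a)\,\E[\,\phi_\omega^\top(s_k,a_k)(\theta^*_\omega-\bar\theta^*_\omega)\mid s_0=s,a_0=a,\pi_\omega\,]\big].$$
Hence, letting $y := \theta^*_\omega-\bar\theta^*_\omega$, one has $\mcs_\omega y = \E_{D_{\pi_\omega}}[\phi_\omega(s,a)\phi_\omega^\top(\bar s_k,\bar a_k)]\,y$ where $(\bar s_k,\bar a_k)$ is the $k$-step-ahead state-action pair. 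The difference between $\E[\phi_\omega(s_0,a_0)\phi_\omega^\top(s_k,a_k)]$ (under $D_{\pi_\omega}$ for the initial pair) and $\mcs_\omega = \E[\phi_\omega(s_0,a_0)\phi_\omega^\top(s_0,a_0)]$... actually it is cleaner to write $\mcs_\omega y - \E_{D_{\pi_\omega}}[\phi_\omega(s_0,a_0)\phi_\omega^\top(s_k,a_k)]\,y = 0$, i.e.\ $\E_{D_{\pi_\omega}}[\phi_\omega(s_0,a_0)(\phi_\omega(s_0,a_0)-\phi_\omega(s_k,a_k))^\top]\,y = 0$, which is exactly $-H_\omega^\top y$... no: this would say $H_\omega y = 0$, forcing $y=0$, which is too strong. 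The resolution is that the bound is not exact: I must be more careful about the fact that $\bar\theta^*_\omega$ is only an approximate (projected) fixed point, so the error comes from $\E_{D_{\pi_\omega}}[\phi_\omega(Q^{\pi_\omega}-\phi_\omega^\top\bar\theta^*_\omega)] = 0$ being used together with $Q^{\pi_\omega}-\phi_\omega^\top\bar\theta^*_\omega$ being nonzero pointwise. So the actual residual that does not telescope away is the bootstrap of the \emph{approximation error} $g := Q^{\pi_\omega}-\phi_\omega^\top\bar\theta^*_\omega$ at step $k$. Concretely I would derive $H_\omega\, y = -\E_{D_{\pi_\omega}}[\phi_\omega(s,a)\,\E[g(s_k,a_k)\mid s_0=s,a_0=a,\pi_\omega]]$.

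From there the estimate is routine: by \Cref{lm:eig} the symmetric part of $-H_\omega$ is bounded below by $\bar\lambda_{\min}\ge \lambda_{\min}-C_\phi^2 dm\rho^k$, but actually the paper's bound has $\lambda_{\min}$ in the denominator, so more simply I expect they bound $\mcs_\omega y = \E_{D_{\pi_\omega}}[\phi_\omega(s,a)\,\E[\phi_\omega^\top(s_k,a_k)y + g(s_k,a_k)\mid \cdots]]$ giving $\|y\|_2 \le \frac{1}{\lambda_{\min}}\big(\|\E_{D_{\pi_\omega}}[\phi_\omega(s,a)\E[\phi_\omega^\top(s_k,a_k)]]\|_2\|y\|_2 + C_\phi\sup\|\E[g(s_k,a_k)\mid\cdots]\|\big)$; the cross-covariance of $\phi_\omega$ at lag $k$ decays like $m\rho^k$ by Assumption~\ref{asm:ergodic} after subtracting the stationary term, absorbing the $\|y\|_2$ contribution, and $g$ is bounded by $|Q^{\pi_\omega}|+\|\phi_\omega^\top\bar\theta^*_\omega\| \le \frac{R_{\max}}{1-\rho}+C_\phi B$, while the mixing gives the extra $\rho^k$ factor. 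Collecting constants yields $\|y\|_2 \le \frac{C_{\text{gap}} m\rho^k}{\lambda_{\min}}$ with $C_{\text{gap}} = C_\phi^2 B + \frac{C_\phi R_{\max}}{1-\rho}$.

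The main obstacle is handling the $\|y\|_2$-dependent term correctly — i.e.\ showing that the lag-$k$ autocovariance structure of the compatible features (and not their full second moment) is what multiplies $y$, so that it can be absorbed into the left side via the mixing bound $m\rho^k$ rather than destroying the estimate. This requires carefully writing $\E[\phi_\omega^\top(s_k,a_k)\mid s_0=s,a_0=a]$ as $\E_{D_{\pi_\omega}}[\phi_\omega^\top] + (\text{term of size } m\rho^k)$ using total variation, and noting $\E_{D_{\pi_\omega}}[\phi_\omega]=0$ (which holds since $\E_{a\sim\pi_\omega}[\nabla\log\pi_\omega(a|s)]=0$ for every $s$, exactly the identity used in the proof of \Cref{prop:e}), so that the stationary part contributes nothing and only the $O(m\rho^k)$ remainder survives. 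Once that cancellation is in place the rest is bookkeeping.
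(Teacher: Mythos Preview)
Your proposal has all the right ingredients—mixing, the identity $\E_{D_{\pi_\omega}}[\phi_\omega]=0$, and subtracting the two linear characterizations—but you take an unnecessary detour that creates the very obstacle you flag at the end. The paper's proof is more direct and sidesteps that obstacle entirely.

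The paper writes $\theta^*_\omega-\bar\theta^*_\omega=F_\omega^{-1}\,\E_{D_{\pi_\omega}}\big[\phi_\omega(s,a)\big(\mathcal{T}^{(k)}_{\pi_\omega}(\phi_\omega^\top\theta^*_\omega)(s,a)-Q^{\pi_\omega}(s,a)\big)\big]$ and then expands the definitions of $\mathcal{T}^{(k)}_{\pi_\omega}$ and $Q^{\pi_\omega}$ directly (rather than invoking $\mathcal{T}^{(k)}_{\pi_\omega}(Q^{\pi_\omega})=Q^{\pi_\omega}$):
\[
\mathcal{T}^{(k)}_{\pi_\omega}(\phi_\omega^\top\theta^*_\omega)(s,a)-Q^{\pi_\omega}(s,a)
=\E\Big[\phi_\omega^\top(s_k,a_k)\,\theta^*_\omega\;-\;\textstyle\sum_{j=k}^\infty(R_j-J(\pi_\omega))\ \Big|\ s_0=s,\,a_0=a,\,\pi_\omega\Big].
\]
The crucial point is that the bootstrap piece carries $\theta^*_\omega$ itself (with $\|\theta^*_\omega\|\le B$ established independently in \Cref{lm:lm6}), \emph{not} $y=\theta^*_\omega-\bar\theta^*_\omega$. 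After multiplying by the outer $\phi_\omega(s,a)$ and taking $\E_{D_{\pi_\omega}}$, the paper compares the law of $(s_k,a_k)$ to $D_{\pi_\omega}$ in total variation; the stationary contributions vanish (for the bootstrap because $\E_{D_{\pi_\omega}}[\phi_\omega]=0$, for the reward tail because $J(\pi_\omega)=\E_{D_{\pi_\omega}}[R]$), leaving only the TV corrections $C_\phi^2 B\,m\rho^k$ and $\sum_{j\ge k}C_\phi R_{\max}\,m\rho^j=\frac{C_\phi R_{\max} m\rho^k}{1-\rho}$. Dividing by $\lambda_{\min}$ gives exactly $\frac{C_{\text{gap}}m\rho^k}{\lambda_{\min}}$.

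Your extra splitting $\phi_\omega^\top\theta^*_\omega-Q^{\pi_\omega}=\phi_\omega^\top y-g$ places $y$ on the right side and forces the absorption step you describe; carried out honestly this replaces $\lambda_{\min}$ by something like $\lambda_{\min}-C_\phi^2 m\rho^k$ in the denominator, and it also requires a separate a~priori bound $\|\bar\theta^*_\omega\|\le B$ to control $\|g\|_\infty$. So your route can be made to work, but it does not recover the stated constant, whereas the paper's argument—by never introducing $\bar\theta^*_\omega$ on the right—gets it cleanly with no recursion in $\|y\|$.
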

\begin{proof}
   From the definition, it holds that
    \eqenv{
\theta^*_{\omega}=\brac{\E_{D_{\pi_\omega}}\Fbrac{\phi_{\omega}(s,a)\phi^\top_{\omega}(s,a)}}^{-1}\brac{\E_{D_{\pi_\omega}}\Fbrac{\phi_{\omega}(s,a)\brac{\mathcal{T}_{\pi_\omega}^{(k)}\phi_{\omega}^\top(s,a)\theta^*_{\omega}}}},
    }
    and
        \eqenv{
\bar\theta^*_\omega=\brac{\E_{D_{\pi_\omega}}\Fbrac{\phi_{\omega}(s,a)\phi^\top_{\omega}(s,a)}}^{-1}\brac{\E_{D_{\pi_\omega}}\Fbrac{\phi_{\omega}(s,a)Q^{\pi_{\omega}}(s,a)}}.
    }
Thus, we have that
\eqenv{
&\twonorm{\theta^*_\omega-\bar\theta^*_\omega}\\&=\twonorm{\brac{\E_{D_{\pi_\omega}}\Fbrac{\phi_{\omega}(s,a)\phi^\top_{\omega}(s,a)}}^{-1}\brac{\E_{D_{\pi_\omega}}\Fbrac{\phi_{\omega}(s,a)\brac{\mathcal{T}_{\pi_\omega}^{(k)}\phi_{\omega}^\top(s,a)\theta^*_{\omega}-Q^{\pi_{\omega}}(s,a)}}}}
\\&{\leq} \frac{1}{\lambda_{\min}}
\twonorm{\E_{D_{\pi_\omega}}\Fbrac{\phi_{\omega}(s,a)\brac{\mathcal{T}_{\pi_\omega}^{(k)}\phi_{\omega}^\top(s,a)\theta^*_{\omega}-Q^{\pi_{\omega}}(s,a)}} }
\\& = \frac{1}{\lambda_{\min}}
\Bigg \|\E_{D_{\pi_\omega}}\Bigg [\phi_{\omega}(s,a)\Bigg (\E\Fbrac{
\sum_{j=0}^{k-1} R_j-J(\omega)+\phi_{\omega}(s_k,a_k)^\top \theta^*_{\omega}|s_0=s,a_0=a, \pi_{\omega}
}
-Q^{\pi_{\omega}}(s,a)\Bigg )\Bigg ] \Bigg \|
\\& = \frac{1}{\lambda_{\min}}
\Bigg \| \E_{D_{\pi_\omega}}\Bigg [\phi_{\omega}(s,a)\Bigg (\E\Fbrac{
\sum_{j=0}^{k-1} R_j-J(\omega)+\phi_{\omega}(s_k,a_k)^\top \theta^*_{\omega}|s_0=s,a_0=a, \pi_{\omega}
}
\\ & \quad  -\E\Fbrac{
\sum_{j=0}^\infty R_j-J(\omega)|s_0=s,a_0=a, \pi_{\omega}
}\Bigg )\Bigg ]\Bigg \|_2
\\ & = \frac{1}{\lambda_{\min}}
\twonorm{\E_{D_{\pi_\omega}}\Fbrac{\phi_{\omega}(s,a)\brac{\E\Fbrac{
\sum_{j=k}^{\infty}R_j-J(\omega)+\phi_{\omega}(s_k,a_k)^\top \theta^*_{\omega} | s_0=s,a_0=a, \pi_{\omega}
}}}}
\\ & \myineq{\leq}{a}
\frac{1}{\lambda_{\min}}
\twonorm{\E_{D_{\pi_\omega}}\Fbrac{\phi_{\omega}(s,a)\brac{\E\Fbrac{
\sum_{j=k}^{\infty}R_j-J(\omega)\bigg |(s_k,a_k)\sim D_{\pi_\omega},\pi_\omega
}}}}
\\&\quad +\frac{1}{\lambda_{\min}}
\twonorm{\E_{D_{\pi_\omega}}\Fbrac{\phi_{\omega}(s,a)\brac{\E\Fbrac{
\phi_{\omega}(s_k,a_k)^\top \theta^*_{\omega}\big | (s_k,a_k)\sim D_{\pi_\omega}
}}}}
\\& \quad +\frac{1}{\lambda_{\min}}
C^2_\phi B \E_{D_{\pi_\omega}}\Fbrac{\tvnorm{\Prob\brac{s_k,a_k|s_0=s,a_0=a,\pi_{\omega}}-D_{\pi_\omega}} }
\\& \quad +\frac{1}{\lambda_{\min}}
C_\phi\sum_{j=k}^\infty R_{\max} 
\E_{D_{\pi_\omega}}\Fbrac{\tvnorm{\Prob(s_j,a_j|s_0=s,a_0=s,\pi_{\omega})-D_{\pi_\omega}}}
\\ & \myineq{\leq}{b} \frac{1}{\lambda_{\min}}
\twonorm{\E_{D_{\pi_\omega}}\Fbrac{\phi_{\omega}(s,a)\brac{\E\Fbrac{
\sum_{j=k}^{\infty}R_j-J(\omega)\bigg |(s_k,a_k)\sim D_{\pi_\omega},\pi_\omega
}}}}
\\&\quad +\frac{1}{\lambda_{\min}}
\twonorm{\E_{D_{\pi_\omega}}\Fbrac{\phi_{\omega}(s,a)\brac{\E\Fbrac{
\phi_{\omega}(s_k,a_k)^\top \theta^*_{\omega}\big | (s_k,a_k)\sim D_{\pi_\omega}
}}}}
\\& \quad +\frac{C_\phi}{\lambda_{\min}}\brac{
C_\phi B m \rho^k +\sum_{j=k}^\infty R_{\max} m \rho^j
}
\\ & \myineq{\leq}{c} 
\frac{1}{\lambda_{\min}}
\twonorm{\E_{D_{\pi_\omega}}\Fbrac{\phi_{\omega}(s,a)\brac{\E_{D_{\pi_\omega}}\Fbrac{
\phi_{\omega}(s,a)^\top \theta^*_{\omega}}}}}
+\frac{1}{\lambda_{\min}}
{C_\phi\brac{
C_\phi B m \rho^k +\sum_{j=k}^\infty R_{\max} m \rho^j
}}
\\ & \myineq{\leq}{d}
\frac{1}{\lambda_{\min}}\brac{C^2_\phi B m \rho^k + C_\phi R_{\max}\frac{m \rho^k}{1-\rho}}\\
&
=  \frac{C_{\text{gap}}m \rho^k}{\lambda_{\min}},
}
where $C_{\text{gap}}= C^2_\phi B  + C_\phi R_{\max}\frac{1}{1-\rho}$, $(a)$ follows from the triangular inequality and the fact that for any probability distribution $P_1 $ and $P_2$, and any random variable $X$, s.t. 
$\lbrac{X}\leq X_{\max}$,  $ \lbrac{\E_{P_1}\Fbrac{X}-\E_{P_2}\Fbrac{X}}\leq X_{\max} \tvnorm{ P_1-P_2}$, $(b)$ follows from Assumption \ref{asm:ergodic}, $(c)$ follows from $J(\omega)=\E _{D_{\pi_\omega}}[R(s,a)]$,  and $(d)$ follows from $\E_{D_{\pi_\omega}}\Fbrac{\phi_{\omega}(s,a)\brac{\E_{D_{\pi_\omega}}\Fbrac{
\phi_{\omega}(s,a)^\top \theta^*_{\omega}}}}=0 $.
\end{proof}

\section{AC Sample Complexity Analysis}\label{sec:proofac}
In this section, we provide the sample complexity analysis for our single-loop AC algorithm. 

\subsection{Bound on Gradient Norm in AC}
In this section, we first present a preliminary bound on the gradient norm $\|\nabla J(\omega)\|$. 

\begin{lemma}\label{lm:acgrad}
It holds that
\eqenv{\label{eq:graditera}
\frac{\beta_t}{2}&\E \ftwonormsq{\gradj}
 \leq \E\Fbrac{J(\omega\tit)}- \E\Fbrac{J(\omega_t)}
+C_\phi^4 \beta_t \E\Fbrac{\twonormsq{\theta_t-\thestar}}+ G^\omega_t, 
}
  where
\eqenv{\label{sym:Gomegat}
G^\omega_t=&\frac{L_J C_\phi^4 B^2 \beta_t^2}{2} +\brac{\frac{C_\phi^4 C^2_{\text{gap}}m\rho^k}{\lambda^2_{\min}}+C_J C^2_\phi B}\beta_t m \rho^k
+ 2 C^4_\phi B^2 L_J \beta_t\sum_{j=t-k}^{t-1} \beta_j
\\&+ C_J C^4_\phi B^2C_\pi  \beta_t\sum_{j=t-k}^{t-1} \sum_{i=j}^{t-1} \beta_i.
}
\end{lemma}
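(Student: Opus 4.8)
The plan is to start from the $L_J$-smoothness of $J(\omega)$ established in Lemma \ref{Prop:lip} and expand $J(\omega_{t+1})$ around $J(\omega_t)$ along the AC actor update $\omega_{t+1}=\omega_t+\beta_t \phi_t^\top(s_t,a_t)\theta_t\,\phi_t(s_t,a_t)$ (Option I in Algorithm \ref{alg:AC}). This gives
\[
J(\omega_{t+1})\geq J(\omega_t)+\beta_t\langle \nabla J(\omega_t),\, \phi_t^\top(s_t,a_t)\theta_t\,\phi_t(s_t,a_t)\rangle - \tfrac{L_J}{2}\beta_t^2\twonormsq{\phi_t^\top(s_t,a_t)\theta_t\,\phi_t(s_t,a_t)}.
\]
The quadratic term is bounded using $\twonorm{\phi_t}\leq C_\phi$ (Assumption \ref{asm:smooth}, part 3) and $\twonorm{\theta_t}\leq B$ (projection in Line 8), yielding the $\tfrac{L_J C_\phi^4 B^2}{2}\beta_t^2$ contribution to $G^\omega_t$. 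The main work is in the inner-product term: I would add and subtract the "ideal" actor direction to write
\[
\langle \nabla J(\omega_t), \phi_t^\top(s_t,a_t)\theta_t\,\phi_t(s_t,a_t)\rangle = \underbrace{\langle \nabla J(\omega_t), g_t^*\rangle}_{\text{main}} + \underbrace{\langle \nabla J(\omega_t), \phi_t^\top(s_t,a_t)\theta_t\,\phi_t(s_t,a_t) - g_t^*\rangle}_{\text{error}},
\]
where $g_t^*$ is the stochastic gradient estimate using the fixed-point critic parameter $\thestar$, i.e. $g_t^* = \phi_t^\top(s_t,a_t)\thestar\,\phi_t(s_t,a_t)$.

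For the \emph{main} term I would take conditional expectation given $\mathcal F_{t-k}$ (or $\mathcal F_t$ where appropriate) and use Assumption \ref{asm:ergodic} together with Lemma \ref{lm:tv} to replace the sample distribution of $(s_t,a_t)$ by the stationary distribution $D_t$ up to a total-variation error of order $C_\pi\sum_{j=t-k}^{t-1}\twonorm{\omega_t-\omega_j}+m\rho^k$; then, by Proposition \ref{prop:pg} and Proposition \ref{prop:bound}, $\E_{D_t}[\phi_t^\top\thestar\,\phi_t] = \E_{D_t}[\phi_t^\top\bar\theta^*_t\,\phi_t] + (\text{error of size }O(m\rho^k)) = \nabla J(\omega_t) + O(m\rho^k)$, so the main term is $\geq \twonormsq{\nabla J(\omega_t)} - O(m\rho^k)\twonorm{\nabla J(\omega_t)} - (\text{drift terms})$. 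Using $\twonorm{\nabla J(\omega_t)}\leq C_J$ (Lemma \ref{Prop:lip}) to turn cross-terms into the stated additive constants, and noting the policy drift $\twonorm{\omega_t-\omega_j}\leq C_\phi^2 B\sum_{i=j}^{t-1}\beta_i$ (from the bounded actor step), produces the $C_J C_\phi^4 B^2 C_\pi \beta_t\sum_{j=t-k}^{t-1}\sum_{i=j}^{t-1}\beta_i$ and the $C_J C_\phi^2 B\,\beta_t m\rho^k$ pieces of $G^\omega_t$. For the \emph{error} term, I would bound $|\langle \nabla J(\omega_t), (\phi_t^\top(s_t,a_t)(\theta_t-\thestar))\phi_t(s_t,a_t)\rangle| \leq C_\phi^2 \twonorm{\nabla J(\omega_t)}\twonorm{\theta_t-\thestar} \leq C_\phi^2 C_J \cdot \frac{1}{2}(\text{reweight})$; in fact one wants this in the form $C_\phi^4\twonormsq{\theta_t-\thestar}$ plus a $\twonormsq{\nabla J(\omega_t)}$ term that gets absorbed — this is where I would use $\twonorm{\nabla J(\omega_t)} \leq C_\phi^2\twonorm{\bar\theta^*_t}$-type identities together with Young's inequality so that the $\nabla J$ pieces cancel against the $\twonormsq{\nabla J(\omega_t)}$ on the left, leaving exactly $C_\phi^4\beta_t\twonormsq{\theta_t-\thestar}$. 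Rearranging all terms and dividing appropriately gives \eqref{eq:graditera}, with the remaining tracking-error-drift terms ($2C_\phi^4 B^2 L_J\beta_t\sum_{j=t-k}^{t-1}\beta_j$ and $\frac{C_\phi^4 C_{\text{gap}}^2 m\rho^k}{\lambda_{\min}^2}\beta_t m\rho^k$) arising from comparing $\thestar$ at time $t$ with the feature $\phi_j$ used inside the drift estimates and from Proposition \ref{prop:bound}.

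The main obstacle I anticipate is the careful bookkeeping in the error term: one must be sure that the $\twonormsq{\nabla J(\omega_t)}$ generated when applying Young's inequality to the cross term is smaller than the $\tfrac{1}{2}\twonormsq{\nabla J(\omega_t)}$ available on the left-hand side (so the gradient norm is not lost), and simultaneously that the coefficient of $\twonormsq{\theta_t-\thestar}$ collapses to exactly $C_\phi^4$ rather than something larger. This forces a specific choice of Young's-inequality weight and a specific way of splitting $\phi_t^\top\theta_t\,\phi_t$ versus $\nabla J(\omega_t)$, and it interacts with the distribution-shift argument (since $\nabla J(\omega_t)$ is itself a stationary-distribution expectation while $\phi_t^\top(s_t,a_t)\theta_t\,\phi_t(s_t,a_t)$ is a single Markovian sample). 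Handling that mismatch cleanly, while keeping every error term at the order recorded in \eqref{sym:Gomegat}, is the delicate part; the rest is routine application of the boundedness constants from Assumptions \ref{asm:ergodic}--\ref{asm:smooth} and the earlier lemmas.
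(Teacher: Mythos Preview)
Your overall approach matches the paper's: $L_J$-smoothness, decompose the stochastic actor direction, handle sampling noise via Lemma~\ref{lm:tv}, approximation error via Proposition~\ref{prop:bound}, and tracking error via Young's inequality with weight $1/4$. The paper's decomposition differs only in order: it first subtracts the stationary expectation $\E_{D_t}[\phi_t^\top\theta_t\phi_t]$ (still with the current $\theta_t$) to isolate the sampling noise as its ``Part~I'', and only afterward splits $\theta_t\to\thestar\to\thebar$ under $D_t$. Your version splits $\theta_t=(\theta_t-\thestar)+\thestar$ first at the single-sample level, which is equivalent in effect.

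There is, however, a concrete gap in your sampling-noise step. You propose to condition on $\mathcal F_{t-k}$ and apply Lemma~\ref{lm:tv} to $\langle\nabla J(\omega_t),\,g_t^*-\E_{D_t}[g_t^*]\rangle$, but $\nabla J(\omega_t)$ is not $\mathcal F_{t-k}$-measurable (it depends on $\omega_t$, hence on the full trajectory up to time $t$), so it cannot be pulled out of the conditional expectation and the total-variation argument does not apply directly. The paper's fix is to first replace $\nabla J(\omega_t)$ by $\nabla J(\omega_{t-k})$ using the $L_J$-Lipschitz bound of Lemma~\ref{Prop:lip}; the cost of this replacement is
\[
2C_\phi^2 B\cdot L_J\twonorm{\omega_t-\omega_{t-k}}\;\leq\;2C_\phi^4 B^2 L_J\sum_{j=t-k}^{t-1}\beta_j,
\]
and \emph{this} is the origin of the $2C_\phi^4 B^2 L_J\beta_t\sum_{j=t-k}^{t-1}\beta_j$ term in $G_t^\omega$ --- not ``comparing $\thestar$ at time $t$ with the feature $\phi_j$'' as you suggest. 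Once $\nabla J(\omega_{t-k})$ is in place it is $\mathcal F_{t-k}$-measurable, and the total-variation bound then yields the $C_J C_\phi^2 B\,m\rho^k$ and $C_J C_\phi^4 B^2 C_\pi\sum_j\sum_i\beta_i$ pieces you correctly identified. Your attribution of the $\tfrac{C_\phi^4 C_{\text{gap}}^2(m\rho^k)^2}{\lambda_{\min}^2}$ term to Proposition~\ref{prop:bound} (via Young's inequality applied to $\langle\nabla J(\omega_t),\E_{D_t}[\phi_t^\top(\thestar-\thebar)\phi_t]\rangle$) is correct.
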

\begin{proof}
Recall that in the update of AC algorithm,  $\omega\tit-\omega_t=\beta_t\phi^\top_t(s_t,a_t)\theta_t\phi_t(s_t,a_t)$.  Following \Cref{Prop:lip}, it can be shown that
\eqenv{
J&(\omega\tit)\geq J(\omega_t)+ \vbrac{\gradj, \omega\tit -\omega_t}-\frac{L_J}{2} \twonormsq{\omega\tit-\omega_t}
\\& = J(\omega_t)+ \beta_t\vbrac{\gradj, \phi_t^\top(s_t,a_t)\theta_t\phi_t(s_t,a_t)}- \frac{L_J\beta_t^2}{2} \twonormsq{\phi_t^\top(s_t,a_t)\theta_t\phi_t(s_t,a_t)}
\\& = J(\omega_t)+ \beta_t\vbrac{\gradj, \phi_t^\top(s_t,a_t)\theta_t\phi_t(s_t,a_t)
-\E_{D_t}\Fbrac{\phi^\top_t(s,a)\theta_t\phi_t(s,a)}}
\\& \quad+\beta_t \vbrac{\gradj, \E_{D_t}\Fbrac{\phi^\top_t(s,a)
\brac{\theta_t-\thestar}\phi_t(s,a)}}
+\beta_t \vbrac{\gradj,\gradj}
\\&\quad+\beta_t \vbrac{\gradj, \E_{D_t}\Fbrac{\phi_t^\top(s,a)\thestar\phi_t(s,a)}-\gradj}
- \frac{L_J\beta_t^2}{2} \twonormsq{\phi_t^\top(s_t,a_t)\theta_t\phi_t(s_t,a_t)}
\\& \geq J(\omega_t)+ \beta_t\vbrac{\gradj, \phi_t^\top(s_t,a_t)\theta_t\phi_t(s_t,a_t)
-\E_{D_t}\Fbrac{\phi^\top_t(s,a)\theta_t\phi_t(s,a)}}
\\& \quad +\beta_t \twonormsq{\gradj}
-\frac{\beta_t}{4} \twonormsq{\gradj} 
-\beta_t \twonormsq{ \E_{D_t}\Fbrac{\phi^\top_t(s,a)
\brac{\theta_t-\thestar}\phi_t(s,a)}}
\\&\quad-\frac{\beta_t}{4}\twonormsq{\gradj} -\beta_t \twonormsq{ \E_{D_t}\Fbrac{\phi_t^\top(s,a)\thestar\phi_t(s,a)}-\gradj}
- \frac{L_J C^4_\phi B^2}{2} \beta_t^2
\\ & \myineq{=}{a} 
J(\omega_t)+ \beta_t\underbrace{\vbrac{\gradj, \phi_t^\top(s_t,a_t)\theta_t\phi_t(s_t,a_t)
-\E_{D_t}\Fbrac{\phi^\top_t(s,a)\theta_t\phi_t(s,a)}}}_{\text{part I}}
\\& \quad 
+\frac{\beta_t}{2} \twonormsq{\gradj} 
-\beta_t \twonormsq{ \E_{D_t}\Fbrac{\phi^\top_t(s,a)
\brac{\theta_t-\thestar}\phi_t(s,a)}}- \frac{L_J C^4_\phi B^2}{2} \beta_t^2
\\&\quad -\beta_t \twonormsq{ \E_{D_t}\Fbrac{\phi_t^\top(s,a)(\thestar-\thebar)\phi_t(s,a)}},\label{eq:ac:grad}
} where we write $ \bar \theta^*_{\omega_t}$ as $\thebar$ for convenience, $(a)$ follows from \Cref{eq:eq5} that 
$\nabla J(\omega_t)= \E_{D_t}\Fbrac{\phi_t^\top(s,a)\thebar\phi_t(s,a)} $.

We then bound part I in \Cref{eq:ac:grad}. Note that
\eqenv{
\big|\E &\fvbrac{\gradj, \phi_t^\top(s_t,a_t)\theta_t\phi_t(s_t,a_t)
-\E_{D_t}\Fbrac{\phi^\top_t(s,a)\theta_t\phi_t(s,a)}}\big|
\\& \leq \lbrac{\E \fvbrac{\gradj-\nabla J(\omega_{t-k}), \phi_t^\top(s_t,a_t)\theta_t\phi_t(s_t,a_t)
-\E_{D_t}\Fbrac{\phi^\top_t(s,a)\theta_t\phi_t(s,a)}} }
\\&\qquad+\lbrac{\E \fvbrac{\nabla J(\omega_{t-k}), \phi_t^\top(s_t,a_t)\theta_t\phi_t(s_t,a_t)
-\E_{D_t}\Fbrac{\phi^\top_t(s,a)\theta_t\phi_t(s,a)}} }
\\ &\leq 
\E\Fbrac{\twonorm{\gradj-\nabla J(\omega_{t-k})} \twonorm{\phi_t^\top(s_t,a_t) \theta_t\phi_t(s_t,a_t)-\E_{D_t}\Fbrac{\phi^\top_t(s,a)\theta_t\phi_t(s,a)}}}
\\ & \qquad+ \lbrac{\E \Fbrac{ \nabla^\top J(\omega_{t-k})\E \Fbrac{ \phi_t^\top(s_t,a_t)\theta_t\phi_t(s_t,a_t)
-\E_{D_t}\Fbrac{\phi^\top_t(s,a)\theta_t\phi_t(s,a)}|\mathcal{F}_{t-k} }}}
\\ & \myineq{\leq}{a} 
2 C^2_\phi B L_J \E\ftwonorm{\omega_t-\omega_{t-k}}
+ C_J \E\Fbrac{\tvnorm{\Prob\brac{s_t,a_t|\mathcal{F}_{t-k}}-D_t}
} C^2_\phi B
\\ & \myineq{\leq}{b} 2 C^4_\phi B^2 L_J \sum_{j=t-k}^{t-1} \beta_j
+ C_J C^2_\phi B\brac{C_\pi \sum_{j=t-k}^{t-1} \E\ftwonorm{\omega_t-\omega_j}+ m\rho^k }
\\& \leq 2 C^4_\phi B^2 L_J \sum_{j=t-k}^{t-1} \beta_j
+ C_J C^2_\phi B\brac{C_\pi C^2_\phi B \sum_{j=t-k}^{t-1} \sum_{i=j}^{t-1} \beta_i+ m\rho^k },\label{eq:ac:grad2}
} where $(a)$ is from the $L_J$-smoothness of $J$, and $(b)$ is from \Cref{lm:tv}.
On the other hand, from \Cref{prop:bound}, we can show that
\eqenv{
\twonormsq{ \E_{D_t}\Fbrac{\phi^\top_t(s,a)
\brac{\thebar-\thestar}\phi_t(s,a)}}
\leq C_\phi^4 \twonormsq{\thestar-\thebar} \leq C_\phi^4 \brac{\frac{C_{\text{gap}}m\rho^k}{\lambda_{\min}}}^2.\label{eq:ac:grad3}
}
Thus, combining \Cref{eq:ac:grad},\Cref{eq:ac:grad2} and \Cref{eq:ac:grad3} completes the proof,
\eqenv{
\E\Fbrac{J(\omega\tit)}&\geq \E\Fbrac{J(\omega_t)}
+ \frac{\beta_t}{2}\E \ftwonormsq{\gradj}
-\frac{L_J C_\phi^4 B^2}{2}  \beta_t^2-C_\phi^4 \beta_t \E\Fbrac{\twonormsq{\theta_t-\thestar}}
\\& -
C_\phi^4 \beta_t \brac{\frac{C_{\text{gap}}m\rho^k}{\lambda_{\min}}}^2
-2 C^4_\phi B^2 L_J \beta_t\sum_{j=t-k}^{t-1} \beta_j- C_J C^2_\phi B\beta_t\brac{C_\pi C^2_\phi B \sum_{j=t-k}^{t-1} \sum_{i=j}^{t-1} \beta_i+ m\rho^k }. 
}\end{proof}


\subsection{Bound on $|\eta_t-J(\omega_t)|$ in AC}\label{sec:ac:av}
In this section, we bound the error between $\eta_t$ and $J(\omega_t)$, where $J(\omega_t)=\lim_{N\to\infty }\E\Fbrac{\frac{1}{N} \sum_{t=0}^{N-1}R_t|\pi_{\omega_t}}$ is the average-reward for policy $\pi_{\omega_t}$.   
\begin{lemma}\label{lm:aceta}
If $\gamma_t-\gamma^2_t\geq  {\beta_t} $, then it holds that
    \eqenv{\label{eq:etaitera}
\E&\flnormsq{\eta\tit-J(\omega_{t+1})}
\leq \brac{1-\gamma_t}\E\flnormsq{\eta_t-J(\omega_t)}
+{C_\phi^4 B^2} \beta_t\E\ftwonormsq{\gradj}+ G^\eta_t,
} 
where \eqenv{\label{eq:33}
G^\eta_t& = 2\gamma_t \brac{R_{\max}^2 C_\pi C^2_\phi B  \sum_{j=t-k}^{t-1} \sum_{i=j}^{t-1} \beta_j 
+  R^2_{\max} m \rho^k+ R_{\max}^2 \sum_{j=t-k}^{t-1}\gamma_j
+{ R_{\max}C_J C_\phi^2 B \beta_t}+  R_{\max}C_J C_\phi^2 B \sum_{j=t-k}^{t-1}\beta_j}
\\& + R_{\max}^2 \gamma^2_t + C_J^2 C_\phi^4 B^2 \beta_t^2
+ 2R_{\max}L_J C^4_\phi B^2 \beta_t^2.
} 
\end{lemma}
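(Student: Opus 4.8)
The plan is to establish a one-step drift inequality for $\Delta_t^2 := (\eta_t - J(\omega_t))^2$ with contraction factor $1-\gamma_t$. Using the average-reward update in \Cref{alg:AC} I would first write
$$\eta_{t+1} - J(\omega_{t+1}) = \big[(1-\gamma_t)(\eta_t - J(\omega_t))\big] + \big[\gamma_t(R_t - J(\omega_t))\big] + \big[J(\omega_t) - J(\omega_{t+1})\big],$$
and also split $\eta_{t+1}-J(\omega_{t+1}) = (\eta_{t+1}-J(\omega_t)) + (J(\omega_t)-J(\omega_{t+1}))$ when convenient. Squaring and taking expectations, the contraction term contributes $(1-\gamma_t)^2\E[\Delta_t^2]\le(1-\gamma_t)\E[\Delta_t^2]$; the noise term is bounded by $\gamma_t^2 R_{\max}^2$, using that $\eta_t,\eta_{t+1},J(\omega_t)\in[0,R_{\max}]$ (the $\eta$-update is a convex combination of quantities in $[0,R_{\max}]$), hence $|R_t-J(\omega_t)|\le R_{\max}$; and the actor-drift term squared is bounded by $C_J^2C_\phi^4 B^2\beta_t^2$, since $\|\omega_{t+1}-\omega_t\|\le\beta_t C_\phi^2 B$ (projection keeps $\|\theta_t\|\le B$, \Cref{asm:smooth}) and $\|\nabla J\|\le C_J$ by \Cref{Prop:lip}. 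It then remains to control the cross terms, which is where the real work lies.

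For the Markovian cross term $2\gamma_t(1-\gamma_t)\,\E[(\eta_t - J(\omega_t))(R_t - J(\omega_t))]$ I would condition on $\mathcal F_{t-k}$. Since $\eta_t$ and $\omega_t$ are not $\mathcal F_{t-k}$-measurable, I first replace them in the slow factor by $\eta_{t-k},\omega_{t-k}$, paying $|\eta_t-\eta_{t-k}|\le R_{\max}\sum_{j=t-k}^{t-1}\gamma_j$ and $|J(\omega_t)-J(\omega_{t-k})|\le C_J C_\phi^2 B\sum_{j=t-k}^{t-1}\beta_j$ (triangle inequality along the trajectory plus Lipschitzness of $J$). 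For the now-$\mathcal F_{t-k}$-measurable slow factor, I bound $|\E[R_t\mid\mathcal F_{t-k}] - J(\omega_t)|$ by \Cref{lm:tv} together with $\|\omega_t-\omega_j\|\le C_\phi^2 B\sum_{i=j}^{t-1}\beta_i$, obtaining $R_{\max}\big(C_\pi C_\phi^2 B\sum_{j=t-k}^{t-1}\sum_{i=j}^{t-1}\beta_i + m\rho^k\big)$, and convert $J(\omega_{t-k})$ back to $J(\omega_t)$ by one more Lipschitz step. Multiplying by $|\eta_{t-k}-J(\omega_{t-k})|\le R_{\max}$ and the prefactor $\le 2\gamma_t$ produces precisely the $\sum\sum\beta$, $m\rho^k$, $\sum\gamma_j$ and $\sum\beta_j$ contributions to $G^\eta_t$; importantly this cross term leaves no uncontrolled $\Delta_t^2$ piece.

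For the cross terms involving the actor drift, I would use $L_J$-smoothness (\Cref{Prop:lip}) to write $J(\omega_t)-J(\omega_{t+1}) = -\langle\nabla J(\omega_t),\omega_{t+1}-\omega_t\rangle + r_t$ with $|r_t|\le\tfrac{L_J}{2}C_\phi^4 B^2\beta_t^2$ and $\omega_{t+1}-\omega_t=\beta_t\phi_t^\top(s_t,a_t)\theta_t\phi_t(s_t,a_t)$. Pairing the remainder $r_t$ against $(\eta_{t+1}-J(\omega_t))$ gives the $\mathcal{O}(R_{\max}L_J C_\phi^4 B^2\beta_t^2)$ term. Pairing the inner-product piece against $(\eta_{t+1}-J(\omega_t))$ and applying Cauchy--Schwarz ($\|\phi_t^\top\theta_t\phi_t\|\le C_\phi^2 B$) followed by a Young's inequality that splits $\beta_t=\sqrt{\beta_t}\cdot\sqrt{\beta_t}$ yields exactly $C_\phi^4 B^2\beta_t\,\E[\|\nabla J(\omega_t)\|_2^2]$ plus a residual $\beta_t\,\E[(\eta_{t+1}-J(\omega_t))^2]$; bounding the latter by convexity of $z\mapsto z^2$ as $\beta_t[(1-\gamma_t)\E[\Delta_t^2]+\gamma_t R_{\max}^2]$, the $\E[\Delta_t^2]$-part combines with $(1-\gamma_t)^2\E[\Delta_t^2]$ into $(1-\gamma_t)[(1-\gamma_t)+\beta_t]\E[\Delta_t^2]$, which is $\le(1-\gamma_t)\E[\Delta_t^2]$ exactly because $\gamma_t-\gamma_t^2\ge\beta_t$ forces $\beta_t\le\gamma_t$, while the $\beta_t\gamma_t R_{\max}^2$-part is absorbed into the $R_{\max}^2\gamma_t^2$ term. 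Collecting every residual gives $G^\eta_t$ as stated.

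The main obstacle I expect is the Markovian cross term: making the $\mathcal F_{t-k}$-conditioning rigorous while simultaneously tracking the drift of both $\eta$ and $\omega$ over the $k$-step window, and verifying that the time-varying sampling policies $\pi_{t-k},\dots,\pi_{t-1}$ (rather than a single fixed policy) are handled correctly by \Cref{lm:tv}. A secondary subtlety is choosing the Young's inequalities so the gradient-norm term appears with the clean coefficient $\beta_t$ (not $\beta_t^2/\gamma_t$) while the leftover $\Delta_t^2$-contribution stays small enough to be swallowed by the $(1-\gamma_t)$ contraction under the stepsize condition $\gamma_t-\gamma_t^2\ge\beta_t$.
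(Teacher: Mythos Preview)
Your proposal is correct and follows essentially the same approach as the paper: the same three-term decomposition of $\eta_{t+1}-J(\omega_{t+1})$, the same $\mathcal{F}_{t-k}$-conditioning argument for the Markovian cross term (replacing $\eta_t,\omega_t$ by $\eta_{t-k},\omega_{t-k}$ and invoking \Cref{lm:tv}), and the same Taylor expansion of $J(\omega_t)-J(\omega_{t+1})$ followed by Young's inequality to extract the $\beta_t\|\nabla J(\omega_t)\|_2^2$ term.

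The only noteworthy variation is in the actor-drift cross term. The paper keeps the three cross terms separate and bounds $2(1-\gamma_t)(\eta_t-J(\omega_t))(J(\omega_t)-J(\omega_{t+1}))$ directly, so Young yields $\frac{\beta_t}{2}\E[\Delta_t^2]$ with no extra step, producing the coefficient $(1-\gamma_t)^2+\beta_t$ that is then compared to $1-\gamma_t$ via $\gamma_t-\gamma_t^2\ge\beta_t$. You instead lump Terms I and III together as $2(\eta_{t+1}-J(\omega_t))(J(\omega_t)-J(\omega_{t+1}))$ and, after Young, bound $(\eta_{t+1}-J(\omega_t))^2$ by Jensen. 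This is equally valid (and in fact needs only the weaker $\beta_t\le\gamma_t$), but the residual constants you collect will not match the stated $G^\eta_t$ exactly: you pick up an extra $\beta_t\gamma_t R_{\max}^2$ in place of the paper's $2\gamma_t R_{\max}C_J C_\phi^2 B\beta_t$, and your $L_J$-remainder constant is half the paper's. These differences are immaterial to the order of $G^\eta_t$, so ``gives $G^\eta_t$ as stated'' should be read up to such harmless constant reshuffling.
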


\begin{proof}
Recall the update rule in \Cref{alg:AC}. Then  we have that
\eqenv{
\eta\tit -J(\omega_{t+1}) =\eta_t+ \gamma_t \brac{R_t-\eta_t}-J(\omega_{t})
+J(\omega_t) -J(\omega_{t+1}).
}

It then follows that
\eqenv{
\lnormsq{\eta\tit-J(\omega_{t+1}) }
&=\lnormsq{\brac{1-\gamma_t}\brac{\eta_t-J(\omega_t)}
+\gamma_t(R_t-J(\omega_t))+J(\omega_t)-J(\omega_{t+1})  }
\\  \leq& \brac{1-\gamma_t}^2\lnormsq{\eta_t-J(\omega_t)}
+\gamma_t^2 \lnormsq{R_t-J(\omega_t)} +\lnormsq{J(\omega_t)-J(\omega_{t+1})}
\\ +2&\gamma_t\underbrace{\brac{R_t-J(\omega_t)}\brac{J(\omega_t)-J(\omega_{t+1})}}_{\text{I}}
+ 2\gamma_t\brac{1-\gamma_t}
\underbrace{\brac{\eta_t-J(\omega_t)}\brac{ R_t-J(\omega_t)}}_{\text{II}}
\\ +2&\brac{1-\gamma_t}
\underbrace{\brac{\eta_t-J(\omega_t) }\brac{J(\omega_t)-J(\omega_{t+1})}}_{\text{III}}.\label{eq:ac:averr}
}
The term $\brac{J(\omega_t)-J(\omega_{t+1})}^2$ can be bounded by \Cref{Prop:lip}: 
\eqenv{
\lnorm{J(\omega_t)-J(\omega_{t+1})}
 \leq C_J\twonorm{\omega_t-\omega\tit}
 \leq C_J C^2_\phi B \beta_t .
}

Term I in \Cref{eq:ac:averr} can be bounded as follows:
\eqenv{
\lbrac{\E\Fbrac{\brac{R_t-J(\omega_t)}\brac{J(\omega_t)-J(\omega_{t+1})}}}
 &\leq \E\Fbrac{\lnorm{R_t-J(\omega_t)}\lnorm{J(\omega_t)-J(\omega_{t+1})}}
\\& \leq R_{\max}C_J\E\ftwonorm{\omega\tit-\omega_t}\\&
\leq  R_{\max}C_J C_\phi^2 B \beta_t.\label{eq:ac:averr1}
}

Term II in \Cref{eq:ac:averr} can be bounded as follows: 
\eqenv{\label{eq:ac:averr2}
&\lbrac{\E\Fbrac{\brac{\eta_t-J(\omega_t)}\brac{R_t-J(\omega_t)}}}
\\& \leq
\lbrac{\E\Fbrac{\brac{\eta_{t-k}-J(\omega_{t-k})}\brac{ R_t-J(\omega_t)}}}
+\lbrac{\E\Fbrac{\brac{\eta_t-\eta_{t-k}-J(\omega_t)+J(\omega_{t-k})}\brac{ R_t-J(\omega_t)}}}
\\& \myineq{\leq}{a} \lbrac{\E\Fbrac{\E\Fbrac{\brac{\eta_{t-k}-J(\omega_{t-k})}\brac{R_t-J(\omega_t)}|\mathcal{F}_{t-k}}}-\E_{D_t}\Fbrac{\brac{\eta_{t-k}-J(\omega_{t-k})}\brac{R(s,a)-J(\omega_t)} }}
 \\ &\qquad+ \E\Fbrac{\lnorm{ \eta_t-\eta_{t-k}-J(\omega_t)+J(\omega_{t-k})}\lnorm{R_t-J(\omega_t)}}
\\& \myineq{\leq}{b}  R^2_{\max }\E\Fbrac{\tvnorm{\mathbb{P}\brac{s_t,a_t|\mathcal{F}_{t-k}},D_t}}  
 + R_{\max}\E\Fbrac{\lnorm{\eta_t-\eta_{t-k} }+\lnorm{J(\omega_t)-J(\omega_{t-k}) }} 
\\ & \myineq{\leq}{c} 
{R_{\max}^2\brac{C_\pi\sum_{j=t-k}^{t-1}\E\ftwonorm{\omega_t-\omega_{j}}+m\rho^k } } 
+ R_{\max}\brac{R_{\max}\sum_{j=t-k}^{t-1}\gamma_j+C_J \E\ftwonorm{\omega_t-\omega_{t-k}} }
\\& \leq R_{\max}^2 C_\pi C^2_\phi B  \sum_{j=t-k}^{t-1} \sum_{i=j}^{t-1} \beta_j 
+  R^2_{\max} m \rho^k+ R_{\max}^2\sum_{j=t-k}^{t-1}\gamma_j
+  R_{\max}C_J C_\phi^2 B\sum_{j=t-k}^{t-1}\beta_j,
}
 where $(a)$ follows from $\E_{D_t}\Fbrac{R(s,a)-J(\omega_t)}=0 $, $(b)$ follows from that $0\leq\eta_t\leq R_{\max}$, $0\leq J(\omega_t)\leq R_{\max}$,  $0\leq R_t\leq R_{\max}$ and
 $(c)$ follows from  \Cref{lm:tv}.

Term III in \Cref{eq:ac:averr} can be bounded as follows: 
\eqenv{
|&\E\Fbrac{ \brac{\eta_t-J(\omega_t)}\brac{J(\omega_t)-J(\omega_{t+1})}}|
\\&  \myineq{\leq}{a}
\lbrac{\E\Fbrac{ \brac{\eta_t-J(\omega_t)}\brac{\nabla^\top J(\omega_t) (\omega\tit-\omega_t) }}}
\\& \qquad\qquad+
\lbrac{\E\Fbrac{\brac{\eta_t-J(\omega_t)}\brac{\omega\tit-\omega_t}^\top  \frac{\nabla^2J(\hat \omega_t)}{2}\brac{\omega\tit-\omega_t} }}
\\&  =\beta_t \lbrac{\E\Fbrac{\brac{\eta_t-J(\omega_t)} \nabla^\top J(\omega_t) (\phi_t^\top(s_t,a_t) \theta_t \phi_t(s_t,a_t)) }}
\\ &\qquad\qquad+
\beta^2_t\lbrac{\E\Fbrac{\brac{\eta_t-J(\omega_t)}\brac{\phi_t^\top(s_t,a_t) \theta_t \phi_t(s_t,a_t)}^\top \frac{\nabla^2J(\hat \omega_t)}{2}\brac{\phi_t^\top(s_t,a_t) \theta_t \phi_t(s_t,a_t)}  }}
\\ &  \myineq{\leq}{b}
\frac{\beta_t}{2}\E\flnormsq{\eta_t-J(\omega_t)}+
\frac{ C^4_\phi B^2}{2}\beta_t\E\ftwonormsq{\gradj}
+ R_{\max}L_J C^4_\phi B^2 \beta_t^2 ,\label{eq:ac:averr3}
}
where $(a)$ follows from the Lagrange's Mean Value Theorem and \Cref{Prop:lip} 
for some $ \hat \omega_t=\lambda \omega_t+(1-\lambda)\omega_{t+1}$ with $\lambda\in [0,1]$;
 $(b)$ follows from $\langle a, b\rangle \leq \frac{\|a\|^2+\|b\|^2}{2}$ and \Cref{Prop:lip}.

Combining \Cref{eq:ac:averr}, \Cref{eq:ac:averr1}, \Cref{eq:ac:averr2} and \Cref{eq:ac:averr3} implies
\eqenv{\label{eq:ac_eta}
\E&\flnormsq{\eta\tit-J(\omega_{t+1})}
 \leq \brac{(1-\gamma_t)^2+ \beta_t}\E\flnormsq{\eta_t-J(\omega_t)}
+\beta_t C_\phi^4 B^2 \E\ftwonormsq{\gradj}
\\ &
+ 2 \gamma_t \brac{R_{\max}^2 C_\pi C^2_\phi B  \sum_{j=t-k}^{t-1} \sum_{i=j}^{t-1} \beta_i 
+  R^2_{\max} m \rho^k+ R_{\max}^2\sum_{j=t-k}^{t-1}\gamma_j
+ R_{\max}C_J C_\phi^2 B \beta_t}
\\& + R_{\max}^2 \gamma^2_t + C_J^2 C_\phi^4 B^2 \beta_t^2
+ 2 { R_{\max}C_J C_\phi^2 B \gamma_t\sum_{j=t-k}^{t-1}\beta_j}
+ 2 R_{\max}L_J C^4_\phi B^2 \beta_t^2,
}


which completes the proof.
\end{proof}


\subsection{Tracking Error Analysis of AC}
In this section, we bound the tracking error $\|\theta_t-\thestar\|_2$. 
Recall that we write $\theta^*_t=\theta^*_{\omega_t} $ and 
$ \theta^*_\omega $ is the solution to \Cref{eq:kstep}, i.e., 
     \eqenv{
     \E_{D_{\pi_{\omega}}}&
\Fbrac{\phi_{\omega}^\top(s,a)\brac{\mathcal{T}_{\pi_\omega}^{(k)}\brac{ \phi^\top_{\omega}(s,a) \theta^*_{\omega}} -\phi^\top_{\omega}(s,a) \theta^*_{\omega}}}
=\mathbf{0}.
     }

We then present a recursive bound on the tracking error in the following lemma.
\begin{lemma}\label{lm:actrackingerror}
Set the step sizes such that
\eqenv{\label{cons:barlambdamin}
\frac{\bar \lambda_{\min} \alpha_t}{2}\geq \frac{B C^3_\phi\brac{C_\phi L_\pi+2L_\phi }+\lambda_{\min}(L_J+ 2 C_\phi)+2\lambda^2_{\min}}{\lambda^2_{\min}}\beta_t,} then it holds that
\eqenv{\label{eq:thetaitera}
\E \ftwonormsq{ \theta\tit-\thestara}
&\leq \brac{1-\frac{\bar \lambda_{\min}\alpha_t}{2}}
  \E \ftwonormsq{\theta_t-\thestar}
  +\frac{L_J\lambda_{\min}+B C^3_\phi\brac{C_\phi L_\pi+2L_\phi }}{\lambda^2_{\min}}\beta_t \E\ftwonormsq{\gradj}
    \\& \qquad+ \frac{(k+1)^2C^2_\phi }{\bar\lambda_{\min}}\alpha_t \E\flnormsq{\eta_t-J(\omega_t) }
   +G_t^\theta,
}
where 
\eqenv{\label{vari:gthetat}
G^\theta_t&= (k+1)^2 U_\delta^2 C^2_\phi \alpha_t^2+ \frac{4(k+1) B C^3_\phi U_\delta L_J}{\lambda_{\min}}\beta_t\sum_{j=t-k}^{t-1}\alpha_t 
    +2(k+1) C^3_\phi B U_\delta C_\Theta \alpha_t\beta_t 
    \\& + \brac{2B C_\phi L_J(C_\phi L_\pi+2L_\phi)+ 2 B L_\Theta \lambda_{\min}+ C_\Theta \lambda_{\min}  }\frac{4 B^2 C^4_\phi}{\lambda^2_{\min}}\beta_t\sum_{j=t-k}^t \beta_j+ 2 C^4_\phi B^2 C_\Theta^2 \beta_t^2
     \\& +\frac{4B C^2_\phi L_J \brac{ 3 C_{\text{gap}}+  \lambda_{\min} }}{\lambda^2_{\min}} m \rho^k\beta_t  + \frac{4(k+1) C_\phi U_\delta C_{\text{gap}}m\rho^k}{\lambda_{\min}}\alpha_t+\brac{\frac{2}{\beta_t}+8}\brac{\frac{C_{\text{gap}}m\rho^k}{\lambda_{\min}} }^2
     \\& +\frac{4B^3 C_\pi C^4_\phi L_J}{\lambda_{\min}}\beta_t \sum_{j=t-k}^{t-1} \sum_{i=j}^{t-1} \beta_i+ 2 \alpha_t G^\delta_t.
}
\end{lemma}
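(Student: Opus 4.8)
The plan is to turn the projected TD update of Line~8 of \Cref{alg:AC} into a contraction-plus-error recursion for $\E\twonormsq{\theta_t-\thestar}$. Since \Cref{lm:lm6} and the choice of $B$ give $\twonorm{\thestara}\le B$, nonexpansiveness of $\Pi_{2,B}$ yields $\twonormsq{\theta\tit-\thestara}\le\twonormsq{\theta_t+\alpha_t\delta_t(\theta_t)z_t-\thestara}$. Inserting $\pm\thestar$ and expanding the square produces the six terms listed in Step~3 of the NAC sketch: the seed $\twonormsq{\theta_t-\thestar}$, the drift $\twonormsq{\thestar-\thestara}$, the noise $\alpha_t^2\twonormsq{\delta_t z_t}$, and the cross terms $2\alpha_t\vbrac{\theta_t-\thestar,\delta_t z_t}$, $2\alpha_t\vbrac{\thestar-\thestara,\delta_t z_t}$ and $2\vbrac{\theta_t-\thestar,\thestar-\thestara}$. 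Taking expectations, the descent comes from \Cref{lm:consterm}, which bounds $\E\vbrac{\theta_t-\thestar,\delta_t z_t}$ by $-\tfrac12\bar\lambda_{\min}\E\twonormsq{\theta_t-\thestar}+\tfrac{(k+1)^2C_\phi^2}{2\bar\lambda_{\min}}\E[(J(\omega_t)-\eta_t)^2]+G^\delta_t$; multiplying by $2\alpha_t$ gives the $-\bar\lambda_{\min}\alpha_t$ contraction, the $\tfrac{(k+1)^2C_\phi^2}{\bar\lambda_{\min}}\alpha_t\E[(\eta_t-J(\omega_t))^2]$ term, and the $2\alpha_t G^\delta_t$ contribution to $G^\theta_t$. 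I would keep only half of the negative term, $-\tfrac12\bar\lambda_{\min}\alpha_t$, reserving the other half to absorb the $O(\beta_t)$ perturbations of $\E\twonormsq{\theta_t-\thestar}$ produced below.

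Next I would dispose of the ``cheap'' terms. The TD error and trace are deterministically bounded, $\lnorm{\delta_t(\theta_t)}\le U_\delta=R_{\max}+2C_\phi B$ and $\twonorm{z_t}\le(k+1)C_\phi$, so $\alpha_t^2\twonormsq{\delta_t z_t}\le(k+1)^2C_\phi^2U_\delta^2\alpha_t^2$. For the drift, the AC actor step gives $\twonorm{\omega\tit-\omega_t}\le C_\phi^2B\beta_t$; combining the triangle inequality through $\thebar$, the Lipschitz bound $\twonorm{\bar\theta^*_\omega-\bar\theta^*_{\omega'}}\le C_\Theta\twonorm{\omega-\omega'}$ of \Cref{lm:acnacinnersmooth}, and \Cref{prop:bound} ($\twonorm{\theta^*_\omega-\bar\theta^*_\omega}\le C_{\text{gap}}m\rho^k/\lambda_{\min}$) yields $\twonorm{\thestar-\thestara}\le C_\Theta C_\phi^2B\beta_t+2C_{\text{gap}}m\rho^k/\lambda_{\min}$. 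Squaring this gives the $2C_\phi^4B^2C_\Theta^2\beta_t^2$ and $8(C_{\text{gap}}m\rho^k/\lambda_{\min})^2$ pieces of $G^\theta_t$, and Cauchy--Schwarz on $\lnorm{2\alpha_t\vbrac{\thestar-\thestara,\delta_t z_t}}\le 2(k+1)C_\phi U_\delta\alpha_t\twonorm{\thestar-\thestara}$ (legitimate since both factors are bounded a.s., so the correlation between $\thestara$ and $\delta_t z_t$ is harmless) produces the $2(k+1)C_\phi^3BU_\delta C_\Theta\alpha_t\beta_t$ and $4(k+1)C_\phi U_\delta C_{\text{gap}}m\rho^k\alpha_t/\lambda_{\min}$ pieces.

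The crux is the remaining cross term $2\vbrac{\theta_t-\thestar,\thestar-\thestara}$, which must be converted into the $\beta_t\E\twonormsq{\gradj}$ contribution --- this is where ``tracking error controlled by the gradient norm'' enters. I would again split $\thestar-\thestara=(\thestar-\thebar)+(\thebar-\thebara)+(\thebara-\thestara)$; the two outer $O(m\rho^k)$ pieces go to Young's inequality against $\twonorm{\theta_t-\thestar}$, giving the $(2/\beta_t)(C_{\text{gap}}m\rho^k/\lambda_{\min})^2$ term. For the middle piece I would exploit the identity $\bar\theta^*_\omega=M_\omega^{-1}\nabla J(\omega)$ with $M_\omega=\E_{D_{\pi_\omega}}[\phi_\omega(s,a)\phi_\omega^\top(s,a)]$ (immediate from the stationarity condition of \Cref{eq:thetabar} and \Cref{eq:eq5}): writing $\thebar-\thebara=(M_{\omega_t}^{-1}-M_{\omega\tit}^{-1})\nabla J(\omega_t)+M_{\omega\tit}^{-1}(\nabla J(\omega_t)-\nabla J(\omega\tit))$, bounding $\twonorm{M_{\omega_t}^{-1}-M_{\omega\tit}^{-1}}\le\frac{2C_\phi L_\phi+C_\phi^2L_\pi}{\lambda_{\min}^2}\twonorm{\omega\tit-\omega_t}$ (from Lipschitz continuity of $\phi_\omega$ and of $D_{\pi_\omega}$, via \Cref{asm:smooth} and \Cref{Prop:lip}) and $\twonorm{M_{\omega\tit}^{-1}(\nabla J(\omega_t)-\nabla J(\omega\tit))}\le\frac{L_J}{\lambda_{\min}}\twonorm{\omega\tit-\omega_t}$ by $L_J$-smoothness. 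Using $\twonorm{\omega\tit-\omega_t}\le C_\phi^2B\beta_t$, keeping $\twonorm{\gradj}$ in the first factor, and controlling the actor increment $\phi_t^\top(s_t,a_t)\theta_t\phi_t(s_t,a_t)$ in norm (deterministically by $C_\phi^2B$, or more finely by $C_\phi^2(\twonorm{\theta_t-\thebar}+\lambda_{\min}^{-1}\twonorm{\gradj})$ when a gradient-dependent bound is wanted) in the second, AM--GM turns $2\vbrac{\theta_t-\thestar,\thebar-\thebara}$ into (i) a $\beta_t\E\twonormsq{\gradj}$ term with coefficient $\frac{L_J\lambda_{\min}+BC_\phi^3(C_\phi L_\pi+2L_\phi)}{\lambda_{\min}^2}$, (ii) a $\beta_t\E\twonormsq{\theta_t-\thestar}$ term absorbed by \Cref{cons:barlambdamin}, and (iii) lower-order $m\rho^k\beta_t$, $\beta_t^2$ and $\beta_t\sum_j\beta_j$ residuals (the sums coming from the $\twonorm{\omega_t-\omega_j}$ estimates of \Cref{lm:tv}, which also feed $G^\delta_t$). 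Collecting the half-descent $(1-\tfrac12\bar\lambda_{\min}\alpha_t)$, the $\beta_t\E\twonormsq{\gradj}$ term, the $\alpha_t\E[(\eta_t-J(\omega_t))^2]$ term, $2\alpha_t G^\delta_t$, and all residuals into $G^\theta_t$ --- checking that the total $\beta_t$-coefficient on $\E\twonormsq{\theta_t-\thestar}$ equals the right-hand side of \Cref{cons:barlambdamin} so that $1-\bar\lambda_{\min}\alpha_t+(\text{that})\beta_t\le 1-\tfrac12\bar\lambda_{\min}\alpha_t$ --- yields \Cref{eq:thetaitera}.

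The main obstacle is this middle step: converting the drift $\thestar-\thestara$ into a clean $\beta_t\twonormsq{\gradj}$ term. One must pass from the single-sample, time-varying-feature actor increment through the nonlinear, policy-dependent map $\omega\mapsto\bar\theta^*_\omega=M_\omega^{-1}\nabla J(\omega)$, forcing the use of Lipschitz continuity of $M_\omega^{-1}$ (hence of both $\phi_\omega$ and the stationary distribution) and of $\nabla J$, and then keep the constant bookkeeping precise enough that the residual coefficient on $\E\twonormsq{\theta_t-\thestar}$ is exactly what \Cref{cons:barlambdamin} is engineered to absorb. The remaining terms of $G^\theta_t$ --- the $\alpha_t\sum_j\alpha_j$, $\beta_t\sum_j\beta_j$ and $(m\rho^k)^2/\beta_t$ contributions in particular --- follow by substituting the already-established bounds on $G^\delta_t$ (\Cref{lm:consterm}) and on $\twonorm{\omega_t-\omega_j}$, and are routine if tedious.
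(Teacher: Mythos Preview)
Your plan matches the paper's proof in all the structural pieces: the projection step, the six-term expansion, the use of \Cref{lm:consterm} for the contraction, the Cauchy--Schwarz bound on $2\alpha_t\vbrac{\delta_t z_t,\thestar-\thestara}$, and the splitting $\thebar-\thebara=(F_t^{-1}-F_{t+1}^{-1})\gradj+F_{t+1}^{-1}(\nabla J(\omega_t)-\nabla J(\omega_{t+1}))$ for Term~III.

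The one substantive difference is in how you treat the second half of that split. You propose to bound $\twonorm{\nabla J(\omega_t)-\nabla J(\omega_{t+1})}\le L_J\twonorm{\omega_{t+1}-\omega_t}$ and then control $\twonorm{\omega_{t+1}-\omega_t}=\beta_t\twonorm{\phi_t^\top(s_t,a_t)\theta_t\phi_t(s_t,a_t)}$ pointwise by $C_\phi^2\twonorm{\theta_t}\le C_\phi^2(\twonorm{\theta_t-\thestar}+\twonorm{\thestar-\thebar}+\lambda_{\min}^{-1}\twonorm{\gradj})$. This is valid and yields a recursion of the right shape. The paper instead keeps the inner product: it invokes the Mean Value Theorem to write $\nabla J(\omega_t)-\nabla J(\omega_{t+1})=\nabla^2 J(\hat\omega_t)(\omega_t-\omega_{t+1})$, substitutes the actor update $\omega_{t+1}-\omega_t=\beta_t\phi_t^\top(s_t,a_t)\theta_t\phi_t(s_t,a_t)$, and decomposes this stochastic vector as $\E_{D_t}[\phi_t^\top\thebar\phi_t]+\E_{D_t}[\phi_t^\top(\thestar-\thebar)\phi_t]+\E_{D_t}[\phi_t^\top(\theta_t-\thestar)\phi_t]+\big(\phi_t^\top(s_t,a_t)\theta_t\phi_t(s_t,a_t)-\E_{D_t}[\,\cdot\,]\big)$. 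The first piece is exactly $\gradj$, which gives the slightly sharper $\tfrac{L_J}{2\lambda_{\min}}$ (rather than your $\tfrac{L_JC_\phi^2}{2\lambda_{\min}^2}$) in front of $\E\twonormsq{\gradj}$. The last piece --- the Markovian noise --- is handled by conditioning back $k$ steps, which forces Lipschitz control of $(F^{-1}\nabla^2 J)^\top(\theta-\theta^*)$ in $\omega$; this is precisely where the Hessian-Lipschitz constant $L_\Theta$ of \Cref{lm:acinnersmooth} enters and where the $\beta_t\sum_{j=t-k}^{t-1}\alpha_j$, the $L_\Theta\,\beta_t\sum_j\beta_j$, and the $\beta_t\sum_j\sum_i\beta_i$ terms in the stated $G^\theta_t$ come from. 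Your norm-based argument bypasses this step entirely, so it cannot produce those particular residuals (and, incidentally, would not need part~5 of Assumption~\ref{asm:smooth}).

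In short: your approach would prove \emph{a} lemma of the same form with a somewhat simpler $G^\theta_t$ and slightly different leading constants; to recover the lemma exactly as stated you need the paper's Hessian-based decomposition of the $F_{t+1}^{-1}(\nabla J(\omega_t)-\nabla J(\omega_{t+1}))$ piece together with the $k$-step conditioning argument. Your parenthetical that the $\beta_t\sum_j\beta_j$ residuals ``come from the $\twonorm{\omega_t-\omega_j}$ estimates of \Cref{lm:tv}'' is off for your own route --- in your version they do not appear in Term~III at all; in the paper's version they arise from that conditioning step, not merely through $G^\delta_t$.
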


\begin{proof}
From \Cref{alg:AC}, it holds that
\eqenv{
\twonormsq{\theta\tit-\thestara}&=
\twonormsq{\Pi_B\brac{\theta_t+\alpha_t \delta_t z_t}-\thestara}
\\&\myineq{\leq}{a}\twonormsq{ \theta_t+\alpha_t \delta_t z_t-\thestara}
\\ & = \twonormsq{\theta_t+\alpha_t \delta_t z_t-\thestar+\thestar-\thestara}
\\ & = \twonormsq{\theta_t-\thestar}+ \alpha_t^2 \twonormsq{\delta_t z_t}+\twonormsq{ \thestar-\thestara}
+ 2 \alpha_t\vbrac{\theta_t-\thestar, \delta_t z_t}
\\&\quad + 2\alpha_t \vbrac{\delta_t z_t, \thestar-\thestara }
+ 2 \vbrac{\theta_t-\thestar, \thestar-\thestara},
}
where $(a)$ follows from the fact $\twonorm{\Pi_B(x)-y}\leq \twonorm{x-y} $ when $\twonorm{y}\leq B $ and $\twonorm{\theta^*_{t+1}}\leq B$.

Taking expectations on both sides further implies that
\eqenv{\label{eq:72}
\E&\ftwonormsq{\theta\tit-\thestara}  \leq \E\ftwonormsq{\theta_t-\thestar}+ \alpha_t^2 \E\ftwonormsq{\delta_t z_t}+\E\ftwonormsq{ \thestar-\thestara}
\\&\quad+ 2 \alpha_t\underbrace{\E\fvbrac{\theta_t-\thestar, \delta_t z_t}}_{\text{I}}
 + 2\alpha_t \underbrace{\E \fvbrac{\delta_t z_t, \thestar-\thestara }}_{\text{II}}
+ 2 \underbrace{\E\fvbrac{\theta_t-\thestar, \thestar-\thestara}}_{\text{III}}.
}

Firstly, we can get that
\eqenv{
\alpha_t^2 \E\ftwonormsq{\delta_t z_t}\leq \alpha_t^2 (k+1)^2 C_\phi^2 U^2_\delta.
}
The term $\twonorm{\thestar-\thestara}$ can be bounded as follows: 
\eqenv{
\twonorm{\thestar-\thestara}
& =\twonorm{\thebar-\thebara+ \thestar-\thebar-\thestara+\thebara}
\\& \leq \twonorm{\thebar-\thebara}+\twonorm{ \thestar-\thebar}+ \twonorm{\thestara-\thebara}
\\& \myineq{\leq}{a} \twonorm{\thebar-\thebara}+\frac{C_{\text{gap}}m\rho^k}{\lambda_{\min}}
+\frac{C_{\text{gap}} m\rho^k}{\lambda_{\min}}
\\& \myineq{\leq}{b} C_\Theta\twonorm{\omega_t-\omega\tit}+\frac{2C_{\text{gap}}m\rho^k}{\lambda_{\min}}
\\ & = \beta_t C_\Theta \twonorm{\phi_t^\top(s_t,a_t)\theta_t \phi_t(s_t,a_t)  }+\frac{2C_{\text{gap}}m\rho^k}{\lambda_{\min}}
\\ & \leq C_\Theta C_\phi^2 B \beta_t+\frac{2C_{\text{gap}}m\rho^k}{\lambda_{\min}},
\label{eq:ac:te1}
}
where $(a)$ follows from \Cref{prop:bound}; and $(b)$ follows from Lemma \ref{lm:acnacinnersmooth}.
\Cref{eq:ac:te1} further implies that
\eqenv{
\E\ftwonormsq{\thestar-\thestara }
\leq  2C^2_\Theta C_\phi^4 B^2 \beta^2_t+\frac{8 C^2_{\text{gap}}m^2\rho^{2k}}{\lambda^2_{\min}}.
}

By Lemma \ref{lm:consterm}, we can bound  term I in \Cref{eq:72} as follows: 
\eqenv{
&\E\fvbrac{\theta_t-\thestar, \delta_t z_t}
\leq - \frac{\bar \lambda_{\min}}{2} \E \ftwonormsq{\theta_t-\thestar}
+\frac{(k+1)^2 C^2_\phi}{2\bar \lambda_{\min}}\E\flnormsq{J(\omega_t)-\eta_t}+G^\delta_t.\label{eq:ac:teI}
}

For term II in \Cref{eq:72}, we have that
\eqenv{
\E \fvbrac{ \delta_t z_t, \thestar-\thestara}
& \leq \E\Fbrac{\twonorm{\delta_t z_t} \twonorm{ \thestar-\thestara}}
\\& \leq \E\Fbrac{\twonorm{\delta_t z_t} \brac{ \twonorm{ \thebar-\thebara}+\twonorm{ \thestar-\thebar}+\twonorm{ \thestara-\thebara}}}
\\& \leq (k+1) C_\phi U_\delta\brac{C_\Theta C_\phi^2 B \beta_t+\frac{2C_{\text{gap}}m\rho^k}{\lambda_{\min}}  }
\\ &= (k+1) C^3_\phi B U_\delta C_\Theta \beta_t
+ \frac{2(k+1) C_\phi U_\delta C_{\text{gap}}m\rho^k}{\lambda_{\min}},\label{eq:ac:teII}
}
where the last inequality is from \Cref{lm:acnacinnersmooth} and \Cref{prop:bound}.

To convenience, we rewrite $F_t=F_{\omega_t}$. To bound term III in \Cref{eq:72}, note that
\eqenv{
&\E\fvbrac{\theta_t-\thestar, \thestar-\thestara }
 \\& = \E\fvbrac{\theta_t-\thestar, \thebar-\thebara } +\E\fvbrac{\theta_t-\thestar, \thestar-\thebar }
 +\E\fvbrac{\theta_t-\thestar, \thestara-\thebara }
 \\&\leq \E\fvbrac{\theta_t-\thestar, F_t^{-1}\gradj-F_{t+1}^{-1}\nabla J(\omega_{t+1}) }
 +\frac{\beta_t}{2}\E\Fbrac{ \twonormsq{ \theta_t-\thestar}}
 +\frac{1}{2\beta_t}\E\Fbrac{ \twonormsq{ \thestar-\thebar}}
 \\& \qquad +\frac{\beta_t}{2}\E\Fbrac{ \twonormsq{ \theta_t-\thestar}}
 +\frac{1}{2\beta_t}\E\Fbrac{ \twonormsq{ \thestara-\thebara}}
 \\& \myineq{\leq}{a} \E\fvbrac{\theta_t-\thestar,F_{t+1}^{-1}\brac{\nabla J(\omega_t)-\nabla J(\omega_{t+1})} }+\E\fvbrac{\theta_t-\thestar, \brac{F_t^{-1}-F^{-1}_{t+1}}\gradj}
 \\& \qquad +\beta_t \E\ftwonormsq{\theta_t-\thestar} +\frac{1}{\beta_t}\brac{\frac{C_{\text{gap}}m\rho^k}{\lambda_{\min}} }^2
 \\&\myineq{=}{b} \E\fvbrac{\theta_t-\thestar,F_{t+1}^{-1}\nabla^2 J(\hat\omega_t)\brac{\omega_t-\omega_{t+1}} }+\E\fvbrac{\theta_t-\thestar, \brac{F_t^{-1}-F^{-1}_{t+1}}\gradj}
 \\& \qquad +\beta_t \E\ftwonormsq{\theta_t-\thestar} +\frac{1}{\beta_t}\brac{\frac{C_{\text{gap}}m\rho^k}{\lambda_{\min}} }^2
 \\&\myineq{\leq}{c} \E\fvbrac{\theta_t-\thestar,F_{t+1}^{-1}\nabla^2 J(\hat\omega_t)\brac{\omega_t-\omega_{t+1}} }+\frac{1}{\lambda_{\min}^2}\E\Fbrac{\twonorm{\theta_t-\thestar}\twonorm{F_t-F_{t+1}}\twonorm{\gradj}}
 \\& \qquad +\beta_t \E\ftwonormsq{\theta_t-\thestar} +\frac{1}{\beta_t}\brac{\frac{C_{\text{gap}}m\rho^k}{\lambda_{\min}} }^2
 \\& \myineq{\leq}{d}-\beta_t\E\fvbrac{\theta_t-\thestar,F_{t+1}^{-1}\nabla^2 J(\hat\omega_t)\brac{\phi_t^\top(s_t,a_t)\theta_t\phi_t(s_t,a_t)} }
 +\frac{B C^3_\phi\brac{C_\phi L_\pi+2L_\phi }}{2\lambda^2_{\min}}\beta_t\E\ftwonormsq{\theta_t-\thestar } 
 \\&\qquad+ \frac{B C^3_\phi\brac{C_\phi L_\pi+2L_\phi }}{2\lambda^2_{\min}}\beta_t\E\ftwonormsq{\gradj }+\beta_t \E\ftwonormsq{\theta_t-\thestar} +\frac{1}{\beta_t}\brac{\frac{C_{\text{gap}}m\rho^k}{\lambda_{\min}} }^2
 \\& =\underbrace{-\beta_t\E\fvbrac{\theta_t-\thestar,F_{t+1}^{-1}\nabla^2 J(\hat\omega_t)\E_{D_t}\Fbrac{\phi_t^\top(s,a)\thebar\phi_t(s,a)} } }_{(i)}
 \\& \qquad -\underbrace{\beta_t\E\fvbrac{\theta_t-\thestar,F_{t+1}^{-1}\nabla^2 J(\hat\omega_t)\E_{D_t}\Fbrac{\phi_t^\top(s,a)(\thestar-\thebar)\phi_t(s,a)} } }_{(ii)}
 \\&\qquad -\underbrace{\beta_t\E\fvbrac{\theta_t-\thestar,F_{t+1}^{-1}\nabla^2 J(\hat\omega_t)\E_{D_t}\Fbrac{\phi_t^\top(s,a)(\theta_t-\thestar)\phi_t(s,a)} } }_{(iii)}
 \\& \qquad-\underbrace{\beta_t\E\fvbrac{\theta_t-\thestar,F_{t+1}^{-1}\nabla^2 J(\hat\omega_t)\brac{\phi_t^\top(s_t,a_t)\theta_t\phi_t(s_t,a_t)-\E_{D_t}\Fbrac{\phi_t^\top(s,a)\theta_t\phi_t(s,a)} } }}_{(iv)}
 \\& \qquad +\frac{B C^3_\phi\brac{C_\phi L_\pi+2L_\phi }}{2\lambda^2_{\min}}\beta_t\E\ftwonormsq{\gradj }+\frac{B C^3_\phi\brac{C_\phi L_\pi+2L_\phi }+2\lambda^2_{\min}}{2\lambda^2_{\min}}\beta_t \E\ftwonormsq{\theta_t-\thestar} 
 \\& \qquad+\frac{1}{\beta_t}\brac{\frac{C_{\text{gap}}m\rho^k}{\lambda_{\min}} }^2.
\label{eq:ac:teIII}
}
where $(a)$ follows from \Cref{prop:bound}, $(b)$ follows from Lagrange's Mean Value for some $\lambda\in[0,1]$, such that $\hat\omega_t =\lambda \omega_t+(1-\lambda)\omega_{t+1}$, $(c)$ 
follows from the facts that for positive definite matrices $X$ and $Y$, 
    \eqenv{\label{eq:invinequ}
    \twonorm{X^{-1}-Y^{-1} }&\leq 
    \twonorm{X^{-1}\brac{X-Y }Y^{-1} }
    \\& \leq \twonorm{X^{-1}}\twonorm{X-Y }\twonorm{Y^{-1} },
    }
and $(d)$ is from that 
\eqenv{\label{eq:60}
   \twonorm{F_{t+1}-F_t}&=\twonorm{ \E_{D_{t+1}}\Fbrac{\phi_{t+1}(s,a)\phi^\top_{t+1}(s,a)}-\E_{D_{t}}\Fbrac{\phi_{t}(s,a)\phi^\top_{t}(s,a)}}
   \\\leq &\twonorm{ \E_{D_{t+1}}\Fbrac{\phi_{t+1}(s,a)\phi^\top_{t+1}(s,a)}-\E_{D_{t}}\Fbrac{\phi_{t+1}(s,a)\phi^\top_{t+1}(s,a)}}
   \\& \qquad+\twonorm{ \E_{D_{t}}\Fbrac{\phi_{t+1}(s,a)\phi^\top_{t+1}(s,a)}-\E_{D_{t}}\Fbrac{\phi_{t}(s,a)\phi^\top_{t}(s,a)}}
   \\\leq &C^2_\phi\tvnorm{D_{t+1}-D_t }+ \E\Fbrac{\brac{\twonorm{ \phi_t(s,a)}+\twonorm{ \phi_{t+1}(s,a)}}\twonorm{\phi_t(s,a)-\phi_{t+1}(s,a)} }
   \\ \myineq{\leq}{a} &C^2_\phi L_\pi \twonorm{\omega_{t+1}-\omega_t}+ 2C_\phi L_\phi\twonorm{\omega_{t+1}-\omega_t}
   \\ =& \brac{C^2_\phi L_\pi+ 2C_\phi L_\phi } \beta_t\twonorm{\phi^\top_t(s_t,a_t)\theta_t\phi_t(s_t,a_t)}
   \\ \leq & B C^3_\phi\brac{C_\phi L_\pi+2L_\phi }\beta_t
,}where $(a)$ follows from \citep{zou2019finite} and Theorem 1 in \citep{li2021faster}, where $ L_\pi=\frac{1}{2}C_\pi 
\brac{1+\myceil{\log m^{-1}}+\frac{1}{1-\rho}} $, and
    \eqenv{
        \tvnorm{D_{t+1}-D_t }\leq
        L_\pi \twonorm{\omega_{t+1}-\omega_t }.
    }

We then consider the term $(i)$,
\eqenv{
(i)&\leq \beta_t\E \Fbrac{\twonorm{\theta_t-\thestar }\twonorm{F_{t+1}^{-1} }\twonorm{\nabla^2 J(\hat\omega_t) }\twonorm{ \gradj} }
\\& \leq \frac{L_J\beta_t}{\lambda_{\min}} \brac{\frac{1}{2}\E\ftwonormsq{\theta_t-\thestar} +\frac{1}{2}\E \ftwonormsq{\gradj }},
} where the last inequality follows from \Cref{Prop:lip}.

Next, we consider the term $(ii)$, 
\eqenv{
(ii)&\leq \beta_tC^2_\phi \E \Fbrac{\twonorm{\theta_t-\thestar }\twonorm{F_{t+1}^{-1} }\twonorm{\nabla^2 J(\hat\omega_t) }\twonorm{ \thestar-\thebar}  }\\& \leq \frac{2 B C^2_\phi C_{\text{gap}}L_J m\rho^k\beta_t}{\lambda^2_{\min}},
}where the last inequality follows from \Cref{Prop:lip} and \Cref{prop:bound}.

Then, consider the term $(iii)$, 
\eqenv{
    (iii)&\leq \beta_t C^2_\phi\E \Fbrac{\twonorm{\theta_t-\thestar }\twonorm{F_{t+1}^{-1} }\twonorm{\nabla^2 J(\hat\omega_t) }\twonorm{ \theta_t-\thestar}  }
    \\& \leq \frac{C^2_\phi L_J\beta_t}{\lambda_{\min}}\E\ftwonormsq{\theta_t-\thestar}.
}

Consider the term $(iv)$,
\eqenv{\label{eq:eq64}
    &(iv)= -\beta_t\E\fvbrac{(F_{t+1}^{-1}\nabla^2 J(\hat\omega_t) )^\top(\theta_t-\thestar), {\phi_t^\top(s_t,a_t)\theta_t\phi_t(s_t,a_t)}-\E_{D_t}\Fbrac{\phi_t^\top(s,a)\theta_t\phi_t(s,a)} }
    \\& = -\beta_t\E[(F_{t+1}^{-1}\nabla^2 J(\hat\omega_t) )^\top(\theta_t-\thestar)-(F_{t-k}^{-1}\nabla^2 J(\hat\omega_{t-k-1}) )^\top(\theta_{t-k}-\theta^*_{t-k}),
    \\&\qquad\quad {\phi_t^\top(s_t,a_t)\theta_t\phi_t(s_t,a_t)}-\E_{D_t}\Fbrac{\phi_t^\top(s,a)\theta_t\phi_t(s,a)} \rangle]
    \\ & \quad- \beta_t\E\fvbrac{(F_{t-k}^{-1}\nabla^2 J(\hat\omega_{t-k-1}) )^\top(\theta_{t-k}-\theta^*_{t-k}), {\phi_t^\top(s_t,a_t)\theta_t\phi_t(s_t,a_t)}-\E_{D_t}\Fbrac{\phi_t^\top(s,a)\theta_t\phi_t(s,a)} }
    \\ & \leq 2C^2_\phi B\beta_t \E\Fbrac{\twonorm{ (F_{t+1}^{-1}\nabla^2 J(\hat\omega_t) )^\top(\theta_t-\thestar)-(F_{t-k}^{-1}\nabla^2 J(\hat\omega_{t-k-1}) )^\top(\theta_{t-k}-\theta^*_{t-k})} }
    \\&\quad + \beta_t \E[\langle(F_{t-k}^{-1}\nabla^2 J(\hat\omega_{t-k-1}) )^\top(\theta_{t-k}-\theta^*_{t-k}), {\phi_t^\top(s_t,a_t)\theta_t\phi_t(s_t,a_t)}-\E_{D_t}\Fbrac{\phi_t^\top(s,a)\theta_t\phi_t(s,a)}\rangle]
     \\&\myineq{\leq}{a} 2C^2_\phi B \beta_t \E\Fbrac{\twonorm{ (F_{t+1}^{-1}\nabla^2 J(\hat\omega_t) )^\top(\theta_t-\thestar)-(F_{t-k}^{-1}\nabla^2 J(\hat\omega_{t-k-1}) )^\top(\theta_{t-k}-\theta^*_{t-k})}}
    \\& \quad+ \beta_t \E\Fbrac{\frac{2B^2 C^2_\phi L_J}{\lambda_{\min}}\tvnorm{\Prob\brac{s_t,a_t|\mathcal{F}_{t-k} }-D_t} }
    \\& \myineq{\leq}{b}2C^2_\phi B \beta_t \E\Fbrac{\twonorm{ (F_{t+1}^{-1}\nabla^2 J(\hat\omega_t) )^\top(\theta_t-\thestar)-(F_{t-k}^{-1}\nabla^2 J(\hat\omega_{t-k-1}) )^\top(\theta_{t-k}-\theta^*_{t-k})} }
    \\& \quad+  \frac{2 B^2 C^2_\phi L_J\beta_t}{\lambda_{\min}}\brac{ C_\pi \sum_{j=t-k}^{t-1} \E \ftwonorm{\omega_t-\omega_j}+ m\rho^k}
     \\& \leq 2C^2_\phi B \beta_t \E\Fbrac{\twonorm{ (F_{t+1}^{-1}\nabla^2 J(\hat\omega_t) )^\top(\theta_t-\thestar)-(F_{t-k}^{-1}\nabla^2 J(\hat\omega_{t-k-1}) )^\top(\theta_{t-k}-\theta^*_{t-k})}}
    \\& \quad+  \frac{2 B^2 C^2_\phi L_J\beta_t}{\lambda_{\min}}\brac{ C_\pi C^2_\phi B \sum_{j=t-k}^{t-1} \sum_{i=j}^{t-1} \beta_i+ m\rho^k},
}where $(a)$ follows from \Cref{Prop:lip} and $(b)$ follows from \Cref{lm:tv}.

Consider the first term in \Cref{eq:eq64} and  we have that
\eqenv{
    &\twonorm{ (F_{t+1}^{-1}\nabla^2 J(\hat\omega_t) )^\top(\theta_t-\thestar)-(F_{t-k}^{-1}\nabla^2 J(\hat\omega_{t-k-1}) )^\top(\theta_{t-k}-\theta^*_{t-k})}
    \\& \leq \twonorm{\brac{F_{t+1}^{-1}-F_{t-k}^{-1} } \nabla^2 J(\hat\omega_t) )^\top(\theta_t-\thestar)}+\twonorm{F_{t-k}^{-1} \brac{ \nabla^2 J(\hat \omega_t)-\nabla^2 J(\hat \omega_{t-k-1})}(\theta_t-\thestar)}
    \\&\qquad +\twonorm{ (F_{t-k}^{-1}\nabla^2 J(\hat\omega_{t-k-1}) )^\top\brac{\theta_t-\theta_{t-k} - \theta^*_{t}+\theta^*_{t-k}}}
    \\& \leq 2B L_J \twonorm{F_{t+1}^{-1}-F_{t-k}^{-1} }+ \frac{2B}{\lambda_{\min}}\twonorm{\nabla^2 J(\hat \omega_t)-\nabla^2 J(\hat \omega_{t-k-1}) }
    \\& \qquad+\frac{L_J}{\lambda_{\min}}\brac{\twonorm{\theta_t-\theta_{t-k} }+\twonorm{\theta^*_{t}-\theta^*_{t-k} }} 
    \\& \myineq{\leq}{a} \frac{2 B^2 L_J C^3_\phi\brac{C_\phi L_\pi+2L_\phi }}{\lambda^2_{\min}}\sum_{j=t-k}^t\beta_j+ \frac{2B}{\lambda_{\min}}\twonorm{\nabla^2 J(\hat \omega_t)-\nabla^2 J(\hat \omega_{t-k-1}) }
    \\& \qquad+\frac{L_J}{\lambda_{\min}}\brac{\twonorm{\theta_t-\theta_{t-k} }+\twonorm{\bar\theta^*_{t}-\bar\theta^*_{t-k} }+\twonorm{\bar\theta^*_{t}-\theta^*_{t} }+\twonorm{\theta^*_{t-k}-\bar\theta^*_{t-k} } }
    \\& \myineq{\leq}{b} \frac{2 B^2 L_J C^3_\phi\brac{C_\phi L_\pi+2L_\phi }}{\lambda^2_{\min}}\sum_{j=t-k}^t\beta_j+ \frac{2B L_\Theta}{\lambda_{\min}}\twonorm{ \hat\omega_{t}- \hat \omega_{t-k-1} }
    \\& \qquad+\frac{L_J}{\lambda_{\min}}\brac{\twonorm{\theta_t-\theta_{t-k} }+C_\Theta \twonorm{\omega_{t}-\omega_{t-k} }+\frac{2C_{\text{gap}}m\rho^k}{\lambda_{\min}}}
    \\& \leq \frac{2 B^2 L_J C^3_\phi\brac{C_\phi L_\pi+2L_\phi }}{\lambda^2_{\min}}\sum_{j=t-k}^t\beta_j+ \frac{2B^2 C^2_\phi L_\Theta}{\lambda_{\min}} \sum_{j=t-k}^t\beta_t 
    \\& \qquad+\frac{L_J}{\lambda_{\min}}\brac{(k+1) C_\phi U_\delta \sum_{j=t-k}^{t-1}\alpha_j + B C^2_\phi C_\Theta \sum_{j=t-k}^{t-1}\beta_j+\frac{2C_{\text{gap}}m\rho^k}{\lambda_{\min}}}
    \\& = \frac{(k+1) C_\phi U_\delta L_J}{\lambda_{\min}}\sum_{j=t-k}^{t-1}\alpha_j + \frac{2C_{\text{gap}}L_J m\rho^k}{\lambda^2_{\min}}
    \\&\qquad + \frac{B C^2_\phi}{\lambda^2_{\min}}\brac{2B C_\phi L_J(C_\phi L_\pi+2L_\phi)+ 2 B L_\Theta \lambda_{\min}+C_\Theta \lambda_{\min}  }\sum_{j=t-k}^t \beta_j, 
}where $(a)$ follows from \Cref{eq:invinequ} and \Cref{eq:60} and $(b)$ follows from \Cref{lm:acinnersmooth}.

Therefore, the term $(iv)$ can be bounded as:
\eqenv{
 (iv)& \leq  \frac{2(k+1) B C^3_\phi  U_\delta L_J}{\lambda_{\min}}\beta_t\sum_{j=t-k}^{t-1}\alpha_j
 \\& + \frac{2B C^2_\phi L_J\brac{2C_{\text{gap}}+B\lambda_{\min} }}{\lambda^2_{\min}}m\rho^k\beta_t
 +\frac{2B^3 C_\pi C^4_\phi L_J}{\lambda_{\min}}\beta_t \sum_{j=t-k}^{t-1} \sum_{i=j}^{t-1} \beta_i
   \\&  + \frac{2 B^2 C^4_\phi}{\lambda^2_{\min}}\brac{2B C_\phi L_J(C_\phi L_\pi+2L_\phi)+ 2 B L_\Theta \lambda_{\min}+C_\Theta \lambda_{\min}  }\beta_t\sum_{j=t-k}^t \beta_j. 
}

Combining the above bounds on terms $(i),(ii),(iii)$ and $(iv)$, 
we have that
\eqenv{
\E&\fvbrac{\theta_t-\thestar, \thestar-\thestara} 
\leq \frac{B C^3_\phi\brac{C_\phi L_\pi+2L_\phi }+\lambda_{\min}(L_J+ 2 C_\phi)+2\lambda^2_{\min}}{2\lambda^2_{\min}}\beta_t \E \ftwonormsq{\theta_t-\thestar}
\\&+\frac{L_J\lambda_{\min}+B C^3_\phi\brac{C_\phi L_\pi+2L_\phi }}{2\lambda^2_{\min}}\beta_t \E\ftwonormsq{\gradj}
 + \frac{2B C^2_\phi L_J \brac{ 3 C_{\text{gap}}+  \lambda_{\min} }}{\lambda^2_{\min}} m \rho^k\beta_t 
 \\&+\frac{2(k+1) B C^3_\phi U_\delta L_J}{\lambda_{\min}}\beta_t\sum_{j=t-k}^{t-1}\alpha_j  +\frac{2B^3 C_\pi C^4_\phi L_J}{\lambda_{\min}}\beta_t \sum_{j=t-k}^{t-1} \sum_{i=j}^{t-1} \beta_i+\frac{1}{\beta_t}\brac{\frac{C_{\text{gap}}m\rho^k}{\lambda_{\min}} }^2
  \\& + \frac{2 B^2 C^4_\phi}{\lambda^2_{\min}}\brac{2B C_\phi L_J(C_\phi L_\pi+2L_\phi)+ 2 B L_\Theta \lambda_{\min}+C_\Theta \lambda_{\min}  }\beta_t\sum_{j=t-k}^t \beta_j.
}
This bounds term III in \Cref{eq:72}.

Plugging bounds on terms I, II, III in \Cref{eq:72} further implies that 

\eqenv{ 
    E&\ftwonormsq{\theta\tit-\thestara} \leq  \brac{ 1-\bar\lambda_{\min}\alpha_t+ \frac{B C^3_\phi\brac{C_\phi L_\pi+2L_\phi }+\lambda_{\min}(L_J+ 2 C_\phi)+2\lambda^2_{\min}}{\lambda^2_{\min}}\beta_t}\E\ftwonormsq{\theta_t-\thestar}
    \\&+\frac{L_J\lambda_{\min}+B C^3_\phi\brac{C_\phi L_\pi+2L_\phi } }{\lambda^2_{\min}}\beta_t \E\ftwonormsq{\gradj}
     + \frac{(k+1)^2C^2_\phi }{\bar\lambda_{\min}}\alpha_t \E\flnormsq{\eta_t-J(\omega_t) }
   \\& + (k+1)^2 U_\delta^2 C^2_\phi \alpha_t^2+ \frac{4(k+1) B C^3_\phi U_\delta L_J}{\lambda_{\min}}\beta_t\sum_{j=t-k}^{t-1}\alpha_j 
    +2(k+1) C^3_\phi B U_\delta C_\Theta \alpha_t\beta_t 
    \\& + \brac{2B C_\phi L_J(C_\phi L_\pi+2L_\phi)+ 2 B L_\Theta \lambda_{\min}+ C_\Theta \lambda_{\min}  }\frac{4 B^2 C^4_\phi}{\lambda^2_{\min}}\beta_t\sum_{j=t-k}^t \beta_j+ 2 C^4_\phi B^2 C_\Theta^2 \beta_t^2
     \\& +\frac{4B C^2_\phi L_J \brac{ 3 C_{\text{gap}}+  \lambda_{\min} }}{\lambda^2_{\min}} m \rho^k\beta_t  + \frac{4(k+1) C_\phi U_\delta C_{\text{gap}}m\rho^k}{\lambda_{\min}}\alpha_t+\brac{\frac{2}{\beta_t}+8}\brac{\frac{C_{\text{gap}}m\rho^k}{\lambda_{\min}} }^2
     \\& +\frac{4B^3 C_\pi C^4_\phi L_J}{\lambda_{\min}}\beta_t \sum_{j=t-k}^{t-1} \sum_{i=j}^{t-1} \beta_i+ 2 \alpha_t G^\delta_t.
}
Set the stepsize such that $\frac{\bar \lambda_{\min} \alpha_t}{2}\geq \frac{B C^3_\phi\brac{C_\phi L_\pi+2L_\phi }+\lambda_{\min}(L_J+ 2 C_\phi)+2\lambda^2_{\min}}{\lambda^2_{\min}}\beta_t$, and the proof ends.
\end{proof}

\subsection{Sample Complexity of AC}\label{sec:b.4}
We first present our proof of \Cref{thm1}. For convenience, we set $\alpha_t=\alpha$,  
$\beta_t=\beta$, $\gamma_t=\gamma$ in following.

\begin{proposition}(Restatement of \Cref{thm1})
With the constant step sizes such that $k = \mathcal{O}\brac{\log T}$ and $\gamma\geq  \alpha\geq   \beta$, the tracking error of the AC algorithm in  \Cref{alg:AC} can be bounded as follows:
\eqenv{
\frac{1}{T}\sum _{t=0}^{T-1} \E\Fbrac{\twonormsq{\theta^*_t-\theta_t}} 
\leq & \brac{\frac{c_\alpha\beta}{\alpha}+\frac{c_\eta \beta}{\gamma}}\frac{1}{T}\sum _{t=0}^{T-1} \E \ftwonormsq{\gradj}
+\mathcal{O}\brac{\frac{1}{T \alpha}}
+\mathcal{O}\brac{\frac{\log^2 T}{T \gamma}}
\\&+\mathcal{O}\brac{\alpha\log^3 T }
+\mathcal{O}\brac{\beta\log^3 T }
+ \mathcal{O}\brac{\gamma\log^3 T }+\mathcal{O}\brac{\frac{\beta^2\log^2 T}{\alpha}}
+\mathcal{O}\brac{\frac{\beta^2\log^2 T}{\gamma}}
\\ &+\mathcal{O}\brac{\frac{\beta}{\alpha}(m\rho^k)}
+\mathcal{O}\brac{(m\rho^k)\log^2 T}
+\mathcal{O}\brac{\frac{(m\rho^k)^2}{\alpha\beta}},
}
 where $c_\alpha=\frac{2L_J\lambda_{\min}+2B C^3_\phi\brac{C_\phi L_\pi+2L_\phi } }{\lambda^2_{\min} \bar \lambda_{\min}}$ and $c_\eta=\frac{2B^2 (k+1)^2 C^6_\phi}{(\bar\lambda_{\min})^2}  $.
\end{proposition}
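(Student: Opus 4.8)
The plan is to derive the bound by telescoping two of the per-step recursions already in hand and coupling them; the ingredients are the contraction for the average-reward error (\Cref{lm:aceta}), the contraction for the tracking error (\Cref{lm:actrackingerror}), and — only for the companion closure in \Cref{thm:ac}, not for this proposition — the descent inequality for the gradient norm (\Cref{lm:acgrad}). Throughout I would fix constant stepsizes $\alpha_t=\alpha$, $\beta_t=\beta$, $\gamma_t=\gamma$ with $\gamma\ge\alpha\ge\beta$, choose $k=\mathcal{O}(\log T)$ large enough that $m\rho^k\le\beta$ (indeed large enough that $m\rho^k=\mathcal{O}(T^{-1})$), and check that the two extra stepsize conditions $\gamma-\gamma^2\ge\beta$ (needed by \Cref{lm:aceta}) and $\tfrac{\bar\lambda_{\min}\alpha}{2}\ge\tfrac{BC^3_\phi(C_\phi L_\pi+2L_\phi)+\lambda_{\min}(L_J+2C_\phi)+2\lambda^2_{\min}}{\lambda^2_{\min}}\beta$ (needed by \Cref{lm:actrackingerror}) hold for $T$ large. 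Telescoping \Cref{lm:aceta} via the elementary fact that $x_{t+1}\le(1-\gamma)x_t+y_t$ with $x_t\ge0$ gives $\gamma\sum_{t}x_t\le x_0+\sum_{t}y_t$, and using $\E[(\eta_0-J(\omega_0))^2]\le R_{\max}^2$, I would obtain
\begin{align*}
\frac{1}{T}\sum_{t=0}^{T-1}\E\flnormsq{\eta_t-J(\omega_t)}\le \mathcal{O}\brac{\tfrac{1}{T\gamma}}+\tfrac{C_\phi^4B^2\beta}{\gamma}\cdot\tfrac{1}{T}\sum_{t=0}^{T-1}\E\ftwonormsq{\gradj}+\tfrac{1}{\gamma T}\sum_{t=0}^{T-1}G^\eta_t,
\end{align*}
where the geometric sums inside $G^\eta_t$ (see \Cref{eq:33}) are controlled via $k=\mathcal{O}(\log T)$: $\sum_{j=t-k}^{t-1}\sum_{i=j}^{t-1}\beta=\mathcal{O}(\beta\log^2T)$, $\sum_{j=t-k}^{t-1}\gamma=\mathcal{O}(\gamma\log T)$, etc., so $\tfrac{1}{\gamma T}\sum_tG^\eta_t=\mathcal{O}(\beta\log^2T+m\rho^k+\gamma\log T+\beta+\tfrac{\beta^2}{\gamma})$.

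\textbf{Telescoping the tracking error and substituting.} Applying the same telescoping to \Cref{lm:actrackingerror} (with contraction factor $\bar\lambda_{\min}\alpha/2$ and $\E\twonormsq{\theta_0-\thestar}\le4B^2$), dividing by $\bar\lambda_{\min}\alpha/2$, and plugging in the previous bound for $\tfrac1T\sum_t\E(\eta_t-J(\omega_t))^2$, the coefficient multiplying $\tfrac1T\sum_t\E\ftwonormsq{\gradj}$ collects into exactly $\tfrac{c_\alpha\beta}{\alpha}+\tfrac{c_\eta\beta}{\gamma}$: the first piece is $\tfrac{2(L_J\lambda_{\min}+BC^3_\phi(C_\phi L_\pi+2L_\phi))}{\lambda^2_{\min}}\beta$ divided by $\bar\lambda_{\min}\alpha/2$, which is $c_\alpha\beta/\alpha$; the second comes from $\tfrac{(k+1)^2C^2_\phi}{\bar\lambda_{\min}}\alpha$ (the $\eta$-coefficient in \Cref{lm:actrackingerror}) times $\tfrac{C_\phi^4B^2\beta}{\gamma}$ (the gradient coefficient from the $\eta$-bound), again divided by $\bar\lambda_{\min}\alpha/2$, which is $c_\eta\beta/\gamma$. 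All remaining contributions come from $\tfrac1T\sum_tG^\theta_t$ (expression \Cref{vari:gthetat}, which contains $2\alpha_tG^\delta_t$ with $G^\delta_t$ from \Cref{vari:gdeltat1}) divided by $\bar\lambda_{\min}\alpha T/2$, plus $\tfrac{(k+1)^2C^2_\phi}{\bar\lambda_{\min}}\cdot\tfrac{2}{\bar\lambda_{\min}}\cdot\tfrac1{\gamma T}\sum_tG^\eta_t$; expanding the geometric sums with $k=\mathcal{O}(\log T)$ produces precisely the $\mathcal{O}(\tfrac1{T\alpha})$, $\mathcal{O}(\tfrac{\log^2T}{T\gamma})$, $\mathcal{O}(\alpha\log^3T)$, $\mathcal{O}(\beta\log^3T)$, $\mathcal{O}(\gamma\log^3T)$, $\mathcal{O}(\tfrac{\beta^2\log^2T}{\alpha})$, $\mathcal{O}(\tfrac{\beta^2\log^2T}{\gamma})$, $\mathcal{O}(\tfrac{\beta}{\alpha}m\rho^k)$, $\mathcal{O}(m\rho^k\log^2T)$ and $\mathcal{O}(\tfrac{(m\rho^k)^2}{\alpha\beta})$ terms in the statement (the last from the $\tfrac{2}{\beta_t}(C_{\text{gap}}m\rho^k/\lambda_{\min})^2$ summand of $G^\theta_t$). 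This establishes the first display.

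\textbf{The stated stepsizes.} Setting $\gamma=\mathcal{O}(T^{-1/2})$, $\alpha=\mathcal{O}(T^{-1/2}\log^{-2}T)$, $\beta=\mathcal{O}(T^{-1/2}\log^{-2}T)$, one checks $\tfrac{\beta}{\alpha}=\Theta(1)$ and $\tfrac{c_\eta\beta}{\gamma}=\Theta((k+1)^2\cdot T^{-1/2}\log^{-2}T\cdot T^{1/2})=\Theta(1)$; choosing the numerical constant in $\beta$ small relative to those in $\alpha$ and $\gamma$ forces $\tfrac{c_\alpha\beta}{\alpha}+\tfrac{c_\eta\beta}{\gamma}\le\tfrac1{4C^4_\phi}$. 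Every remaining term is then $\mathcal{O}(T^{-1/2}\log^3T)$: e.g. $\tfrac1{T\alpha}=\mathcal{O}(T^{-1/2}\log^2T)$, $\gamma\log^3T=\mathcal{O}(T^{-1/2}\log^3T)$, $\tfrac{\beta^2\log^2T}{\alpha}=\mathcal{O}(T^{-1/2})$, $m\rho^k\log^2T=\mathcal{O}(T^{-1}\log^2T)$, and $\tfrac{(m\rho^k)^2}{\alpha\beta}=\mathcal{O}(T^{-2}\cdot T\log^4T)$ by the choice of $k$, yielding the second display.

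\textbf{Main obstacle.} The telescoping is routine; the delicate part is the bookkeeping. First, one must verify that $\gamma\ge\alpha\ge\beta$, the two stepsize conditions, and $m\rho^k\le\beta$ can all hold simultaneously while still allowing $\tfrac{c_\alpha\beta}{\alpha}+\tfrac{c_\eta\beta}{\gamma}$ to be driven below $\tfrac1{4C^4_\phi}$ — the subtle point being that this coefficient is $\Theta(1)$ and does \emph{not} vanish as $T\to\infty$, so it must be controlled purely through the numerical constants in the stepsizes, not through rates. Second, one must track every term buried in $G^\eta_t,G^\theta_t,G^\delta_t$ through the geometric summation, the $\alpha^{-1}$ normalization, and the substitution of the $\eta$-bound into the tracking-error recursion, so that nothing worse than the claimed order-by-order terms (respectively $\mathcal{O}(T^{-1/2}\log^3T)$) survives; the circular closure of this tracking-error bound with the gradient norm via \Cref{lm:acgrad} is carried out separately in \Cref{thm:ac}.
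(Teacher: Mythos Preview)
Your proposal is correct and follows essentially the same route as the paper: telescope \Cref{lm:aceta} and \Cref{lm:actrackingerror} separately, substitute the averaged $\eta$-error bound into the averaged tracking-error bound, and read off $c_\alpha\beta/\alpha+c_\eta\beta/\gamma$ as the coefficient of the gradient-norm average. The only cosmetic difference is that the paper unrolls each recursion into a geometric series in $(1-q)^{t-j-1}$ and then applies the double-sum trick $\sum_{t}\sum_{j<t}y^{t-j-1}X_j\le \tfrac{1}{1-y}\sum_t X_t$, whereas you telescope directly via $\gamma\sum_t x_t\le x_0+\sum_t y_t$; these yield identical bounds.
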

\begin{proof}
Recall that in \Cref{lm:actrackingerror}, we showed
\eqenv{
\E \ftwonormsq{ \theta\tit-\thestara}
&\leq \brac{1-\frac{\bar \lambda_{\min} \alpha}{2} }
  \E \ftwonormsq{\theta_t-\thestar}
 +\frac{L_J\lambda_{\min}+B C^3_\phi\brac{C_\phi L_\pi+2L_\phi }}{\lambda^2_{\min}}\beta \E\ftwonormsq{\gradj}
  \\&  \qquad  + \frac{(k+1)^2C^2_\phi }{\bar\lambda_{\min}}\alpha \E\flnormsq{\eta_t-J(\omega_t) }
   +G_t^\theta.
} 

Apply this inequality recursively and we have that
\eqenv{\label{vari:q}
\E &\ftwonormsq{ \theta_t-\thestar}
\\&\leq \brac{1-\frac{\bar \lambda_{\min} \alpha}{2} }^t
\E  \ftwonormsq{\theta_0-\theta^*_0}
  +\frac{L_J\lambda_{\min}+B C^3_\phi\brac{C_\phi L_\pi+2L_\phi } }{\lambda^2_{\min}}\beta \sum_{j=0}^{t-1} \brac{1-\frac{\bar \lambda_{\min} \alpha}{2} }^{t-j-1}\E\ftwonormsq{\nabla J(\omega_j)}
 \\& \quad  +\frac{(k+1)^2 C^2_\phi}{\bar \lambda_{\min}}\alpha\sum_{j=0}^{t-1} \brac{1-\frac{\bar \lambda_{\min} \alpha}{2} }^{t-j-1}\E\flnormsq{J(\omega_{j})-\eta_j} +\sum_{j=0}^{t-1} \brac{1-\frac{\bar \lambda_{\min} \alpha}{2} }^{t-j-1}G^\theta_j
 \\& = \brac{1-q }^t
  \E\ftwonormsq{\theta_0-\theta^*_0}
  +\frac{L_J\lambda_{\min}+B C^3_\phi\brac{C_\phi L_\pi+2L_\phi } }{\lambda^2_{\min}}\beta \sum_{j=0}^{t-1} \brac{1-q }^{t-j-1}\E\ftwonormsq{\nabla J(\omega_j)}
 \\& \quad  +\frac{(k+1)^2 C^2_\phi}{\bar \lambda_{\min}}\alpha\sum_{j=0}^{t-1} \brac{1-q}^{t-j-1}\E\flnormsq{J(\omega_{j})-\eta_j} +\sum_{j=0}^{t-1} \brac{1-q }^{t-j-1}G^\theta_j,
}
where we let $q= \frac{\bar \lambda_{\min} \alpha}{2}$.

Summing the inequality in \Cref{vari:q} above w.r.t. $t$ from $0$ to $T-1$ further implies that
\eqenv{
&\frac{1}{T}\sum_{t=0}^{T-1}
\E \ftwonormsq{ \theta_t-\thestar}
\\& \leq
\frac{1}{T}\sum_{t=0}^{T-1} \brac{1-q }^t
  \E\ftwonormsq{\theta_0-\theta^*_0}
  + \frac{L_J\lambda_{\min}+B C^3_\phi\brac{C_\phi L_\pi+2L_\phi } }{\lambda^2_{\min}}\frac{\beta }{T}\sum_{t=0}^{T-1}\sum_{j=0}^{t-1} \brac{1-q }^{t-j-1}\E\ftwonormsq{\nabla J(\omega_j)}
  \\& \qquad+\frac{(k+1)^2 C^2_\phi}{\bar \lambda_{\min}}\alpha\frac{1}{T}\sum_{t=0}^{T-1}\sum_{j=0}^{t-1} \brac{1-q}^{t-j-1}\E\flnormsq{J(\omega_{j})-\eta_j}
  +\frac{1}{T}\sum_{t=0}^{T-1}\sum_{j=0}^{t-1} \brac{1-q }^{t-j-1}G^\theta_j
  \\& \leq \frac{4B^2 }{Tq}+ \frac{L_J\lambda_{\min}+B C^3_\phi\brac{C_\phi L_\pi+2L_\phi } }{\lambda^2_{\min}}\frac{\beta }{Tq}\sum_{j=0}^{T-1}\E\ftwonormsq{\nabla J(\omega_j)}
  +\frac{(k+1)^2 C^2_\phi}{\bar \lambda_{\min}}\frac{\alpha}{Tq}\sum_{j=0}^{T-1} \E\flnormsq{J(\omega_{j})-\eta_j}
 \\& \qquad  + \frac{1}{Tq}\sum_{t=0}^{T-1} G^\theta_t,\label{eq:ac:thetaav}
}
where the last inequality is from the double-sum trick: $\sum^{T-1}_{t=0}\sum^{t-1}_{j=0} y^{t-j-1}X_j\leq (\sum^{T-1}_{t=0}X_t)(\sum^{T-1}_{t=0}y^t)\leq \frac{\sum^{T-1}_{t=0}X_t}{1-y}$ for $X_j\geq 0,  j=0,1,2,...,T-1$ and $y \in (0,1)$.


Recall that we showed in \Cref{lm:aceta} that
\eqenv{
\E\flnormsq{\eta\tit-J(\omega_{t+1})}
 \leq \brac{1-\gamma}\E\flnormsq{\eta_t-J(\omega_t)}
 +C_\phi^4 B^2{\beta} \E\ftwonormsq{\gradj}+ G_t^\eta.
}

Recursively applying this inequality implies that
\eqenv{\label{cons:gate}
\E&\flnormsq{\eta_t-J(\omega_t)}
 \\&\leq \brac{1-\gamma}^t \lnormsq{\eta_0-J(\omega_0)}
 + C_\phi^4 B^2\beta\sum_{j=0}^{t-1} \brac{1-\gamma}^{t-j-1}\E\ftwonormsq{\nabla J(\omega_j)}
 + \sum_{j=0}^{t-1} \brac{1-\gamma}^{t-j-1} G_j^\eta
 \\&\leq  R^2_{\max} \brac{1-\gamma}^t+ C_\phi^4 B^2 \beta\sum_{j=0}^{t-1} \brac{1-\gamma}^{t-j-1}\E\ftwonormsq{\nabla J(\omega_j)}
 + \sum_{j=0}^{t-1} \brac{1-\gamma}^{t-j-1} G_j^\eta. 
}
We then sum the above inequality w.r.t. $t$ from 0 to $T-1$, and have that
\eqenv{
\frac{1}{T}&\sum_{t=0}^{T-1} \E\flnormsq{\eta_t-J(\omega_t)}
\\& \leq \frac{R^2_{\max}}{T}\sum_{t=0}^{T-1}   \brac{1-\gamma}^t 
 + C_\phi^4 B^2 \frac{\beta}{T} \sum_{t=0}^{T-1}\sum_{j=0}^{t-1} \brac{1-\gamma}^{t-j-1}\E\ftwonormsq{\nabla J(\omega_j)}
+ \frac{1}{T}\sum_{t=0}^{T-1}\sum_{j=0}^{t-1} \brac{1-\gamma}^{t-j-1} G_j^\eta
\\ & =\frac{R^2_{\max}}{T\gamma}+C_\phi^4 B^2 \frac{\beta}{T\gamma}\sum_{t=0}^{T-1}\E\ftwonormsq{\nabla J(\omega_t)}
+ \frac{1}{T\gamma}\sum_{t=0}^{T-1} G_t^\eta,\label{eq:ac:etaav}
}
where we use the double-sum trick below \Cref{eq:ac:thetaav} again. 

Plugging \Cref{eq:ac:etaav} in \Cref{eq:ac:thetaav} further implies that
\eqenv{\label{cons:c}
    \frac{1}{T}\sum_{t=0}^{T-1}
\E \ftwonormsq{ \theta_t-\thestar}
& \myineq{\leq}{a} \frac{8B^2 }{\bar\lambda_{\min}T \alpha}+\frac{2L_J\lambda_{\min}+2B C^3_\phi\brac{C_\phi L_\pi+2L_\phi } }{\lambda^2_{\min} \bar \lambda_{\min}} \frac{\beta}{T\alpha}\sum_{t=0}^{T-1}\E\ftwonormsq{\nabla J(\omega_t)}
   \\& \qquad +\frac{2 (k+1)^2 C^2_\phi}{(\bar\lambda_{\min})^2}\frac{1}{T}\sum_{t=0}^{T-1} \E\flnormsq{J(\omega_t)-\eta_t}
  +\frac{2}{\bar\lambda_{\min}T \alpha}\sum_{t=0}^{T-1} G^\theta_t
  \\&\leq \brac{\frac{2L_J\lambda_{\min}+2B C^3_\phi\brac{C_\phi L_\pi+2L_\phi } }{\lambda^2_{\min} \bar \lambda_{\min}} \frac{ \beta}{\alpha}+\frac{ 2(k+1)^2B^2 C^6_\phi\beta}{(\bar\lambda_{\min})^2\gamma} } \frac{1}{T}\sum_{t=0}^{T-1}\E\ftwonormsq{\nabla J(\omega_t)}\\&\qquad+ \frac{8B^2 }{\bar\lambda_{\min}T \alpha}+ \frac{2}{\bar\lambda_{\min}T \alpha}\sum_{t=0}^{T-1} G^\theta_t
  +\frac{2 (k+1)^2 C^2_\phi R^2_{\max}}{(\bar\lambda_{\min})^2 T\gamma}
  +\frac{2 (k+1)^2 C^2_\phi   }{(\bar\lambda_{\min})^2}\frac{1}{T\gamma}\sum_{t=0}^{T-1} G_t^\eta
  \\& =\brac{\frac{c_\alpha \beta}{\alpha}+\frac{c_\eta \beta}{\gamma} } \frac{1}{T}\sum_{t=0}^{T-1}\E\ftwonormsq{\nabla J(\omega_t)}+ \frac{8B^2 }{\bar\lambda_{\min}T \alpha}+ \frac{2}{\bar\lambda_{\min}T \alpha}\sum_{t=0}^{T-1} G^\theta_t
  \\&\qquad +\frac{2 (k+1)^2 C^2_\phi R^2_{\max}}{(\bar\lambda_{\min})^2 T\gamma}
  +\frac{2 (k+1)^2 C^2_\phi  }{(\bar\lambda_{\min})^2 } \frac{1}{T\gamma}\sum_{t=0}^{T-1} G_t^\eta,
}
where $(a)$ follows from that $q=\frac{\bar\lambda_{\min}\alpha}{2}$, $c_\alpha=\frac{2L_J\lambda_{\min}+2B C^3_\phi\brac{C_\phi L_\pi+2L_\phi } }{\lambda^2_{\min} \bar \lambda_{\min}}$ and $c_\eta=\frac{ 2(k+1)^2B^2 C^6_\phi}{(\bar\lambda_{\min})^2}  $.  
By \Cref{cons:c}, if we set the stepsize $\beta=\min\varbrac{ \frac{\lambda^2_{\min} \bar \lambda_{\min}\alpha}{16 C^4_\phi\brac{L_J\lambda_{\min}+B C^3_\phi\brac{C_\phi L_\pi+2L_\phi } }}, \frac{(\bar\lambda_{\min})^2\gamma}{16(k+1)^2 C^{10}_\phi B^2} }$, then we can get that $\brac{\frac{c_\alpha \beta}{\alpha}+\frac{c_\eta \beta}{\gamma} } \leq \frac{1}{4 C^4_\phi} $. This completes the proof of \Cref{thm1}.
\end{proof}

We are now ready to prove \Cref{thm:ac}.

\begin{theorem}(Restatement of \Cref{thm:ac})
    Consider the AC algorithm in  \Cref{alg:AC} with constant step sizes $\alpha_t=\alpha,\beta_t=\beta,\gamma_t=\gamma$. Then, it holds that
    \begin{align}
    \frac{1}{T} \sum _{t=0}^{T-1} \E \Fbrac{\twonormsq{\gradj}} 
    & \leq 2C_\phi^4 \frac{1}{T} \sum _{t=0}^{T-1} \E \Fbrac{\twonormsq{\thestar-\theta_t}} +\mathcal{O}\brac{\frac{1}{T \beta}}
    +\mathcal{O}\brac{\beta \log^2 T}+\mathcal{O} (m\rho^k) .
    \end{align}
    If we further set $k=\myceil{\frac{\log T}{1-\rho}}, \gamma=\mathcal{O}(\frac{1}{\sqrt{T}}),  \alpha=\mathcal{O}(\frac{1}{\sqrt{T}\log^2 T}),  \beta =\mathcal{O}(\frac{1}{\sqrt{T}\log^2 T})$, then we have that
    \begin{align}
        \frac{1}{T} \sum _{t=0}^{T-1} \E \Fbrac{\twonormsq{\gradj}}
    \leq\mathcal{O}\brac{\frac{{\log^3 T}}{\sqrt{T}}}.
    \end{align}
\end{theorem}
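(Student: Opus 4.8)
The plan is to combine the preliminary gradient estimate of \Cref{lm:acgrad} with the tracking-error bound of \Cref{thm1}, and close a self-referential inequality for the time-averaged gradient norm.

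\textbf{Telescoping the descent inequality.} First I would sum \Cref{eq:graditera} over $t=0,\dots,T-1$ with $\beta_t=\beta$; the terms $\E[J(\omega_{t+1})]-\E[J(\omega_t)]$ telescope to $\E[J(\omega_T)]-\E[J(\omega_0)]\le R_{\max}$ (using $R\in[0,R_{\max}]$, hence $J(\cdot)\in[0,R_{\max}]$). Dividing by $\beta T/2$ gives
$$\frac1T\sum_{t=0}^{T-1}\E\ftwonormsq{\gradj}\le \frac{2R_{\max}}{\beta T}+2C_\phi^4\,\frac1T\sum_{t=0}^{T-1}\E\ftwonormsq{\thestar-\theta_t}+\frac{2}{\beta T}\sum_{t=0}^{T-1}G^\omega_t .$$
Then I would estimate the residual: with $\beta_t=\beta$, the four summands of $G^\omega_t$ in \Cref{sym:Gomegat} are of order $\beta^2$, $\beta\,m\rho^k$ (after $(m\rho^k)^2\le m\rho^k$), $k\beta^2$, and $k^2\beta^2$ respectively, so $G^\omega_t=\mathcal O(k^2\beta^2)+\mathcal O(\beta\,m\rho^k)$ and $\frac{2}{\beta T}\sum_t G^\omega_t=\mathcal O(k^2\beta)+\mathcal O(m\rho^k)=\mathcal O(\beta\log^2 T)+\mathcal O(m\rho^k)$ since $k=\mathcal O(\log T)$. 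This is exactly the first displayed inequality of the theorem.

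\textbf{Closing the recursion.} Next I would insert the simplified bound of \Cref{thm1}, i.e. $\frac1T\sum_t\E[\twonormsq{\thestar-\theta_t}]\le \frac{1}{4C_\phi^4T}\sum_t\E[\twonormsq{\gradj}]+\mathcal O(\log^3 T/\sqrt T)$, valid under the prescribed step sizes. The key point is that $2C_\phi^4\cdot\frac1{4C_\phi^4}=\tfrac12<1$, so the gradient term can be absorbed into the left-hand side, leaving
$$\frac1T\sum_{t=0}^{T-1}\E\ftwonormsq{\gradj}\le \mathcal O\!\brac{\frac{\log^3 T}{\sqrt T}}+\mathcal O\!\brac{\frac{1}{T\beta}}+\mathcal O(\beta\log^2 T)+\mathcal O(m\rho^k).$$
Finally, substituting $\gamma=\mathcal O(1/\sqrt T)$, $\alpha,\beta=\mathcal O(1/(\sqrt T\log^2 T))$ and $k=\myceil{\log T/(1-\rho)}$ gives $1/(T\beta)=\mathcal O(\log^2 T/\sqrt T)$, $\beta\log^2 T=\mathcal O(1/\sqrt T)$, and $m\rho^k\le m\,e^{-k(1-\rho)}\le m/T$, all dominated by $\log^3 T/\sqrt T$; this yields the claimed rate $\mathcal O(\log^3 T/\sqrt T)$.

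Because the substantive analysis already resides in \Cref{lm:acgrad} and \Cref{thm1}, the only genuine obstacle is checking that the prescribed step-size orders are simultaneously admissible: the tracking-error recursion of \Cref{lm:actrackingerror} requires $\bar\lambda_{\min}\alpha_t/2$ to dominate a fixed multiple of $\beta_t$, and the average-reward recursion of \Cref{lm:aceta} requires $\gamma_t-\gamma_t^2\ge\beta_t$; both hold for the chosen orders once the leading constant on $\alpha$ (relative to $\beta$) and $T$ are large enough, which I would pin down at the outset.
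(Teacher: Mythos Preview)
Your proposal is correct and follows essentially the same approach as the paper: telescope \Cref{lm:acgrad} to get the gradient norm bounded by the tracking error plus residuals (the paper's \Cref{eq:ac:gradav}), then combine with the tracking-error bound of \Cref{thm1} and absorb the $\tfrac12$ coefficient. The only cosmetic difference is the order of substitution---the paper first plugs the gradient bound into the tracking-error inequality \Cref{cons:c} to close the recursion on the tracking error (obtaining \Cref{eq:ac:trackingerrorfinal}), and only then feeds the result back into \Cref{eq:ac:gradav}; you instead invoke the already-simplified form of \Cref{thm1} and close the recursion directly on the gradient side, which is equally valid and arguably cleaner.
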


\begin{proof}
Recall \Cref{eq:graditera}, and we have that
  \eqenv{\label{eq:ac:gradav}
\frac{1}{T}\sum_{t=0}^{T-1} \E \ftwonormsq{\gradj}
&\leq \frac{2\brac{\E\Fbrac{J(\omega_{T})}-J(\omega_0)}}{T \beta}
+  \frac{2 C^4_\phi}{T}\sum_{t=0}^{T-1} \E\Fbrac{\twonormsq{\theta_t-\thestar}}
+   \frac{2}{T\beta}\sum_{t=0}^{T-1}G_t^\omega.
} 

 Plug \Cref{eq:ac:gradav} in \Cref{cons:c}, and we have that
\eqenv{
&\frac{1}{T}\sum_{t=0}^{T-1}
\E \ftwonormsq{ \theta_t-\thestar}
  \\
  &\leq \frac{1}{4C^4_\phi} \frac{1}{T}\sum_{t=0}^{T-1}\E\ftwonormsq{\nabla J(\omega_t)}+ \frac{8B^2 }{\bar \lambda_{\min} T\alpha}+ \frac{2}{\bar \lambda_{\min} T\alpha}\sum_{t=0}^{T-1} G^\theta_t
  +\frac{2 (k+1)^2 C^2_\phi R^2_{\max}}{(\bar\lambda_{\min})^2 T\gamma}
  +\frac{2 (k+1)^2 C^2_\phi  }{(\bar\lambda_{\min})^2 } \frac{1}{T\gamma}\sum_{t=0}^{T-1} G_t^\eta\\
\\& = \frac{1}{2 C^4_\phi  T\beta}\brac{\E\Fbrac{J(\omega_{T})}-J(\omega_0)}
+  \frac{1}{2T}\sum_{t=0}^{T-1} \E\Fbrac{\twonormsq{\theta_t-\thestar}}
+ \frac{8B^2 }{\bar \lambda_{\min} T\alpha}+ \frac{1}{2 C^4_\phi T\beta}\sum_{t=0}^{T-1}G^\omega_t
\\&\qquad   +\frac{2 (k+1)^2 C^2_\phi}{(\bar\lambda_{\min})^2}\brac{\frac{R^2_{\max}}{T\gamma}+ \frac{1}{T\gamma}\sum_{t=0}^{T-1} G_t^\eta }+\frac{2}{\bar \lambda_{\min} T\alpha}\sum_{t=0}^{T-1} G^\theta_t
 .
}

This further implies that
\eqenv{\label{eq:ac:trackingerrorfinal}
 \frac{1}{T}\sum_{t=0}^{T-1}
\E &\ftwonormsq{ \theta_t-\thestar}
 \leq 
  \frac{1}{ C^4_\phi T \beta}\brac{\E\Fbrac{J(\omega_{T})}-J(\omega_0)}
\\& + \frac{16B^2 }{\bar \lambda_{\min} T\alpha}+ \frac{1}{ C^4_\phi T\beta} \sum_{t=0}^{T-1}G^\omega_t
+\frac{4 (k+1)^2 C^2_\phi}{(\bar\lambda_{\min})^2}\brac{\frac{ R^2_{\max}}{T\gamma}+ \frac{1}{T\gamma}\sum_{t=0}^{T-1} G_t^\eta }+\frac{4}{\bar \lambda_{\min} T\alpha}\sum_{t=0}^{T-1} G^\theta_t. 
}

Next, we choose the stepsizes to minimize the tracking error and the gradient norm. 
We choose $\gamma=\frac{1}{\sqrt{T}}$ and  $k= \myceil{\frac{\log T}{1-\rho}}$. Consider $T\geq \frac{2C^2_\phi d m}{\lambda_{\min}}$, then we can get that $\bar \lambda_{\min}\geq \frac{\lambda_{\min}}{2}$. 
  We set $\alpha$ and $\beta$ such that $\alpha=  \frac{1}{(k+1)^2  }\gamma$ and $\beta= \min \varbrac{\frac{\lambda^3_{\min} }{4\brac{L_J\lambda_{\min}+B C^3_\phi\brac{C_\phi L_\pi+2L_\phi } +2\lambda^2_{\min}}}\alpha,\frac{\lambda^3_{\min} }{32C^4_\phi\brac{L_J\lambda_{\min}+B C^3_\phi\brac{C_\phi L_\pi+2L_\phi } }}\alpha,\frac{\lambda^2_{\min}}{64 (k+1)^2 C^6_\phi B^2 }\gamma}  $. It holds that
  \eqenv{\label{eq:acostepsize}
  \gamma=\mathcal{O}\brac{\frac{1}{\sqrt{T}}},  \alpha=\mathcal{O}\brac{\frac{1}{\sqrt{T}\log^2 T}},  \beta =\mathcal{O}\brac{\frac{1}{\sqrt{T}\log^2 T}},q =\mathcal{O}\brac{\frac{1}{\sqrt{T}\log^2 T}}.
  }

According to \Cref{sym:Gomegat}, \Cref{eq:33} and \Cref{vari:gthetat}, the orders of the following terms are as follows: 
\eqenv{\label{eq:acog}
&\frac{1}{T}\sum_{t=0}^{T-1}G_t^\omega=\mathcal{O}\brac{ (m\rho^k)\beta+ k^2 \beta^2 }=\mathcal{O}\brac{\frac{1}{T \log^2 T}};
\\&\frac{1}{T}\sum_{t=0}^{T-1}G_t^\eta= \mathcal{O}\brac{(m\rho^k)\gamma +\beta^2+k^2\beta\gamma +k \gamma^2}=\mathcal{O}\brac{\frac{\log T}{T}};
\\&\frac{1}{T}\sum_{t=0}^{T-1} G_t^\theta =\mathcal{O}\brac{k (m\rho^k)\alpha+(m\rho^k)\beta+\frac{(m \rho^k)^2}{\beta}+k^3\alpha^2+k^3 \alpha\beta+k^2\beta^2 }=\mathcal{O}\brac{\frac{1}{T \log T}}.
}
Then \Cref{eq:ac:trackingerrorfinal} can be bounded as follows: 
  \eqenv{\label{eq:ac:ordertk}
   \frac{1}{T}\sum_{t=0}^{T-1}
\E \ftwonormsq{ \theta_t-\thestar}
 = \mathcal{O}\brac{ \frac{\log^3 T}{\sqrt{T}} }.
  }

Plugging \Cref{eq:ac:trackingerrorfinal} in \Cref{eq:ac:gradav} implies that
\eqenv{\label{eq:acfinialgrad}
\frac{1}{T}\sum_{t=0}^{T-1} \E \ftwonormsq{\gradj}
 & \leq 
\frac{32 C^4_\phi B^2 }{\bar \lambda_{\min} T\alpha}+ \frac{8 (k+1)^2 C^6_\phi}{(\bar\lambda_{\min})^2}\brac{\frac{R^2_{\max}}{T\gamma}+ \frac{1}{T\gamma}\sum_{t=0}^{T-1}G_t^\eta }+ \frac{8C^4_\phi}{\bar \lambda_{\min} T\alpha}\sum_{t=0}^{T-1}G_t^\theta
\\& \qquad+ \frac{4}{T\beta }\brac{\E\Fbrac{J(\omega_{T})}-J(\omega_0)}+
\frac{4}{T\beta}\sum_{t=0}^{T-1}G_t^\omega.
 }

Plugging in the step-sizes above, we have that
   \eqenv{
   \frac{1}{T}\sum_{t=0}^{T-1}
\E \ftwonormsq{ \gradj}
 = \mathcal{O}\brac{ \frac{\log^3 T}{\sqrt{T}} },
  }
  which completes the proof.
\end{proof}

\section{NAC Sample Complexity Analysis}\label{appdix:proofnac}
In this section, we provide the sample complexity analysis for NAC. 

\subsection{Bound on Gradient Norm in NAC}\label{sec:nc1}
Recall that in \Cref{alg:AC}, NAC updates the policy parameter as follows:
$\omega\tit-\omega_t=\beta_t\theta_t $, which directly implies that
\eqenv{\label{constant:bl}
\twonorm{\omega\tit-\omega_t }
\leq \beta_t \twonorm{\theta_t }
\leq  B \beta_t.
}

Consider the largest eigenvalue of the matrix 
$F_t=\E_{D_t}\Fbrac{\phi_t(s,a)\phi^\top_t(s,a)} $. Note that for any vector $x\in \mathbb{R}^d$
\eqenv{
 \twonorm{F_t x}= \twonorm{\E_{D_t}\Fbrac{\phi_t(s,a)\phi^\top_t(s,a)x}  }
 \leq \twonorm{\phi_t(s,a) }\twonorm{\phi_t(s,a) }\twonorm{x }
 \leq C^2_\phi \twonorm{x}.
} Thus, $\lambda_{\max}\brac{F_t }\leq C^2_\phi $. 
Then, by \Cref{Prop:lip}, we can show that
\eqenv{\label{eq:84}
J&(\omega\tit)\geq J(\omega_t)+ \vbrac{\gradj, \omega\tit -\omega_t}-\frac{L_J}{2} \twonormsq{\omega\tit-\omega_t}
\\ & \geq J(\omega_t)+ \beta_t\vbrac{\gradj, \theta_t}- \frac{\beta_t^2 L_J}{2} \twonormsq{\theta_t}
\\& \geq J(\omega_t)+\beta_t\vbrac{\gradj, \thebar}+ \beta_t\vbrac{\gradj, \theta_t-\thestar}
+ \beta_t\vbrac{\gradj, \thestar-\thebar}
- \frac{L_J B^2\beta_t^2}{2}
\\ & \myineq{\geq}{a}
J(\omega_t)+\beta_t\vbrac{\gradj, \brac{\E_{D_t}\Fbrac{\phi_t(s,a)\phi^\top_t(s,a)}}^{-1}\gradj}-\frac{\beta_t}{4C^2_\phi}\twonormsq{\gradj}-C^2_\phi\beta_t\twonormsq{\theta_t-\thestar} 
\\& \qquad -\frac{\beta_t}{4C^2_\phi}\twonormsq{\gradj}-C^2_\phi\beta_t\twonormsq{\thestar-\thebar}
- \frac{L_J B^2\beta_t^2}{2}
\\& \myineq{\geq}{b} J(\omega_t)+\frac{\beta_t}{C^2_\phi}\twonormsq{\gradj} -\frac{\beta_t}{2C^2_\phi}\twonormsq{\gradj}-C^2_\phi\beta_t\twonormsq{\theta_t-\thestar}- \frac{C_{\text{gap}}C^2_\phi m \rho^k\beta_t}{\lambda_{\min}}
- \frac{L_J B^2\beta_t^2}{2}
\\&= J(\omega_t)+\frac{\beta_t}{2C^2_\phi}\twonormsq{\gradj}
-C^2_\phi\beta_t\twonormsq{\theta_t-\thestar}- \frac{C_{\text{gap}}C^2_\phi m \rho^k\beta_t}{\lambda_{\min}}- \frac{L_J B^2\beta_t^2}{2} ,
}
where $(a)$ follows from that $\vbrac{\gradj, \theta_t-\thestar} \geq-\frac{1}{4C^2_\phi}\twonormsq{\gradj}-C^2_\phi\twonormsq{\theta_t-\thestar}  $ and $(b)$ follows from that $\lambda_{\max}(F_t)\leq C^2_\phi$, $\vbrac{\gradj, \brac{\E_{D_t}\Fbrac{\phi_t(s,a)\phi^\top_t(s,a)}}^{-1}\gradj}= \gradj^\top \brac{F_t}^{-1}\gradj\geq \frac{1}{C^2_\phi}\twonormsq{\gradj},$  and \Cref{prop:bound}. 

Taking the expectation on both sides of \Cref{eq:84}, 
 we have that
 \eqenv{
 \E\ftwonormsq{\gradj}\leq 2C^2_\phi \frac{\E \Fbrac{J(\omega\tit)}- \E \Fbrac{J(\omega_t)}}{\beta_t}
 + 2 C_\phi^4 \E\ftwonormsq{\theta_t-\thestar}+ \frac{2C^4_\phi C_{\text{gap}} m \rho^k}{\lambda_{\min}}
 + {L_J B^2 C^2_\phi \beta_t}.\label{eq:94}
 }

\subsection{Bound on $|\eta_t-J(\omega_t)|$ in NAC}\label{sec:nc2}
In this section, we bound the term $\eta_t-J(\omega_t)$ for the NAC algorithm.
\begin{lemma}\label{lm:naceta}
If we denote 
\eqenv{
\widetilde G^\eta_t& = 2\gamma_t \brac{  BR_{\max}^2 C_\pi  \sum_{j=t-k}^{t-1} \sum_{i=j}^{t-1} \beta_i 
+  R^2_{\max} m \rho^k+ R_{\max}^2 \sum_{j=t-k}^{t-1}\gamma_j
+  BC_JR_{\max}   \sum_{j=t-k}^{t-1}\beta_j}
\\& + R_{\max}^2 \gamma^2_t + C_J^2  B^2 \beta_t^2
+   2 BC_J R_{\max}  \beta_t\gamma_t
+ R_{\max}L_J  B^2 \beta_t^2,
}
and set $\gamma_t-\gamma^2_t\geq \beta_t$, 
then it holds that
\eqenv{\label{eq:nacetaitera}
\E\flnormsq{\eta\tit-J(\omega_{t+1})}
 \leq \brac{1-\gamma_t}\E\flnormsq{\eta_t-J(\omega_t)}
+{\beta_t}B^2 \E\ftwonormsq{\gradj}+ \widetilde G^\eta_t.
}

\end{lemma}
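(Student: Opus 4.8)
The plan is to mirror the argument for the AC case in \Cref{lm:aceta}, exploiting that the NAC actor step $\omega_{t+1}-\omega_t=\beta_t\theta_t$ is simpler and, by the projection $\twonorm{\theta_t}\le B$, already satisfies $\twonorm{\omega_{t+1}-\omega_t}\le B\beta_t$ as in \Cref{constant:bl}; this is exactly what deletes the extra factor $C_\phi^2$ present in the AC bound. First I would write the one-step recursion $\eta_{t+1}-J(\omega_{t+1})=(1-\gamma_t)(\eta_t-J(\omega_t))+\gamma_t(R_t-J(\omega_t))+J(\omega_t)-J(\omega_{t+1})$, square it, and take expectations, producing the leading term $(1-\gamma_t)^2\E[(\eta_t-J(\omega_t))^2]$, the variance terms $\gamma_t^2\E[(R_t-J(\omega_t))^2]\le R_{\max}^2\gamma_t^2$ and $\E[(J(\omega_t)-J(\omega_{t+1}))^2]$, and the three cross terms I, II, III exactly as in \Cref{eq:ac:averr}. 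Using \Cref{Prop:lip} together with $\twonorm{\omega_{t+1}-\omega_t}\le B\beta_t$ gives $\lnorm{J(\omega_t)-J(\omega_{t+1})}\le C_J B\beta_t$, hence $\E[(J(\omega_t)-J(\omega_{t+1}))^2]\le C_J^2B^2\beta_t^2$, and likewise term I is at most $R_{\max}C_J B\beta_t$.

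Next I would bound the two remaining cross terms following the AC proof verbatim. For term II, $\E[(\eta_t-J(\omega_t))(R_t-J(\omega_t))]$, I would peel off $\eta_{t-k}-J(\omega_{t-k})$, condition on $\mathcal F_{t-k}$, use $\E_{D_t}[R(s,a)-J(\omega_t)]=0$, and control the mixing error via \Cref{lm:tv} and \Cref{asm:ergodic}, while bounding the increments $\lnorm{\eta_t-\eta_{t-k}}$ and $\lnorm{J(\omega_t)-J(\omega_{t-k})}$ by step sizes through $\twonorm{\omega_t-\omega_j}\le B\sum_{i=j}^{t-1}\beta_i$; this reproduces \Cref{eq:ac:averr2} with every occurrence of $C_\phi^2B$ replaced by $B$. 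For term III, $\E[(\eta_t-J(\omega_t))(J(\omega_t)-J(\omega_{t+1}))]$, I would Taylor-expand $J$ at $\omega_t$ along $\omega_{t+1}-\omega_t=\beta_t\theta_t$: the first-order piece $\beta_t\E[(\eta_t-J(\omega_t))\nabla^\top J(\omega_t)\theta_t]$ is at most $\tfrac{\beta_t}{2}\E[(\eta_t-J(\omega_t))^2]+\tfrac{\beta_t}{2}\E[(\nabla^\top J(\omega_t)\theta_t)^2]$ by Young's inequality, and the key point is $(\nabla^\top J(\omega_t)\theta_t)^2\le\twonormsq{\gradj}\twonormsq{\theta_t}\le B^2\twonormsq{\gradj}$ — this is where the gradient norm enters, now with coefficient $B^2\beta_t$ instead of $C_\phi^4B^2\beta_t$ — while the second-order remainder is at most $\tfrac12 R_{\max}L_J B^2\beta_t^2$ via \Cref{Prop:lip} and $\twonorm{\theta_t}\le B$.

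Finally I would collect all the pieces, absorb $(1-\gamma_t)^2+\beta_t\le 1-\gamma_t$ using the hypothesis $\gamma_t-\gamma_t^2\ge\beta_t$, and read off the constants to match the stated $\widetilde G^\eta_t$. The step requiring the most care is term II: getting the conditioning on $\mathcal F_{t-k}$ right and tracking every telescoping step-size sum $\sum_{i=j}^{t-1}\beta_i$ so the constants come out as in $\widetilde G^\eta_t$, together with making sure the gradient-norm contribution in term III appears with exactly the advertised coefficient $B^2$. The remainder is a routine rerun of the AC computation with the $C_\phi^2$-factors originating from $\phi_t^\top(s_t,a_t)\theta_t\phi_t(s_t,a_t)$ simply deleted.
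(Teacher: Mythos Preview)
Your proposal is correct and follows essentially the same approach as the paper: the paper's proof of \Cref{lm:naceta} is precisely a rerun of the AC argument in \Cref{lm:aceta}, with the actor increment $\omega_{t+1}-\omega_t=\beta_t\theta_t$ (so $\twonorm{\omega_{t+1}-\omega_t}\le B\beta_t$) replacing $\beta_t\phi_t^\top(s_t,a_t)\theta_t\phi_t(s_t,a_t)$, which removes the $C_\phi^2$ factors exactly as you describe. Your treatment of terms I, II, III---including the conditioning on $\mathcal F_{t-k}$ and \Cref{lm:tv} for term II, and the Lagrange/Taylor expansion plus Young's inequality producing the $B^2\beta_t\E\ftwonormsq{\gradj}$ contribution for term III---matches the paper's \Cref{eq:nac:avii}, \Cref{eq:nac:avi}, \Cref{eq:nac:aviii} line by line.
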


\begin{proof}
    
Similar to the AC analysis in \Cref{eq:ac:averr} in \Cref{sec:ac:av}, we have that
\eqenv{
\lnormsq{\eta\tit-J(\omega_{t+1}) }
&=\lnormsq{\brac{1-\gamma_t}\brac{\eta_t-J(\omega_t)}
+\gamma_t(R_t-J(\omega_t))+J(\omega_t)-J(\omega_{t+1})  }
\\ &= \brac{1-\gamma_t}^2\lnormsq{\eta_t-J(\omega_t)}
+\gamma_t^2 \lnormsq{R_t-J(\omega_t)} +\lnormsq{J(\omega_t)-J(\omega_{t+1})}
\\&\qquad +2\gamma_t\underbrace{\brac{R_t-J(\omega_t)}\brac{J(\omega_t)-J(\omega_{t+1})}}_{\text{I}}
+ 2\gamma_t\brac{1-\gamma_t}
\underbrace{\brac{\eta_t-J(\omega_t)}\brac{R_t-J(\omega_t)}}_{\text{II}}
\\&\qquad +2\brac{1-\gamma_t}
\underbrace{\brac{\eta_t-J(\omega_t)}\brac{J(\omega_t)-J(\omega_{t+1})}}_{\text{III}}.\label{eq:nac:av1}
}
The term $\lbrac{J(\omega_t)-J(\omega_{t+1})}$ can be bounded using its Lipschitz smoothness in \Cref{Prop:lip}:
\eqenv{
\lbrac{J(\omega_t)-J(\omega\tit)}
 \leq C_J\twonorm{\omega_t-\omega\tit}
 \leq C_J  B \beta_t .
}

Term I in \Cref{eq:nac:av1}  can be bounded as follows: 
\eqenv{
\lbrac{\E\Fbrac{(R_t-J(\omega_t))(J(\omega_t)-J(\omega_{t+1}))}}
 &\leq \E \Fbrac{\lbrac{R_t-J(\omega_t)}\lbrac{J(\omega_t)-J(\omega_{t+1})} }
\leq  BC_J R_{\max}  \beta_t.\label{eq:nac:avii}
}

Term II in \Cref{eq:nac:av1} can be bounded as follows: 
\eqenv{
&\lbrac{\E\Fbrac{(\eta_t-J(\omega_t))(R_t-J(\omega_t))}}
\\& \leq
\lbrac{\E\Fbrac{(\eta_{t-k}-J(\omega_{t-k}))( R_t-J(\omega_t))}}
+\lbrac{\E\Fbrac{(\eta_t-\eta_{t-k}-J(\omega_t)+J(\omega_{t-k}))( R_t-J(\omega_t))}}
\\& \leq \lbrac{\E\Fbrac{\E\Fbrac{{(\eta_{t-k}-J(\omega_{t-k}))(R_t-J(\omega_t))}|\mathcal{F}_{t-k}}}}
+ \E\Fbrac{\lnorm{ \eta_t-\eta_{t-k}-J(\omega_t)+J(\omega_{t-k})}\lnorm{R_t-J(\omega_t)}}
\\&\myineq{\leq }{a}\lbrac{\E\Fbrac{\E\Fbrac{{(\eta_{t-k}-J(\omega_{t-k}))R_t}|\mathcal{F}_{t-k}}-\E_{(s,a)\sim D_t}\Fbrac{{(\eta_{t-k}-J(\omega_{t-k}))R(s,a)}|\mathcal{F}_{t-k}}}}
\\& \qquad+ \E\Fbrac{\lnorm{ \eta_t-\eta_{t-k}-J(\omega_t)+J(\omega_{t-k})}\lnorm{R_t-J(\omega_t)}}
\\& \myineq{\leq}{b}    R^2_{\max}\E\Fbrac{ \tvnorm{\mathbb{P}\brac{(s_t,a_t)|\mathcal{F}_{t-k}},D_t}  }
+ R_{\max}\E\Fbrac{\lnorm{\eta_t-\eta_{t-k} }+\lnorm{J(\omega_t)-J(\omega_{t-k}) }} 
\\ & \myineq{\leq}{c} 
R_{\max}^2\brac{ C_\pi\sum_{j=t-k}^{t-1} \E\ftwonorm{\omega_t-\omega_{j}}+m\rho^k } 
+ R_{\max}\brac{R_{\max}\sum_{j=t-k}^{t-1}\gamma_j+C_J \E\ftwonorm{\omega_t-\omega_{t-k}} }
\\& \leq B R_{\max}^2 C_\pi  \sum_{j=t-k}^{t-1} \sum_{i=j}^{t-1} \beta_i 
+  R^2_{\max} m \rho^k+ R_{\max}^2\sum_{j=t-k}^{t-1}\gamma_j
+  R_{\max} C_J  B \sum_{j=t-k}^{t-1}\beta_j,\label{eq:nac:avi}
}
 where $(a)$ follows from $\E_{(s,a)\sim D_t}\Fbrac{R(s,a)-J(\omega_t)|\mathcal{F}_t}=0 $, $(b)$ follows from  $0\leq\eta_t\leq R_{\max}$, $0\leq J(\omega_t)\leq R_{\max}$,  $0\leq R_t\leq R_{\max}$, and 
 $(c)$ follows from  \Cref{lm:tv}. 


We then bound  term III as follows: 
\eqenv{\label{eq:nac:aviii}
&\lbrac{\E\Fbrac{\brac{\eta_t-J(\omega_t)}\brac{J(\omega_t)-J(\omega_{t+1})}}}
\\& \qquad =\lbrac{\E\Fbrac{(\eta_t-J(\omega_t))\brac{ -\nabla^\top J(\omega_t) (\omega\tit-\omega_t)+ \brac{\omega\tit-\omega_t}^\top\frac{\nabla^2J(\hat \omega_t)}{2}\brac{\omega\tit-\omega_t} }}}
\\& \qquad \leq\lbrac{\E\Fbrac{(\eta_t-J(\omega_t)) \nabla^\top J(\omega_t) (\omega\tit-\omega_t) }}
\\& \qquad\qquad+
\lbrac{\E\Fbrac{(\eta_t-J(\omega_t))\brac{\omega\tit-\omega_t}^\top \frac{\nabla^2J(\hat \omega_t)}{2}\brac{\omega\tit-\omega_t} }}
\\& \qquad =\beta_t \lbrac{\E\Fbrac{(\eta_t-J(\omega_t)) \nabla^\top J(\omega_t) \theta_t  }}
+\beta^2_t\lbrac{\E\Fbrac{\lbrac{\eta_t-J(\omega_t)}  \frac{\twonorm{\nabla^2J(\hat \omega_t)}}{2}\twonormsq{\theta_t} }}
\\ & \qquad \leq \frac{\beta_t}{2}\E\flnormsq{\eta_t-J(\omega_t)}+
\frac{  B^2\beta_t}{2}\E\ftwonormsq{\gradj}
+\frac{ R_{\max}L_J  B^2}{2} \beta_t^2,
}
where the first equation is from the Lagrange's Mean Value theorem for some $\hat{\omega}_t=\lambda\omega_t+(1-\lambda)\omega_{t+1}$, $\lambda\in(0,1)$.

Plug  \Cref{eq:nac:avii}, \Cref{eq:nac:avi}, and \Cref{eq:nac:aviii} in \Cref{eq:nac:av1}, and we have that
\eqenv{\label{eq:nac_eta}
\E&\flnormsq{\eta\tit-J(\omega_{t+1})}
 \leq \brac{(1-\gamma_t)^2+ {\beta_t}}\E\flnormsq{\eta_t-J(\omega_t)}
+ B^2{\beta_t} \E\ftwonormsq{\gradj}
\\ &
+ 2\gamma_t \brac{  B R_{\max}^2 C_\pi  \sum_{j=t-k}^{t-1} \sum_{i=j}^{t-1} \beta_i 
+  R^2_{\max} m \rho^k+ R_{\max}^2 \sum_{j=t-k}^{t-1}\gamma_j
+  R_{\max}C_J  B \sum_{j=t-k}^{t-1}\beta_j}
\\& + R_{\max}^2 \gamma^2_t + C_J^2  B^2 \beta_t^2
+ 2BC_J R_{\max}  \beta_t\gamma_t 
+R_{\max}L_J B^2 \beta_t^2.
} In \Cref{eq:nac_eta}, we use the fact that $1-\gamma_t\leq 1$. 
This completes the proof. 
\end{proof}


\subsection{Tracking Error Analysis of NAC}\label{sec:nc3}

In this section, we bound the tracking error $\theta_t-\thestar$ for NAC.  Define
\eqenv{
    \widetilde G^\theta_t =&
   \brac{ 8  +\frac{ 2 }{\beta_t}+ \beta_t C_\Theta   }\brac{\frac{C_{\text{gap}}m\rho^k}{\lambda_{\min}}}^2
+ (k+1)^2U_\delta^2 C_\phi^2 \alpha_t^2
+\frac{4(k+1) C_\phi U_\delta C_{\text{gap}}m\rho^k}{\lambda_{\min}}\alpha_t
 \\& +2(k+1) C_\phi B U_\delta C_\Theta \alpha_t\beta_t
+ 2\alpha_t G_t^\delta+2 C_\Theta^2  B^2 \beta_t^2
.} 
\begin{lemma}\label{lm:nactheta}
If we set the step size satisfies that $\beta_t\leq \frac{\bar\lambda_{\min}}{4(2C_\Theta+1)}\alpha_t$, then it holds that
\eqenv{\label{eq:nac:thetaitera}
\E \ftwonormsq{ \theta\tit-\theta^*_{t+1}}
&\leq \brac{1-\frac{\bar \lambda_{\min}\alpha_t}{2}  }
  \E \ftwonormsq{\theta_t-\thestar}
 \\& +  \frac{C_\Theta}{\lambda^2_{\min}} \beta_t\E\ftwonormsq{\gradj}
   +\frac{(k+1)^2 C^2_\phi}{\bar \lambda_{\min}}\alpha_t\E\flnormsq{J(\omega_t)-\eta_t} +\widetilde G_t^\theta.
}
\end{lemma}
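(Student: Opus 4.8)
The plan is to mirror the AC tracking-error recursion (Lemma~\ref{lm:actrackingerror}), but to exploit the fact that the NAC actor step $\omega_{t+1}-\omega_t=\beta_t\theta_t$ is simpler than the AC step and, crucially, $\mathcal F_t$-measurable, so it carries \emph{no} Markovian noise and no Hessian of $J$ is ever needed (hence parts 4)--5) of Assumption~\ref{asm:smooth} are irrelevant, consistent with the Remark after Theorem~\ref{thm:nac}). First I would use non-expansiveness of $\Pi_{2,B}$ together with $\|\theta^*_{t+1}\|\le B$ (Lemma~\ref{lm:lm6} and the choice of $B$) and the TD update in Line~8 of Algorithm~\ref{alg:AC} to expand, after taking expectations,
\[
\E\twonormsq{\theta_{t+1}-\theta^*_{t+1}}\le \E\twonormsq{\theta_t-\thestar}+\alpha_t^2\E\twonormsq{\delta_t z_t}+\E\twonormsq{\thestar-\thestara}+2\alpha_t\,\mathrm{I}+2\alpha_t\,\mathrm{II}+2\,\mathrm{III},
\]
where $\mathrm{I}=\E\vbrac{\theta_t-\thestar,\delta_t z_t}$, $\mathrm{II}=\E\vbrac{\delta_t z_t,\thestar-\thestara}$, $\mathrm{III}=\E\vbrac{\theta_t-\thestar,\thestar-\thestara}$. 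The routine pieces are $\alpha_t^2\E\twonormsq{\delta_t z_t}\le (k+1)^2C_\phi^2U_\delta^2\alpha_t^2$ (from $\|z_t\|\le (k+1)C_\phi$ and $|\delta_t|\le U_\delta$), and, via $\thestar-\thestara=(\thebar-\thebara)+(\thestar-\thebar)+(\thebara-\thestara)$ with Lemma~\ref{lm:acnacinnersmooth}, Proposition~\ref{prop:bound} and $\|\omega_{t+1}-\omega_t\|\le B\beta_t$, the bound $\twonorm{\thestar-\thestara}\le C_\Theta B\beta_t+\tfrac{2C_{\text{gap}}m\rho^k}{\lambda_{\min}}$, hence $\E\twonormsq{\thestar-\thestara}\le 2C_\Theta^2B^2\beta_t^2+8(C_{\text{gap}}m\rho^k/\lambda_{\min})^2$. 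Term $\mathrm{II}$ is handled by Cauchy--Schwarz with the same two estimates, yielding $2(k+1)C_\phi BU_\delta C_\Theta\alpha_t\beta_t+\tfrac{4(k+1)C_\phi U_\delta C_{\text{gap}}m\rho^k}{\lambda_{\min}}\alpha_t$. Term $\mathrm{I}$ is exactly the quantity bounded in Lemma~\ref{lm:consterm} (with the NAC form of $G^\delta_t$): it supplies the contraction $-\tfrac{\bar\lambda_{\min}}{2}\E\twonormsq{\theta_t-\thestar}$, the cross term $\tfrac{(k+1)^2C_\phi^2}{2\bar\lambda_{\min}}\E\flnormsq{J(\omega_t)-\eta_t}$, and $G^\delta_t$; multiplied by $2\alpha_t$ these give the $\E|J(\omega_t)-\eta_t|^2$ coefficient in the statement and the $2\alpha_t G^\delta_t$ summand of $\widetilde G^\theta_t$.

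The only substantial work is term $\mathrm{III}=\E\vbrac{\theta_t-\thestar,\thestar-\thestara}$, and it is the single place where the natural-gradient structure intervenes. I would split it along the same three-way decomposition of $\thestar-\thestara$. For the pieces $\vbrac{\theta_t-\thestar,\thestar-\thebar}$ and $\vbrac{\theta_t-\thestar,\thebara-\thestara}$, Young's inequality with weight $\beta_t$ and Proposition~\ref{prop:bound} give at most $2\beta_t\E\twonormsq{\theta_t-\thestar}+\tfrac{2}{\beta_t}(C_{\text{gap}}m\rho^k/\lambda_{\min})^2$. For the piece $\vbrac{\theta_t-\thestar,\thebar-\thebara}$, I would use $\twonorm{\thebar-\thebara}\le C_\Theta\beta_t\twonorm{\theta_t}$ (Lemma~\ref{lm:acnacinnersmooth}), then $\twonorm{\theta_t}\le \twonorm{\theta_t-\thestar}+\twonorm{\thestar-\thebar}+\twonorm{\thebar}$, and — the key identity — Proposition~\ref{prop:npg}, $\thebar=F_t^{-1}\nabla J(\omega_t)$, so $\twonorm{\thebar}\le \twonorm{\gradj}/\lambda_{\min}$. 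A final Young's inequality on the $\twonorm{\theta_t-\thestar}\twonorm{\gradj}$ cross term, balanced so that its $\nabla J$-contribution has coefficient exactly $\tfrac{C_\Theta}{\lambda^2_{\min}}\beta_t$, converts this piece into $(2C_\Theta+C_\Theta)\beta_t\E\twonormsq{\theta_t-\thestar}+\tfrac{C_\Theta}{\lambda^2_{\min}}\beta_t\E\twonormsq{\gradj}+\beta_tC_\Theta(C_{\text{gap}}m\rho^k/\lambda_{\min})^2$. Summing the three pieces, the residual $\beta_t\E\twonormsq{\theta_t-\thestar}$ mass in $2\,\mathrm{III}$ is $2(2C_\Theta+1)\beta_t$, and the $(C_{\text{gap}}m\rho^k/\lambda_{\min})^2$ residues assemble into the prefactor $8+\tfrac{2}{\beta_t}+\beta_tC_\Theta$.

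Collecting coefficients, the multiplier of $\E\twonormsq{\theta_t-\thestar}$ is $1-\bar\lambda_{\min}\alpha_t+2(2C_\Theta+1)\beta_t$, which under the hypothesis $\beta_t\le \tfrac{\bar\lambda_{\min}}{4(2C_\Theta+1)}\alpha_t$ is at most $1-\tfrac{\bar\lambda_{\min}\alpha_t}{2}$; the multiplier of $\E\flnormsq{J(\omega_t)-\eta_t}$ is $\tfrac{(k+1)^2C_\phi^2}{\bar\lambda_{\min}}\alpha_t$; that of $\E\twonormsq{\gradj}$ is $\tfrac{C_\Theta}{\lambda^2_{\min}}\beta_t$; and every remaining bias/quadratic-step term sums to exactly $\widetilde G^\theta_t$. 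I expect the main obstacle to be purely bookkeeping: tuning the constants in the several Young's inequalities within term $\mathrm{III}$ so the $\nabla J$ coefficient lands precisely on $\tfrac{C_\Theta}{\lambda^2_{\min}}$ while the leftover $\E\twonormsq{\theta_t-\thestar}$ mass is exactly $2(2C_\Theta+1)\beta_t$, which is what forces the stated step-size restriction; everything else is a direct simplification of the AC argument with the actor-noise and Hessian terms deleted.
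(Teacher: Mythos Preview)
Your proposal is correct and takes essentially the same approach as the paper: identical expansion into terms $\mathrm{I}$--$\mathrm{III}$, Lemma~\ref{lm:consterm} for $\mathrm{I}$, Cauchy--Schwarz for $\mathrm{II}$, and for $\mathrm{III}$ the same three-way split $\thestar-\thestara=(\thebar-\thebara)+(\thestar-\thebar)+(\thebara-\thestara)$ with the key identity $\thebar=F_t^{-1}\nabla J(\omega_t)$ (Proposition~\ref{prop:npg}) replacing the Hessian analysis from the AC case. The minor arithmetic slips you flagged are indeed pure bookkeeping---with the paper's Young's weights the $\thebar-\thebara$ piece contributes $2C_\Theta\beta_t$ (not $3C_\Theta\beta_t$) and the two gap pieces contribute $\beta_t$ (not $2\beta_t$), summing in $2\,\mathrm{III}$ to exactly $2(2C_\Theta+1)\beta_t$ and $\tfrac{C_\Theta}{\lambda_{\min}^2}\beta_t$ as stated.
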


\begin{proof}
From the update rule of \Cref{alg:AC}, we have that
\eqenv{\label{eq:eq96}
\twonormsq{\theta\tit-\thestara}&=
\twonormsq{\Pi_B\brac{\theta_t+\alpha_t \delta_t z_t}-\thestara}
\\&\myineq{\leq}{a}\twonormsq{ \theta_t+\alpha_t \delta_t z_t-\thestara}
\\ & \leq \twonormsq{\theta_t+\alpha_t \delta_t z_t-\thestar+\thestar-\thestara}
\\ & = \twonormsq{\theta_t-\thestar}+ \alpha_t^2 \twonormsq{\delta_t z_t}+\twonormsq{ \thestar-\thestara}
+ 2 \alpha_t\vbrac{\theta_t-\thestar, \delta_t z_t}
\\&\quad + 2\alpha_t \vbrac{\delta_t z_t, \thestar-\thestara }
+ 2 \vbrac{\theta_t-\thestar, \thestar-\thestara},
}
where $(a)$ follows from the fact $\twonorm{\Pi_B(x)-y}\leq \twonorm{x-y} $ when $\twonorm{y}\leq B $ and $\twonorm{\theta^*_{t+1}}\leq B$.

Taking expectations on both sides of \Cref{eq:eq96}, we have that
\eqenv{ \label{eq:nac:theta1}
\E&\ftwonormsq{\theta\tit-\thestara}  \leq \E\ftwonormsq{\theta_t-\thestar}+ \alpha_t^2 \E\ftwonormsq{\delta_t z_t}+\E\ftwonormsq{ \thestar-\thestara}
\\&\quad+ 2 \alpha_t\underbrace{\E\fvbrac{\theta_t-\thestar, \delta_t z_t}}_{\text{I}}
 + 2\alpha_t \underbrace{\E \fvbrac{\delta_t z_t, \thestar-\thestara }}_{\text{II}}
+ 2 \underbrace{\E\fvbrac{\theta_t-\thestar, \thestar-\thestara}}_{\text{III}}.
}

For the term $\twonorm{\thestar-\thestara}$, we have that
\eqenv{
\twonorm{\thestar-\thestara}
& =\twonorm{\thebar-\thebara+ \thestar-\thebar-\thestara+\thebara}
\\& \leq \twonorm{\thebar-\thebara}+\twonorm{ \thestar-\thebar}+ \twonorm{\thestara-\thebara}
\\& \myineq{\leq}{a} 
 C_\Theta\twonorm{\omega_t-\omega\tit}+\frac{2C_{\text{gap}}m\rho^k}{\lambda_{\min}}
\\ & = \beta_t C_\Theta \twonorm{\theta_t }+\frac{2C_{\text{gap}}m\rho^k}{\lambda_{\min}}
\\ & \leq C_\Theta  B \beta_t+\frac{2C_{\text{gap}}m\rho^k}{\lambda_{\min}},
}
where $(a)$ follows from \Cref{lm:acnacinnersmooth} and  \Cref{prop:bound}. Hence we have that
\eqenv{
\E\ftwonormsq{\thestar-\thestara }
\leq  2 C^2_\Theta  B^2 \beta^2_t+\frac{8C^2_{\text{gap}}m^2\rho^{2k}}{\lambda^2_{\min}}.
}

By Lemma \ref{lm:consterm},  term I in \Cref{eq:nac:theta1} can be bounded as
\eqenv{
\E\fvbrac{\theta_t-\thestar, \delta_t z_t}
\leq& - \frac{\bar \lambda_{\min}}{2} \E \ftwonormsq{\theta_t-\thestar}
+\frac{(k+1)^2 C^2_\phi}{2\bar \lambda_{\min}}\E\flnormsq{J(\omega_t)-\eta_t}+G^\delta_t.
}

For  term II in \Cref{eq:nac:theta1}, we have that
\eqenv{
\E \fvbrac{ \delta_t z_t, \thestar-\thestara}
& \leq \E\Fbrac{\twonorm{\delta_t z_t} \twonorm{ \thestar-\thestara} }
\\& \leq (k+1) C_\phi U_\delta\E\Fbrac{\twonorm{\thebara-\thebar }+\twonorm{ \thestar-\thebar}+\twonorm{\thestara-\thebara } }
\\& \myineq{\leq}{a} (k+1) C_\phi U_\delta C_\Theta \E\ftwonorm{\omega\tit-\omega_t }+ \frac{2(k+1) C_\phi U_\delta C_{\text{gap}}m\rho^k}{\lambda_{\min}}
\\ &\leq (k+1) C_\phi B U_\delta C_\Theta \beta_t
+ \frac{2(k+1) C_\phi U_\delta C_{\text{gap}}m\rho^k}{\lambda_{\min}}, 
} where $(a)$ follows from \Cref{lm:acnacinnersmooth} and  \Cref{prop:bound}.

For term III in \Cref{eq:nac:theta1}, we have  that
\eqenv{
&\E\fvbrac{\theta_t-\thestar, \thestar-\thestara }
 \\& = \E\fvbrac{\theta_t-\thestar, \thebar-\thebara } +\E\fvbrac{\theta_t-\thestar, \thestar-\thebar }
 +\E\fvbrac{\theta_t-\thestar, \thebara-\thestara }
 \\&\leq \E\Fbrac{\twonorm{\theta_t-\thestar}\twonorm{\thebar-\thebara}  } +\E\Fbrac{\twonorm{\theta_t-\thestar }\twonorm{\thebar-\thestar}  }
 +\E\Fbrac{\twonorm{\theta_t-\thestar}\twonorm{\thebara-\thestara }  }
 \\&\myineq{\leq}{a} C_\Theta\E\Fbrac{\twonorm{\theta_t-\thestar} \twonorm{\omega\tit-\omega_t}  }+ \frac{\beta_t}{2}\E\ftwonormsq{\theta_t-\thestar}+ \frac{1}{2\beta_t}\E\ftwonormsq{\thestar-\thebar}
 \\& \qquad + \frac{\beta_t}{2}\E\ftwonormsq{\theta_t-\thestar}+ \frac{1}{2\beta_t}\E\ftwonormsq{\thestara-\thebara}
 \\& \myineq{\leq}{b}C_\Theta\E\Fbrac{\twonorm{\theta_t-\thestar}  \twonorm{\omega\tit-\omega_t} }+ \beta_t \E\ftwonormsq{\theta_t-\thestar} +\frac{1}{\beta_t}\brac{\frac{C_{\text{gap}}m\rho^k}{\lambda_{\min}} }^2
\\& =C_\Theta \beta_t \E\Fbrac{\twonorm{\theta_t-\thestar }\twonorm{ \theta_t}  }
+ \beta_t \E\ftwonormsq{\theta_t-\thestar} +\frac{1}{\beta_t}\brac{\frac{C_{\text{gap}}m\rho^k}{\lambda_{\min}} }^2
\\& \leq  C_\Theta \beta_t  \E\Fbrac{\twonorm{\theta_t-\thestar}\twonorm{\thebar} }
+ C_\Theta \beta_t  \E\ftwonormsq{\theta_t-\thestar}+ C_\Theta \beta_t  \E\Fbrac{\twonorm{\theta_t-\thestar} \twonorm{ \thestar-\thebar }}
\\& \qquad+ \beta_t \E\ftwonormsq{\theta_t-\thestar} +\frac{1}{\beta_t}\brac{\frac{C_{\text{gap}}m\rho^k}{\lambda_{\min}} }^2
\\& \leq \frac{1}{2} C_\Theta \beta_t \E \Fbrac{\twonormsq{\theta_t-\thestar } }+\frac{1}{2} C_\Theta \beta_t \E \Fbrac{\twonormsq{ \thebar } }
+C_\Theta \beta_t \E\ftwonormsq{\theta_t-\thestar}
+\frac{1}{2} C_\Theta \beta_t \E \Fbrac{\twonormsq{\theta_t-\thestar } }
\\& \qquad +\frac{1}{2} C_\Theta \beta_t \E \Fbrac{\twonormsq{\thestar-\thebar } } + \beta_t \E\ftwonormsq{\theta_t-\thestar} +\frac{1}{\beta_t}\brac{\frac{C_{\text{gap}}m\rho^k}{\lambda_{\min}} }^2
\\& \myineq{\leq}{c} \frac{1}{2} C_\Theta \beta_t \E \Fbrac{\twonormsq{\theta_t-\thestar } }
+ \frac{1}{2} C_\Theta\beta_t  \E\ftwonormsq{\brac{ F_t}^{-1}\gradj }
  + C_\Theta\beta_t \E\ftwonormsq{\theta_t-\thestar}
  \\&\qquad+\frac{C_\Theta}{2}\beta_t  \E \Fbrac{\twonormsq{\theta_t-\thestar } } + \frac{C_\Theta\beta_t}{2} \brac{\frac{C_{\text{gap}}m\rho^k}{\lambda_{\min}} }^2+\beta_t \E\ftwonormsq{\theta_t-\thestar} 
 +\frac{1}{\beta_t}\brac{\frac{C_{\text{gap}}m\rho^k}{\lambda_{\min}} }^2
 \\ & \myineq{\leq}{d} \brac{2C_\Theta+1} \beta_t\E \Fbrac{\twonormsq{\theta_t-\thestar } }
 + \frac{C_\Theta}{2 \lambda_{\min}^2}\beta_t
\E\ftwonormsq{ \gradj}
+\brac{\frac{\beta_t C_\Theta}{2} +\frac{1}{\beta_t}}\brac{\frac{C_{\text{gap}}m\rho^k}{\lambda_{\min}} }^2
,
} where $(a)$ follows from \Cref{lm:acnacinnersmooth}; $(b)$ follows from \Cref{prop:bound}; $(c)$ follows from \Cref{prop:pg}, \Cref{prop:npg}, and \Cref{prop:bound}; $(d)$ follows from that $\twonorm{F_t^{-1}}\leq \frac{1}{\lambda_{\min}}$.

Plugging the above bounds   in \Cref{eq:nac:theta1}, we have  that
\eqenv{
\E&\ftwonormsq{\theta\tit-\thestara} 
  \\ & \leq \brac{1-\bar \lambda_{\min}\alpha_t  +\brac{4C_\Theta +2}\beta_t  }
  \E \ftwonormsq{\theta_t-\thestar}
  + \frac{C_\Theta}{\lambda^2_{\min}} \beta_t\E\ftwonormsq{\gradj}
  \\& +\frac{(k+1)^2 C^2_\phi}{\bar \lambda_{\min}}\alpha_t\E\flnormsq{J(\omega_t)-\eta_t}
  +\brac{ 8  +\frac{ 2 }{\beta_t}+ \beta_t C_\Theta   }\brac{\frac{C_{\text{gap}}m\rho^k}{\lambda_{\min}}}^2
+ (k+1)^2U_\delta^2 C_\phi^2 \alpha_t^2
\\&+\frac{4(k+1) C_\phi U_\delta C_{\text{gap}}m\rho^k}{\lambda_{\min}}\alpha_t
  +2(k+1) C_\phi B U_\delta C_\Theta \alpha_t\beta_t
+ 2\alpha_t G_t^\delta+2 C_\Theta^2  B^2 \beta_t^2
.}

This hence completes the proof. 
\end{proof}

\subsection{Sample Complexity of NAC}

In the following, we set $\alpha_t=\alpha$, $\beta_t=\beta$, and $\gamma_t=\gamma$ for any $t\geq 0$. We denote by $\D(\omega_t)=-\E_{D_{\pi^*}}\Fbrac{ \log\frac{\pi_{\omega_t}(a|s)}{\pi^*(a|s)}}$. Recall that $\hat t =\lceil \frac{3\log T}{\bar \lambda_{\min} \alpha}\rceil$. We first have the following lemma. 

\begin{lemma}\label{lemma:13}
Denote by $\widetilde T={\myceil{\frac{T}{\hat t\log T}}}\hat t\geq \frac{T}{\log T}$, then for any $t'\leq T-\widetilde T$, it holds that
\eqenv{
  \min_{t\leq T} &\E\Fbrac{J(\pi^*)-J(\omega_t)}
    \\&\leq \frac{\D(\omega_{t'})-\D(\omega_{t'+\widetilde T}) }{\beta \widetilde T}
    + \frac{C_\phi C_{\text{gap}} m \rho^k }{\lambda_{\min}}
    +2 C_\infty\sqrt{\varepsilon_{\text{actor}} }
     +\sqrt{2}C_\phi \brac{\frac{2e C_\phi^3 C_M  \hat t}{\widetilde T} }+ \sqrt{2}C_\phi \sqrt{\frac{4 B^2+  R_{\max}^2}{\widetilde T} }
   \\& 
     +\sqrt{2}C_\phi \sqrt{ \frac{1}{\widetilde T}\sum_{t=t'}^{t'+\widetilde T-1}  \sum_{j=t-\hat t}^{t-1} (1-q)^{t-j-1}\brac{\widetilde G^\omega_j +\widetilde G_j^\theta+  \widetilde G_j^\eta } }
     +2C_\phi \sqrt{ C_\phi^2 C_M\frac{e \hat t}{\widetilde T}\frac{\D(\omega_{t'})-\D(\omega_{t'+\widetilde T}) }{\beta \widetilde T}
     }
     \\&+2C_\phi  \sqrt{\frac{e \hat t C_\phi^2 C_M }{\widetilde T}  \brac{\frac{C_\phi C_{\text{gap}} m \rho^k }{\lambda_{\min}}
+\frac{B^2L_\phi}{2}\beta} }
     +\frac{B^2L_\phi}{2}\beta
    +\frac{2R_{\max}}{\sqrt T}
    .
}    
\end{lemma}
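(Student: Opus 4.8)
The plan is to run a natural-policy-gradient mirror-descent argument against the KL divergence $\D(\omega)=-\E_{D_{\pi^*}}[\log(\pi_\omega/\pi^*)]$, then feed the already-established critic and average-reward recursions into it, and finally break the resulting circular system of inequalities by an $\hat t$-block bootstrap. \textbf{Step 1 (one-step NPG descent).} Since $\phi_\omega=\nabla_\omega\log\pi_\omega$ is $L_\phi$-Lipschitz (\Cref{asm:smooth}, part 1), $\D(\omega)$ is $L_\phi$-smooth with $\nabla\D(\omega)=-\E_{D_{\pi^*}}[\phi_\omega(s,a)]$; plugging the NAC update $\omega_{t+1}-\omega_t=\beta\theta_t$ (with $\|\theta_t\|\le B$) into the descent lemma gives $\D(\omega_{t+1})\le\D(\omega_t)-\beta\,\E_{D_{\pi^*}}[\phi_{\omega_t}^\top\theta_t]+\tfrac{L_\phi B^2}{2}\beta^2$. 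I would then split $\E_{D_{\pi^*}}[\phi_{\omega_t}^\top\theta_t]$ into four parts: $\E_{D_{\pi^*}}[A^{\pi_{\omega_t}}]=J(\pi^*)-J(\omega_t)$ by the average-reward performance-difference lemma (using the score identity $\E_{a\sim\pi_{\omega_t}}[\phi_{\omega_t}]=0$, so that $\phi_{\omega_t}^\top\bar\theta^*_{\omega_t}$ is also the $L^2(D_{\pi_{\omega_t}})$-projection of the advantage, hence \Cref{eq:thetabar} controls the advantage fit); the resulting approximation term $\le C_\infty\sqrt{\varepsilon_{\text{actor}}}$ after a change of measure $D_{\pi^*}\!\to\!D_{\pi_{\omega_t}}$ via \Cref{asm:explore} and Cauchy--Schwarz; the bias term $\le C_\phi\|\theta^*_{\omega_t}-\bar\theta^*_{\omega_t}\|\le C_\phi C_{\text{gap}}m\rho^k/\lambda_{\min}$ by \Cref{prop:bound}; and the tracking term $\le C_\phi\sqrt{\E[\|\theta_t-\theta^*_{\omega_t}\|^2]}$. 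Averaging over $t=t',\dots,t'+\widetilde T-1$, dividing by $\beta\widetilde T$, using $\min_{t\le T}\E[J(\pi^*)-J(\omega_t)]\le\tfrac1{\widetilde T}\sum_{t=t'}^{t'+\widetilde T-1}\E[J(\pi^*)-J(\omega_t)]$ and Jensen ($\tfrac1{\widetilde T}\sum\sqrt{\cdot}\le\sqrt{\tfrac1{\widetilde T}\sum\cdot}$) produces the telescoping term $\tfrac{\D(\omega_{t'})-\D(\omega_{t'+\widetilde T})}{\beta\widetilde T}$, the constants $\tfrac{C_\phi C_{\text{gap}}m\rho^k}{\lambda_{\min}}$, $2C_\infty\sqrt{\varepsilon_{\text{actor}}}$ and $\tfrac{B^2L_\phi}{2}\beta$, and a residual $\sqrt2\,C_\phi\sqrt{\tfrac1{\widetilde T}\sum_t M_t}$ with $M_t=\E[\|\theta_t-\theta^*_t\|^2]+\E[(\eta_t-J(\omega_t))^2]$.

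\textbf{Step 2 (bounding $\tfrac1{\widetilde T}\sum_t M_t$).} This is the substantive part. Combining \Cref{lm:nactheta} (contraction $1-\bar\lambda_{\min}\alpha/2$, driven by $\beta\E[\|\nabla J(\omega_t)\|^2]$, $(k{+}1)^2\alpha\E[(\eta_t-J(\omega_t))^2]$ and $\widetilde G^\theta_t$) with \Cref{lm:naceta} (contraction $1-\gamma$, driven by $\beta\E[\|\nabla J(\omega_t)\|^2]$ and $\widetilde G^\eta_t$), and using $\gamma\ge\alpha\ge\beta$ to absorb the cross terms, I would obtain a single recursion $M_{t+1}\le(1-q)M_t+c\,\beta\E[\|\nabla J(\omega_t)\|^2]+\widetilde G^\omega_t+\widetilde G^\theta_t+\widetilde G^\eta_t$ with $q=\Theta(\bar\lambda_{\min}\alpha)$. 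Unrolling this only $\hat t$ steps makes the memory term negligible: $(1-q)^{\hat t}M_{t-\hat t}\le(1-q)^{\hat t}(4B^2+R_{\max}^2)\le T^{-3/2}(4B^2+R_{\max}^2)$ by the choice $\hat t=\lceil 3\log T/(\bar\lambda_{\min}\alpha)\rceil$, and it leaves exactly the weighted sum $\sum_{j=t-\hat t}^{t-1}(1-q)^{t-j-1}(\widetilde G^\omega_j+\widetilde G^\theta_j+\widetilde G^\eta_j)$ appearing in the statement. For the $\beta\E[\|\nabla J(\omega_j)\|^2]$ contribution I would invoke the NAC gradient-norm bound \Cref{eq:94}, or equivalently its window-averaged form $\tfrac1{\widetilde T}\sum_j\E[\|\nabla J(\omega_j)\|^2]\le\tfrac{2C_\phi^2(J(\omega^*)-\E[J(\omega_{t'})])}{\beta\widetilde T}+O(\tfrac1{\widetilde T}\sum_j\E[\|\theta_j-\theta^*_j\|^2])+O(m\rho^k+\beta)$; the feedback $O(\tfrac1{\widetilde T}\sum M)$ is re-absorbed into $\tfrac1{\widetilde T}\sum M$ by taking $\beta$ small relative to $\bar\lambda_{\min}\alpha/C_\phi^4$ (the role of the step-size hypotheses of \Cref{lm:nactheta,lm:naceta}), and $J(\omega^*)-\E[J(\omega_{t'})]$ is traced back to the same $\D$-difference plus $m\rho^k,\beta$ using Step 1 once more. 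The net estimate is $\tfrac1{\widetilde T}\sum_tM_t\lesssim \tfrac{4B^2+R_{\max}^2}{\widetilde T}+\tfrac{e\hat t}{\widetilde T}C_\phi^2C_M\big(\tfrac{\D(\omega_{t'})-\D(\omega_{t'+\widetilde T})}{\beta\widetilde T}+\tfrac{C_\phi C_{\text{gap}}m\rho^k}{\lambda_{\min}}+\tfrac{B^2L_\phi}{2}\beta\big)+\big(\tfrac{2eC_\phi^3C_M\hat t}{\widetilde T}\big)^2+\tfrac1{\widetilde T}\sum_t\sum_{j=t-\hat t}^{t-1}(1-q)^{t-j-1}(\widetilde G^\omega_j+\widetilde G^\theta_j+\widetilde G^\eta_j)$, with $C_M$ the uniform bound on $M_t$ used to control the warm-up block.

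\textbf{Step 3 (assembling) and the main obstacle.} I would then substitute Step 2 into $\sqrt2\,C_\phi\sqrt{\tfrac1{\widetilde T}\sum_tM_t}$, split the root via $\sqrt{x_1+\cdots+x_5}\le\sum_i\sqrt{x_i}$, and read off the five square-root terms of the claim (the $\big(\tfrac{2eC_\phi^3C_M\hat t}{\widetilde T}\big)^2$ summand contributes the only \emph{linear} term, $\sqrt2 C_\phi\cdot\tfrac{2eC_\phi^3C_M\hat t}{\widetilde T}$); the leftover $\tfrac{2R_{\max}}{\sqrt T}$ absorbs the $\widetilde T$-versus-$T$ rounding in $\widetilde T=\lceil T/(\hat t\log T)\rceil\hat t$ and the $j\in[t'-\hat t,t']$ boundary contributions. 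The main obstacle is the three-way circularity — optimality gap $\leftarrow$ $\D$-descent $+$ $M_t$; $M_t$ $\leftarrow$ gradient norm (\Cref{lm:nactheta,lm:naceta}); gradient norm $\leftarrow$ optimality-gap proxy $+$ $M_t$ (\Cref{eq:94}) — which can be cut only by (i) the $\hat t$-step unrolling, so nothing has to be tracked back to $t=0$ and the memory term is $T^{-3/2}$-small, and (ii) a precise calibration of the relative magnitudes of $\alpha,\beta,\gamma$ (exactly as in the hypotheses of \Cref{lm:nactheta,lm:naceta}) so that every self-feedback coefficient is at most $\tfrac12$. On top of this one must carry the Markovian-noise / time-varying-feature error terms $\widetilde G^\omega,\widetilde G^\theta,\widetilde G^\eta$ — whose construction via the auxiliary eligibility traces and \Cref{lm:consterm} is itself the technical heart of the paper — through all of these manipulations while keeping the $k=O(\log T)$ factors under control; getting the $\D$-difference to propagate cleanly into the tracking-error bound rather than picking up a $1/\beta$ blow-up is the delicate point, and is precisely why the block length $\hat t$ and the constant $C_M$ enter.
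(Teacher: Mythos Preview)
Your Step 1 and the combination of \Cref{lm:nactheta} and \Cref{lm:naceta} into a single $(1-q)$-contraction for $M_t$ match the paper. The genuine gap is in how you close the circularity in Step 2.

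After the $\hat t$-step unrolling and summation over $t\in[t',t'+\widetilde T-1]$, the $\E[J(\omega_{j+1})]-\E[J(\omega_j)]$ contributions (coming from \Cref{eq:94}) do \emph{not} telescope to the $\D$-difference over $[t',t'+\widetilde T]$. The paper performs an index swap to obtain $\sum_{i=0}^{\hat t-1}(1-q)^i\big(\E[J(\omega_{t'+\widetilde T-i-1})]-\E[J(\omega_{t'-i-1})]\big)$ and then bounds each summand above by $J(\pi^*)-\E[J(\omega_{t'-i-1})]$. What remains is the \emph{earlier}-block average $X_{t'}=\tfrac{1}{\hat t}\sum_{j=t'-\hat t}^{t'-1}\big(J(\pi^*)-\E[J(\omega_j)]\big)$, not a single-point $J(\omega^*)-\E[J(\omega_{t'})]$. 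Your proposal to ``trace this back to the same $\D$-difference using Step 1 once more'' does not work: Step 1 applied to $[t'-\hat t,t'-1]$ yields a $\D$-difference over a \emph{different} window and, worse, reintroduces $C_\phi\sqrt{M_j}$ for $j<t'$, so the self-feedback is not removed and cannot be absorbed by a step-size choice alone.

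The paper breaks this loop by a \emph{case analysis} supported by a pigeonhole lemma (\Cref{lm:lm14}): setting $Y_{t'}=\tfrac{1}{\widetilde T}\sum_{t=t'}^{t'+\widetilde T-1}\big(J(\pi^*)-\E[J(\omega_t)]\big)$, either (Case 1) $eY_{t'}<X_{t'}$ for every admissible $t'$, in which case iterating \Cref{lm:lm14} forces $Y$ to decay by a factor $e$ at each block step, yielding $\min_t\E[J(\pi^*)-J(\omega_t)]\le R_{\max}/\sqrt T$; or (Case 2) some $t'$ satisfies $X_{t'}\le eY_{t'}$, which turns the bound on $Z_{t'}=\tfrac{1}{\widetilde T}\sum M_t$ into $Z_{t'}\le 2C_\phi^2C_M\tfrac{e\hat t}{\widetilde T}Y_{t'}+\cdots$, and combined with $Y_{t'}\le\cdots+C_\phi\sqrt{Z_{t'}}$ gives a quadratic inequality in $\sqrt{Z_{t'}}$ solved by AM-GM (this is where the $\big(\tfrac{2eC_\phi^3C_M\hat t}{\widetilde T}\big)^2$ term arises). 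The $\tfrac{2R_{\max}}{\sqrt T}$ in the lemma is therefore the Case-1 output, not a rounding artifact as you suggest. Finally, a minor correction: $C_M=\tfrac{C_\Theta}{\lambda_{\min}^2}+B^2$ is the coefficient of $\beta\E\|\nabla J\|^2$ in the combined recursion, not the uniform bound on $M_t$ (that bound is $4B^2+R_{\max}^2$).
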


\begin{proof}
 Recall that $\pi^*=\arg\max_\pi J(\pi)$ and $A^{\pi_t}=Q^{\pi_t}(s,a)-V^{\pi_t}(s)$. We first have that
\eqenv{
\D&(\omega_t)-\D(\omega\tit)
= \E _{D_{\pi^*}}\Fbrac{\log \pi \tit (a|s)-\log\pi_t (a|s)  }
\\& \myineq{\geq}{a} \E _{D_{\pi^*}}\Fbrac{\nabla^\top \log \pi_t (a|s) }
\brac{\omega\tit-\omega_t}-\frac{L_\phi}{2}\twonormsq{\omega\tit-\omega_t }
\\&= \beta \E _{D_{\pi^*}}\Fbrac{\phi_t^\top (s,a)\theta_t}- \frac{L_\phi\beta^2}{2}\twonormsq{\theta_t}
\\ & \geq \beta \E _{D_{\pi^*}}\Fbrac{A^{\pi_t}(s,a)}
 +\beta \E _{D_{\pi^*}} \Fbrac{\phi_t^\top(s,a)\thestar-A^{\pi_t}(s,a) }
 +\beta \E _{D_{\pi^*}} \Fbrac{\phi_t^\top(s,a)(\theta_t-\thestar) }
  -\frac{ B^2L_\phi\beta^2}{2}
 \\& \myineq{=}{b} \beta\brac{J(\pi^*)-J(\pi_t) }
 +\beta \underbrace{\E _{D_{\pi^*}} \Fbrac{\phi_t^\top(s,a)\thestar-A^{\pi_t}(s,a) }}_{\text{I}}
    +\beta \underbrace{\E _{D_{\pi^*}} \Fbrac{\phi_t^\top(s,a)(\theta_t-\thestar)}}_{\text{II}}
 -\frac{ B^2L_\phi\beta^2}{2},\label{eq:115}
}
where $(a)$ follows from that 
\eqenv{
&\twonorm{\nabla\E_{D_{\pi^*}}[\log \pi_{\omega}(a|s)]-\nabla\E_{D_{\pi^*}}[\log \pi_{\omega'}(a|s)]}
\\&\qquad=\twonorm{\E_{D_{\pi^*}}[\phi_{\omega}(s,a)-\phi_{\omega'}(s,a)] }
\leq \E_{D_{\pi^*}}[L_\phi\twonorm{\omega-\omega'}]= L_\phi \twonorm{\omega-\omega'},
}
and $(b)$ follows from the fact that 
\eqenv{
\E _{D_{\pi^*}}\Fbrac{A^{\pi_t}(s,a)}&= \E _{D_{\pi^*}}\Fbrac{Q^{\pi_t}(s,a)-V^{\pi_t}(s) }
\\& =\E _{D_{\pi^*}}\Fbrac{R(s,a)-J(\pi_t) + V^{\pi_t}(s') -V^{\pi_t}(s) }
\\& =\E _{D_{\pi^*}}\Fbrac{R(s,a)}-J(\pi_t)+ \E _{(s,a)\sim D_{\pi^*}, s\sim P(\cdot|s,a)}\Fbrac{V^{\pi_t}(s')}-\E _{D_{\pi^*}}\Fbrac{V^{\pi_t}(s)}
\\& =J(\pi^*)-J(\pi_t)+ \E _{ D_{\pi^*}}\Fbrac{V^{\pi_t}(s')}-\E _{D_{\pi^*}}\Fbrac{V^{\pi_t}(s)}
\\& =J(\pi^*)-J(\pi_t).
}

To bound Term I, we first have that
\eqenv{
    &\lbrac{\E _{D_{\pi^*}} \Fbrac{\phi_t^\top(s,a)\thestar-A^{\pi_t}(s,a) }} \\
    &\leq 
    \lbrac{\E _{D_{\pi^*}} \Fbrac{\phi_t^\top(s,a)\thebar-A^{\pi_t}(s,a) }}+
     \lbrac{\E _{D_{\pi^*}} \Fbrac{\phi_t^\top(s,a)(\thebar-\thestar) }}
     \\& \leq \mynorm{\frac{D_{\pi^*}}{D_t}}_\infty\E_{D_{t} } \Fbrac{\lbrac{\phi_t^\top(s,a)\thebar -A^{\pi_t}(s,a) } }+ C_\phi \twonorm{\thebar-\thestar}
    \\& \myineq{\leq}{a} \mynorm{\frac{D_{\pi^*}}{D_t}}_\infty
    \sqrt{\E_{D_{t}}\flnormsq{\phi_t^\top(s,a)\thebar- A^{\pi_t}(s,a) } } + \frac{C_\phi C_{\text{gap}} m \rho^k }{\lambda_{\min}}
    \\& \myineq{\leq}{b} \mynorm{\frac{D_{\pi^*}}{D_t}}_\infty \sqrt{\varepsilon_{\text{actor}} }
    + \frac{C_\phi C_{\text{gap}} m \rho^k }{\lambda_{\min}},
}
where $(a)$ follows from that \cref{prop:bound}, 
 and $(b)$ follows from the definition of  $\thebar$ in \Cref{eq:thetabar},  the definition of  $\varepsilon_{\text{actor}}$ and the facts that
\eqenv{
    \E_{D_t}\flnormsq{\phi^\top_t(s,a)\thebar-A^{\pi_t}(s,a)}\leq \varepsilon_{\text{actor}}.
}

For term II, we  have that
\eqenv{
    \lbrac{\E _{D_{\pi^*}} \Fbrac{\phi_t^\top(s,a)(\theta_t-\thestar) }}
     \leq C_\phi \twonorm{\theta_t-\thestar }.
}

Plug the two bounds on terms I and II in \Cref{eq:115}, and we have that
\eqenv{
    \E\Fbrac{\D(\omega_t)}-\E\Fbrac{\D(\omega\tit) }
    \geq & \beta\brac{J(\pi^*)-\E\Fbrac{J(\omega_t) } }
    -\beta\mynorm{\frac{D_{\pi^*}}{D_t}}_\infty  \sqrt{\varepsilon_{\text{actor}} }
    -\beta \frac{C_\phi C_{\text{gap}} m \rho^k }{\lambda_{\min}}
    \\& -\beta C_\phi \E\ftwonorm{\theta_t-\thestar }
    -\frac{L_\phi}{2}\beta^2 B^2,
}
which implies 
 \eqenv{ 
    \beta \brac{J(\pi^*)-\E\Fbrac{J(\omega_t) } }
    \leq &\E\Fbrac{\D(\omega_t)}-\E\Fbrac{\D(\omega\tit) }
    +\beta \mynorm{\frac{D_{\pi^*}}{D_t}}_\infty  \sqrt{\varepsilon_{\text{actor}} }
    +\beta  \frac{C_\phi C_{\text{gap}} m \rho^k }{\lambda_{\min}}
    \\& +\beta  C_\phi \E\ftwonorm{\theta_t-\thestar }
    +\frac{B^2L_\phi}{2}\beta ^2 .\label{eq:nac:gap}
 }

Set $M\tit= \E\ftwonormsq{ \theta\tit-\thestara} +   \E \flnormsq{\eta\tit-J(\omega_{t+1})}  $, and we now aim to bound $M_t$. Combine the bounds we obtained in \Cref{eq:nacetaitera} and \Cref{eq:nac:thetaitera}, and we have that
\eqenv{\label{cons:cm}
   &M\tit
    \leq  \brac{1-\frac{1}{2}\bar \lambda_{\min}\alpha    }
  \E \ftwonormsq{\theta_t-\thestar}
  +  \frac{C_\Theta}{\lambda^2_{\min}} \beta\E\ftwonormsq{\gradj}
   +\frac{(k+1)^2 C^2_\phi}{\bar \lambda_{\min}}\alpha\E\flnormsq{J(\omega_t)-\eta_t} 
   \\& \qquad+  \brac{1-\gamma}\E\flnormsq{\eta_t-J(\omega_t)}
+ B^2\beta \E\ftwonormsq{\gradj}+\widetilde G_t^\theta+  \widetilde G_t^\eta
\\& =  \brac{1-\frac{1}{2}\bar \lambda_{\min}\alpha    }
  \E \ftwonormsq{\theta_t-\thestar}
   +\frac{(k+1)^2 C^2_\phi}{\bar \lambda_{\min}}\alpha\E\flnormsq{J(\omega_t)-\eta_t} 
   +  \brac{1-\gamma}\E\flnormsq{J(\omega_t)-\eta_t}
\\& \qquad+\brac{ \frac{C_\Theta}{\lambda^2_{\min}}+ B^2} \beta\E\ftwonormsq{\gradj}+\widetilde G_t^\theta+  \widetilde G_t^\eta
\\&\overset{(a)}{\leq}  \brac{1-\frac{1}{2}\bar \lambda_{\min}\alpha    }
  \E \ftwonormsq{\theta_t-\thestar}
   +\frac{(k+1)^2 C^2_\phi}{\bar \lambda_{\min}}\alpha\E\flnormsq{J(\omega_t)-\eta_t} 
   +  \brac{1-\gamma}\E\flnormsq{J(\omega_t)-\eta_t}
\\& \qquad+\brac{ \frac{C_\Theta}{\lambda^2_{\min}}+ B^2}\beta \brac{ \frac{ 2C_\phi^2 }{\beta}\brac{ \E \Fbrac{J(\omega\tit)}- \E \Fbrac{J(\omega_t)}}
 + 2 C^4_\phi\E\ftwonormsq{\theta_t-\thestar}}
 \\& \qquad +\brac{  \frac{C_\Theta}{\lambda^2_{\min}}+ B^2}\brac{ 2C_{\text{gap}}\frac{ C_\phi^4  m \rho^k\beta}{\lambda_{\min}}
 +{L_J C^2_\phi B^2\beta^2}}+\widetilde G_t^\theta+  \widetilde G_t^\eta
 \\&\leq  
    \brac{1-\frac{1}{2}\bar \lambda_{\min}\alpha + 2 C_\phi^4 C_M\beta }
    \E \ftwonormsq{\theta_t-\thestar}
    + { 2C_\phi^2 C_M }\brac{\E \Fbrac{J(\omega\tit)}- \E \Fbrac{J(\omega_t)} }
    \\&\qquad + \brac{1-\gamma + \frac{(k+1)^2 C^2_\phi}{\bar \lambda_{\min}}\alpha}\E\flnormsq{J(\omega_t)-\eta_t}
    + C_M \brac{ \frac{2C_{\text{gap}} C_\phi^4  m \rho^k\beta}{\lambda_{\min}}
 +{L_J C^2_\phi B^2\beta^2}}
 \\& \qquad+\widetilde G_t^\theta+  \widetilde G_t^\eta,
}
where $(a)$ is obtained by  \Cref{eq:94}, and $C_M= \frac{C_\Theta}{\lambda^2_{\min}}+ B^2 $. 
For convenience, we set 
\eqenv{\label{vari:gomega}
\widetilde G^\omega_t=C_M \brac{ \frac{2C_{\text{gap}} C_\phi^4  m \rho^k\beta}{\lambda_{\min}}
 + {L_J C^2_\phi B^2\beta^2}}.
}
We set $k=\lceil\frac{\log T}{1-\rho}\rceil$ and $T\geq \frac{2C^2_\phi dm}{\lambda_{\min}}$ such that $\bar \lambda_{\min}\geq \frac{1}{2} \lambda_{\min}$. Furthermore, we set the step sizes such that $\frac{1}{6}\bar \lambda_{\min} \alpha \geq 2 C_\phi^4 C_M\beta $ and  $\frac{\gamma}{2}\geq \max\varbrac{\frac{1}{3}\bar \lambda_{\min}\alpha, \frac{(k+1)^2 C^2_\phi}{\bar \lambda_{\min}}\alpha }$. Let $q=\frac{1}{3}\bar \lambda_{\min}\alpha$ and $\frac{\gamma}{2}\geq q$. Then the inequality in \Cref{cons:cm} can be written as
\eqenv{
    M\tit&\leq  
    \brac{1-q } M_t
    + { 2C_\phi^2 C_M }\brac{\E \Fbrac{J(\omega\tit)}- \E \Fbrac{J(\omega_t)} }
    + \widetilde G^\omega_t +\widetilde G_t^\theta+  \widetilde G_t^\eta. \label{eq:124}
}

Recall that $\hat t=\myceil{ \frac{1}{q}\log T}=\myceil{ \frac{3 \log T}{\bar \lambda_{\min}\alpha}}$. 
For $t\geq 2k+ \hat t$, we recursively apply \Cref{eq:124} for $\hat t$ times, and we have that
\eqenv{\label{vari:that}
    M_t\leq& \brac{1-q}^{\hat t} M_{t-\hat t}+
     2C_\phi^2 C_M\sum_{j=t-\hat t}^{t-1} (1-q)^{t-j-1}
    \brac{\E \Fbrac{J(\omega_{j+1})}- \E \Fbrac{J(\omega_j)} }
    \\&\qquad + \sum_{j=t-\hat t}^{t-1} (1-q)^{t-j-1}\brac{\widetilde G^\omega_j +\widetilde G_j^\theta+  \widetilde G_j^\eta }
    \\& \myineq{\leq}{a}  \frac{4 B^2+ R_{\max}^2}{T}+ 2C_\phi^2 C_M\sum_{j=t-\hat t}^{t-1} (1-q)^{t-j-1}
    \brac{\E \Fbrac{J(\omega_{j+1})}- \E \Fbrac{J(\omega_j)} }
    \\& \qquad +\sum_{j=t-\hat t}^{t-1} (1-q)^{t-j-1}\brac{\widetilde G^\omega_j +\widetilde G_j^\theta+  \widetilde G_j^\eta }, 
}
where $(a)$ follows from $(1-q)^{\hat t}\leq e^{-q\hat t}\leq e^{-\log T}= \frac{1}{T} $ and $M_i= \E\ftwonormsq{ \theta_i-\theta_i^*} +   \E \flnormsq{\eta_i-J(\omega_{i})}\leq 4 B^2+R^2_{\max}$ for $i=0,1,...,T$.

Denote the time length $\widetilde T={\myceil{\frac{T}{\hat t\log T}}}\hat t\geq \frac{T}{\log T}$. For any $t'\leq T-\widetilde T$, together with \Cref{eq:nac:gap} we have that
\eqenv{\label{eq:nac:d}
    &\frac{1}{\widetilde T}\sum_{t=t'}^{t'+\widetilde T-1} J(\pi^*)-\E\Fbrac{J(\omega_t) }
    \\& \leq
    \frac{\D(\omega_{t'})-\D(\omega_{t'+\widetilde T}) }{\beta \widetilde T}
    + \frac{C_\phi C_{\text{gap}} m \rho^k }{\lambda_{\min}}
    +\frac{1}{\widetilde T}\sum_{t=t'}^{t'+\widetilde T-1}\mynorm{\frac{D_{\pi^*}}{D_t}}_\infty  \sqrt{\varepsilon_{\text{actor}} }
    \\& \qquad +C_\phi \frac{1}{\widetilde T}\sum_{t=t'}^{t'+\widetilde T-1} \E\ftwonorm{\theta_t-\thestar }
    +\frac{B^2L_\phi}{2}\beta
    \\&  \myineq{\leq}{a}
    \frac{\D(\omega_{t'})-\D(\omega_{t'+\widetilde T}) }{\beta \widetilde T}
    + \frac{C_\phi C_{\text{gap}} m \rho^k }{\lambda_{\min}}
    +\frac{1}{\widetilde T}\sum_{t=t'}^{t'+\widetilde T-1}\mynorm{\frac{D_{\pi^*}}{D_t}}_\infty  \sqrt{\varepsilon_{\text{actor}} }
    \\& \qquad +C_\phi \sqrt{\frac{1}{\widetilde T}\sum_{t=t'}^{t'+\widetilde T-1} \E\ftwonormsq{\theta_t-\thestar}}
    +\frac{B^2L_\phi}{2}\beta
    \\&   \myineq{\leq}{b}
    \frac{\D(\omega_{t'})-\D(\omega_{t'+\widetilde T}) }{\beta \widetilde T}
    + \frac{C_\phi C_{\text{gap}} m \rho^k }{\lambda_{\min}}
    +  C_\infty \sqrt{\varepsilon_{\text{actor}} }
    \\& \qquad +C_\phi \sqrt{\frac{1}{\widetilde T}\sum_{t=t'}^{t'+\widetilde T-1} M_t}
    +\frac{B^2L_\phi}{2}\beta
    ,
}where $(a)$ follows from the rearrangement inequality and the fact for any random variable $X$, $\twonormsq{\E\Fbrac{X} }\leq \E \ftwonormsq{X} $ and $(b)$ follows from Assumption \ref{asm:explore}.

Moreover, for $2k+ \hat t \leq t'\leq T-\widetilde T$, summing \Cref{vari:that} w.r.t. $t$ from $t'$ to $t'+\tilde{T}-1$ implies  
\eqenv{
    &\frac{1}{\widetilde T}\sum_{t=t'}^{t'+\widetilde T-1}
    M_t\\
    & \leq  2C_\phi^2 C_M\frac{1}{\widetilde T}\sum_{t=t'}^{t'+\widetilde T-1} \sum_{j=t-\hat t}^{t-1} (1-q)^{t-j-1}
    \brac{\E \Fbrac{J(\omega_{j+1})}- \E \Fbrac{J(\omega_j)} }
    \\&\qquad
    + \frac{4 B^2+  R_{\max}^2}{\widetilde T}+   \frac{1}{\widetilde T}\sum_{t=t'}^{t'+\widetilde T-1}  \sum_{j=t-\hat t}^{t-1} (1-q)^{t-j-1}\brac{\widetilde G^\omega_j +\widetilde G_j^\theta+  \widetilde G_j^\eta }
    \\ & \myineq{=}{a} 
     2C_\phi^2 C_M\frac{1}{\widetilde T}\sum_{i=0}^{\hat t-1} \sum_{t=\hat t-i }^{t'+\widetilde T-1} (1-q)^{i}
    \brac{\E \Fbrac{J(\omega_{t-i})}- \E \Fbrac{J(\omega_{t-i-1})} }
    \\&\qquad
    + \frac{4 B^2+  R_{\max}^2}{\widetilde T}+\frac{1}{\widetilde T}\sum_{t=t'}^{t'+\widetilde T-1}  \sum_{j=t-\hat t}^{t-1} (1-q)^{t-j-1}\brac{\widetilde G^\omega_j +\widetilde G_j^\theta+  \widetilde G_j^\eta }
    \\& \myineq{=}{b} 
     2 C_\phi^2 C_M\frac{1}{\widetilde T}\sum_{i=0}^{\hat t-1}  (1-q)^{i}
    \brac{\E \Fbrac{J(\omega_{t'+\widetilde T-i-1})}- \E \Fbrac{J(\omega_{t'-i-1})} }
    \\&\qquad
    + \frac{4 B^2+  R_{\max}^2}{\widetilde T}+\frac{1}{\widetilde T}\sum_{t=t'}^{t'+\widetilde T-1}  \sum_{j=t-\hat t}^{t-1} (1-q)^{t-j-1}\brac{\widetilde G^\omega_j +\widetilde G_j^\theta+  \widetilde G_j^\eta }
    \\& \myineq{\leq}{c} 
     2C_\phi^2 C_M\frac{1}{\widetilde T}\sum_{i=0}^{\hat t-1}  (1-q)^{i}
    \brac{J(\pi^*)- \E \Fbrac{J(\omega_{t'-i-1})} }
    \\&\qquad
    + \frac{4 B^2+  R_{\max}^2}{\widetilde T}+\frac{1}{\widetilde T}\sum_{t=t'}^{t'+\widetilde T-1}  \sum_{j=t-\hat t}^{t-1} (1-q)^{t-j-1}\brac{\widetilde G^\omega_j +\widetilde G_j^\theta+  \widetilde G_j^\eta }
    \\& \leq 
     2C_\phi^2 C_M\frac{1}{\widetilde T}\sum_{i=0}^{\hat t-1}  
    \brac{J(\pi^*)- \E \Fbrac{J(\omega_{t'-i-1})} }
    \\&\qquad
    + \frac{4 B^2+  R_{\max}^2}{\widetilde T}+\frac{1}{\widetilde T}\sum_{t=t'}^{t'+\widetilde T-1}  \sum_{j=t-\hat t}^{t-1} (1-q)^{t-j-1}\brac{\widetilde G^\omega_j +\widetilde G_j^\theta+  \widetilde G_j^\eta }, \label{eq:x}
} where $(a)$ follows from that we set $i=t-j-1$, $(b)$ 
follows from the fact that $\sum_{t=\hat t-i }^{t'+\widetilde T-i-1}  \brac{\E \Fbrac{J(\omega_{t-i})}- \E \Fbrac{J(\omega_{t-i-1})} }= \E\Fbrac{J(\omega_{t'+\widetilde T-i-1}) }- \E \Fbrac{J(\omega_{t'-i-1})}$ and $(c)$ follows from $J(\pi^*)\geq J(\pi_{t'+\widetilde T-i-1}) $. 

\begin{lemma}\label{lm:lm14}
    We denote by  $X_{t'}= \frac{1}{\hat t}\sum_{t=t'-\hat t}^{t'-1} \brac{J(\pi^*)- \E \Fbrac{J(\omega_t)} } $ and  $Y_{t'}=\frac{1}{\widetilde T}\sum_{t=t'}^{t'+\widetilde T-1} \brac{J(\pi^*)- \E \Fbrac{J(\omega_t)} } $.  
We have that
  there must exist $t'+\hat{t}\leq t''\leq t'+\widetilde T$ s.t. 
$X_{t''}\leq Y_{t'} $.
\end{lemma}

\begin{proof}
   We prove the lemma by contraction.  Assume that there does not exist $t'+\hat{t}\leq t''\leq t'+\widetilde T$ s.t. 
$X_{t''}\leq Y_{t'} $. Then, for any $t'+\hat{t}\leq t''\leq t'+\widetilde T$, $X_{t''}> Y_{t'} $.

Next, recall $\widetilde T={\myceil{\frac{T}{\hat t\log T}}}\hat t$, then we have that 
\eqenv{
Y_{t'}&= \frac{1}{\widetilde T}\sum_{t=t'}^{t'+\widetilde T-1} \brac{J(\pi^*)- \E \Fbrac{J(\omega_t)} }
= \frac{1}{\myceil{\frac{T}{\hat t\log T}}}\sum_{\tau=1}^{\myceil{\frac{T}{\hat t\log T}}}\frac{1}{\hat t}\sum_{t=t'+(\tau-1) \hat t}^{t'+\tau \hat t-1}\brac{J(\pi^*)- \E \Fbrac{J(\omega_t)} }
\\& = \frac{1}{\myceil{\frac{T}{\hat t\log T}}}\sum_{\tau=1}^{\myceil{\frac{T}{\hat t\log T}}} X_{t'+\tau \hat t}\myineq{>}{a} \frac{1}{\myceil{\frac{T}{\hat t\log T}}}\sum_{\tau=1}^{\myceil{\frac{T}{\hat t\log T}}} Y_{t'}=Y_{t'},
} where $(a)$ follows from $t'+\hat{t}\leq t'+\tau \hat t\leq t'+\widetilde T$, $X_{t+\tau \hat t}> Y_{t'}  $ for $\tau=1,...,\myceil{\frac{T}{\hat t\log T}}$. This hence results in a contradiction, which completes the proof.
\end{proof}
We then discuss the following two cases.

\subsubsection*{Case 1:}[For any $2k+ \hat t \leq  t' \leq   T -\widetilde T$, it holds that $ e Y _{ t'}<  X_{ t'}$.] 
     
    Then, for $\widetilde t_0=2k+ \hat t$, we have
    $X_{\widetilde t_0}> e Y _{\widetilde t_0} $.  Recall that $\widetilde T={\myceil{\frac{T}{\hat t\log T}}}\hat t\leq \brac{\frac{T}{\hat t\log T}+1}\hat t\leq \frac{T}{ \log T}+\hat t\leq \frac{2T}{ \log T}$ for large $T$. Then, $\frac{T}{\widetilde T}\geq \frac{\log T}{2} \geq \lfloor \frac{\log T}{2} \rfloor$. 
    Thus, there  exist $\widetilde t_0+\hat{t} \leq  \widetilde t_1\leq \widetilde t_0 +\widetilde T$, s.t.
    $X_{\widetilde t_1}\myineq{\leq}{a} Y _{\widetilde t_0}\myineq{<}{b} \frac{1}{e} X _{\widetilde t_0}$, where $(a)$ follows from \Cref{lm:lm14} and $(b)$ follows from the condition of case 1. 
    Recursively applying the above argument, we have that for $j=0,1,...,\lfloor \frac{\log T}{2} \rfloor$, there exists $\widetilde t_j+\hat t\leq \widetilde t_{j+1}\leq \widetilde t_j +\widetilde T$, such that $X_{\widetilde t_{j+1}}\leq Y _{\widetilde t_j}< \frac{1}{e} X _{\widetilde t_j}$. This further implies that 
    \eqenv{
    X _{\widetilde t_0}> e Y _{\widetilde t_0}
    \geq e X _{\widetilde t_1}> e^2
    Y _{\widetilde t_1}\geq ...\geq e^j Y _{\widetilde t_j}> e^{j+1}Y _{\widetilde t_j}\geq...\geq e ^{{\lfloor{\frac{\log T}{2} }\rfloor}}X_{\widetilde t_{\lfloor{\frac{\log T}{2} }\rfloor}} > e^{{\lfloor{\frac{\log T}{2} }\rfloor}+1}Y_{\widetilde t_{\lfloor{\frac{\log T}{2} }\rfloor}}.\label{eq:nac:151}
    }
    Then, by \Cref{eq:nac:151}, we can conclude that
    \eqenv{Y_{\widetilde t_{\lfloor{\frac{\log T}{2} }\rfloor}}:
    =\frac{1}{\widetilde T}\sum_{t=\widetilde t_{\lfloor{\frac{\log T}{2} }\rfloor}}^{\widetilde t_{\lfloor{\frac{\log T}{2} }\rfloor}+\widetilde T-1} \brac{J(\pi^*)- \E \Fbrac{J(\omega_t)} }
    \leq \frac{1}{e^{{\lfloor{\frac{\log T}{2}}\rfloor}+1}}X_{\widetilde t_0}
    \leq \frac{1}{\sqrt T}X _{\widetilde t_0}.}

     Note that $X_{\widetilde t}\leq J(\pi ^*)\leq R_{\max}$, hence we have that
     \eqenv{\label{eq:case1}
     Y_{\widetilde t_{\lfloor{\frac{\log T}{2}}\rfloor}}
     = \frac{1}{\widetilde T}\sum_{t=\widetilde t_{\lfloor{\frac{\log T}{2}}\rfloor}}^{\widetilde t_{\lfloor{\frac{\log T}{2}}\rfloor}+\widetilde T-1} \brac{J(\pi^*)- \E \Fbrac{J(\omega_t)} }
     \leq \frac{R_{\max}}{\sqrt T}. 
     }
     This further implies that
     \eqenv{
        \min_{t\leq T} \E \Fbrac{J(\pi^*)-J(\omega_t) } \leq \frac{R_{\max}}{\sqrt T}. 
     }
     This hence completes the proof of \Cref{thm:nac} under Case 1.

\subsubsection*{Case 2:} [There exists some $2k+ \hat t \leq  t' \leq T -\widetilde T $ s.t. $ X _{ t '}\leq e Y_{ t'}$.]
    
Define $Z_{t'}=\frac{1}{\widetilde T}\sum_{t=t'}^{t'+\widetilde T-1} M_t$. From \Cref{eq:x}, we obtain that
\eqenv{\label{eq:eq1271}
   Z_{t'}&= \frac{1}{\widetilde T}\sum_{t=t'}^{t'+\widetilde T-1}
    M_t  \leq 
    2 C_\phi^2 C_M\frac{1}{\widetilde T}\sum_{i=0}^{\hat t-1} 
    \brac{J(\pi^*)- \E \Fbrac{J(\omega_{t'-i-1})} }
    \\&\qquad
    + \frac{4 B^2+  R_{\max}^2}{\widetilde T}+\frac{1}{\widetilde T}\sum_{t=t'}^{t'+\widetilde T-1}  \sum_{j=t-\hat t}^{t-1} (1-q)^{t-j-1}\brac{\widetilde G^\omega_j +\widetilde G_j^\theta+  \widetilde G_j^\eta }
    \\& =
     2C_\phi^2 C_M\frac{\hat t}{\widetilde T}X_{t'}
    + \frac{4 B^2+  R_{\max}^2}{\widetilde T}+\frac{1}{\widetilde T}\sum_{t=t'}^{t'+\widetilde T-1}  \sum_{j=t-\hat t}^{t-1} (1-q)^{t-j-1}\brac{\widetilde G^\omega_j +\widetilde G_j^\theta+  \widetilde G_j^\eta }
    \\& \myineq{\leq}{a} 
     2C_\phi^2 C_M\frac{e \hat t}{\widetilde T}Y_{t'}
    + \frac{4 B^2+  R_{\max}^2}{\widetilde T}+\frac{1}{\widetilde T}\sum_{t=t'}^{t'+\widetilde T-1}  \sum_{j=t-\hat t}^{t-1} (1-q)^{t-j-1}\brac{\widetilde G^\omega_j +\widetilde G_j^\theta+  \widetilde G_j^\eta }, 
} where $(a)$ follows from the condition of Case 2.

Next, from \Cref{eq:nac:d}, we have that
\eqenv{\label{eq:eq1281}
    Y_{t'}&=\frac{1}{\widetilde T}\sum_{t=t'}^{t'+\widetilde T-1} J(\pi^*)-\E\Fbrac{J(\omega_t) } 
    \\& \leq \frac{\D(\omega_{t'})-\D(\omega_{t'+\widetilde T}) }{\beta \widetilde T}
    + \frac{C_\phi C_{\text{gap}} m \rho^k }{\lambda_{\min}}
    + C_\infty\sqrt{\varepsilon_{\text{actor}} }
     +C_\phi \sqrt{\frac{1}{\widetilde T}\sum_{t=t'}^{t'+\widetilde T-1} M_t}
    +\frac{B^2L_\phi}{2}\beta
    \\& =\frac{\D(\omega_{t'})-\D(\omega_{t'+\widetilde T}) }{\beta \widetilde T}
    + \frac{C_\phi C_{\text{gap}} m \rho^k }{\lambda_{\min}}
    + C_\infty\sqrt{\varepsilon_{\text{actor}} }
      +C_\phi \sqrt{Z_{t'}}
    +\frac{B^2L_\phi}{2}\beta.
}

Plug \Cref{eq:eq1281} in \Cref{eq:eq1271}, and we have that
\eqenv{
   Z_{t'} &=\frac{1}{\widetilde T}\sum_{t=t'}^{t'+\widetilde T-1}
    M_t   
     \\& \leq  \frac{4 B^2+  R_{\max}^2}{\widetilde T}+\frac{1}{\widetilde T}\sum_{t=t'}^{t'+\widetilde T-1}  \sum_{j=t-\hat t}^{t-1} (1-q)^{t-j-1}\brac{\widetilde G^\omega_j +\widetilde G_j^\theta+  \widetilde G_j^\eta }+\frac{2e  C_\phi^2 C_M\hat t}{\widetilde T}  \brac{\frac{C_\phi C_{\text{gap}} m \rho^k }{\lambda_{\min}}+\frac{B^2L_\phi}{2}\beta}
    \\& \quad+ 2C_\phi^2 C_M\frac{e \hat t}{\widetilde T}\brac{\frac{\D(\omega_{t'})-\D(\omega_{t'+\widetilde T}) }{\beta \widetilde T}
    + C_\infty\sqrt{\varepsilon_{\text{actor}} }
     +C_\phi \sqrt{Z_{t'}}
    }
    \\& \myineq{\leq}{a}  \frac{4 B^2+  R_{\max}^2}{\widetilde T} + \frac{1}{\widetilde T}\sum_{t=t'}^{t'+\widetilde T-1}  \sum_{j=t-\hat t}^{t-1} (1-q)^{t-j-1}\brac{\widetilde G^\omega_j +\widetilde G_j^\theta+  \widetilde G_j^\eta }+ \frac{2e  C_\phi^2 C_M \hat t}{\widetilde T}  \brac{\frac{C_\phi C_{\text{gap}} m \rho^k }{\lambda_{\min}}
+\frac{B^2L_\phi}{2}\beta}
\\&\quad + \brac{\frac{2e C_\phi^3 C_M  \hat t}{\widetilde T}}^2+\frac{1}{2} Z_{t'}
   +\frac{ C^2_\infty \varepsilon_{\text{actor}}}{2 C^2_\phi}
    + 2C_\phi^2 C_M\frac{e \hat t}{\widetilde T}\brac{\frac{\D(\omega_{t'})-\D(\omega_{t'+\widetilde T}) }{\beta \widetilde T}
    },
} where $(a)$ follows from $ xy\leq \frac{x^2+y^2}{2}$

Thus, it follows that
\eqenv{
     &Z_{t'}
     \leq \frac{8 B^2+  2R_{\max}^2}{\widetilde T} + \frac{2}{\widetilde T}\sum_{t=t'}^{t'+\widetilde T-1}  \sum_{j=t-\hat t}^{t-1} (1-q)^{t-j-1}\brac{\widetilde G^\omega_j +\widetilde G_j^\theta+  \widetilde G_j^\eta }
   + 2\brac{\frac{2e C_\phi^3 C_M  \hat t}{\widetilde T}}^2+ \frac{ C^2_\infty \varepsilon_{\text{actor}}}{C^2_\phi}
    \\&\quad  + 4C_\phi^2 C_M\frac{e \hat t}{\widetilde T}\brac{\frac{\D(\omega_{t'})-\D(\omega_{t'+\widetilde T}) }{\beta \widetilde T}
    }+ \frac{4e  C_\phi^2 C_M \hat t}{\widetilde T}  \brac{\frac{C_\phi C_{\text{gap}} m \rho^k }{\lambda_{\min}}
+\frac{B^2L_\phi}{2}\beta}.
}
Then, we get that
\eqenv{\label{eq:case2}
    Y_{t'}&\leq \frac{\D(\omega_{t'})-\D(\omega_{t'+\widetilde T}) }{\beta \widetilde T}
    + \frac{C_\phi C_{\text{gap}} m \rho^k }{\lambda_{\min}}
    + C_\infty\sqrt{\varepsilon_{\text{actor}} }
     +C_\phi \sqrt{\frac{1}{\widetilde T}\sum_{t=t'}^{t'+\widetilde T-1} M_t}
    +\frac{B^2L_\phi}{2}\beta
    \\& \leq 
    \frac{\D(\omega_{t'})-\D(\omega_{t'+\widetilde T}) }{\beta \widetilde T}
    + \frac{C_\phi C_{\text{gap}} m \rho^k }{\lambda_{\min}}
    +2 C_\infty\sqrt{\varepsilon_{\text{actor}} }
     +C_\phi \sqrt{2}\brac{\frac{2 e C_\phi^3 C_M  \hat t}{\widetilde T} }
   \\& 
     +C_\phi \sqrt{2}\sqrt{ \frac{1}{\widetilde T}\sum_{t=t'}^{t'+\widetilde T-1}  \sum_{j=t-\hat t}^{t-1} (1-q)^{t-j-1}\brac{\widetilde G^\omega_j +\widetilde G_j^\theta+  \widetilde G_j^\eta } }
    + C_\phi \sqrt{2}\sqrt{\frac{4 B^2+  R_{\max}^2}{\widetilde T} }
    \\& +2C_\phi  \sqrt{ C_\phi^2 C_M\frac{e \hat t}{\widetilde T}\frac{\D(\omega_{t'})-\D(\omega_{t'+\widetilde T}) }{\beta \widetilde T}
     }
     +2C_\phi \sqrt{\frac{e \hat t C_\phi^2 C_M }{\widetilde T}  \brac{\frac{C_\phi C_{\text{gap}} m \rho^k }{\lambda_{\min}}
+\frac{B^2L_\phi}{2}\beta} }
     +\frac{B^2L_\phi}{2}\beta,
}
 which proves the claim under Case 2.

Thus, combine the Case 1 result in  \Cref{eq:case1} and the Case 2 result in \Cref{eq:case2},  we have that
\eqenv{\label{eq:nac:final}
    \min_{t\leq T} &\E\Fbrac{J(\pi^*)-J(\omega_t)}
    \\&\leq \frac{\D(\omega_{t'})-\D(\omega_{t'+\widetilde T}) }{\beta \widetilde T}
    + \frac{C_\phi C_{\text{gap}} m \rho^k }{\lambda_{\min}}
    +2 C_\infty\sqrt{\varepsilon_{\text{actor}} }
     +C_\phi \sqrt{2}\brac{\frac{2e C_\phi^3 C_M  \hat t}{\widetilde T} }+ C_\phi \sqrt{2}\sqrt{\frac{4 B^2+  R_{\max}^2}{\widetilde T} }
   \\& 
     +C_\phi \sqrt{2}\sqrt{ \frac{1}{\widetilde T}\sum_{t=t'}^{t'+\widetilde T-1}  \sum_{j=t-\hat t}^{t-1} (1-q)^{t-j-1}\brac{\widetilde G^\omega_j +\widetilde G_j^\theta+  \widetilde G_j^\eta } }
     +2 C_\phi \sqrt{ C_\phi^2 C_M\frac{e \hat t}{\widetilde T}\frac{\D(\omega_{t'})-\D(\omega_{t'+\widetilde T}) }{\beta \widetilde T}
     }
     \\&+2C_\phi\sqrt{\frac{e \hat t C_\phi^2 C_M }{\widetilde T}  \brac{\frac{C_\phi C_{\text{gap}} m \rho^k }{\lambda_{\min}}
+\frac{B^2L_\phi}{2}\beta} }
     +\frac{B^2L_\phi}{2}\beta
    +\frac{2R_{\max}}{\sqrt T}
    .
}
which completes the proof.
\end{proof}

Next, we prove \Cref{thm:nac}. 
\begin{theorem}(Restatement of \Cref{thm:nac})
     Consider the NAC algorithm in  \Cref{alg:AC} with constant step sizes that $\gamma\geq\alpha\geq\beta $, then it holds that
   \eqenvnonum{
   \min_{t\leq T }& \E\Fbrac{J(\pi^*)-J(\omega_t) }
     \leq \mathcal{O}\brac{\frac{\log ^2T}{T\alpha}}
     +\mathcal{O}\brac{\frac{\log T}{T\beta}}
    + \mathcal{O}\brac{\frac{\sqrt{\log ^3T}}{T\sqrt{\alpha\beta}}}
    +\mathcal{O}\brac{\sqrt{\frac{\beta \log T}{T\alpha}}}
    +\mathcal{O}\brac{\sqrt{\frac{\log T}{T}} }
    \\ & + \mathcal{O}\brac{\frac{\sqrt{\gamma\beta\log ^2T} }{\sqrt{\alpha}}}
    + \mathcal{O}\brac{\frac{\gamma\sqrt{\log T} }{\sqrt{\alpha}}}
     +\mathcal{O}\brac{\frac{\beta }{\sqrt{\alpha}}}
    + \mathcal{O}\brac{\sqrt{\alpha\log ^{3} T }}
     + \mathcal{O}\brac{\sqrt{\beta\log ^{3} T} }
    \\
     &+\mathcal{O}\brac{\sqrt{(m\rho^k) \log T} }
     + \mathcal{O}\brac{\sqrt{\frac{(m\rho^k)\beta}{\alpha}} }
      + \mathcal{O}\brac{\sqrt{\frac{(m\rho^k)\gamma}{\alpha}} }
     + \mathcal{O}\brac{\frac{m\rho^k}{\sqrt{\alpha\beta}} }+
     \mathcal{O}\brac{ 
    \sqrt{\varepsilon_{\text{actor}}}}.
   }
   If we set $\gamma=\mathcal{O}(T^{-\frac{2}{3}}\log T),  \alpha=\mathcal{O}(T^{-\frac{2}{3}}\log^{-1}T),  \beta =\mathcal{O}(T^{-\frac{2}{3}}\log^{-1}T)$, we have 
    \eqenv{
    \min_{t\leq T } J(\pi^*)-J(\omega_t)\leq \mathcal{O}\brac{{T^{-\frac{1}{3}}}\log ^3T}
    + \mathcal{O}\brac{\sqrt{\varepsilon_{\text{actor}}}}.
    }
\end{theorem}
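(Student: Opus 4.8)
The plan is to derive the theorem directly from Lemma~\ref{lemma:13}, which already reduces $\min_{t\le T}\E[J(\pi^*)-J(\omega_t)]$ to the sum of (i) a telescoping KL term $\frac{\D(\omega_{t'})-\D(\omega_{t'+\widetilde T})}{\beta\widetilde T}$, (ii) a geometrically weighted cumulative sum of the ``noise'' quantities $\widetilde G^\omega_t,\widetilde G^\theta_t,\widetilde G^\eta_t$ introduced in Lemmas~\ref{lm:naceta} and \ref{lm:nactheta} and in \eqref{vari:gomega}, and (iii) lower-order pieces involving $\hat t/\widetilde T$, $1/\widetilde T$, $m\rho^k$ and $\beta$. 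So obtaining the first (step-size-agnostic) display is essentially bookkeeping, and the second display follows by substituting the prescribed step sizes; the two genuinely non-routine points are bounding the cumulative KL term and checking that the step-size choice simultaneously meets every constant-level constraint appearing in Lemmas~\ref{lm:naceta} and \ref{lm:nactheta} and in the derivation of \eqref{cons:cm}.

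First I would compute the orders of the auxiliary quantities. With constant step sizes one has $\sum_{j=t-k}^{t-1}\sum_{i=j}^{t-1}\beta_i=\mathcal{O}(k^2\beta)$, $\sum_{j=t-k}^{t-1}\gamma_j=\mathcal{O}(k\gamma)$, etc., so that $G^\delta_t=\mathcal{O}(k^3\alpha+k^3\beta+k\,m\rho^k)$ from \eqref{vari:gdeltat2}, and hence $\widetilde G^\omega_t=\mathcal{O}(m\rho^k\beta+\beta^2)$, $\widetilde G^\eta_t=\mathcal{O}(k^2\beta\gamma+m\rho^k\gamma+k\gamma^2+\beta^2)$, and $\widetilde G^\theta_t=\mathcal{O}\!\big((m\rho^k)^2/\beta+k^3\alpha^2+k^3\alpha\beta+m\rho^k\alpha+\beta^2\big)$. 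Since these enter Lemma~\ref{lemma:13} inside a square root after the double-sum estimate $\frac1{\widetilde T}\sum_{t}\sum_{j<t}(1-q)^{t-j-1}\widetilde G_j\le \frac1q\max_j\widetilde G_j$ with $q=\Theta(\bar\lambda_{\min}\alpha)$, dividing each $\widetilde G$ by $q$ and taking $\sqrt{\cdot}$ produces exactly the terms $\mathcal{O}(\sqrt{\beta/\alpha})$, $\mathcal{O}(\sqrt{\gamma\beta}\,\log T/\sqrt{\alpha})$, $\mathcal{O}(\sqrt{\alpha\log^3 T})$, $\mathcal{O}((m\rho^k)/\sqrt{\alpha\beta})$, $\ldots$ listed in the theorem. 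The remaining terms come from $\hat t/\widetilde T=\Theta(\log^2 T/(T\alpha))$, $1/\widetilde T=\Theta(\log T/T)$, the two cross-terms $\sqrt{(\hat t/\widetilde T)\cdot \D/(\beta\widetilde T)}$ and $\sqrt{(\hat t/\widetilde T)(m\rho^k+\beta)}$, and from $m\rho^k$ and $\beta$ themselves; assembling them gives the first display.

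For the KL term I would use that the KL divergence is nonnegative, so $\D(\omega_{t'})-\D(\omega_{t'+\widetilde T})\le \D(\omega_{t'})$, and then control $\D(\omega_{t'})$ at the admissible starting index $t'=\Theta(\hat t)$ of Lemma~\ref{lemma:13}: from \eqref{eq:115}, discarding the nonnegative term $\beta(J(\pi^*)-J(\pi_t))$, one gets $\D(\omega_{t+1})\le \D(\omega_t)+\mathcal{O}\!\big(\beta(1+\|\theta_t-\theta_t^*\|+C_\infty\sqrt{\varepsilon_{\text{actor}}}+m\rho^k)+\beta^2\big)$, and summing $\Theta(\hat t)$ such steps, with $\hat t\,\beta=\mathcal{O}(\log T)$ for the chosen step sizes, yields $\D(\omega_{t'})=\D(\omega_0)+\mathcal{O}(\log T)(1+\sqrt{\varepsilon_{\text{actor}}})$; since $\D(\omega_0)=\mathcal{O}(1)$ for a (near-)uniform initialization, dividing by $\beta\widetilde T=\Theta(T^{1/3}/\log^2 T)$ makes this term $\mathcal{O}(T^{-1/3}\log^3 T)+o(\sqrt{\varepsilon_{\text{actor}}})$. (Alternatively, since Lemma~\ref{lemma:13} is valid over a range of $t'$, one may instead average its bound over a grid $t'\in\{t_0,t_0+\widetilde T,\dots\}$ on which the KL differences telescope to at most $\D(\omega_0)$, which avoids needing any uniform control of $\D$.) I expect this KL bookkeeping to be the main obstacle, precisely because there is no projection on $\omega$, so $\D(\omega_t)$ is a priori neither monotone nor uniformly bounded along the trajectory.

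Finally I would take $\gamma=\Theta(T^{-2/3}\log T)$ and $\alpha=\beta=\Theta(T^{-2/3}/\log T)$ with $k=\lceil\log T/(1-\rho)\rceil$, so that $m\rho^k=\mathcal{O}(1/T)$ and, for $T$ large, $\bar\lambda_{\min}\ge\lambda_{\min}/2$. These obey $\gamma\ge\alpha\ge\beta\ge m\rho^k$; the conditions $\gamma-\gamma^2\ge\beta$, $\beta\le\frac{\bar\lambda_{\min}}{4(2C_\Theta+1)}\alpha$ and $\frac16\bar\lambda_{\min}\alpha\ge 2C_\phi^4 C_M\beta$ hold since $\alpha$ and $\beta$ agree up to a fixed constant, while the binding condition $\frac{\gamma}{2}\ge\max\{\frac13\bar\lambda_{\min}\alpha,\ \frac{(k+1)^2C_\phi^2}{\bar\lambda_{\min}}\alpha\}$ is exactly the reason $\gamma$ must carry an extra $\log T$ (take $\gamma=c_0(k+1)^2\alpha$ with $c_0$ a large enough constant). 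Substituting these into the first display, a term-by-term check shows that every contribution is $\mathcal{O}(T^{-1/3}\log^3 T)$ or smaller — the slowest-decaying ones being $\hat t/\widetilde T$, $1/(\beta\widetilde T)$, the KL term, and $\sqrt{k\gamma^2/q}$ — leaving only the irreducible $\mathcal{O}(\sqrt{\varepsilon_{\text{actor}}})$, which gives $\min_{t\le T}\E[J(\pi^*)-J(\omega_t)]\le \mathcal{O}(T^{-1/3}\log^3 T)+\mathcal{O}(\sqrt{\varepsilon_{\text{actor}}})$ and hence sample complexity $\mathcal{O}(\epsilon^{-3})$ for an $(\epsilon+\sqrt{\varepsilon_{\text{actor}}})$-neighborhood of $\pi^*$.
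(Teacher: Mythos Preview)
Your proposal is correct and follows the paper's overall route: invoke Lemma~\ref{lemma:13}, compute the orders of $\widetilde G^\omega_t,\widetilde G^\eta_t,\widetilde G^\theta_t$, use the geometric sum $\sum_j(1-q)^{t-j-1}\le 1/q$ with $q=\Theta(\bar\lambda_{\min}\alpha)$, and substitute the prescribed step sizes after checking the constraints in Lemmas~\ref{lm:naceta}, \ref{lm:nactheta} and the derivation of \eqref{cons:cm}. Your order computations and your identification of $\gamma=\Theta((k+1)^2\alpha)$ as the binding constraint match the paper exactly.

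The one place you diverge is the KL term, and there the paper is much simpler. You treat bounding $\D(\omega_{t'})$ as ``the main obstacle'' and propose either summing increments from $\D(\omega_0)$ via \eqref{eq:115} or averaging Lemma~\ref{lemma:13} over a grid to telescope. The paper instead uses Assumption~\ref{asm:explore} directly: since $\log(\pi^*/\pi_{\omega_i})=\log(D_{\pi^*}/D_{\pi_{\omega_i}})-\log(d_{\pi^*}/d_{\pi_{\omega_i}})$ and the second term has nonnegative $D_{\pi^*}$-expectation, one gets $\D(\omega_i)\le \E_{D_{\pi^*}}[\log(D_{\pi^*}/D_{\pi_{\omega_i}})]\le \log C_\infty$ uniformly in $i$. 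This one-line bound makes $\frac{\D(\omega_{t'})-\D(\omega_{t'+\widetilde T})}{\beta\widetilde T}\le\frac{\log C_\infty}{\beta\widetilde T}=\mathcal{O}(\log T/(T\beta))$ immediate, with no need for a near-uniform initialization, no incremental tracking, and no extra $\sqrt{\varepsilon_{\text{actor}}}$ bookkeeping. Your approach is valid (and, indeed, Assumption~\ref{asm:explore} already gives the $\D(\omega_0)=\mathcal{O}(1)$ you need), but it is working harder than necessary given the assumptions already in force.
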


\begin{proof}

From \Cref{lemma:13}, we have that 
\eqenv{
     \min_{t\leq T} &\E\Fbrac{J(\pi^*)-J(\omega_t)}
    \\&\leq \frac{\D(\omega_{t'})-\D(\omega_{t'+\widetilde T}) }{\beta \widetilde T}
    + \frac{C_\phi C_{\text{gap}} m \rho^k }{\lambda_{\min}}
    +2 C_\infty\sqrt{\varepsilon_{\text{actor}} }
     +C_\phi \sqrt{2}\brac{\frac{2e C_\phi^3 C_M  \hat t}{\widetilde T} }+ C_\phi \sqrt{2}\sqrt{\frac{4 B^2+  R_{\max}^2}{\widetilde T} }
   \\& 
     +C_\phi \sqrt{2}\sqrt{ \frac{1}{\widetilde T}\sum_{t=t'}^{t'+\widetilde T-1}  \sum_{j=t-\hat t}^{t-1} (1-q)^{t-j-1}\brac{\widetilde G^\omega_j +\widetilde G_j^\theta+  \widetilde G_j^\eta } }
     +2C_\phi \sqrt{ C_\phi^2 C_M\frac{e \hat t}{\widetilde T}\frac{\D(\omega_{t'})-\D(\omega_{t'+\widetilde T}) }{\beta \widetilde T}
     }
     \\&+2C_\phi \sqrt{\frac{e \hat t C_\phi^2 C_M }{\widetilde T}  \brac{\frac{C_\phi C_{\text{gap}} m \rho^k }{\lambda_{\min}}
+\frac{B^2L_\phi}{2}\beta} }
     +\frac{B^2L_\phi}{2}\beta
    +\frac{2R_{\max}}{\sqrt T}
    .
}    

We then set the stepsize as follows: 
\eqenv{\label{eq:nac:stpesize}
    \gamma &=T^{-\frac{2}{3}}\log T; \\
    \alpha &= \min \varbrac{\frac{\bar \lambda_{\min}}{(k+1)^2 C^2_\phi}\gamma,\frac{3}{2\bar\lambda_{\min}}}=\mathcal{O}\brac{T^{-\frac{2}{3}}\log^{-1} T };\\
    \beta &=\min\varbrac{\frac{\bar\lambda_{\min}}{4(2C_\Theta+1)}\alpha,  \frac{\bar \lambda_{\min}}{12 C_\phi^4 C_M}\alpha} =\mathcal{O}\brac{T^{-\frac{2}{3}}\log^{-1} T }.
}

Recall that 
\eqenv{
&k=\lceil \frac{\log T}{1-\rho}\rceil;\\
&q= \frac{\bar \lambda_{\min} \alpha}{2} =\mathcal{O}\brac{\alpha };
\\& \hat t=\myceil{ \frac{1}{q}\log T}=\mathcal{O}\brac{\frac{\log T}{\alpha} }; \\& \widetilde T={\myceil{\frac{T}{\hat t\log T}}}\hat t=\mathcal{O}\brac{\frac{T}{\log T} }.
}

Applying the above stepsizes in \Cref{eq:nac:stpesize}, for $t\leq T $, we can have that
\eqenv{
 &\widetilde G^\eta_t=\mathcal{O}\brac{(m\rho^k)\gamma +\beta^2+k^2\beta\gamma+k \gamma^2 } =\mathcal{O}\brac{T^{-\frac{4}{3}} \log^3 T };
 \\& \widetilde G_t^\theta=\mathcal{O}\brac{k (m\rho^k)\alpha +\frac{(m \rho^k)^2}{\beta}+k^3\alpha^2+k^3 \alpha\beta+\beta^2 }=\mathcal{O}\brac{T^{-\frac{4}{3}} \log T };
 \\& \widetilde G_t^\omega=\mathcal{O}\brac{(m\rho^k)\beta+\beta^2}=\mathcal{O}\brac{T^{-\frac{4}{3}}\log^{-2} T}.
}

Besides,  we have that $\D (\omega_i)=\E_{D_{\pi^*}}\Fbrac{\log\brac{\frac{\pi^*(a|s)}{\pi_{\omega_i}(a|s)}
}}\leq \E_{D_{\pi^*}}\Fbrac{\log\brac{\frac{D_{\pi^*}(s,a)}{D_{\pi_{\omega_i}}(s,a)}}}\myineq{\leq}{a}\log C_\infty$,
 where $(a)$ follows from Assumption \ref{asm:explore}. 

Thus, it holds that
\eqenv{
\sqrt{ \frac{1}{\widetilde T}\sum_{t=t'}^{t'+\widetilde T-1}  \sum_{j=t-\hat t}^{t-1} (1-q)^{t-j-1}\brac{\widetilde G^\omega_j +\widetilde G_j^\theta+  \widetilde G_j^\eta } }=\mathcal{O}\brac{T^{-\frac{1}{3}} \log^2 T}.
}

Plugging the above equations to \Cref{eq:nac:final}
, we have that
\eqenv{
    \min_{t\leq T} \E\Fbrac{J(\pi^*)-J(\omega_t)}
    \leq\brac{T^{-\frac{1}{3}}\log^{3} T}+\mathcal{O}\brac{\sqrt{\varepsilon_{\text{actor}}} }. 
}
This concludes the proof.
\end{proof}

\section{Proof of Lemmas}\label{sec:secd}
\begin{proof}[Proof of Lemma \ref{Prop:lip}]
    Recall the definition of $\nabla J(\omega)$ in \Cref{eq:gradient}:
    \eqenv{
    \nabla J(\omega)= \E_{D_{\pi_\omega}}\Fbrac{ Q^{\pi_\omega}(s,a) \phi_\omega(s,a)},
    }
which implies that
    \eqenv{
        \twonorm{\nabla J(\omega)}
        & = \twonorm{  \E_{D_{\pi_\omega}}\Fbrac{ Q^{\pi_\omega}(s,a)\phi_\omega(s,a) }}
        \\& \myineq{=}{a}
        \twonorm{\E_{D_{\pi_\omega}}\Fbrac{\phi^\top_\omega(s,a) \bar \theta^*_\omega \phi_\omega(s,a) }} 
        \\ &\leq C_\phi^2 \brac{\twonorm{ \theta^*_\omega}+\twonorm{\bar \theta^*_\omega-\theta^*_\omega}}
        \\& \myineq{\leq}{b}   C^2_\phi \brac{B+ C_{\text{gap}}\frac{ m \rho^k}{\lambda_{\min}}}=C_J,
    }
    where $(a)$ follows from \Cref{eq:eq5} and $(b)$ follows from \Cref{prop:bound}.
    It hence proves the first claim. 
The second claim is proved in Lemma A.1 in \citep{wu2020finite}.
\end{proof}

\begin{proof}[Proof of Lemma\ref{lm:acnacinnersmooth}]
    Recall that $F_{\omega} = \E_{D_{\pi_\omega}}\Fbrac{\phi_{\omega}(s,a) \phi^\top_{\omega}(s,a) }$. From the definition of $ \bar \theta^*_\omega$ in \Cref{eq:thetabar}, it can be verified that
    \eqenv{
         \bar \theta^*_{\omega}=F_{\omega}^{-1} \nabla J(\omega). 
    }
Hence
    \eqenv{\label{eq:21}
       &\twonorm{ \bar\theta^*_{\omega}- \bar\theta^*_{\omega'} }
        \\ & \qquad = \twonorm{F_{\omega}^{-1} \nabla J(\omega)- F_{\omega'}^{-1} \nabla J({\omega'}) }
        \\ & \qquad \leq\twonorm{F_{\omega}^{-1}\nabla J(\omega)- F_{\omega'}^{-1} \nabla J({\omega}) }+
 \twonorm{F_{\omega'}^{-1} \nabla J({\omega})- F_{\omega'}^{-1} \nabla J({\omega'}) }
        \\ & \qquad \myineq{\leq }{a}\twonorm{F_{\omega}^{-1} } \twonorm{F_{\omega'}^{-1} } \twonorm{F_{\omega}-F_{\omega'} }\twonorm{\nabla J(\omega )}
        + \twonorm{(F_{\omega'})^{-1} }\twonorm{\nabla J({\omega})-\nabla J({\omega'}) },
    }
    where $(a)$ follows from \Cref{eq:invinequ}. 
   Note that  $F_{\omega}$ can be shown to be Lipschitz as follows: 
    \eqenv{\label{eq:22}
        \twonorm{ F_{\omega}-F_{\omega'}}& =
        \twonorm{\E_{D_{\pi_{\omega}}}\Fbrac{\phi_{{\omega}}(s,a) \phi^\top_{{\omega}}(s,a) }-\E_{D_{\pi_{\omega'}}}\Fbrac{\phi_{{\omega'}}(s,a) \phi^\top_{{\omega'}}(s,a) } }
        \\& \leq 
        \twonorm{\E_{D_{\pi_{\omega}}}\Fbrac{\phi_{{\omega}}(s,a) \phi^\top_{{\omega}}(s,a) }-\E_{D_{\pi_{\omega}}}\Fbrac{\phi_{{\omega'}}(s,a) \phi^\top_{{\omega'}}(s,a) } }
        \\&\quad+
        \twonorm{\E_{D_{\pi_{\omega}}}\Fbrac{\phi_{{\omega'}}(s,a) \phi^\top_{{\omega'}}(s,a) }-\E_{D_{\pi_{\omega'}}}\Fbrac{\phi_{{\omega'}}(s,a) \phi^\top_{{\omega'}}(s,a) } }
        \\& \leq 2 C_\phi L_\phi\twonorm{\omega'-\omega }
        + C^2_\phi \tvnorm{D_{\pi_\omega}-D_{\pi_{\omega'}} }
        \\& \myineq{\leq}{a}
        2 C_\phi L_\phi\twonorm{\omega'-\omega }
        + C^2_\phi L_\pi \twonorm{\omega-\omega' },
    }
    where $(a)$ follows from \citep{zou2019finite} and Theorem 1 in \citep{li2021faster}, that 
    \eqenv{
        \tvnorm{D_{\pi_\omega}-D_{\pi_{\omega'}} }\leq
        L_\pi \twonorm{\omega-\omega' }.
    }


Hence combining \Cref{eq:21}, \Cref{eq:22} and Lemma \ref{Prop:lip}, we obtain that
    \eqenv{
         \twonorm{\bar \theta^*_\omega-\bar \theta^*_{\omega'}}& \leq 
         \brac{\frac{C_J}{\lambda^2_{\min}}
         \brac{2 C_\phi L_\phi+C^2_\phi L_\pi }+ \frac{L_J}{\lambda_{\min}}}\twonorm{\omega-\omega'}.
    }

This completes the proof.
  
\end{proof}
\begin{proof}[Proof of Lemma \ref{lm:eig}]
From the definition of $H_\omega$ in \Cref{eq:aw}, we first have that
    \eqenv{
    H_{\omega}&=
    \E_{D_{\pi_\omega}} \Fbrac{\E \Fbrac{\phi_{\omega}(s_0,a_0)\brac{\phi_{\omega}(s_k,a_k)-\phi_{\omega}(s_0,a_0) }^\top 
    |s_0=s,a_0=a,\pi_\omega }}
    \\& \myineq{=}{a} 
    \E_{D_{\pi_\omega}} \Fbrac{\phi_\omega(s,a)\brac{\E \Fbrac{ \phi_\omega^\top(s_k,a_k)|s_0=s,a_0=a,\pi_\omega}-\E_{D_{\pi_\omega}}\Fbrac{\phi_\omega^\top(s,a)} }}
    \\& \quad-\E_{D_{\pi_\omega}} \Fbrac{\phi_\omega(s,a)\phi_\omega^\top(s,a)},
    }
where $(a)$ follows from $ \phi_\omega(s,a)=\nabla \log \pi_\omega(a|s)$  and $\E_{D_{\pi_\omega}}\Fbrac{\phi_\omega^\top(s,a) f(s)}=0 $, where $f(s)$ is the function which is not determined by action $a$.        

Define $\Delta H_\omega= \E_{D_{\pi_\omega}} \Fbrac{\phi_\omega(s,a)\brac{\E \Fbrac{ \phi_\omega^\top(s_k,a_k)|s_0=s,a_0=a,\pi_\omega}-\E_{D_{\pi_\omega}}\Fbrac{\phi_\omega^\top(s,a)} }}$. Thus, 
\eqenv{
  \frac{H_\omega+H_\omega^\top}{2}=\frac{\Delta H_\omega+ \Delta H_\omega^\top}{2}-\E_{D_{\pi_\omega}} \Fbrac{\phi_\omega(s,a)\phi_\omega^\top(s,a)}.
}

For any symmetric matrices $X$ and $Y$, $ \lambda_{\max}\brac{X+Y }\leq \lambda_{\max}\brac{X}+\lambda_{\max}\brac{Y} $.
 Thus,   we have that
\eqenv{
\lambda_{\max}\brac{\frac{H_\omega+H_\omega^\top}{2} } & \leq 
\lambda_{\max}\brac{\frac{\Delta H_\omega+ \Delta H_\omega^\top}{2} }
 +\lambda_{\max}\brac{-\E_{D_{\pi_\omega}} \Fbrac{\phi_\omega(s,a)\phi_\omega^\top(s,a)} } 
\\& \leq C_\phi^2
\E\Fbrac{\tvnorm{P(s_k,a_k|s_0=s,a_0=s,\pi_\omega), D_{\pi_\omega}}}-\lambda_{\min}
\\& \leq d C_\phi^2 m \rho^k -\lambda_{\min}=-\bar\lambda_{\min} ,
   } where last inequality follows from Assumption \ref{asm:ergodic}.
\end{proof}

\begin{proof}[Proof of Lemma \ref{lm:tv}]

Conditioned on $(s_{t-k},a_{t-k})$, the sample trajectory in \Cref{alg:AC} is generated according to the following Markov chain:
\eqenv{
(s_{t-k},a_{t-k})\myarrow{\pi_{t-k} \times {P}}(s_{t-k+1},a_{t-k+1})\myarrow{\pi_{t-k+1} \times {P}}
...(s_t,a_t)\myarrow{\pi_t \times {P}}(s\tit,a\tit).}

Using the technique in  \citep{zou2019finite}, we construct an auxiliary Markov chain as follows. Before time $t-k$, the states and actions are generated according to \Cref{alg:AC}; and after time $t-k$, all the subsequent state-action pairs, denoted by $(\widetilde s_l,\widetilde a_l)$, are generated according to a fixed policy $\pi_t$ and transition kernel $P$:
\eqenv{\label{eq:auxillary}
(s_{t-k},a_{t-k})\myarrow{\pi_{t} \times {P}}(\widetilde s_{t-k+1},\widetilde a_{t-k+1})\myarrow{\pi_{t} \times {P}}...(\widetilde s_t,\widetilde a_t)\myarrow{\pi_t \times {P}}(\widetilde s\tit,\widetilde a\tit).}

Denote by $\widetilde {\mathcal{F}}_{t}$ the filtration corresponding to the auxiliary Markov chain designed in \Cref{eq:auxillary}. 

Then follow steps similar to those in \citep[Appendix B]{zou2019finite} and \citep[Lemma 6]{li2021faster}, it can be shown that
     \eqenv{
          {\tvnorm{\Prob\brac{s_t,a_t|\mathcal{F}_{t-k}}-D_t}}
          &\leq m\rho^k+ \sum_{j=t-k}^t C_\pi\twonorm{\omega_t-\omega_{j}}.
     }
\end{proof}

\begin{proof}[Proof of Lemma \ref{lm:consterm}]
Define the sum of the feature along the trajectory as follows:
\begin{align}
     z_t= \sum_{j=t-k}^t \phi_j(s_j,a_j),\quad
\hat{z}_t=\sum_{j=t-k}^t \phi_t(s_j,a_j)
\text{ and } \widetilde{z_t}=\sum_{j=t-k}^t \phi_t(\widetilde s_j,\widetilde a_j).
\end{align}

    For every policy $\pi_t$, we construct another auxiliary Markov chain, denoted by $\{\brac{\bar s_j,\bar a_j} \}_{j=0}^\infty$, which is under the stationary distribution induced by policy $\pi_t$ and transition kernel $P$, i.e., 
\eqenv{
\brac{\bar s_0,\bar a_0}\sim D_t,}
and all the subsequent actions are generated by $\pi_t$. Define \eqenv{\bar z_t=\sum_{j=t-k}^t \phi_t(\bar s_j,\bar a_j).}

Denote by $\bar \delta_t(\bar s_t,\bar a_t;\theta_t,\omega_t)=R(\bar s_t,\bar a_t)-J(\omega_t)+\phi_t^\top(\bar s\tit,\bar a\tit)\theta_t-\phi_t^\top(\bar s_t,\bar a_t)\theta_t $.

\begin{lemma}\label{lm:constract}
It holds that
    \eqenv{
    \E\Fbrac{\bar  z_t\bar \delta_t(\bar s_t,\bar a_t;\theta,\omega_t)|\pi_t}
    =H_{\omega_t}\theta+b_{\omega_t}.
    }
\end{lemma}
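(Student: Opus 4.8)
The plan is to compute $\E\!\left[\bar z_t\,\bar\delta_t(\bar s_t,\bar a_t;\theta,\omega_t)\mid\pi_t\right]$ directly, using that the auxiliary chain $\{(\bar s_j,\bar a_j)\}_{j\ge 0}$ is stationary: it is initialized from $D_t$, which is invariant under $\pi_t\times P$, so every $(\bar s_j,\bar a_j)$ has law $D_t$, and the joint law of a block $(\bar s_j,\bar a_j,\bar s_t,\bar a_t,\bar s_{t+1},\bar a_{t+1})$ depends only on the gap $i:=t-j\in\{0,\dots,k\}$. By linearity of expectation, $\E[\bar z_t\bar\delta_t\mid\pi_t]=\sum_{j=t-k}^{t}\E[\phi_t(\bar s_j,\bar a_j)\,\bar\delta_t\mid\pi_t]$, and I would split the scalar $\bar\delta_t$ into its reward part $R(\bar s_t,\bar a_t)-J(\omega_t)$ and its feature-difference part $\bigl(\phi_t^\top(\bar s_{t+1},\bar a_{t+1})-\phi_t^\top(\bar s_t,\bar a_t)\bigr)\theta$.

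For each $j$, set $i=t-j$ and use stationarity to shift the block back to start at time $0$; then condition on $(\bar s_0,\bar a_0)=(s,a)\sim D_t$ and apply the Markov property, so that the gap-$i$ summand becomes an expectation over a trajectory from $(s,a)$ run under $\pi_t\times P$. Summing over $i=0,\dots,k$, the feature-difference contributions telescope,
\begin{align*}
\sum_{i=0}^{k}\bigl(\phi_t(\bar s_{i+1},\bar a_{i+1})-\phi_t(\bar s_i,\bar a_i)\bigr)=\phi_t(\bar s_{k+1},\bar a_{k+1})-\phi_t(\bar s_0,\bar a_0),
\end{align*}
so the feature part collapses to an expression of the form $\E_{D_t}\!\bigl[\phi_t(s,a)\bigl(\E[\phi_t(\cdot,\cdot)\mid s_0{=}s,a_0{=}a,\pi_t]-\phi_t(s,a)\bigr)^{\!\top}\bigr]\theta$, which is $H_{\omega_t}\theta$ by the definition of $H_\omega$ in \Cref{eq:aw} (with the horizon index read consistently with the length of the window defining $\bar z_t$); the reward contributions assemble into $\E_{D_t}\!\bigl[\phi_t(s,a)\,\E[\textstyle\sum_i(R_i-J(\omega_t))\mid s_0{=}s,a_0{=}a,\pi_t]\bigr]$, which is exactly the constant vector $b_{\omega_t}$ entering the $k$-step fixed-point identity \Cref{eq:kstep} (so that $\theta^*_{\omega_t}=-H_{\omega_t}^{-1}b_{\omega_t}$). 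Adding the two pieces gives $H_{\omega_t}\theta+b_{\omega_t}$, as claimed.

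The computation is short; the main point to get right is the time-shift step — it is precisely the stationarity of the auxiliary chain together with the Markov property that licenses rewriting the $j$-th summand as an expectation over a fresh trajectory drawn from $D_t$ — and, secondarily, matching the telescoped expression term-for-term with the definitions of $H_{\omega_t}$ in \Cref{eq:aw} and of $b_{\omega_t}$ implicit in \Cref{eq:kstep}, keeping the window length of $\bar z_t$ aligned with the step index $k$ used there. No concentration or estimation is involved: this is a deterministic identity about the stationary auxiliary chain.
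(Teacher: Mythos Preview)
Your proposal is correct and follows essentially the same route as the paper's proof: both use stationarity of the auxiliary chain to rewrite each summand $\E[\phi_t(\bar s_j,\bar a_j)\,\bar\delta_t(\bar s_t,\bar a_t;\theta,\omega_t)]$ as an expectation over a trajectory started from $D_t$, indexed by the gap $i=t-j$, and then identify the resulting sum $\sum_{i=0}^k$ with the $k$-step TD structure $H_{\omega_t}\theta+b_{\omega_t}$. The only presentational difference is that you explicitly split off the reward and feature-difference parts and telescope the latter, whereas the paper bundles this step by directly recognizing the inner sum as $\mathcal{T}_{\pi_t}^{(k)}(\phi_t^\top\theta)-\phi_t^\top\theta$; your caveat about aligning the window length of $\bar z_t$ with the horizon index in $H_\omega$ and $b_\omega$ is well placed, since the paper's own indices are off by one in places.
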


From the definition in \Cref{eq:kstep}, $\thestar$ is the fixed point of the $k$-step TD operator $\mathcal{T}_{\pi_t}^{(k)}$.  Then, it follows that
 \eqenv{\label{eq:A_theta}
 H_{\omega_t}\thestar+b_{\omega_t}=
 \E_{ D_t}\Fbrac{ \phi^\top_t(s,a)\brac{\mathcal{T}_{\pi_t}^{(k)}\brac{\phi_t^\top(s,a) \thestar}-\phi_t^\top(s,a) \thestar}}
 =\mathbf 0.
 }
 
Together with Lemma \ref{lm:constract}, we have that
 \eqenv{\label{eq:thestar0}
 \E\Fbrac{ \bar z_t \bar \delta_t(\bar s_t,\bar a_t;\thestar,\omega_t)}=0.
 }
Thus we have that
    \eqenv{
     \E\fvbrac{\theta_t-\thestar, \bar z_t \bar \delta_t(\bar s_t,\bar a_t;\theta_t, \omega_t) }
     &=\E\fvbrac{\theta_t-\thestar, \bar z_t \bar \delta_t(\bar s_t,\bar a_t;\theta_t, \omega_t)-\bar z_t \bar \delta_t(\bar s_t,\bar a_t;\thestar, \omega_t)}
     \\& =
     \E\fvbrac{\theta_t-\thestar,H_{\omega_t} \brac{\theta_t- \thestar}}
     \\& \leq  \lambda_{\max}\brac{ \frac{H_{\omega_t}+H_{\omega_t}^\top}{2}} \E\ftwonormsq{\theta_t-\thestar}
     \\& \myineq{\leq}{a}  -\bar \lambda_{\min}  \E\ftwonormsq{\theta_t-\thestar},\label{eq:164}
     }
     where $(a)$ follows from \Cref{lm:eig}. 

Then, recall $\hat{z}_t=\sum_{j=t-k}^t \phi_t(s_j,a_j)$. Denote by $\hat \delta_t= R(s_t,a_t)-J(\omega_t)+ \phi^\top_t(s\tit,a\tit)\theta_t -\phi^\top_t(s_t,a_t)\theta_t$, we have that
\eqenv{
\E\fvbrac{\theta_t-\thestar, \delta_t z_t} 
& = 
\E\fvbrac{ \theta_t-\thestar, \bar z_t\bar \delta_t(\bar s_t,\bar a_t;\theta_t, \omega_t) }
+\E\fvbrac{\theta_t-\thestar, z_t \hat \delta_t- \bar z_t\bar \delta_t(\bar s_t,\bar a_t;\theta_t, \omega_t) }
\\& \qquad\qquad+
\E\fvbrac{\theta_t-\thestar,z_t  \delta_t- \hat z_t\hat \delta_t }
\\& \myineq{\leq}{a}
-\bar \lambda_{\min}\E \ftwonormsq{\theta_t-\thestar }
+ \E\fvbrac{\theta_t-\thestar, z_t\hat \delta_t- \bar z_t\bar \delta_t(\bar s_t,\bar a_t;\theta_t) }
\\& \qquad\qquad +\E\fvbrac{\theta_t-\thestar,z_t  \brac{J(\omega_t)-\eta_t } }
\\& \leq -\bar \lambda_{\min}\E \ftwonormsq{\theta_t-\thestar }
+ \E\fvbrac{\theta_t-\thestar, z_t\hat \delta_t- \bar z_t\bar \delta_t(\bar s_t,\bar a_t;\theta_t) }
\\& \qquad\qquad +\frac{\bar \lambda_{\min}}{2}\E \ftwonormsq{\theta_t-\thestar}+ \frac{(k+1)^2 C^2_\phi}{2\bar \lambda_{\min}}\E\flnormsq{J(\omega_t)-\eta_t},
}
where  $(a)$ follows from \Cref{eq:164}. 

Consider the term 
$\E\fvbrac{\theta_t-\thestar,z_t \hat \delta_t- \bar z_t\bar \delta_t(\bar s_t,\bar a_t;\theta_t, \omega_t)} $, and we have that
 \eqenv{
 &\E\fvbrac{\theta_t-\thestar,z_t\hat \delta_t- \bar z_t\bar \delta_t(\bar s_t,\bar a_t;\theta_t, \omega_t) }
\\& = \E\fvbrac{\theta_t-\theta_{t-2k}-\thestar+\theta^*_{t-2k},z_t\hat \delta_t- \bar z_t\bar \delta_t(\bar s_t,\bar a_t;\theta_t, \omega_t) }
  \\& \qquad\qquad+\E\fvbrac{\theta_{t-2k}-\theta^*_{t-2k},z_t\hat \delta_t- \bar z_t\bar \delta_t(\bar s_t,\bar a_t;\theta_t, \omega_t)}
 \\ & \leq \underbrace{\E\Fbrac{\twonorm{ \theta_t-\theta_{t-2k}}+\twonorm{\thestar-\theta^*_{t-2k} } \brac{\twonorm{z_t}\twonorm{\hat \delta_t}+\twonorm{\bar z_t}\twonorm{\bar \delta_t(\bar s_t,\bar a_t;\theta_t, \omega_t)} }}}_{(i)}
 \\&\quad+ \underbrace{\E\fvbrac{\theta_{t-2k}-\theta^*_{t-2k},z_t\hat \delta_t- \hat z_t\hat \delta_t }}_{(ii)}
 \\& \quad +\underbrace{\E\fvbrac{\theta_{t-2k}-\theta^*_{t-2k},\hat z_t\hat \delta_t- \bar z_t \bar \delta_t(\bar s_t,\bar a_t;\theta_t, \omega_t) }}_{(iii)}.
 \label{eq:ac:tke1}
 }

 In AC algorithm, recall $U_\delta =R_{\max}+2 C_\phi B$. Then, consider  $ \twonorm{\theta_t-\theta_{t-2k}}$ and $ \twonorm{\thestar-\theta^*_{t-2k}}$,  we have that
  \eqenv{\label{eq:kgaptheta}
    \twonorm{\theta_t-\theta_{t-2k}}\leq
    \twonorm{\sum_{j=t-2k}^{t-1} \alpha_j \delta_j z_j}
    \leq \sum_{j=t-2k}^{t-1} \alpha_j \lbrac{\delta_j} \twonorm{z_j}
    \myineq{\leq}{a} (k+1) C_\phi U_\delta \sum_{j=t-2k}^{t-1} \alpha_j ,
  } where $(a)$ follows from the fact that $\twonorm{z_t}\leq (k+1) C_\phi$.

  Then, it can be shown that
  \eqenv{\label{eq:kgapthestar}
  \twonorm{\thestar-\theta^*_{t-2k}}
  & = \twonorm{\thebar-\bar\theta^*_{t-2k}+ \thestar-\thebar+\bar\theta^*_{t-2k}-\theta^*_{t-2k}  }
  \\& \leq  \twonorm{\thebar-\bar\theta^*_{t-2k} }+\twonorm{ \thestar-\thebar}+\twonorm{\bar\theta^*_{t-2k}-\theta^*_{t-2k} }
  \\& \leq C_\Theta\twonorm{{\omega_t-\omega_{t-2k}} }+  \frac{C_{\text{gap}}m \rho^k}{\lambda_{\min}}
  +\frac{ C_{\text{gap}}m \rho^k}{\lambda_{\min}}
  \\& \leq C_\Theta \twonorm{ \sum_{j=t-2k}^{t-1} \beta_j \phi_j^\top(s_j,a_j)\theta_j \phi_j(s_j,a_j) }+ \frac{2C_{\text{gap}}m \rho^k}{\lambda_{\min}}
  \\& \leq C_\Theta C^2_\phi B \sum_{j=t-2k}^{t-1} \beta_j+ 
   \frac{2C_{\text{gap}}m \rho^k}{\lambda_{\min}}.
  }
 Thus, from \Cref{eq:kgapthestar}, the term $(i)$ in \Cref{eq:ac:tke1} can be bounded as follows: 
 \eqenv{
 (i)\leq 2(k+1) C_\phi U_\delta\brac{(k+1) C_\phi U_\delta \sum_{j=t-2k}^{t-1} \alpha_j+ C_\Theta C^2_\phi B \sum_{j=t-2k}^{t-1} \beta_j +2C_{\text{gap}}\frac{m \rho^k}{\lambda_{\min}}}.\label{eq:ac:tka1}
 }
Then, for term $(ii)$ in \Cref{eq:ac:tke1}, it can be bounded as follows: 
\eqenv{
(ii)&\leq \E\Fbrac{ \twonorm{\theta_{t-2k}-\theta^*_{t-2k}}
\twonorm{z_t-\hat z_t}\lbrac{\delta_t(\theta_t)}}
\leq 2B U_\delta \twonorm{\sum_{j=t-k}^t \phi_j(s_j,a_j)-\phi_t(s_j,a_j) }
\\& \leq 2BU_\delta \sum_{j=t-k}^t C_\pi \E\ftwonorm{\omega_t-\omega_j}
 \leq 2B U_\delta C_\pi \sum_{j=t-k}^t \sum_{i=j}^t \E\ftwonorm{\beta_i \phi_i^\top(s_i,a_i)\theta_i \phi_i(s_i,a_i)}
 \\& \leq 2B^2 C^2_\phi U_\delta C_\pi \sum_{j=t-k}^t \sum_{i=j}^t \beta_i .\label{eq:ac:tka2}
}

Next, for term $(iii)$ in \Cref{eq:ac:tke1}, we can show that
\eqenv{
(iii) &=\E \fvbrac{\theta_{t-2k}-\theta^*_{t-2k}, \hat z_t \hat \delta_t-\bar z_t\bar\delta_t(\bar s_t,\bar a_t;\theta_t) }
\\& =\E \Fbrac{\E \Fbrac{\vbrac{\theta_{t-2k}-\theta^*_{t-2k},\hat z_t \hat \delta_t-\bar z_t\bar\delta_t(\bar s_t,\bar a_t;\theta_t)}|\mathcal{F}_{t-2k}} }
\\& \leq 4B   C_\phi U_\delta \sum_{j=t-k}^t\E\Fbrac{\tvnorm{\Prob\brac{ s_j,a_j|\mathcal{F}_{t-2k}}-D_t}}
\\ & \leq 4  B C_\phi U_\delta \sum_{j=t-k}^t\E\Fbrac{\tvnorm{\Prob\brac{ s_j,a_j|\mathcal{F}_{t-2k}}-D_j}+ \tvnorm{D_j-D_t}}
\\& \leq 4  B C_\phi U_\delta \sum_{j=t-k}^t\brac{ 
 C_\pi \sum_{i=j-k}^{j-1} \E\ftwonorm{\omega_i-\omega_j}+ m\rho^k+L_\pi\E\ftwonorm{\omega_t-\omega_j} }
 \\& \leq 4  B C_\phi U_\delta \sum_{j=t-k}^t\brac{ 
 C_\pi \sum_{i=j-k}^{j-1} \sum_{\iota=i}^{j-1} \beta_\iota C^2_\phi B + m\rho^k+L_\pi\sum_{i=j}^{t-1} \beta_i C^2_\phi B }.\label{eq:ac:tka3}
}
Thus, combining \Cref{eq:ac:tka1}, \Cref{eq:ac:tka2} and \Cref{eq:ac:tka3}, we can bound  term  as follows: 
\eqenv{
&\E\fvbrac{\theta_t-\thestar, \delta_t z_t}
\leq - \frac{\bar \lambda_{\min}}{2} \E \ftwonormsq{\theta_t-\thestar}
+\frac{(k+1)^2 C^2_\phi}{2\bar \lambda_{\min}}\E\flnormsq{J(\omega_t)-\eta_t}
\\&+2B^2 C^2_\phi U_\delta C_\pi \sum_{j=t-k}^t \sum_{i=j}^t \beta_i
+4  B C_\phi U_\delta \sum_{j=t-k}^t\brac{ 
 B C^2_\phi  C_\pi \sum_{i=j-k}^{j-1} \sum_{\iota=i}^{j-1} \beta_\iota  + m\rho^k+ BC^2_\phi L_\pi\sum_{i=j}^{t-1} \beta_i  }
 \\& +  2(k+1) C_\phi U_\delta\brac{(k+1) C_\phi U_\delta \sum_{j=t-2k}^{t-1} \alpha_j+ C_\Theta C^2_\phi B \sum_{j=t-2k}^{t-1} \beta_j+\frac{2C_{\text{gap}}m \rho^k}{\lambda_{\min}}}.
}

In NAC algorithm, 
terms $ \twonorm{\theta_t-\theta_{t-2k}}$ and $ \twonorm{\thestar-\theta^*_{t-2k}}$ can be bounded as follows: 
  \eqenv{\label{eq:nac:kgaptheta}
    \twonorm{\theta_t-\theta_{t-2k}}\leq
    \twonorm{\sum_{j=t-2k}^{t-1} \alpha_j \delta_j z_j}
    \leq (k+1) C_\phi U_\delta \sum_{j=t-2k}^{t-1} \alpha_j ,
  } and 
  \eqenv{\label{eq:nac:kgapthestar}
  \twonorm{\thestar-\theta^*_{t-2k}}
  & = \twonorm{\thebar-\bar\theta^*_{t-2k}+ \thestar-\thebar+\bar\theta^*_{t-2k}-\theta^*_{t-2k}  }
  \\& \leq  \twonorm{\thebar-\bar\theta^*_{t-2k} }+\twonorm{ \thestar-\thebar}+\twonorm{\bar\theta^*_{t-2k}-\theta^*_{t-2k} }
  \\& \leq C_\Theta\twonorm{\brac{\omega_t-\omega_{t-2k}} }+  \frac{C_{\text{gap}}m \rho^k}{\lambda_{\min}}
  + \frac{C_{\text{gap}}m \rho^k}{\lambda_{\min}}
  \\& \leq C_\Theta \twonorm{ \sum_{j=t-2k}^{t-1} \beta_j \theta_j  }+\frac{ 2C_{\text{gap}}m \rho^k}{\lambda_{\min}}
  \\& \leq C_\Theta  B \sum_{j=t-2k}^{t-1} \beta_j+ 
   \frac{2C_{\text{gap}}m \rho^k}{\lambda_{\min}}.
  }
 Thus, using \Cref{eq:nac:kgapthestar} and \Cref{eq:nac:kgaptheta}, term $(i)$ in \Cref{eq:ac:tke1} can be bounded as
 \eqenv{
 (i)\leq 2(k+1) C_\phi U_\delta\brac{(k+1) C_\phi U_\delta \sum_{j=t-2k}^{t-1} \alpha_j+ C_\Theta  B \sum_{j=t-2k}^{t-1} \beta_j +2C_{\text{gap}}\frac{m \rho^k}{\lambda_{\min}}}.
 }
Next, we bound the term $(ii)$ in \Cref{eq:ac:tke1} as follows: 
\eqenv{
(ii)&\leq \E\Fbrac{\twonorm{\theta_{t-2k}-\theta^*_{t-2k}}
\twonorm{z_t-\hat z_t}\lbrac{\hat \delta_t}}
\leq 2B U_\delta \E\ftwonorm{\sum_{j=t-k}^t \phi_j(s_j,a_j)-\phi_t(s_j,a_j) }
\\& \leq 2BU_\delta \sum_{j=t-k}^t C_\pi \E\ftwonorm{\omega_t-\omega_j}
 \leq 2B U_\delta C_\pi \sum_{j=t-k}^t \sum_{i=j}^{t-1} \twonorm{\beta_i \theta_i }
  \leq 2B^2  U_\delta C_\pi \sum_{j=t-k}^t \sum_{i=j}^{t-1} \beta_i .
}

Term $(iii)$ in \Cref{eq:ac:tke1} can be bounded as
\eqenv{
(iii) &=\E \fvbrac{\theta_{t-2k}-\theta^*_{t-2k}, \hat z_t \delta_t(\theta_t)-z'_t\delta'_t(\theta_t) }
\\& =\E \Fbrac{\E \Fbrac{\vbrac{\theta_{t-2k}-\theta^*_{t-2k}, \hat z_t \delta_t(\theta_t)-z'_t\delta'_t(\theta_t) }|\mathcal{F}_{t-2k}} }
\\& \leq 2B 2  C_\phi U_\delta \sum_{j=t-k}^t\E\Fbrac{\tvnorm{\Prob\brac{ s_j,a_j|\mathcal{F}_{t-2k}}-D_t}}
\\ & \leq 4  B C_\phi U_\delta \sum_{j=t-k}^t\E\Fbrac{\tvnorm{\Prob\brac{ s_j,a_j|\mathcal{F}_{t-2k}}-D_j}+ \tvnorm{D_j-D_t}}
\\& \leq 4  B C_\phi U_\delta \sum_{j=t-k}^t\brac{ 
 C_\pi \sum_{i=j-k}^{j-1} \E\ftwonorm{\omega_i-\omega_j}+ m\rho^k+L_\pi\E\ftwonorm{\omega_t-\omega_j} }
 \\& \leq 4  B^2 C_\phi U_\delta \sum_{j=t-k}^t\brac{ 
 C_\pi \sum_{i=j-k}^{j-1} \sum_{\iota=i}^{j-1} \beta_\iota   + m\rho^k+L_\pi\sum_{i=j}^{t-1} \beta_i  }.
}

Combining the above bounds on terms $(i),(ii),(iii)$,  term can be bounded as
\eqenv{
\E&\fvbrac{\theta_t-\thestar, \delta_t z_t}
\\ \leq& - \frac{\bar \lambda_{\min}}{2} \E \ftwonormsq{\theta_t-\thestar}
+\frac{(k+1)^2 C^2_\phi}{2\bar \lambda_{\min}}\E\flnormsq{J(\omega_t)-\eta_t}
+2B^2 U_\delta C_\pi \sum_{j=t-k}^t \sum_{i=j}^{t-1} \beta_i
\\& + 2(k+1) C_\phi U_\delta\brac{(k+1) C_\phi U_\delta \sum_{j=t-2k}^{t-1} \alpha_j+ C_\Theta  B \sum_{j=t-2k}^{t-1} \beta_j+\frac{2C_{\text{gap}}m \rho^k}{\lambda_{\min}}}
\\&
+4  B C_\phi U_\delta \sum_{j=t-k}^t\brac{ 
 B C_\pi \sum_{i=j-k}^{j-1} \sum_{\iota=i}^{j-1} \beta_\iota   + m\rho^k+B L_\pi\sum_{i=j}^{t-1} \beta_i  }.
}
This completes the proof.
\end{proof}

\begin{proof}[Proof of Lemma \ref{lm:acinnersmooth}]

From $\nabla^2 J(\omega)= \sum_{s,a}\nabla^2 D_{\pi_\omega}(s,a) R(s,a) $, we can get that for any $\omega,\omega'\in \mathbb R^d$
\eqenv{
\twonorm{ \nabla^2 J(\omega)-\nabla^2 J(\omega')}& = \twonorm{ \sum_{s,a}\brac{\nabla^2 D_{\pi_\omega}(s,a) R(s,a)-\nabla^2 D_{\pi_{\omega'}}(s,a) R(s,a)}}
. 
}

Denote by $\boldsymbol{\sigma}(A)$ the spectral radius of matrix $A\in \mathbb R^{n\times n}$.
Recall the fact that 
$
\boldsymbol{\sigma}(A)\leq \mynorm{A}_\infty= \max_i\varbrac{\sum_j\lbrac{a_{ij}}}.
$
 If $A$ is symmetric matrix, $\twonorm{A}=\boldsymbol{\sigma}(A)$, and thus
$
    \twonorm{A}\leq \max_i\varbrac{\sum_j\lbrac{a_{ij}}}.
$

It is clear that $\nabla^2 D_{\pi_\omega}(s,a)$ is symmetric, and therefore $\sum_{s,a}(\nabla^2 D_{\pi_\omega}(s,a)-\nabla^2 D_{\pi_{\omega'}}(s,a))R(s,a)$ is also symmetric. It then follows that
 \eqenv{
   &\twonorm{\sum_{s,a}\brac{\nabla^2 D_{\pi_\omega}(s,a)-\nabla^2 D_{\pi_{\omega'}}(s,a)}R(s,a)}
    \\&\leq \max_i\varbrac{\sum_j \lbrac{\sum_{s,a}\brac{ \partial_{\omega_i}\partial_{\omega_j}D_{\pi_{\omega}}(s,a) -\partial_{\omega_i}\partial_{\omega_j}D_{\pi_{\omega'}}(s,a) }R(s,a)}}
   \\& \myineq{=}{a}\max_i\varbrac{\sum_j \lbrac{\brac{\nabla_\omega \sum_{s,a} \partial_{\omega_i}\partial_{\omega_j}D_{\pi_{\hat\omega}}(s,a)R(s,a)}^\top(\omega-\omega')}}
    \\& \leq\max_i\varbrac{\sum_j \twonorm{\nabla_\omega \sum_{s,a} \partial_{\omega_i}\partial_{\omega_j}D_{\pi_{\hat\omega}}(s,a)R(s,a)}\twonorm{\omega-\omega'}}
   \\& \myineq{\leq}{b} \max_i\varbrac{\sum_j \sum_l \lbrac{ \sum_{s,a} \partial_{\omega_i}\partial_{\omega_j}\partial_{\omega_l}D_{\pi_{\hat\omega}}(s,a)R(s,a)}\twonorm{\omega-\omega'}}
   \\& \leq \max_{i,j,l} \varbrac{d^2\lbrac{ \sum_{s,a}\partial_{\omega_i}\partial_{\omega_j}\partial_{\omega_l}D_{\pi_{\hat\omega}}(s,a)R(s,a) }}\twonorm{\omega-\omega'},
 } where $(a)$ follows from the fact that $D_{\pi_\omega}$ is $n$ times differentiable as long as the Theorem 4 in \cite{heidergott2003taylor} and the Lagrange's Mean Value Theorem for some $\hat \omega =\lambda_{ij} \omega+(1-\lambda_{ij})\omega'$ with $\lambda_{ij}\in [0,1]$, and $(b)$ follows from that for a vector $a$, $\twonorm{a}\leq \mynorm{a}_1$.

Define a function $v: \mathcal{S}\times\mathcal{A}\to \mathbb R$, denote by $\mynorm{f}_v=\sup_{s,a}\frac{\lbrac{f(s,a)}}{\lbrac{v(s,a)}} $ the finite $v$-norm of function $f$. Set $v(s,a)=e^{R(s,a)}$ and we can get that
 \eqenv{
    \sup_{s,a}\frac{\lbrac{R(s,a)} }{\lbrac{v(s,a) }}\leq 1.
 } Moreover,  $v(s,a)=e^{R(s,a)}\leq e^{R_{\max}}$ and $v(s,a)\geq 1$. If $\mynorm{f}_v\leq 1 $, it implies that 
 \eqenv{\label{eq:boundf}
 \sup_{s,a} |f(s,a)|\leq  e^{R_{\max}}.
 }

 For a (signed) measure $\mu$, the associated norm is 
\eqenv{
\mynorm{\mu}_v=\sup_{\mynorm{f}_v\leq 1} \lbrac{\sum_{s,a} \mu(s,a) f(s,a) }.
} 

For a kernel $P(s',a'|s,a)$, its associated norm is 
\eqenv{
    \mynorm{P}_v=\sup_{s,a}\sup_{\mynorm{f}_v\leq 1}
    \frac{\lbrac{\sum_{s',a'}f(s',a') P(s',a'|s,a) } }{\lbrac{v(s,a)}}.
}

From the fact that  $\sup_{s,a}\frac{\lbrac{R(s,a)} }{\lbrac{v(s,a) }}\leq 1$, we can get that $||R||_v\leq 1$. This further implies that 
\eqenv{
    \lbrac{\sum_{s,a} \partial_{\omega_i}\partial_{\omega_j}\partial_{\omega_l}D_{\pi_{\hat\omega}}(s,a)R(s,a)}&\leq \sup_{\mynorm{f}_v\leq 1}\lbrac{\sum_{s,a} \partial_{\omega_i}\partial_{\omega_j}\partial_{\omega_l}D_{\pi_{\hat\omega}}(s,a)f(s,a)}\\&=\mynorm{\partial_{\omega_i}\partial_{\omega_j}\partial_{\omega_l}D_{\pi_{\hat\omega}} }_v. 
}


Furthermore, we define $\mathcal{K}^{(1)}_{\omega_i}(s,a)$, $\mathcal{K}^{(2)}_{\omega_{i,j}}(s,a)$ and $\mathcal{K}^{(3)}_{\omega_{i,j,l} }(s,a)$ as follows:
\eqenv{
&\mathcal{K}^{(1)}_{\omega_i}(s,a|s',a')= \sum_{\iota =0}^\infty \partial_{\omega_i}\pi_{\omega}(a|s) \brac{\Prob\brac{s_\iota =s,a_\iota =a|s_0=s',a_0=a',\pi_\omega}-D_{\pi_\omega}(s,a)};
\\& \mathcal{K}^{(2)}_{\omega_{i,j}}(s,a|s',a')= \sum_{\iota =0}^\infty \partial_{\omega_i}\partial_{\omega_j}\pi_{\omega}(a|s) \brac{\Prob\brac{s_\iota =s,a_\iota =a|s_0=s',a_0=a',\pi_\omega}-D_{\pi_\omega}(s,a)};
\\& \mathcal{K}^{(3)}_{\omega_{i,j,l}}(s,a|s',a')= \sum_{\iota =0}^\infty \partial_{\omega_i}\partial_{\omega_j}\partial_{\omega_l}\pi_{\omega}(a|s) \brac{\Prob\brac{s_\iota =s,a_\iota =a|s_0=s',a_0=a',\pi_\omega}-D_{\pi_\omega}(s,a)}.
}

Then, define the kernel $\Gamma_\omega(s',a'|s,a)$, s.t.,
$
  \Gamma_\omega(s',a'|s,a)=D_{\pi_\omega}(s',a')
$ for any $s\in \mathcal{S}$,  $a\in \mathcal{A}$. It then follows that 
\eqenv{
    \mynorm{\Gamma_\omega}_v& = \sup_{s,a}\sup_{\mynorm{f}_v\leq 1}
    \frac{\lbrac{\sum_{s',a'}f(s',a') \Gamma_\omega(s',a'|s,a) } }{\lbrac{v(s,a)}}
    \\& =\sup_{s,a}\sup_{\mynorm{f}_v\leq 1}
    \frac{\lbrac{\sum_{s',a'}f(s',a') D_{\pi_\omega}(s',a' ) } }{\lbrac{v(s,a)}}
      =\sup_{s,a}\frac{\mynorm{ D_{\pi_\omega} }_v }{\lbrac{v(s,a)}}. 
}

By Theorem 3 and proof in \citep{heidergott2003taylor}, 
we can get
\eqenv{
    &\partial_{\omega_j} \mathcal{K}^{(1)}_{\omega_i}
    = \mathcal{K}^{(1)}_{\omega_i}\mathcal{K}^{(1)}_{\omega_j}
    + \mathcal{K}^{(2)}_{\omega_{ij}};
    & \partial_{\omega_l} \mathcal{K}^{(2)}_{\omega_{ij}}
    = \mathcal{K}^{(2)}_{\omega_{ij}} \mathcal{K}^{(1)}_{\omega_{l}}+ \mathcal{K}^{(3)}_{\omega_{ijl}}.
}

Combining with Theorem 4 and Section 4 in \citep{heidergott2003taylor}, 
we can have 
\eqenv{
 \partial_{\omega_i}\Gamma_\omega &= \Gamma_\omega \mathcal{K}^{(1)}_{\omega_{i}};
 \\ \partial_{\omega_j}\partial_{\omega_i} \Gamma_\omega &= \brac{\partial_{\omega_j} \Gamma_\omega }\mathcal{K}^{(1)}_{\omega_{i}}+ \Gamma_\omega \partial_{\omega_i}\mathcal{K}^{(1)}_{\omega_{j}} =\Gamma_\omega \mathcal{K}^{(1)}_{\omega_{i}}\mathcal{K}^{(1)}_{\omega_{j}}+ \Gamma_\omega\brac{\mathcal{K}^{(1)}_{\omega_i}\mathcal{K}^{(1)}_{\omega_j} + \mathcal{K}^{(2)}_{\omega_{ij}}}
 \\& = 2\Gamma_\omega \mathcal{K}^{(1)}_{\omega_{i}}\mathcal{K}^{(1)}_{\omega_{j}}+ \Gamma_\omega  \mathcal{K}^{(2)}_{\omega_{ij}};
 \\ \partial_{\omega_l}\partial_{\omega_j}\partial_{\omega_i}\Gamma_\omega &= \partial_{\omega_l}\brac{ 2\Gamma_\omega \mathcal{K}^{(1)}_{\omega_{i}}\mathcal{K}^{(1)}_{\omega_{j}}+ \Gamma_\omega  \mathcal{K}^{(2)}_{\omega_{ij}} }
 \\& =  2\brac{\partial_{\omega_l}\Gamma_\omega }\mathcal{K}^{(1)}_{\omega_{i}}\mathcal{K}^{(1)}_{\omega_{j}}
 + 2\Gamma_\omega \brac{\partial_{\omega_l}\mathcal{K}^{(1)}_{\omega_{i}}}\mathcal{K}^{(1)}_{\omega_{j}}
 + 2\Gamma_\omega \mathcal{K}^{(1)}_{\omega_{i}}\brac{\partial_{\omega_l}\mathcal{K}^{(1)}_{\omega_{j}}}
 \\&\qquad + \brac{\partial_{\omega_l}\Gamma_\omega } \mathcal{K}^{(2)}_{\omega_{ij}}
 +\Gamma_\omega  \brac{\partial_{\omega_l}\mathcal{K}^{(2)}_{\omega_{ij}}}
 \\& = 2\Gamma_\omega \mathcal{K}^{(1)}_{\omega_{l}}\mathcal{K}^{(1)}_{\omega_{i}}\mathcal{K}^{(1)}_{\omega_{j}}
 + 2\Gamma_\omega \brac{\mathcal{K}^{(1)}_{\omega_i}\mathcal{K}^{(1)}_{\omega_l}
    + \mathcal{K}^{(2)}_{\omega_{il}} }\mathcal{K}^{(1)}_{\omega_{j}}
    + 2\Gamma_\omega \mathcal{K}^{(1)}_{\omega_{i}}\brac{\mathcal{K}^{(1)}_{\omega_j}\mathcal{K}^{(1)}_{\omega_l}
    + \mathcal{K}^{(2)}_{\omega_{jl} }}
    \\&\qquad +\Gamma_\omega \mathcal{K}^{(1)}_{\omega_l}
    \mathcal{K}^{(2)}_{\omega_{ij} }+ \Gamma_\omega \brac{\mathcal{K}^{(2)}_{\omega_{ij}} \mathcal{K}^{(1)}_{\omega_{l}}+ \mathcal{K}^{(3)}_{\omega_{ijl}} }
    \\&=6 \Gamma_\omega \mathcal{K}^{(1)}_{\omega_{i}}\mathcal{K}^{(1)}_{\omega_{j}}\mathcal{K}^{(1)}_{\omega_{l}}
    + 2\Gamma_\omega \mathcal{K}^{(1)}_{\omega_{i}}\mathcal{K}^{(2)}_{\omega_{jl}}+ 2\Gamma_\omega \mathcal{K}^{(1)}_{\omega_{j}}\mathcal{K}^{(2)}_{\omega_{il}}+ 2\Gamma_\omega \mathcal{K}^{(1)}_{\omega_{l}}\mathcal{K}^{(2)}_{\omega_{ij}}+ \Gamma_\omega  \mathcal{K}^{(3)}_{\omega_{ijl}}.
}

Then, according to the discussion in Section 4 in \citep{heidergott2003taylor}, it can be shown that  
\eqenv{
\sup_{s,a} \frac{\mynorm{D_{\pi_{\hat \omega}}}_v}{\lbrac{v(s,a) }}&=\mynorm{\partial_{\omega_i} \partial_{\omega_j} \partial_{\omega_l}\Gamma_{\pi_{\hat\omega}}}_v  
\\&\leq \mynorm{\Gamma_{\pi_{\hat\omega}}}_v\mynorm{ \mathcal{K}^{(3)}_{{\hat \omega}_{ijl}}  }_v   + 2 \mynorm{\Gamma_{\pi_{\hat\omega}} }_v \mynorm{\mathcal{K}^{(2)}_{{\hat \omega}_{ij}}  }_v \mynorm{\mathcal{K}^{(1)}_{{\hat \omega}_{l}}  }_v   
    \\& \quad   + 2 \mynorm{\Gamma_{\pi_{\hat\omega}} }_v\mynorm{\mathcal{K}^{(2)}_{{\hat \omega}_{il}}  }_v \mynorm{\mathcal{K}^{(1)}_{{\hat \omega}_{j}}  }_v   
    + 2 \mynorm{\Gamma_{\pi_{\hat\omega}} }_v \mynorm{\mathcal{K}^{(2)}_{{\hat \omega}_{lj}}  }_v \mynorm{\mathcal{K}^{(1)}_{{\hat \omega}_{i}}  }_v   
    \\& \quad + 6  \mynorm{\Gamma_{\pi_{\hat\omega}} }_v \mynorm{\mathcal{K}^{(1)}_{{\hat \omega}_{i}}  }_v 
    \mynorm{\mathcal{K}^{(1)}_{{\hat \omega}_{j}}  }_v \mynorm{\mathcal{K}^{(1)}_{{\hat \omega}_{l}}  }_v
    \\&= \sup_{s,a} \frac{1}{\lbrac{v(s,a) }}\big(
    \mynorm{D_{\pi_{\hat\omega}}}_v\mynorm{ \mathcal{K}^{(3)}_{{\hat \omega}_{ijl}}  }_v   + 2 \mynorm{D_{\pi_{\hat\omega}} }_v \mynorm{\mathcal{K}^{(2)}_{{\hat \omega}_{ij}}  }_v \mynorm{\mathcal{K}^{(1)}_{{\hat \omega}_{l}}  }_v   
    \\& \quad   + 2 \mynorm{D_{\pi_{\hat\omega}} }_v\mynorm{\mathcal{K}^{(2)}_{{\hat \omega}_{il}}  }_v \mynorm{\mathcal{K}^{(1)}_{{\hat \omega}_{j}}  }_v   
    + 2 \mynorm{D_{\pi_{\hat\omega}} }_v \mynorm{\mathcal{K}^{(2)}_{{\hat \omega}_{lj}}  }_v \mynorm{\mathcal{K}^{(1)}_{{\hat \omega}_{i}}  }_v   
    \\& \quad + 6  \mynorm{D_{\pi_{\hat\omega}} }_v \mynorm{\mathcal{K}^{(1)}_{{\hat \omega}_{i}}  }_v 
    \mynorm{\mathcal{K}^{(1)}_{{\hat \omega}_{j}}  }_v \mynorm{\mathcal{K}^{(1)}_{{\hat \omega}_{l}}  }_v
    \big).
}
Note that $R_{\max}\geq R(s,a)\geq 0$ and $\sup_{s,a} \frac{1}{\lbrac{v(s,a)}}\geq e^{-R_{\max}}$. Then we have that 
\eqenv{
      \mynorm{\partial_{\omega_i} \partial_{\omega_j} \partial_{\omega_l}D_{\pi_{\hat\omega}}}_v   &\leq \mynorm{D_{\pi_{\hat\omega}}}_v\mynorm{ \mathcal{K}^{(3)}_{{\hat \omega}_{ijl}}  }_v   + 2 \mynorm{D_{\pi_{\hat\omega}} }_v \mynorm{\mathcal{K}^{(2)}_{{\hat \omega}_{ij}}  }_v \mynorm{\mathcal{K}^{(1)}_{{\hat \omega}_{l}}  }_v   
    \\& \quad   + 2 \mynorm{D_{\pi_{\hat\omega}} }_v\mynorm{\mathcal{K}^{(2)}_{{\hat \omega}_{il}}  }_v \mynorm{\mathcal{K}^{(1)}_{{\hat \omega}_{j}}  }_v   
    + 2 \mynorm{D_{\pi_{\hat\omega}} }_v \mynorm{\mathcal{K}^{(2)}_{{\hat \omega}_{lj}}  }_v \mynorm{\mathcal{K}^{(1)}_{{\hat \omega}_{i}}  }_v   
    \\& \quad + 6  \mynorm{D_{\pi_{\hat\omega}} }_v \mynorm{\mathcal{K}^{(1)}_{{\hat \omega}_{i}}  }_v 
    \mynorm{\mathcal{K}^{(1)}_{{\hat \omega}_{j}}  }_v \mynorm{\mathcal{K}^{(1)}_{(\hat \omega)_{l}}  }_v    .\label{eq:eq188}
}

Next, we bound the term $\mynorm{D_{\pi_{\hat\omega}} }_v  $ as follows: 
\eqenv{\label{eq:eq179a}
  \mynorm{D_{\pi_{\hat\omega}} }_v &=\sup_{\mynorm{f}_v\leq 1} 
  \lbrac{\sum_{s,a}f(s,a)D_{\pi_{\hat\omega}}(s,a)  }
   \leq \sup_{\mynorm{f}_v\leq 1} 
  \lbrac{\sum_{s,a}|f(s,a)| \lbrac{D_{\pi_{\hat\omega}}(s,a)}}
  \\& \myineq{\leq}{a}  
  \lbrac{\sum_{s,a}e^{R_{\max}}D_{\pi_{\hat\omega}}(s,a)  }
  \leq e^{R_{\max}},
} where $(a)$ follows from the \Cref{eq:boundf}.

Then, we bound the terms in \Cref{eq:eq188} as follows:
\eqenv{\label{eq:186eq}
&\mynorm{\mathcal{K}^{(1)}_{(\hat \omega)_{i}}  }_v 
\\&=\sup_{s',a'}\sup_{\mynorm{f}_v\leq 1} \frac{\lnorm{\sum_{\iota =0}^\infty \sum_{s,a} \partial_{\omega_i} \pi_{\omega}(a|s) \brac{\Prob\brac{s_\iota =s,a_\iota =a|s_0=s',a_0=a',\pi_\omega}-D_{\pi_\omega}(s,a)}f(s,a) }}{\lbrac{v(s',a')}}
    \\& \myineq{\leq}{a} \sup_{s',a'}\sup_{\mynorm{f}_v\leq 1}\lbrac{\sum_{\iota =0}^\infty \sum_{s,a} \partial_{\omega_i} \pi_{\omega}(a|s) \brac{\Prob\brac{s_\iota =s,a_\iota =a|s_0=s',a_0=a',\pi_\omega}-D_{\pi_\omega}(s,a)}f(s,a) }
    \\& \leq \sup_{s',a'}\sup_{\mynorm{f}_v\leq 1}\lbrac{\sum_{\iota =0}^\infty \sum_{s,a} \lbrac{\partial_{\omega_i} \pi_{\omega}(a|s) }\lbrac{\Prob\brac{s_\iota =s,a_\iota =a|s_0=s',a_0=a',\pi_\omega}-D_{\pi_\omega}(s,a)}|f(s,a) |}
    \\& \myineq{\leq}{b}\sup_{s',a'} \sum_{\iota =0}^\infty \max_{s,a}\varbrac{\lnorm{\partial_{\omega_i} \pi_{\omega}(a|s)}} \tvnorm{\Prob\brac{s_\iota,a_\iota|s_0=s',a_0=a',\pi_\omega}-D_{\pi_\omega} }e^{R_{\max}}
    \\& \myineq{\leq}{c}\max_{s,a}\varbrac{\lnorm{\partial_{\omega_i} \pi_{\omega}(a|s)}} \frac{m e^{R_{\max}}}{1-\rho}, 
} where $(a)$ follows from that $\lbrac{v(s,a)}\geq 1 $ for all $s\in \mathcal{S},a \in\mathcal{A}$, $(b)$ follows from \Cref{eq:boundf} and $(c)$ follows from Assumption \ref{asm:ergodic}.

Similar to \Cref{eq:186eq}, by Assumption \ref{asm:ergodic} and Assumption \ref{asm:smooth}, we can further bound $\mynorm{ \mathcal{K}^{(1) }_{\omega_i} }_v$, $\mynorm{ \mathcal{K}^{(2) }_{\omega_{ij}} }_v$ 
and $\mynorm{ \mathcal{K}^{(3) }_{\omega_{ijl} } }_v $ as follows: 
\eqenv{
    \mynorm{ \mathcal{K}^{(1) }_{\omega_i} }_v \leq \frac{ m C_\phi e^{R_{\max}}}{1-\rho};
      \mynorm{ \mathcal{K}^{(2) }_{\omega_{ij}} }_v
    \leq  \frac{m C_\delta e^{R_{\max}}}{1-\rho};
    \mynorm{ \mathcal{K}^{(3) }_{\omega_{ijl} } }_v \leq 
    \frac{m L_\delta e^{R_{\max}}}{1-\rho}.\label{eq:eq190}
}

Plug \Cref{eq:eq179a} and \Cref{eq:eq190} in \Cref{eq:eq188}, and we have that 
\eqenv{
  \mynorm{\partial_{\omega_i} \partial_{\omega_j} \partial_{\omega_l}D_{\pi_{\hat\omega}}}_v \leq \frac{6 C^3_\phi m^3 e^{4R_{\max}}}{(1-\rho)^3}+\frac{6 m^2 C_\phi C_\delta e^{3R_{\max}}}{(1-\rho)^2}+\frac{m L_\delta e^{2R_{\max}}}{1-\rho}.
}

    Thus, we can further get that
    \eqenv{
        \twonorm{ \nabla^2 J(\omega)-\nabla^2 J(\omega')}
        &\leq d^2 \brac{ \frac{6 C^3_\phi m^3 e^{4R_{\max}}}{(1-\rho)^3}+\frac{6 m^2 C_\phi C_\delta e^{3R_{\max}}}{(1-\rho)^2}+\frac{m L_\delta e^{2R_{\max}}}{1-\rho}} \twonorm{\omega-\omega'}.
    }
\end{proof}

\begin{proof}[Proof of Lemma \ref{lm:constract}]

    Consider the probability $\Prob \brac{\bar s_j,\bar a_j,\bar s_{t-k},\bar a_{t-k} }$
     and term 
    $ \E\Fbrac{\bar z_t\bar \delta_t(\bar s_t,\bar a_t;\theta,\omega_t)|\pi_t}$.
    We  have that 
    \eqenv{
    \E& \Fbrac{\sum_{j=t-k}^t \phi_t^\top(\bar s_j,\bar a_j) \brac{R(\bar s_t,\bar a_t)-J(\omega_t)+\phi^\top_t(\bar s\tit,\bar a\tit)\theta-\phi_t^\top(\bar s_t,\bar a_t)\theta} \Big |\pi_t}
    \\& = 
    \E \Fbrac{\sum_{j=t-k}^{t}\phi_t^\top(\bar s_j,\bar a_j)\bar \delta_t(\bar s_t,\bar a_t;\theta,\omega_t)\Big |\pi_t }
    \\& = \sum_{s,a} \Fbrac{ \sum_{j=t-k}^{t} \Prob\brac{\bar s_j,\bar a_j,\bar s_t,\bar a_t }\phi^\top_t(\bar s_j,\bar a_j)\delta_t(\bar s_t,\bar a_t;\theta,\omega_t)\Big |\pi_t}
    \\& = 
    \sum_{s,a} \Fbrac{ \sum_{j=t-k}^{t} \Prob\brac{\bar s_t,\bar a_t,\bar s_{2t-j},\bar a_{2t-j} }\phi^\top_t(\bar s_t,\bar a_t)\delta_t(\bar s_{2t-j},\bar a_{2t-j};\theta,\omega_t)\Big |\pi_t}
    \\& \myineq{=}{a} \sum_{s,a} \Fbrac{ \sum_{i=0}^{k} \Prob\brac{\bar s_t,\bar a_t,\bar s_{t+i},\bar a_{t+i}) }\phi^\top_t(\bar s_t,\bar a_t)\delta_t(\bar s_{t+i},\bar a_{t+i};\theta,\omega_t)\Big |\pi_t}
    \\& = 
    \E_{(\bar s_t,\bar a_t)\sim D_t}\Fbrac{ \phi^\top_t(\bar s_t,\bar a_t)\sum_{i=0}^{k} \delta_t(\bar s_{t+i},\bar a_{t+i};\theta,\omega_t)\Big |\pi_t }
    \\& = 
    \E_{ D_t}\Fbrac{ \phi^\top_t(s,a)\brac{\mathcal{T}^{(k)}\brac{\phi_t^\top(s,a) \theta} -\phi_t^\top(s,a) \theta}}
    \\& = H_{\omega_t}\theta+b_{\omega_t},
    } where $(a)$ follows from the fact that $\Prob\brac{\bar s_j,\bar a_j}=\Prob\brac{\bar s_{t},\bar a_{t} }\sim D_t $, and thus,
    \eqenv{
        \Prob\brac{\bar s_j,\bar a_j,\bar s_t,\bar a_t }
        &= \Prob\brac{\bar s_t,\bar a_t |\bar s_j,\bar a_j} \Prob\brac{\bar s_j,\bar a_j }
        \\& = \Prob\brac{\bar s_{2t-j},\bar a_{2t-j}|\bar s_t,\bar a_t }\Prob\brac{\bar s_t,\bar a_t }= \Prob\brac{\bar s_t,\bar a_t,\bar s_{2t-j},\bar a_{2t-j} }.
    }
\end{proof}

\begin{proof}[Proof of \Cref{lm:lm6}]
    Recall  $b_\omega=\E\Fbrac{\sum_{j=0}^{k}\phi^\top_\omega(s_0,a_0)(R(s_j,a_j)-J(\omega))|(s_0,a_0)\sim D_{\pi_\omega},\pi_\omega }$. Recall the definition of $H_\omega$ in \Cref{eq:aw}. Then, the solution to \Cref{eq:kstep} can be written as
\eqenv{
   \theta^*_\omega=- A^{-1}_\omega b_\omega.
}
First, $b_\omega$ can be bounded as follows:
\eqenv{\label{eq:b}
   \twonorm{b_\omega}&= \twonorm{\E\Fbrac{\sum_{j=0}^{k}\phi^\top_\omega(s_0,a_0)(R(s_j,a_j)-J(\omega))|(s_0,a_0)\sim D_{\pi_\omega} ,\pi_\omega }}
   \\& = \twonorm{{\sum_{j=0}^{k}\E\Fbrac{\phi^\top_\omega(s_0,a_0)(R(s_j,a_j)-J(\omega))|(s_0,a_0)\sim D_{\pi_\omega} ,\pi_\omega }}}
   \\& = \twonorm{{\sum_{j=0}^{k}\E\Fbrac{\phi^\top_\omega(s_0,a_0)\brac{R(s_j,a_j)-\E_{D_{\pi_\omega}}\Fbrac{R(s,a)}}|(s_0,a_0)\sim D_{\pi_\omega} ,\pi_\omega }}}
   \\& \myineq{\leq}{a} {\sum_{j=0}^{k}C_\phi R_{\max}\E \Fbrac{\tvnorm{D_{\pi_\omega}-\Prob\brac{s_j,a_j|s_0,a_0, \pi_\omega}}|(s_0,a_0)\sim D_{\pi_\omega}}}
   \\& \myineq{\leq}{b} C_\phi R_{\max} \sum_{j=0}^{k} m \rho^k
   \leq \frac{C_\phi R_{\max}m}{1-\rho},
} where $(a)$ follows from the triangular inequality and the fact that for any probability distribution $P_1 $ and $P_2$, and any random variable $X$, s.t. 
$\lbrac{X}\leq X_{\max}$,  $ \lbrac{\E_{P_1}\Fbrac{X}-\E_{P_2}\Fbrac{X}}\leq X_{\max} \tvnorm{ P_1-P_2}$, $(b)$ follows from Assumption \ref{asm:ergodic}.

From the following equation:
\eqenv{
  \theta^{*\top}_\omega H_\omega \theta^{*}_\omega
  = \theta^{*\top}_\omega b_\omega
  =\brac{\theta^{*\top}_\omega b_\omega}^\top
  = \theta^{*\top}_\omega A^\top_\omega \theta^{*}_\omega,
}
it holds that
\eqenv{\label{eq:eq194}
\lambda_{\max}\brac{\frac{H_\omega+H_\omega^\top}{2} }\twonormsq{\theta^*_\omega} \geq \theta^{*\top}_\omega \frac{H_\omega+A^\top_\omega }{2} \theta^{*}_\omega
  = \theta^{*\top}_\omega b_\omega
  \geq -\twonorm{ \theta_\omega^*}\twonorm{b_\omega}.
}
Thus, we can bound $\theta^*_\omega$ as follows:
\eqenv{
\twonorm{\theta^*_\omega }
& \myineq{\leq}{a} -\frac{1}{\lambda_{\max}\brac{\frac{H_\omega+H_\omega^\top}{2} }}
\twonorm{b_\omega}
 \myineq{\leq }{b}-\frac{1}{\lambda_{\max}\brac{\frac{H_\omega+H_\omega^\top}{2} }}
\frac{m C_\phi R_{\max}}{1-\rho}
\\& \myineq{\leq }{c}  \frac{1}{\lambda_{\min}-d C_\phi^2 m \rho^k }\frac{m C_\phi R_{\max}}{1-\rho}=\frac{m C_\phi R_{\max}}{\bar \lambda_{\min}\brac{1-\rho}},
}
where $(a)$ follows from \Cref{eq:eq194},  $(b)$ follows from \Cref{eq:b} and $(c)$ follows from \Cref{lm:eig}.
\end{proof}

\newpage
\section{Experiments}

In this section, we conduct experiments to numerically verify our AC/NAC with compatible function
approximation. We test our algorithms in the Acrobot environment \cite{sutton1995generalization}. The environment involves a two-link linear chain with one end anchored and a joint that can be actuated. The goal is to apply torques at this joint to swing the unanchored end of the chain to a certain height from an initial position of hanging down. We parameterize our policy using a neural network  and  use compatible function approximation in the  critic part. We compare the performance between our AC/NAC with compatible function approximation and the standard AC/NAC with linear function approximation.

We first compare the performance of AC algorithms. We run vanilla AC, $1$-step AC with compatible function approximation, and $k$-step AC with compatible function approximation (shortened as AC, 1-step CAC and $k$-step CAC); And then we compare three NAC algorithms: vanilla NAC, $1$-step NAC with compatible function approximation and $k$-step NAC with compatible function approximation (shortened as NAC, 1-step CNAC and $k$-step CNAC). 

In our experiment setup, we set $k=128$, and design a 2-layer neural network with 16 hidden neurons to represent the policy, which contains 163 parameters. We run the algorithms for 20 times. At each time step, after we obtain the policies from the algorithms, we evaluate them and plot the average reward in \Cref{Fig:Data1} and \Cref{Fig:Data2} among 20 runs. 
We also plot the 90 and 10 percentiles of the 20 curves as the upper and lower envelopes of the curves.

\begin{figure}[ht]
\vskip 0.2in
\begin{minipage}{0.48\textwidth}
     \centering
     \includegraphics[width=.9\linewidth]{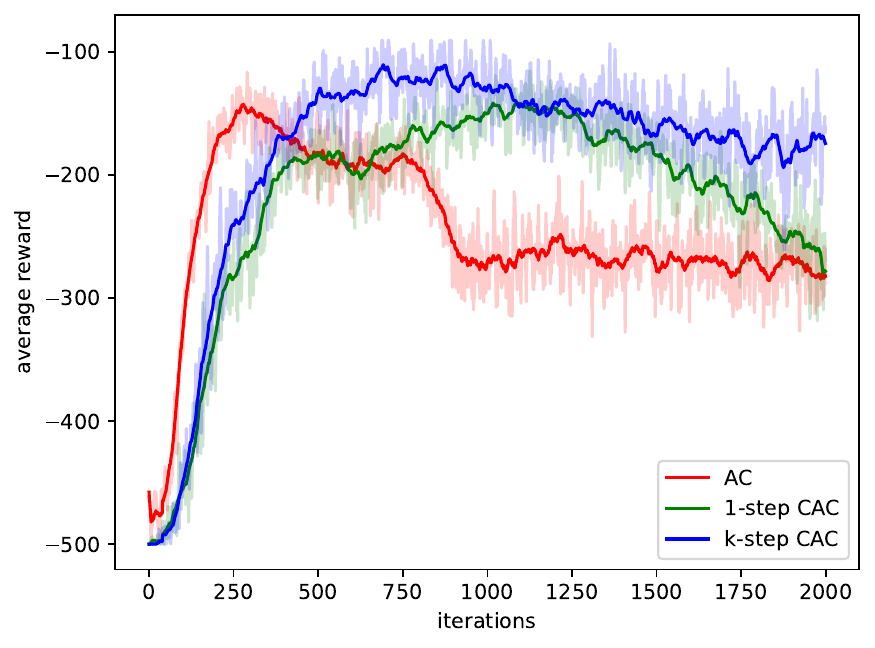}
     \caption{Vanilla AC with fixed feature function v.s. One-step AC with compatible feature function v.s. $128$-step AC with compatible feature function.  }\label{Fig:Data1}
   \end{minipage}\hfill
   \begin{minipage}{0.48\textwidth}
     \centering
     \includegraphics[width=.9\linewidth]{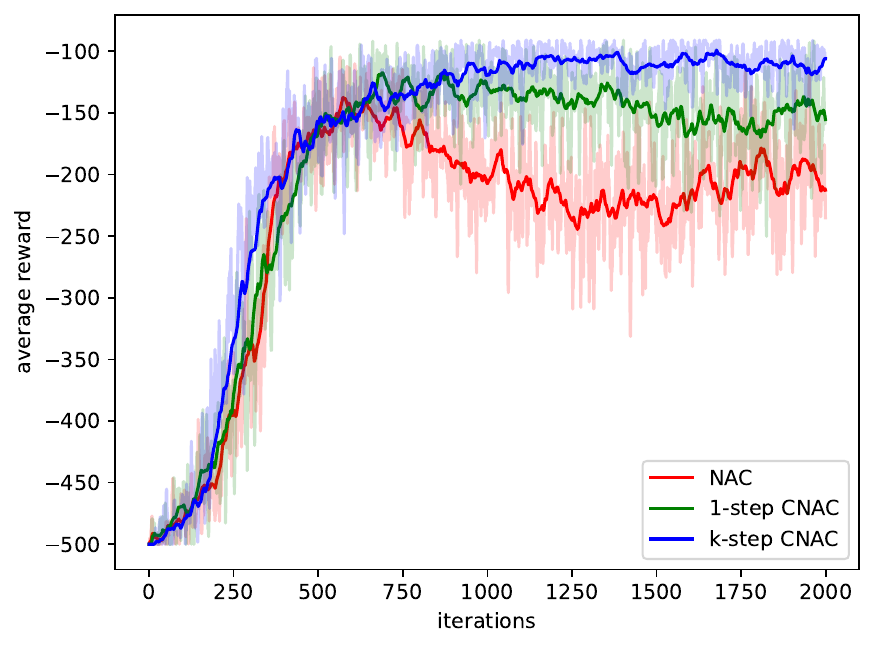}
     \caption{Vanilla NAC with fixed feature function v.s. One-step NAC with compatible feature function v.s. $128$-step NAC with compatible feature function.}\label{Fig:Data2}
   \end{minipage}
\end{figure}

We observe that for both AC and NAC, our algorithms with compatible function approximation lead to better performances. As our theoretical results showed, this performance improvement is due to the fact that the compatible function approximation can avoid critic error, whereas the linear function approximation results in an inaccurate estimation of the value functions in the critic part. 
\newpage
\section{Symbol Reference}
\begin{table}[!h]
    \begin{tabular}{l  l l}
         Constants& First Appearance \\
         $B=\frac{R_{\max}C_\phi}{(1-\rho)\brac{\lambda_{\min}-C_\phi^2 d m \rho^k }}$&  \Cref{sec:3.2}\\
         $c_\alpha=\frac{2L_J\lambda_{\min}+2B C^3_\phi\brac{C_\phi L_\pi+2L_\phi } }{\lambda^2_{\min} \bar \lambda_{\min}} $& \Cref{thm1}& expression in \Cref{cons:c}\\
         $c_\eta=\frac{ 2(k+1)^2B^2 C^6_\phi}{(\bar\lambda_{\min})^2} $& \Cref{thm1}& expression in \Cref{cons:c}\\
         $C_{\text{gap}}=C^2_\phi B  +\frac{ C_\phi R_{\max}}{1-\rho}$& \Cref{lm:cgap}\\
         $C_\infty$&Assumption \ref{asm:explore}\\
         $C_\pi$& Assumption \ref{asm:smooth}\\
         $C_\phi$& Assumption \ref{asm:smooth} \\
         $C_\delta$&  Assumption \ref{asm:smooth}\\
         $L_\phi$& Assumption \ref{asm:smooth} \\
         $L_\delta$&  Assumption \ref{asm:smooth}\\
         $\lambda_{\min}$&\Cref{sec:3.1}\\
         &&\\&&\\
         \end{tabular}

    \begin{tabular}{l  l l}
                 Constants& First Appearance\\  
         $C_J=   C^2_\phi \brac{B+ C_{\text{gap}}\frac{ m \rho^k}{\lambda_{\min}}}$& \Cref{Prop:lip}  \\
         $C_M= \frac{C_\Theta}{\lambda^2_{\min}}+ \frac{1}{2}B^2 $& \Cref{cons:cm}\\
         $C_\Theta=\frac{C_J}{\lambda^2_{\min}}
         \brac{2 C_\phi L_\phi+C^2_\phi L_\pi }+ \frac{L_J}{\lambda_{\min}}$&\Cref{lm:acnacinnersmooth}\qquad\qquad\qquad\qquad\qquad\quad\qquad\\
         $L_J= \frac{m R_{\max}}{1-\rho}\brac{4L_\pi C_\phi+L_\phi}$& \Cref{Prop:lip} \\
         $L_\pi=
\frac{1}{2}C_\pi 
\brac{1+\myceil{\log m^{-1}}+\frac{1}{1-\rho}}$& \Cref{Prop:lip} \\
         $L_\Theta=  d^2 \brac{ \frac{6 C^3_\phi m^3 e^{4R_{\max}}}{(1-\rho)^3}+\frac{6 m^2 C_\phi C_\delta e^{3R_{\max}}}{(1-\rho)^2}+\frac{m L_\delta e^{2R_{\max}}}{1-\rho}}$& \Cref{lm:acinnersmooth}\\
         $ \bar \lambda_{\min}=\lambda_{\min}-d C_\phi^2 m \rho^k$& \Cref{lm:actrackingerror}  \\
         $U_\delta =R_{\max}+2C_\phi B$& \Cref{lm:consterm}
         \\&&\\&&\\&&\\
    \end{tabular}
    \begin{tabular}{l  l l}
         Variable& Appearance &Order (set $\alpha_t\equiv \alpha$, $\beta_t\equiv \beta$,  $\gamma_t\equiv \gamma$)\\
         $G^\delta_t$&\Cref{lm:consterm} 
         & $\mathcal{O}(k^3 \alpha+k^3\beta+ k(m \rho^k))$\\
         $ G_t^\omega$& \Cref{lm:acgrad}&$\mathcal{O}\brac{ (m\rho^k)\beta+ k^2 \beta^2 } $\\
         $G_t^\eta $& \Cref{lm:aceta}&$\mathcal{O}\brac{(m\rho^k)\gamma +\beta^2+k^2\beta\gamma+k \gamma^2 } $\\
         $G_t^\theta$& \Cref{lm:actrackingerror} &$\mathcal{O}\brac{k (m\rho^k)\alpha+(m\rho^k)\beta+\frac{(m \rho^k)^2}{\beta}+k^3\alpha^2+k^3 \alpha\beta+k^2\beta^2 } $\\
         $\widetilde G^\eta_t$& \Cref{lm:naceta}&$\mathcal{O}\brac{(m\rho^k)\gamma +\beta^2+k^2\beta\gamma+k \gamma^2 } $\\
         $\widetilde G_t^\theta$& \Cref{lm:nactheta} &$\mathcal{O}\brac{k (m\rho^k)\alpha+ \frac{(m \rho^k)^2}{\beta}+k^3\alpha^2+k^3 \alpha\beta+ \beta^2 } $\\
         $\widetilde G_t^\omega$&\Cref{vari:gomega}& $\mathcal{O}\brac{(m\rho^k)\beta+\beta^2)}$\\
         $q $&\Cref{vari:q}&$\mathcal{O}\brac{\alpha } $\\
         $\hat t=\myceil{ \frac{1}{q}\log T}$&\Cref{vari:that}&$\mathcal{O}\brac{\frac{\log T}{\alpha} } $\\
         $\widetilde T={\myceil{\frac{T}{\hat t\log T}}}\hat t$&\Cref{eq:nac:d}&$\mathcal{O}\brac{\frac{T}{\log T} } $\\
         $M_t  $&\Cref{cons:cm}&-\\
    \end{tabular}
    \label{tab:variapp}
\end{table}

\end{document}